\newcommand{\nop}[1]{}
\newtheorem{lemma}{Lemma}
\begin{document}

\title{Sensor Data Validation and Driving Safety in Autonomous Driving Systems}
\author{Jindi Zhang}
\date{}

\begin{titlepage}
\begin{center}
    \vspace*{1pt}
    
    \Large
    CITY UNIVERSITY OF HONG KONG
    
    \vspace*{3cm}
    
    \LARGE
    \textbf{Sensor Data Validation and Driving Safety in Autonomous Driving Systems}
    
    \vspace*{2cm}
    
    \normalsize
    Submitted to \\
    Department of Computer Science \\
    in Partial Fulfillment of the Requirements \\
    for the Degree of Doctor of Philosophy
    
    \vspace*{4cm}
    by
    
    \vspace*{1cm}
    Jindi Zhang
    
    \vspace*{1cm}
    January 2022
\end{center}
\end{titlepage}

\newpage

\begin{center}
    \vspace*{4cm}
    \textit{To people I love.}
\end{center}

\newpage

\maketitle

\begin{abstract}
\label{abstract}

\addcontentsline{toc}{section}{Abstract}

Autonomous driving technology has drawn a lot of attention due to its fast development and extremely high commercial values. The recent technological leap of autonomous driving can be primarily attributed to the progress in the environment perception. Good environment perception provides accurate high-level environment information which is essential for autonomous vehicles to make safe and precise driving decisions and strategies. Moreover, such progress in accurate environment perception would not be possible without deep learning models and advanced onboard sensors, such as optical sensors (LiDARs and cameras), radars, GPS. However, the advanced sensors and deep learning models are prone to recently invented attack methods. For example, LiDARs and cameras can be compromised by optical attacks, and deep learning models can be attacked by adversarial examples. The attacks on advanced sensors and deep learning models can largely impact the accuracy of the environment perception, posing great threats to the safety and security of autonomous vehicles. In this thesis, we study the detection methods against the attacks on onboard sensors and the linkage between attacked deep learning models and driving safety for autonomous vehicles.

To detect the attacks, redundant data sources can be exploited, since information distortions caused by attacks in victim sensor data result in inconsistency with the information from other redundant sources. To study the linkage between attacked deep learning models and driving safety, the evaluation of the impact of attacks on driving safety in an end-to-end fashion is the key. Thus, we can leverage the data from different onboard sensors to detect attacks for single autonomous vehicle platforms. And we can use sensor data from multiple neighboring vehicles to achieve the attack detection for multiple connected autonomous vehicles. Furthermore, we can implement an end-to-end driving safety evaluation framework to help assess the attack impact on driving safety.

In this thesis, we first develop a data validation framework to detect and identify optical attacks against LiDARs and cameras for single autonomous vehicles. The greatest challenge lies in finding a type of redundant information which can be observed in both LiDAR point clouds and camera images. We tackle the challenge by leveraging depth information as the redundancy. Our main idea is to (1) use data from three sensors to obtain two versions of depth maps (i.e., disparity) and (2) detect attacks by analyzing the distribution of disparity errors. Based on the detection scheme, we further develop an identification model that is capable of identifying up to $n-2$ attacked sensors in a system with one LiDAR and $n$ cameras. We prove the correctness of our identification scheme and conduct experiments to show the accuracy of our identification method.

Second, as the countermeasures designed for single vehicles take no advantage of multiple connected vehicles, simply deploying them in a collaborative autonomous driving system does not produce more security bonus. To this end, we propose a new data validation method by leveraging data sources from multiple neighboring vehicular nodes to detect the optical attacks against LiDARs. The first challenge of designing the method is that no mobile network can bare the burden of transmitting all point clouds among connected autonomous vehicles, leading to limited size of data for validation, while the second challenge is that the scans of objects in point clouds are usually severely incomplete at the unlit side, causing barriers to accurate validation. To overcome the first challenge, we leverage a region proposal network to produce proposals as validation regions and propose to only transmit the scans within them, which heavily scales down the size of data for transmission and never overlooks potential attacks. We tackle the second challenge though concatenating the original scan of objects with a symmetrical copy of it to fill in the incomplete part. We perform preliminary experiments to examine our method. And the results show that our data validation method for multiple connected vehicles detects the attacks effectively with a fair accuracy.

Third, previous studies demonstrated that adversarial examples can hugely impact deep learning models for environment perception, and inaccurate perception results with no doubt may jeopardize the driving safety of autonomous vehicles. However, driving safety is a combined result of many factors, and the weakened model performance does not necessarily result in safety dangers. The linkage between the performance of a deep learning model under adversarial attacks and driving safety is still under-explored. To study such linkage and evaluate the impact of adversarial examples on driving safety in an end-to-end fashion, we propose an end-to-end driving safety evaluation framework with a set of driving safety performance metrics. With the framework, we investigate the impact of two primary types of adversarial attacks, perturbation attacks (as shown in Fig.~\ref{fig:plot_perturbation}) and patch attacks (as shown in Fig.~\ref{fig:plot_patch}), on the driving safety rather than only on the perception precision of deep learning models. In particular, we consider two state-of-the-art models in vision-based 3D object detection, Stereo R-CNN and DSGN. By analyzing the results of our extensive evaluation experiments, we find that the attack's impact on the driving safety of autonomous vehicles and the attack's impact on the precision of 3D object detectors are decoupled. In addition, we further investigate the causes behind the finding with an ablation study. Our finding provides a new perspective to evaluate adversarial attacks and guide the selection of deep learning models in autonomous driving.

To briefly summarize, in this thesis, we first propose a framework to detect optical attacks and identify the attacked sensors for single autonomous vehicles. Then, we propose a data validation method to detect the optical attacks against LiDARs using point clouds from multiple connected vehicle sources. At last, we propose an end-to-end driving safety evaluation framework to investigate the impact of adversarial attacks on the driving safety of autonomous vehicles. All the presented research in this thesis would greatly advance the safety and security of autonomous driving technology and eventually benefit our future life.

\end{abstract}

\newpage

\begin{center}
    {\textbf{CITY UNIVERSITY OF HONG KONG\\ Qualifying Panel and Examination Panel}}
\end{center}

{ 

\begin{table}[h]
\begin{tabular}{ll}
\textbf{Surname:}                          & ZHANG   \\
\textbf{First name}:                       & Jindi   \\
\textbf{Degree:}                           & Doctor of Philosophy    \\
\textbf{College/Department:}                       & Department of Computer Science \\
                                  &                                                                                                                                    \\
\multicolumn{2}{l}{\textbf{The Qualifying Panel of the above student is composed of:}}                                                                                           \\
\textbf{Supervisor(s)}               &                                                                                                                                    \\
Prof. JIA Xiaohua    & \begin{tabular}[c]{@{}l@{}}Department of Computer Science,\\ City University of Hong Kong\end{tabular}                       \\
                                  &                                                                                                                                    \\
\textbf{Qualifying Panel Member(s)} &                                                                                                                                    \\
Prof. WANG Jianping                  & \begin{tabular}[c]{@{}l@{}}Department of Computer Science,\\ City University of Hong Kong\end{tabular}                       \\
Prof. WANG Cong                   & \begin{tabular}[c]{@{}l@{}}Department of Computer Science,\\ City University of Hong Kong\end{tabular}                       \\
                                  &                                                                                                                                    \\
\multicolumn{2}{l}{\textbf{This thesis has been examined and approved by the following}}                                                                              \\
\multicolumn{2}{l}{\textbf{examiners:}}                                                                              \\
Dr. LI Zhenjiang                        & \begin{tabular}[c]{@{}l@{}}Department of Computer Science,\\ City University of Hong Kong\end{tabular}                       \\
Prof. JIA Xiaohua                    & \begin{tabular}[c]{@{}l@{}}Department of Computer Science,\\ City University of Hong Kong\end{tabular}                       \\
Prof. LI Minming                     & \begin{tabular}[c]{@{}l@{}}Department of Computer Science,\\ City University of Hong Kong\end{tabular}                       \\
Dr. WANG Dan               & \begin{tabular}[c]{@{}l@{}}Department of Computing,\\ The Hong Kong Polytechnic University\end{tabular}
\end{tabular}
\end{table}} 

\newpage

\section*{Acknowledgement}
\label{acknowledgements}

\addcontentsline{toc}{section}{Acknowledgements}

The journey to a Ph.D. is never tedious, filled with ups and downs, adventures and challenges, stresses and breakthroughs, strains and joys, and failures and rewards. It is truly a great pleasure personally for me to have such an opportunity to experience life as a Ph.D. candidate, learning how to think and write, digging into scientific problems, exploring knowledge for humankind, and most importantly, getting to know myself and searching for the meaning of this life. But none of this would have been possible if I had relied only on myself. So, I would like to express my heartfelt thanks to everyone who has ever generously offered me help and support during this journey.

Looking back on the past four years, I would like to thank my supervisors Prof. Jianping Wang and Prof. Xiaohua Jia. They gave me tremendous help and guidance through these years, in both life and academia. Prof. Wang is such an optimistic and quick-thinking person who is full of passions. Her enthusiasm cheers me a lot. Whenever I was faced with problems and got lost and confused, she always shared her ideas and wisdom to enlighten me without reservation and encouraged me to try out new ideas. She also taught me critical thinking and essential writing skills, which are so helpful for my research. In addition, I would like to thank Prof. Kui Wu from the University of Victoria and Prof. Kejie Lv from the University of Puerto Rico at Mayagüez. They gave me a lot of valuable comments and suggestions on my work.

Collaboration is crucial for doing good research. I am so fortunate to have excellent collaborators. My gratitude goes to Ms. Yifan Zhang, Mr. Yang Lou, Ms. Jinghuai Zhang, and Ms. Weiwei Fu for their efforts and teamwork. I also want to thank my labmates, Dr. Luning Wang, Dr. Zhongzheng Tang, Dr. Rongwei Yang, Dr. Qian Xu, Dr. Yangbin Chen, Ms. Ruohan Wang, Mr. Yang Jin, Mr. Xinhong Chen, Mr. Zihao Wen, and Ms. Luyao Ye, for the joyful times we spent together. With their companion, research life is much more fun.

The internship experience at Huawei Hong Kong Research Center also benefited me a lot. It let me have a good taste of industry research. During my internship, I had the luck to get acquainted with Prof. Lei Chen, Dr. Chen Cao, Dr. Xueling Lin, Dr. Xiaohui Li, Mr. Ngai Fai Ng, Mr. Yuanwei Song, Mr. Han Gao, Ms. Yi Yang, Ms. Cong Wang, Mr. Haoyang Li, Ms. Luyu Qiu, Ms. Jingzhi Fang, and Ms. Ge Lv. It is a great pleasure to work with them.

Time flies. It has been five years since I started my life in Hong Kong. Creating a life in a new place is never easy, so I want to mention my friends who made things easier for me in this city. They are Ms. Yi Tang, Mr. Ruifang Yang, and Ms. Jiyou Teng.

Last but not least. I want to thank my family, my dearest brother and best friend, Mr. Jinxiao Zhang, my parents, Ms. Xiaoshu Zhang and Mr. Zhao Zhang, and my grandmother, Ms. Fengqiong Zhao, for their selfless love, caring, and support. And I also want to thank my beloved girlfriend, Ms. Dan Su, for everything.

\newpage

\renewcommand*\contentsname{Table of Contents}
\tableofcontents
\newpage
\listoffigures
\newpage
\listoftables

\newpage

\chapter{Introduction}
\label{ch1:introduction}

\section{Background}
\label{ch1:sec:background}

Human has dreamed of autonomous driving for years, which could be traced back to the 20th century~\cite{sentinel26phantom}. The modern development of autonomous driving technology started with the DARPA Grand Challenge in 2004~\cite{chen04ohio}. Since then, it has attracted significant attention from both academia and industry due to its scientific charisma and the extremely high commercial values.

A modern autonomous vehicle is a highly complex Artificial Intelligence (AI) system consisting of many functional pieces organized along a data-processing pipeline. These functional pieces can be roughly categorized into three major modules, i.e., perception module, planning module, and execution module~\cite{badue21self,reke20self}. The perception module produces high-level information about surrounding environment through environment perception. The planning module makes short-term driving decisions and long-term driving strategies based on the environment information from the perception module and user intentions. And the execution module carries out the commands issued by the planning module. It is worth noting that the recent advancement of autonomous driving technology primarily lies in the perception module.

The recent technological leap in the perception module is mainly due to advanced onboard sensors and deep learning models. Since the DARPA Grand Challenge in 2004, autonomous vehicles have been equipped with multiple types of sensors, such as cameras, LiDARs, Radars, and GPS~\cite{chen04ohio}. These sensors, in particular optical sensors (cameras and LiDARs),  provide essential measurements of the surrounding environment with better quality and higher quantity. Specifically, cameras provide RGB image data, which capture visual shapes and textures of real-world objects in detail and enable the vision-based solution for autonomous driving. And LiDARs emit and receive laser beams to precisely measure the distances between objects and the ego autonomous vehicle, creating the possibility for accurate object detection in 3D world. In addition, as deep learning developed so fast and received great success these years, it has been widely applied to almost all computer vision applications. Deep learning methods relying on powerful generalization capability and large datasets have significantly boosted the performance of environment perception tasks in autonomous driving, such as 2D/3D object detection, 2D/3D object tracking, and semantic segmentation. And there emerged a lot of excellent learning based algorithms, e.g., YOLO~\cite{redmon16you}, SSD~\cite{liu16ssd}, and R-CNN based algorithms~\cite{girshick14rich,girshick15fast,ren15faster}.

Despite the great progress in autonomous driving technology, the top concern for its public adoption is still over safety and security. It is hard to convince people to put their lives in the hands of a driving machine for a period of time, without showing them that the machine is flawless.

However, every coin has two sides. While advanced sensors and deep learning make significant improvements for environment perception, they also suffer from recently reported security threats.

For onboard sensors, they can be compromised to make erroneous measures or even dysfunctional by physical attacks. Petit et al.~\cite{petit15remote} demonstrated that cameras can be blinded by strong light beams and are dysfunctional under continuous quick exposure changes caused by a flickering light source. They also used laser beams with the same frequency as used by LiDARs to spoof fake points in point clouds. Shin et al.~\cite{shin17illusion} systematically studied the attack methods against LiDARs and introduced the saturation attack to incapacitate LiDARs. And Yan et al.~\cite{yan16can} focused on other onboard sensors and successfully launched jamming and spoofing attacks against ultrasonic sensors and radars. Based on these previous studies, the research community kept digging into the flaws and vulnerabilities of onboard sensors and proposed more comprehensive studies on compromising these sensor~\cite{cao19adversarial,sun20towards,cao21invisible}.

Since the sensors for measuring the surrounding environment are at the beginning of the entire data-processing pipeline of autonomous driving systems, errors in sensory measurements could lead to incorrect object detection results, and further result in much more severe mistakes in driving decision making of the planning module, jeopardizing the driving safety.

As for deep learning models, adversarial examples are believed as one of their major securities threats~\cite{xiao20adversarial}. Adversarial attacks refer to the malicious use of adversarial examples to attack deep learning models. The examples are created by incorporating the normal input data of deep learning models with carefully designed perturbations, and can confuse deep learning models to make prediction errors. Another major trait of adversarial examples is that they can be in variety of forms. Some adversarial examples are with perturbations that are even too tiny to be perceivable to human eyes~\cite{goodfellow14explaining}, while some adversarial examples with high transferability are universally effective to a range of different deep learning models~\cite{moosavi17universal}. Some of them look like stickers and tapes, while some are designed to simulate natural lighting effects~\cite{eykholt18robust}. The unpredictability, concealment, and harmfulness of adversarial examples make them extremely hazardous to deep learning models.

Moreover, since deep learning models are incorporated in almost all the processes in the autonomous driving pipeline, erroneous inferences caused by adversarial examples can happen everywhere, especially in the perception module whose input is from the environment, also imposing significant risks on driving safety.

\section{Research Objectives and Motivations}

In this thesis, to help overcome the threats to onboard sensors and deep learning models in autonomous vehicles, we are motivated to study their impacts and design detection methods.


First, we aim to detect optical attacks against cameras and LiDARs. Specifically, we design a detection method and an identification method for single autonomous vehicles by validating the data produced by multiple onboard optical sensors (camera and LiDAR). And we design the detection method for the connected autonomous vehicles by validating sensor data from multiple neighboring vehicular nodes. Our motivations for these two research objectives are: (1) since cameras and LiDARs are the most critical sensors for environment perception, they should have the top priority for protection; (2) since attacks cause information distortions in sensor data, resulting in the information inconsistency among sensor data from different sources, redundant data sources can be used for sensor data validation.

Second, we evaluate the impact of adversarial attacks on the driving safety of autonomous driving systems in an end-to-end fashion. Our motivations for this research objective are: (1) adversarial attacks can do direct harms to the performance of deep learning models, but the data-processing pipeline of autonomous vehicles consists of many functional pieces and driving safety is a combined effort of many factors, so weakened deep learning performance does not necessarily lead to dangers in driving safety; (2) the linkage between the weakened performance of a deep learning model caused by adversarial attacks and driving safety of autonomous vehicles has not been explored in previous studies.

\section{Challenges}
\label{ch1:sec:challenges}

To detect and identify optical attacks for the single autonomous vehicles by validating sensor data from different onboard sensors, there exist two major challenges. First, the effects of optical attacks on camera images and LiDAR point clouds are different, so the information distortions appearing respectively in images and point clouds are different. It is challenging to find an appropriate type of redundant feature where the information distortions can be observed for both images and point clouds. Second, the challenge lies in that the scope and the position of the information distortions appearing in both types of data are unpredictable, so the data validation method must perform in fine granularity for whole sensor views. In our data validation countermeasure framework, we use depth information as redundancy to reveal information distortion, which can be extracted from both camera images and LiDAR point clouds, to tackle the first challenge. As for the second challenge, we compare the depth information distortion among different sensor data in pixel-level. Based on these two techniques, we design an algorithm to detect optical attacks using a three-sensor system, and further derive a method to identify the attacked sensors.

Moreover, we also face challenges when detecting optical attacks by validating LiDAR point clouds for multiple connected vehicles. First, the point clouds from every vehicular node are harvested in a fixed perspective each time, resulting in one side of objects not being scanned by LiDARs, so it is challenging to accurately validate the redundant information in severely incomplete scans. Second, the network connecting to autonomous vehicles cannot bear big data transmission overhead, indicating that only limited size of point clouds can be transmitted. But the information distortions caused by attacks could exist anywhere in point clouds, so the challenge lies in how to scale down the data for validation while maintaining information distortions still detectable at the same time. In our proposed detection method, we use a mirroring technique to fill the unscanned part of objects with points according to the object symmetry, to overcome the first challenge. And we use a region proposal network (RPN), e.g., PointNet++~\cite{qi17pointnet++}, to process the point clouds and use the produced proposals as the validation regions, so that the scale of data for validation is significantly reduced and no potential information distortions are overlooked. As for the data validation, we first use the scans in the validation regions to generate surface meshes. And then we discretize them and measure the distances among them. Finally, we detect attacks by analyzing the distance distributions.

In terms of assessing the impact of adversarial attacks on driving safety of autonomous vehicles in an end-to-end fashion, we implement an end-to-end driving safety evaluation framework. To this end, we face two nontrivial technical challenges. First, along the data-processing pipeline, the 3D object detection results in the perception module only contain static information, such as position and dimension, and include no dynamic information. Second, to realistically generate a planned trajectory for the self-driving ego-vehicle, real driving constraints, such as speed limits for different road types and dynamics models for different vehicles, must be provided to the planning module. To tackle the first challenge, we train a classifier with manually labeled ground truth to categorize whether an object is moving or static. For the second challenge, we train another classifier with road type labels to classify the road type of each scenario, so as to select appropriate driving constraints.

\section{Contributions}
\label{ch1:sec:contributions}

Our main contributions in this thesis are as follows:

\begin{itemize}
\item
We propose a data validation framework against the optical attacks for single autonomous vehicles by detecting distortions in depth information in a three-sensor system. And based on this, we further develop a method capable of identifying up to $n − 2$ attacked sensors in a system with one LiDAR and $n$ cameras and give mathematical proof of its correctness.

\item
We propose a data validation scheme against the optical attacks on LiDARs for connected autonomous vehicles, by analyzing the distance distributions of surface meshes generated from LiDAR point clouds which are collected from multiple neighboring autonomous vehicles.

\item
We propose an end-to-end driving safety evaluation framework and use it to evaluate the impact of adversarial attacks on the driving safety of vision-based autonomous vehicles in an end-to-end fashion. We investigate the linkage between attacked deep learning models and driving safety and find that, the attack impact on the driving safety and the attack impact on the precision of 3D object detectors are decoupled.
\end{itemize}

\section{Thesis Outline}
\label{ch1:sec:thesis}

The rest of this thesis is organized as follows. In Chapter~\ref{ch2:literature}, we review the related literature to this thesis. In Chapter~\ref{ch3:sensor}, we introduce our data validation framework for single autonomous vehicles. In Chapter~\ref{ch4:sensor}, we present the proposed data validation scheme for multiple connected autonomous vehicles. In Chapter~\ref{ch5:impact}, we introduce the evaluation of the impact of adversarial attacks on driving safety. Finally, we conclude and discuss the future work in Chapter~\ref{ch6:conclusion}.

\chapter{Literature Review}
\label{ch2:literature}

In this chapter, we review the related work from the perspectives of optical attacks, existing countermeasures against optical attacks, depth estimation, adversarial attacks, vision-based 3D object detection, and motion planning.

\section{Attacks on Optical Sensors}

The methods for attacking optical sensors (LiDAR and camera) gradually become more and more advanced. In surveys of studies~\cite{wyglinski13security,ren19security}, researchers introduced the vulnerability that perceptual sensors of AVs could be compromised via physical channels at a close distance. In~\cite{petit15remote}, the authors showed several effective and realistic methods to compromise a 2D LiDAR and a camera. Particularly, in their experiments, they managed to relay and spoof LiDAR signals, as well as blind the camera using strong light beams. Adversary attacks against camera with intensive lights were also studied in~\cite{yan16can} and even caused irrecoverable damages to the camera. Later, Shin~\textit{et al.} demonstrated the attacks against Velodyne VLP-16, one of the most popular top-sale 3D LiDARs in the market, by producing fake signals~\cite{shin17illusion}. Based on the previous work, the researchers dug even deeper in~\cite{cao19adversarial}, in which the authors designed an optimization-based strategy to produce more bogus dots to compromise a 3D LiDAR with a much higher success rate, and they constructed new attacking scenarios to study the impact on the decision making of AVs. We refer to these attacks using physical signals against optical sensors as optical attacks. Despite their importance, existing studies in optical attacks only give some rough countermeasure ideas, such as redundancy of sensors and randomization of LiDAR pulse.

\section{Existing Countermeasures against Optical Attacks}

In the literature, there are only a few studies for systematically defending optical sensors of AVs, but these studies focus on other types of sensor attack. For example, the authors of~\cite{rofail19multi} claimed that the attacks against a camera could be wise enough to erase only objects from pictures or modify their positions. By using an additional LiDAR as a reference, they proposed to extract object features from images and LiDAR point cloud, and then detect the attacks via mismatches of the two sets of features. In~\cite{changalvala19lidar}, Changalvala~\textit{et al.} investigated an internal attack that can tamper a point cloud from the inside of an AV system, and they tackled the detection problem by adding a watermark to the LiDAR point cloud. Different from~\cite{rofail19multi} and~\cite{changalvala19lidar}, our work targets at defending against the optical attacks on LiDAR and cameras of AVs. We follow the idea of redundancy of sensors and design a countermeasure framework that not only can detect optical attacks via analyzing the inconsistency of depth information (i.e., disparity) from different sources, but also can identify the compromised sensors of an AV system.

\section{Depth Estimation}

Depth estimation is a task to estimate the distance to objects at every pixel using image data. As for estimating depth using images, there are two main categories of algorithms: monocular-vision based and stereo-vision based. The current methods of the two categories all adopt deep neural networks, but the monocular ones consider the task as a dense matrix regression problem and focus on minimizing the error of predictions, while the stereo-vision based algorithms formulate it as a problem of matching pixels from two images~\cite{bhoi19monocular}. As a result, DORN~\cite{fu18deep}, one of the best monocular methods, can only predict depth with an error of around $9\%$. In contrast, as a representative algorithm of the latter category, the state-of-the-art PSMNet~\cite{chang18pyramid} achieves the task with an error that is less than $2\%$ by leveraging the spatial pyramid pooling structure. In this thesis, we choose PSMNet over other methods because of its better accuracy.

\section{Adversarial Attacks}

A survey of general vulnerabilities in autonomous driving can be found in~\cite{wyglinski13security, ren19security}. Among all the vulnerabilities, the perturbation attack and the patch attack are the most dangerous threats of vision-based autonomous driving systems, since they can directly cause damages to input images.

Both the perturbation attack and the patch attack fall into the domain of adversarial attacks. The main idea of adversarial attacks is to leverage small changes in the input to trigger significant errors in the output of deep learning models. According to~\cite{yuan19adversarial}, adversarial attacks can be either universal (effective to all valid inputs) or input-specific (only effective to a specific input). There are mainly two categories of methods to achieve adversarial attacks, namely, optimization-based methods and fast gradient step method (FGSM)-based approach. Optimization-based methods can be used for either universal attacks or input-specific attacks. An example is L-BFGS proposed by Szegedy \textit{et al.}~\cite{szegedy14intriguing}. FGSM-based methods include FGSM~\cite{goodfellow15explaining} and its extensions, such as I-FGSM~\cite{kurakin17adversarial}, MI-FGSM~\cite{dong18boosting}, and PGD~\cite{madry18towards}. These methods are usually used only for input-specific attacks.

The perturbation attack and the patch attack work in different ways. The perturbation attack usually affects all pixels in an input image but the changes in pixel values are very small, while the patch attack only affects a small number of pixels but the changes in pixel value are larger. Both the attacks were studied concerning different functional modules needed in vision-based autonomous driving. For example, the perturbation attack was studied regarding sign classification in~\cite{eykholt18robust}, 2D object detection in~\cite{lu17adversarial}, semantic segmentation in~\cite{metzen17universal, xiao18characterizing}, and monocular depth estimation in~\cite{zhang20adversarial, mathew20monocular}, while the patch attack was studied regarding lane keeping in~\cite{zhou18deep}, optical flow estimation in~\cite{ranjan19attacking}, 2D object detection in~\cite{song18physical, chen18shapeshifter}, and monocular depth estimation in~\cite{mathew20monocular}. None of these studies, however, focus directly on the attacks' impact on driving behavior and driving safety of autonomous vehicles.

\section{Vision-Based 3D Object Detection}

Vision-based 3D object detection provides a more budget-friendly approach to perform object detection in 3D space by mainly leveraging stereo cameras instead of expensive LiDARs. It is the core of vision-based autonomous driving. Traditional approaches, e.g., 3DOP~\cite{chen183d} and Pseudo-LiDAR~\cite{wang19pseudo}, first generate a pseudo point cloud with depth estimation and then perform 3D object detection with similar methods used in LiDAR-based 3D object detection. As a result, they are usually not comparable to LiDAR-based methods in terms of accuracy and efficiency. Different from traditional approaches, Stereo R-CNN~\cite{li19stereo} and DSGN~\cite{chen20dsgn} are the two leading methods in this area. The network of Stereo R-CNN consists of a Region Proposal Network (RPN) and a regression part. The 2D bounding box candidates generated by the RPN are fed to the regression part where keypoints of 3D bounding boxes are predicted. The network of DSGN includes a single-stage pipeline which extracts pixel-level features for stereo matching and high-level features for object recognition. Both methods can achieve comparable performance to LiDAR-based methods.

\section{Motion Planning}

Motion planning is a key task for autonomous vehicles. Given an initial vehicle state, a goal state region, a cost function, and vehicle dynamics, motion planning finds collision-free trajectories. Currently, sampling-based motion planning algorithms are the mainstream methods~\cite{zhang20novel, zhang22integrating}. They can be viewed as a discrete planner, such as RRT~\cite{lavalle98rapidly}, greedy BFS, and A*~\cite{hart68a}, in combination with a C-space sampling scheme.

\section{Existing Autonomous Driving Systems}

\begin{figure*}[!t]
    \centering
    \includegraphics[width=0.98\textwidth]{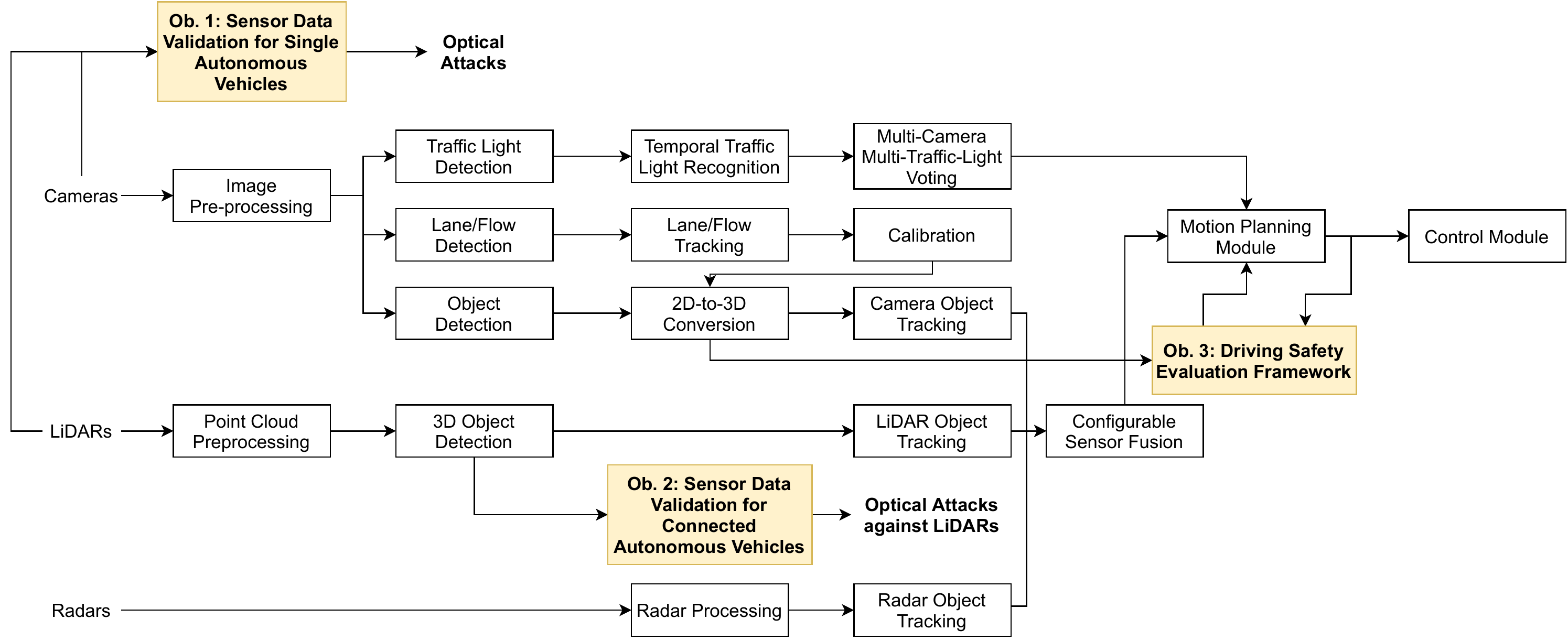}
    \caption{Take Baidu Apollo~\cite{apollo19perception} for example, this is where the three research objectives (highlighted boxes) in this thesis locate in a real autonomous driving system.}
    \label{fig:studies}
\end{figure*}

There are only a limited number of open source autonomous driving system structures. As for real autonomous driving system structures, there is only one available, Baidu Apollo~\cite{apollo19perception}. In Fig.~\ref{fig:studies}, we illustrate the system structure of Baidu Apollo. As we can see, for sensors, the Baidu Apollo vehicle is primarily equipped with three types of sensors for environment perception, cameras, LiDARs, and Radars. And the tasks of environment perception can be categorized as sign/traffic light detection, lane/flow detection, object detection, and object tracking. After the environment perception, the object tracking results are fused together, and fed to the motion planning module along with the results of sign/traffic light detection. At last, the results of motion planning is fed to the control module. Here, we also take the system structure of Baidu Apollo for example, to show where our three research objectives locate in a real autonomous driving system.

\chapter{Sensor Data Validation for Single Autonomous Vehicles}
\label{ch3:sensor}

\section{Introduction}
\label{ch3:introduction}

In the past few years, autonomous driving has attracted significant attention from both academia and industry. Recent advances in artificial intelligence and machine learning technologies enable accurate object and event detection and response (OEDR)~\cite{SAE18taxonomy}. The technology advances, together with great commercial potentials and incentives, quickly pushed the adoption of autonomous driving. For instance, Waymo launched a driverless taxi service in Arizona in 2018~\cite{davies18waymo}. Tesla has announced the beta version of their full self-driving feature of its products~\cite{hawkins21tesla}.

To facilitate accurate OEDR tasks, autonomous vehicles (AVs) are usually equipped with a number of sensors, including GPS, inertial measurement unit, radar, sonar, camera, light detection and ranging (LiDAR), etc.~\cite{apollo19perception}. Among these sensors, optical devices (LiDAR and camera) have become more and more important because they can provide object detection in a large range and also because many emerging machine learning models proposed in the past few years can accurately measure the depth of objects and detect objects. Due to the importance of these optical devices, in this chapter, we focus on their security aspects, particularly on the mitigation of potential attacks on these optical devices.

Despite the importance of the security of optical devices in an autonomous driving system, it was investigated only in a few previous studies. In~\cite{wyglinski13security,ren19security}, researchers summarized several categories of vulnerabilities in autonomous vehicles. In~\cite{petit15remote, yan16can, shin17illusion}, researchers demonstrated through experiments that LiDAR can be attacked by sending spoofed and/or delaying optical pulses. They also demonstrated that a camera can be blinded if it receives an intense light beam.

Although these pioneer studies are important, there is a lack of a comprehensive mechanism to detect and identify such attacks. In this chapter, we propose a novel framework to tackle this important issue by (1) detecting the optical attacks using data from multiple sensors and (2) identifying the sensors that are under attack. To achieve accuracy in both detection and identification, there are two major challenges:

\begin{itemize}
\item
The optical signals can be processed by many advanced machine learning models, each of which can generate various features. Moreover, optical signal attack causes different consequences on camera images and LiDAR point clouds. Therefore, an appropriate type of feature needs to be chosen as the common ground where both attacks can be detected.

\item
The size and the position of the damaged area caused by optical signal attack in images and point clouds are unpredictable. The damaged area can appear anywhere in the sensor view. Detection method must perform fine-grained detection across the whole sensor view in order to be invariant to the size and position of the damaged area and distinguish the feature differences in non-attack scenarios and attack scenarios.

\end{itemize}

To address the first challenge, the proposed framework includes an optical attack detection method that extracts depth information, which is referred to as the distance from the ego-vehicle to the surrounding environment, from two sets of sensor data respectively and then uses depth information as the common ground to detect attacks on both images and point clouds. To address the second challenge, our method detects attacks by analyzing the distribution of disparity errors that measure \textbf{pixel-level} disparity inconsistencies in the whole sensor view. Thus, the detection method is robust.

The main contributions of this study can be summarized as follows:

\begin{itemize}
\item
We develop a new technique to detect optical attacks on a system that consists of three sensors, including two possible cases (1) one LiDAR and two cameras, or (2) three cameras. Three optical sensors are the minimal number of sensors to obtain two depth maps in general sensor setups for autonomous vehicles. Specifically, we first use data from three sensors to obtain two versions of depth maps (i.e., disparity) and then detect attacks by analyzing the distribution of disparity errors. In our study, we use real datasets of KITTI~\cite{geiger12are, geiger13vision} and the state-of-the-art machine learning model PSMNet~\cite{chang18pyramid} to evaluate our attack detection scheme and the results confirm the effectiveness of our detection method.

\item
Based on the detection scheme, we further develop an identification model that is capable of identifying up to $n-2$ attacked sensors in a system with one LiDAR and $n$ cameras. In our study, we prove the correctness of our identification scheme and conduct experiments to show the accuracy of our identification method.

\item
At last, we investigate the sensitivity of our framework to optical attacks with more diverse settings. We use experiments to show its excellence in this aspect.

\end{itemize}

The rest of this chapter is organized as follows. In Section~\ref{ch3:sec.system}, we first discuss the system models, including the optical sensors and attack models, and our attack mitigation framework. In Section~\ref{ch3:sec.detection}, we elaborate on the attack detection schemes. In Section~\ref{ch3:sec.identification}, we further investigate the attack identification issue. In Section~\ref{ch3:sec.sensitivity}, we examine the overall sensitivity of our framework. Finally, we conclude this chapter in Section~\ref{ch3:sec.conclusion}.

\section{System Models}
\label{ch3:sec.system}

In this section, we first explain the main optical sensors in an autonomous driving system and their normal operations. We then elaborate on the attack models on LiDAR and camera, with some numerical results that illustrate the impacts of optical attacks. Finally, we briefly explain the main idea of the proposed framework for attack detection and identification.

\subsection{Optical Sensors}
\label{subsec:architecture}

In this chapter, we consider a general autonomous driving system, and we focus on the optical devices, in particular, LiDAR and camera.

\subsubsection{LiDAR}

A LiDAR sensor can send and receive specific optical pulses in certain directions. By comparing the incoming reflected signals with the transmitted ones, LiDAR can provide an accurate estimation of the distance between the LiDAR and an object in a specific direction. The output of LiDAR consists of a set of points in 3D space, which is known as a point cloud. By clustering these points, the object detection models applied in AV systems can locate obstacles in the real world.

\subsubsection{Camera}

Cameras are very common in existing autonomous driving systems. AVs are usually equipped with more than two of them for covering the 360-degree view of surrounding environment. The produced images are useful to several perception functions, such as obstacle detection and road/lane detection.

Specifically, similar to human eyes, two cameras can be used to form a stereo vision system that can estimate the depth of an object. As a simple example, if a real-world point is captured at pixel $P_{l}=(x_{l},y)$ in the left image and at pixel $P_{r}=(x_{r},y)$ in the right image, then the \textit{disparity} $d$ is defined as $d=x_{l}-x_{r}$. We can calculate the depth $z$ using
\begin{equation}\label{eq:triangulation}
z=f\times\frac{b}{d},
\end{equation}
where $f$ is the focal length and $b$ is the distance between the two cameras. 

In general, we can obtain the depth of a real-world point using disparity, once the pair of corresponding pixels ($P_{l}$ and $P_{r}$) are located in two images. Therefore, the main goal of depth estimation algorithms, such as PSMNet~\cite{chang18pyramid}, is to identify pairs of pixels in two images that are corresponding to the same real-world points. Finally, a \textit{disparity map} is generated by computing disparity for every pixel in an image.

\subsection{Data-Processing Pipeline of Autonomous Vehicles}

We illustrate the general data-processing pipeline for autonomous vehicles in Fig.~\ref{fig:system_model}. This pipeline is an abstract of existing autonomous driving systems, such as Baidu Apollo~\cite{apollo19perception}. As we can see first, the perception model directly takes the raw sensor data and outputs the high-level environment information, such as object detection results, self-localization results, traffic sign information, for the planning module. After receiving the information delivered by the perception module, the planning module makes driving decisions based on the received information, user intentions, and other information. Finally, the action module sends commands to vehicular hardware (gas, steer, brake, etc.) according to the decisions made by the planning module. Therefore, if the optical attacks cause wrong measurements in raw data of optical sensors, errors may propagate along this pipeline and cause risks to the safety of autonomous vehicles.

\begin{figure}[!t]
    \centering
    \includegraphics[width=0.6\textwidth]{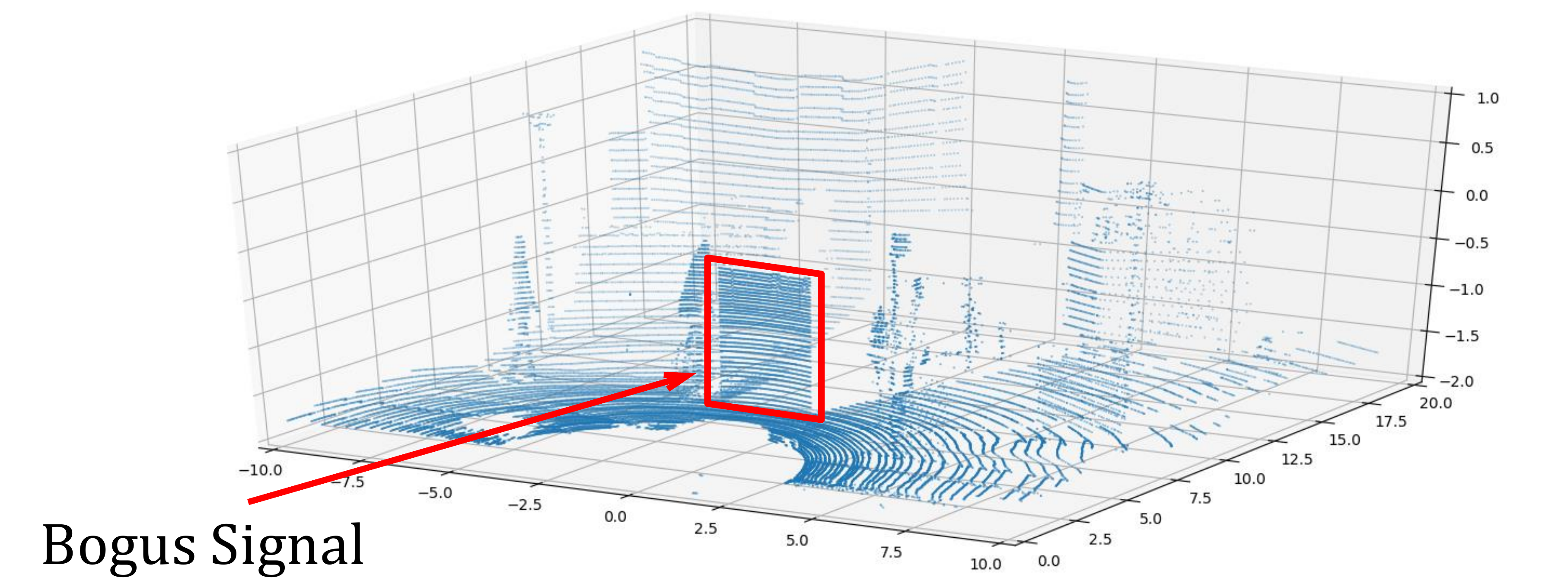}
    \caption{An example of a compromised point cloud that contains bogus signals in a region.}
    \label{fig:lidar_attack}
\end{figure}

\begin{figure}[!t]
    \centering
    \includegraphics[width=0.6\textwidth]{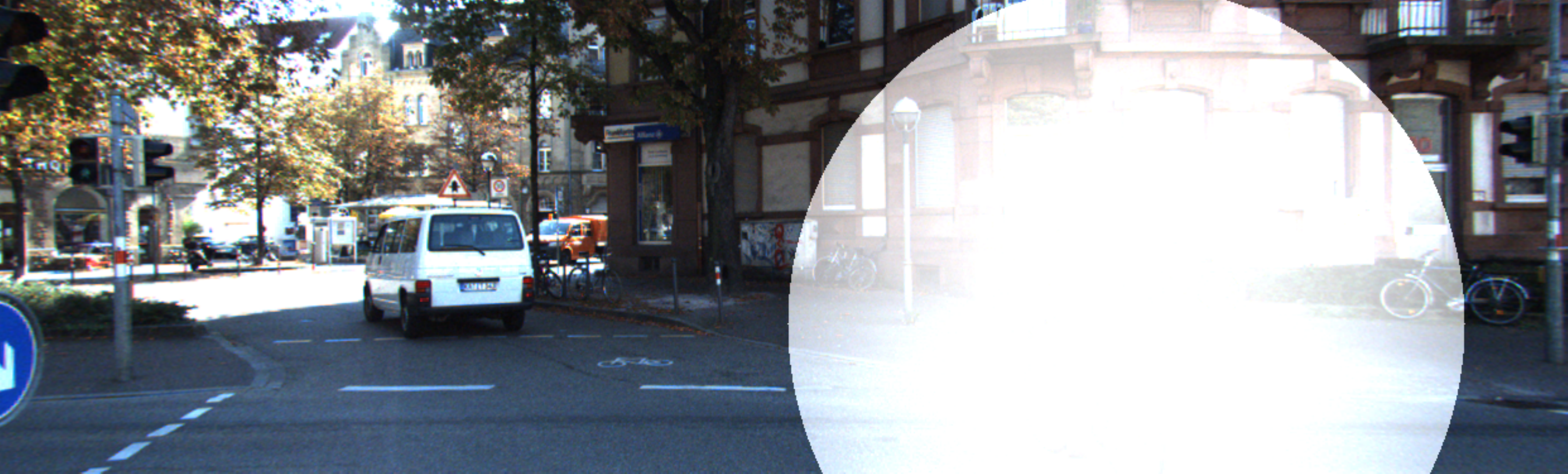}
    \caption{An example of a compromised camera image that contains a round Gaussian facula.}
    \label{fig:camera_attack}
\end{figure}

\subsection{Attacks on Optical Sensors}

\subsubsection{Attacks against LiDAR}

In~\cite{petit15remote} and~\cite{shin17illusion}, the authors demonstrated several methods to attack LiDARs. The main idea in these attacks is to generate or relay the legitimate optical pulses so as to mislead the perception module in the system model.

To attack LiDARs, attackers must know in advance the laser frequency and the time sequence of the victim LiDAR, and use photodiode, signal generator, and a laser at the same frequency as used by the victim LiDAR to launch the attack. Although the existing LiDAR spoofing attacks can only generate a limited number of fake points, we believe that a more powerful attacker can generate a larger number of spoofed points in the point cloud with several advanced attacking sources. Therefore, in this chapter, we produce the compromised point clouds by generating spoofed signals for a region, so that the perception module of AVs may detect a fake object, as shown in Fig.~\ref{fig:lidar_attack}.

\subsubsection{Attacks against Camera}

The attacks against camera have been studied in~\cite{petit15remote} and~\cite{yan16can}. The main idea in these studies is to generate strong light signals so as to blind the cameras. According to~\cite{petit15remote}, to blind a camera, the power of light source must increase exponentially with the growth of the distance between the light source and the camera. The effectiveness of the attacks is also affected by the environment light conditions. Therefore, when LED is used, in order to form effective attacks, the distance between the light source and the camera must be within a few meters, and the attacks must be conducted in dark environments, which is less practical. By comparison, attacks using lasers seem to be more realistic.

In our study, we believe that the attackers do not need to completely blind the camera. Instead, their main objective is to mislead the perception module in the autonomous driving system. To this end, we consider that the attacking light source is a laser and the distance between the attacking source and the cameras can be sufficiently large. As a result, the attacks from a laser result in a contaminated area with certain size at a random position in images. Therefore, we generate the affected camera data by overlaying a Gaussian facula on them, as shown in Fig.~\ref{fig:camera_attack}. The affected images we generate are equivalent to the results in~\cite{petit15remote} and~\cite{yan16can}.

\subsection{Impact of the Optical Attacks}

\begin{figure}[!t]
    \centering
    \includegraphics[width=0.6\textwidth]{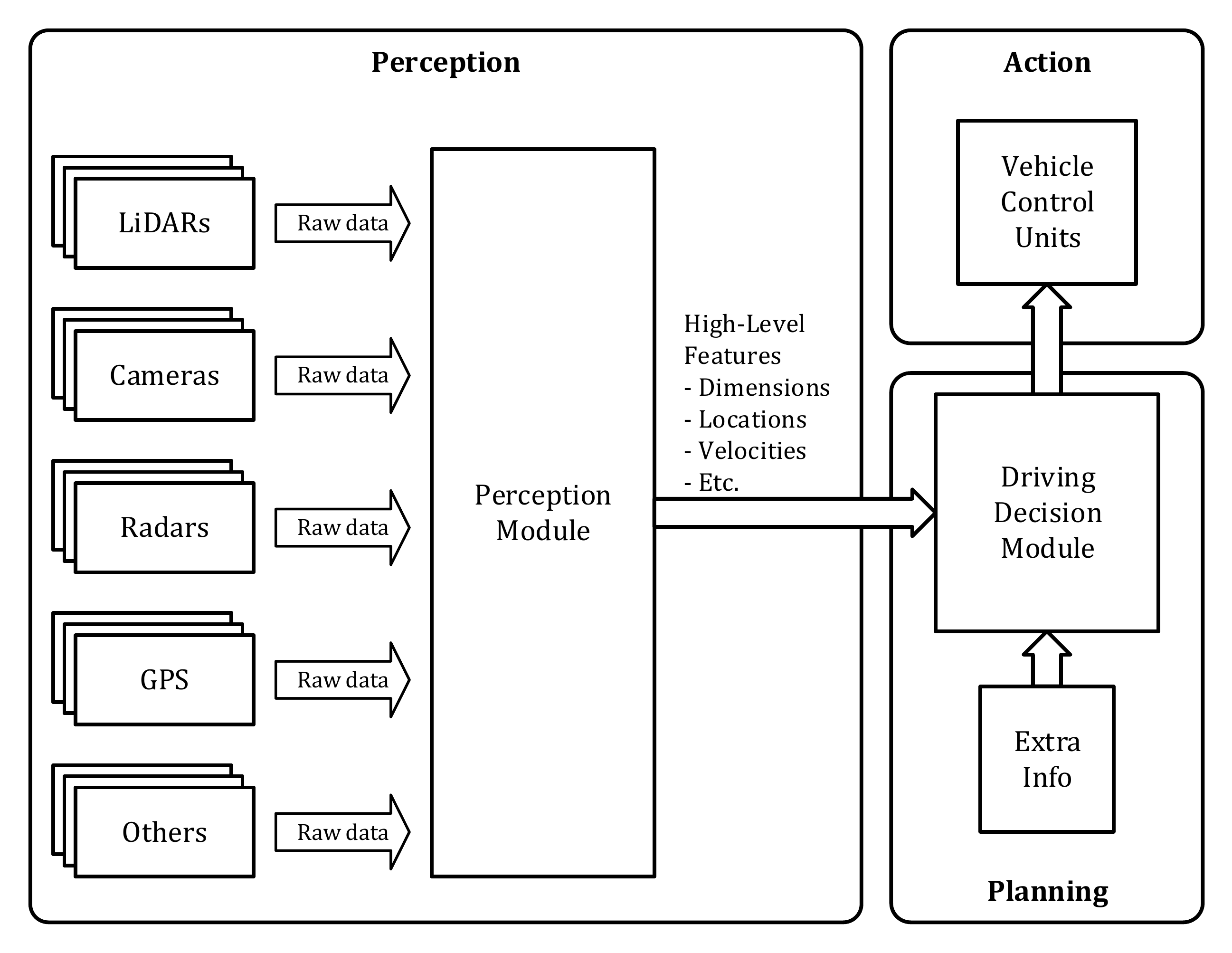}
    \caption{In AV systems, the perception module processes the raw sensor data and generates the environmental high-level information for driving decision making. Then, the driving commands are sent to control units.}
    \label{fig:system_model}
\end{figure}

To understand the impacts of the aforementioned optical attacks, we conduct extensive experiments testing the object detection algorithms for AVs with the compromised sensor data. Next, we briefly introduce the common experimental setup in this chapter, which is also used in the experiments of other sections.

\subsubsection{Common Experimental Setup}
\label{subsubsec:setup}

In this chapter, we use two datasets. The first one is the KITTI raw dataset~\cite{geiger13vision}, which includes data of one LiDAR and four cameras in different environmental conditions for autonomous driving, such as \textbf{City}, \textbf{Residential}, \textbf{Road}, \textbf{Campus}, etc. We customize it by selecting 1000 sets of sensor data.

The second dataset we use is the one provided in the KITTI object detection benchmark~\cite{geiger12are}, which contains sensor data of one LiDAR and two cameras. We divide the labeled part of the dataset into a training set and a test set according to~\cite{chen153d}. The two sets include 3712 sets of sensor data and 3769 sets of sensor data, respectively.

To produce the compromised LiDAR data, we generate a bogus signal with a height of $1.5$ meters and a width of $2.5$ meters, which is equal to the width of typical highway lanes, at a random distance of $6$ to $10$ meters away from the LiDAR sensor in point clouds.

To generate the compromised camera images, we overlay a round Gaussian facula with a random radius of $187$ pixels to $375$ pixels on images that have a size of $1242$ pixels by $375$ pixels.

\subsubsection{Attack Experiments and Results}

Here, we first conduct experiments on the customized KITTI raw dataset~\cite{geiger13vision}. To evaluate the impact of the optical attacks on LiDAR, we use a pre-trained model based on PIXOR~\cite{yang18pixor}, which is a 3D object detection method using LiDAR data. In our experiments, we generate a compromised point cloud for each one of the 1000 sets of sensor data and feed it to the PIXOR model. We observe that the model falsely considers the bogus signals as obstacles in 986 cases out of 1000.

To measure the impact of the attacks on camera, we use the standard performance metric, \textbf{average precision} (AP), where the prediction is considered accurate if and only if the \textbf{Intersection over Union} (IoU) is larger than $50\%$. In addition, we use a pre-trained model provided in TensorFlow API~\cite{huang17speed} that can detect vehicles from images. Specifically, the model is based on Faster R-CNN~\cite{ren15faster} with a ResNet-101 architecture~\cite{he16deep}. In our experiments, we produce a compromised image for each set of sensor data in the dataset and feed it to the Faster R-CNN model. The numerical results show that the AP for detecting cars is $84.62\%$ when there are no attacks. By comparison, the AP drops dramatically to $61.53\%$ when there are optical attacks against the camera.

To briefly summarize, we observe that the aforementioned attacks on optical devices can significantly compromise the accuracy of object detection, which is a fundamental task of perception module in autonomous driving. As shown in Fig.~\ref{fig:system_model}, the results of environment perception are passed to the driving decision module that directly sends commands to the vehicle control units, such as the engine and brake. Therefore, we believe that optical attacks are extremely hazardous because it is highly possible that an inaccurate perception due to optical attacks can lead to wrong driving decisions and can cause catastrophic outcomes.

\subsection{A Mitigation Framework Against Optical Attacks}

To defend against such attacks, in this chapter, we propose a framework to mitigate optical attacks. The main idea of our framework is to detect optical attacks and then identify the affected sensors. In this manner, the autonomous driving system can choose to use signals from sensors that are not under attack to perform accurate perception.

Specifically, our framework consists of two main procedures. The first procedure is for attack detection. To this end, we consider a system that consists of three sensors in two scenarios, (1) one LiDAR and two cameras, and (2) three cameras. In both cases, we use data from three sensors to obtain two versions of disparity maps and then detect attacks by analyzing the distribution of disparity errors. Based on the first procedure, we design the second procedure to identify up to $n-2$ affected sensors in a system that consists of one LiDAR and $n$ cameras. In Section~\ref{ch3:sec.detection} and Section~\ref{ch3:sec.identification}, we introduce the two procedures in more detail.

\begin{figure*}[!t]
    \centering
    \includegraphics[width=0.98\textwidth]{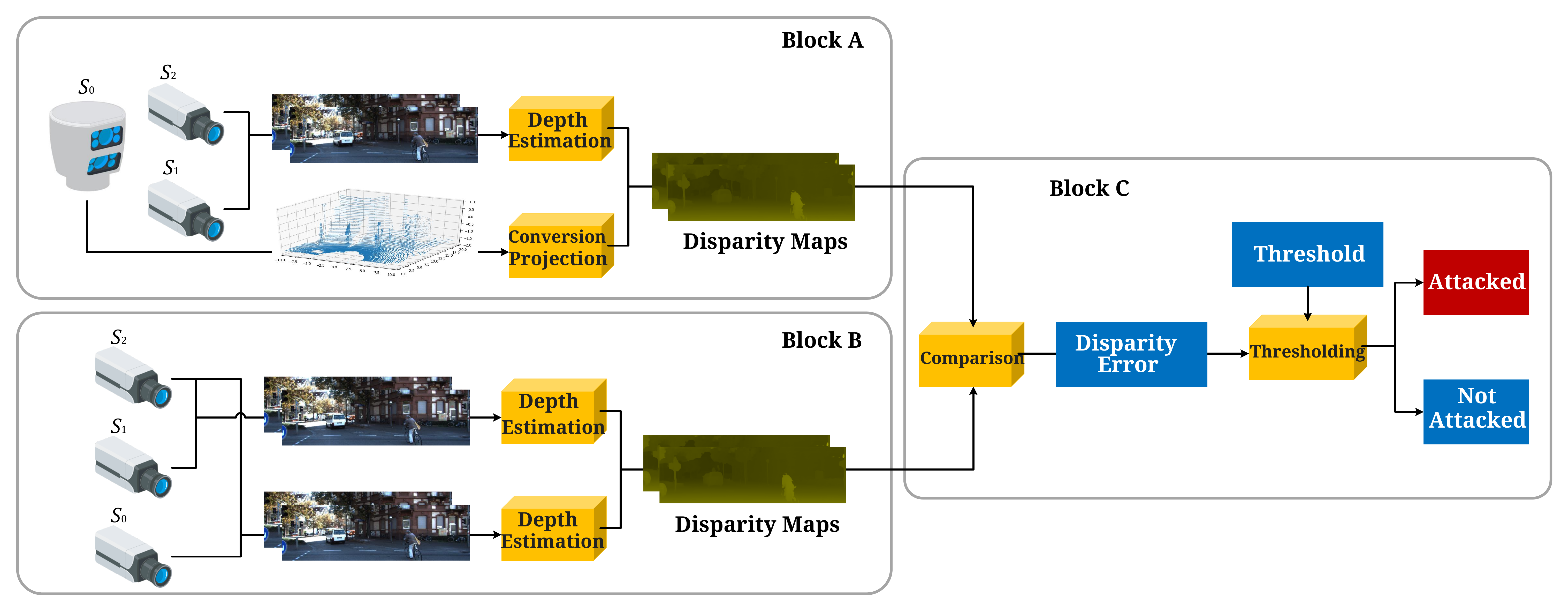}
    \caption{The detection method is designed for two three-sensor systems. For Scenario 1, the detection method structure involves Block A and Block C. For Scenario 2, the detection method structure involves Block B and Block C.}
    \label{fig:detection}
\end{figure*}

\section{Attack Detection}
\label{ch3:sec.detection}

In this section, we first explain why we target a system that consists of three sensors and then make a hypothesis about the feasibility of the detection task in a three-sensor system. Next, we focus on the disparity error and how to detect attacks by analyzing the disparity error distribution. Specifically, we elaborate on the calculation of disparity error for two main scenarios of the three-sensor system and conduct extensive experiments on the real dataset to prove the hypothesis and show the accuracy of our method.

\subsection{Three-Sensor Systems and a Hypothesis}

For attack detection, we aim to detect attacks with sufficient accuracy using the \textbf{smallest} number of sensors. Due to the trade-off between cost and performance of object detection, usually, there is one primary LiDAR mounted on the roof of an autonomous vehicle which is also equipped with multiple cameras~\cite{apollo19perception}. To obtain two versions of a depth map from an AV system like this, we at least need three sensors: one LiDAR and two cameras, or three cameras.

For the first case, we notice that LiDAR can produce accurate depth maps in point clouds. On the other hand, stereo-vision based depth estimation algorithms can also generate depth maps out of stereo images. Intuitively, if we compare a depth map produced by LiDAR and another generated by two stereo images, we may be able to detect the distortion of depth information caused by optical attacks on such a three-sensor system. Consequently, the first three-sensor system that we consider consists of one LiDAR and two cameras.

For the second case, it is obvious that we can use the image taken by one camera as the reference, and then use images taken by two other cameras to produce two depth maps using a depth estimation model. By comparing the two depth maps, we may be able to detect attacks on the second three-sensor system that consists of three cameras.

To briefly summarize, in this chapter, we consider two three-sensor systems that are practical in autonomous driving systems. Furthermore, we make a \textbf{hypothesis} that, with appropriate design, we can accurately detect the optical attacks on both of the three-sensor systems. In the following discussions, we verify this hypothesis by elaborating on the mechanisms to detect attacks on the two three-sensor systems, respectively.

\subsection{Scenario 1: One LiDAR and Two Cameras}

For this scenario, we denote the LiDAR as sensor $S_{0}$, and denote two cameras from the right to the left as $S_{1}$ and $S_{2}$, respectively. Accordingly, the data generated by the sensors are denoted as $D_{0}$, $D_{1}$, and $D_{2}$. The detection system we design for this scenario is shown as the combination of Block A and Block C in Fig.~\ref{fig:detection}.

In our system illustrated in Fig.~\ref{fig:detection} (Block A \& Block C), we designate camera $S_2$ as the reference camera to generate two disparity maps. Specifically, we set the image taken by the reference camera (i.e., $D_{2}$) as the reference image, and then feed it with the image taken by the other camera (i.e., $D_{1}$) to a depth estimation model to produce the first disparity map, denoted as $DM_{1,2}$, in which we include the disparity information at each pixel of the reference image.

Here, we note that many algorithms have been developed in recent years that can generate accurate disparity maps with camera images. For instance, PSMNet~\cite{chang18pyramid} and DORN~\cite{fu18deep} are two recent algorithms based on deep learning. In this chapter, we use the former one to produce disparity maps, since the PSMNet model gives more accurate results over others.

In addition to $DM_{1,2}$, we also project the depth information (i.e., point cloud $D_{0}$) obtained by LiDAR onto the reference image $D_{2}$ to generate the second disparity map, denoted as $DM_{0,2}$. In this procedure, the depth information in the point cloud $D_{0}$ is converted to disparities by using Eqn.~(\ref{eq:triangulation}). To generate all disparity maps in the same scale, $f$ in the equation is set to be the same as the focal length of the cameras, and $b$ is set to be equal to the baseline of the stereo vision formed by $S_{1}$ and $S_{2}$. Then, disparities are projected onto $D_{2}$.

\begin{figure*}[t]
	\centering
	\subfloat[]{
		\label{fig:s0_1_d}
		\includegraphics[width=0.23\textwidth]{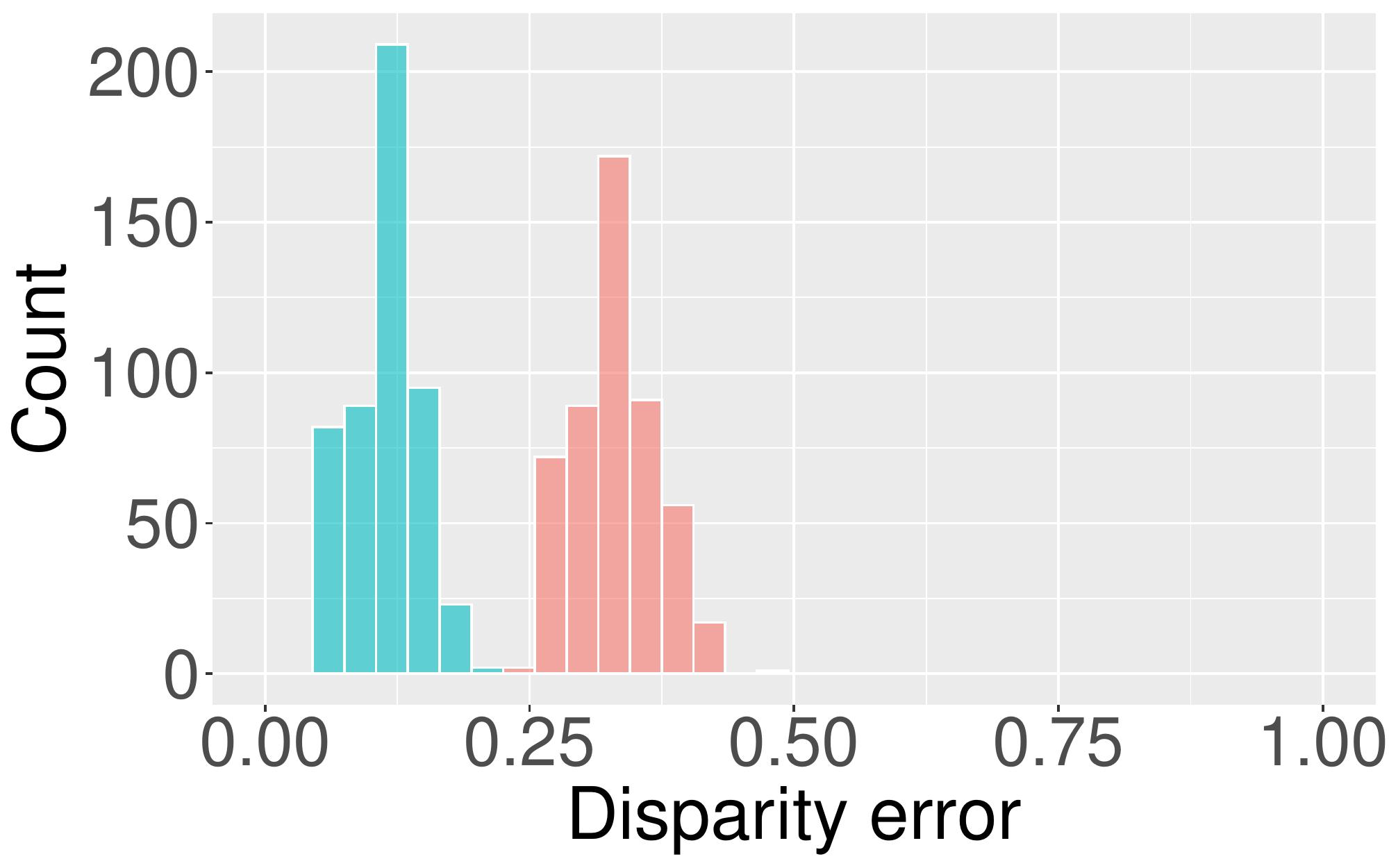}}
	\hfil
	\subfloat[]{
		\label{fig:s1_1_d}
		\includegraphics[width=0.23\textwidth]{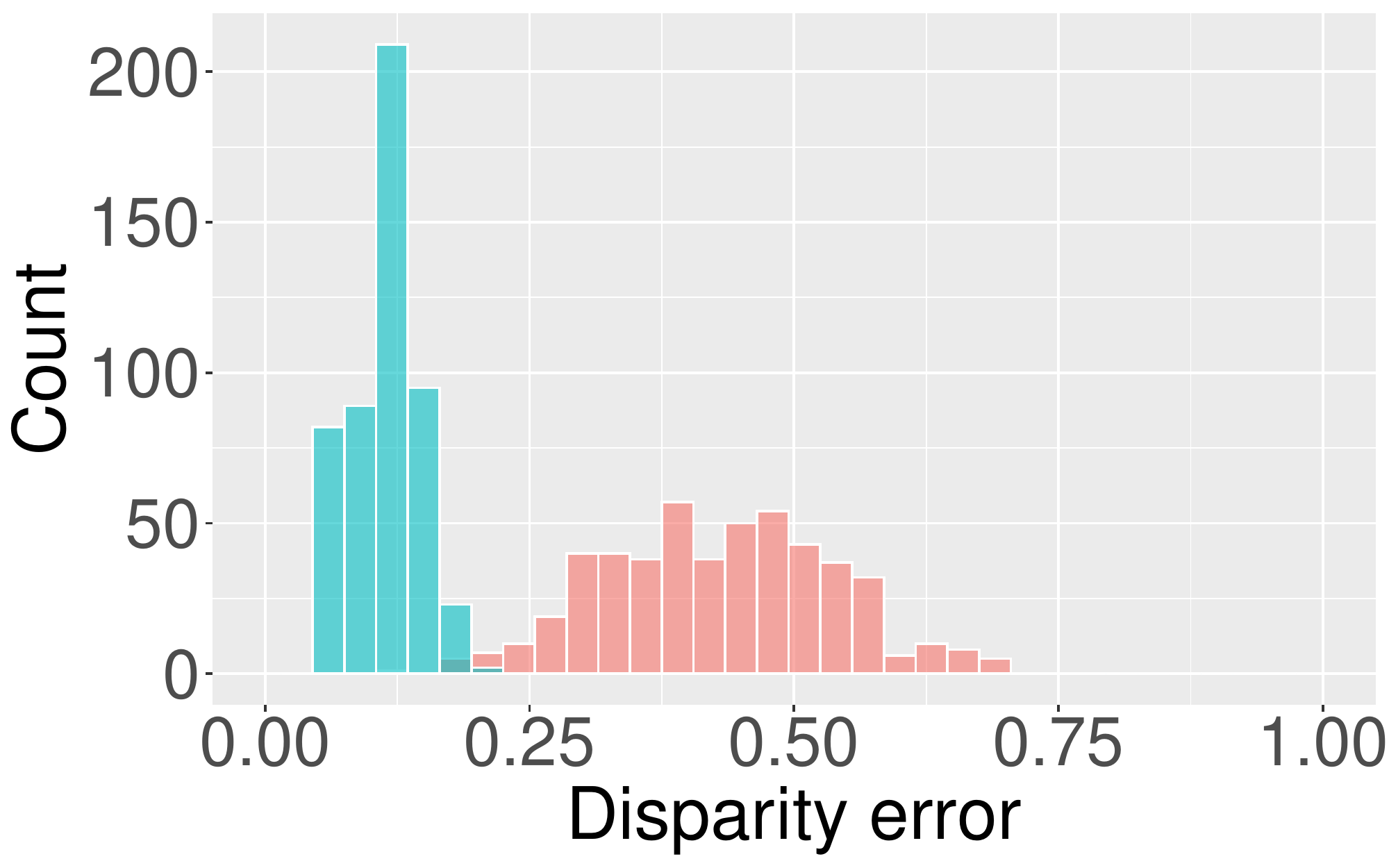}}
	\hfil
	\subfloat[]{
		\label{fig:s2_1_d}
		\includegraphics[width=0.23\textwidth]{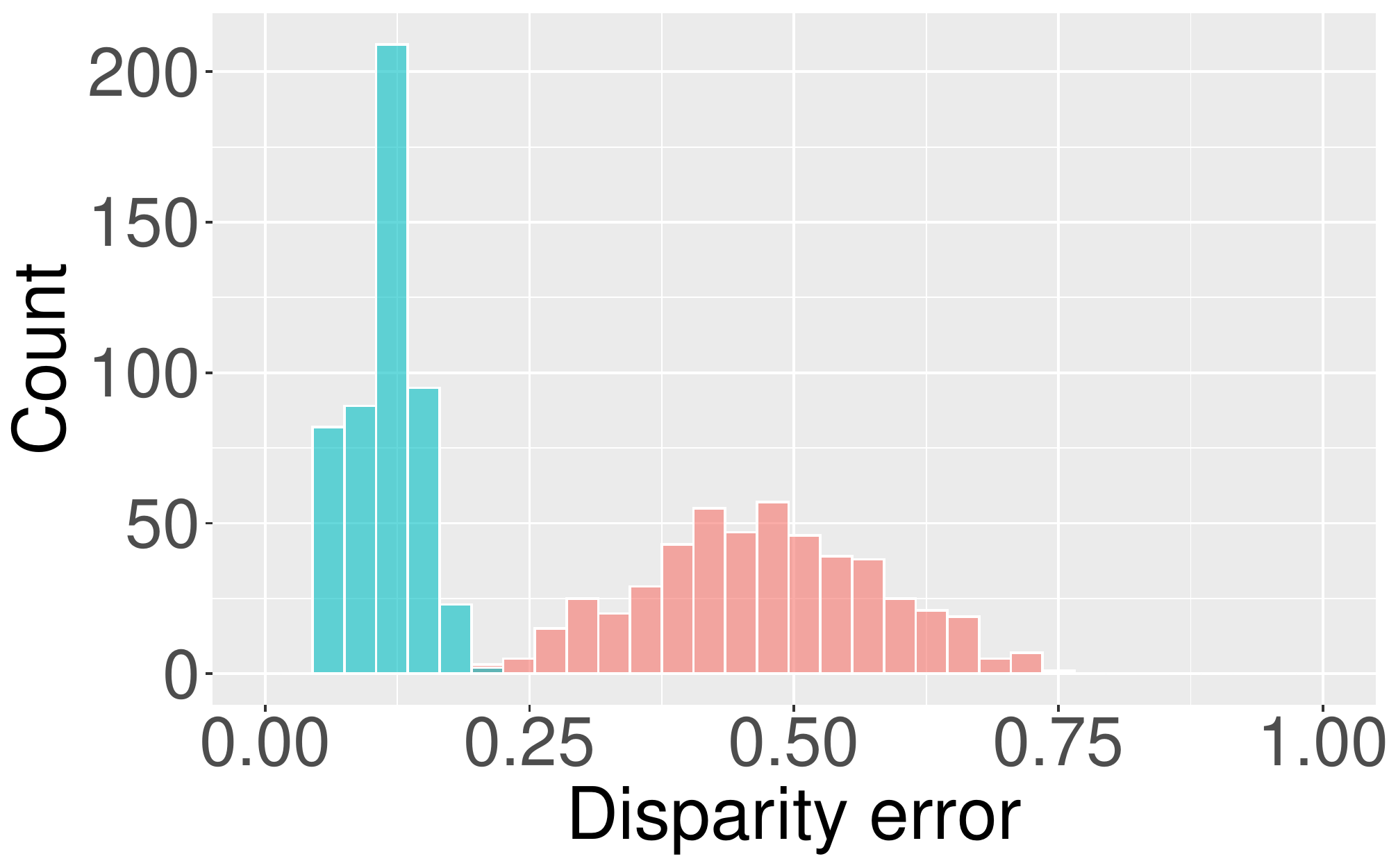}}
	\hfil
	\subfloat[]{
		\label{fig:s0_s1_1_d}
		\includegraphics[width=0.23\textwidth]{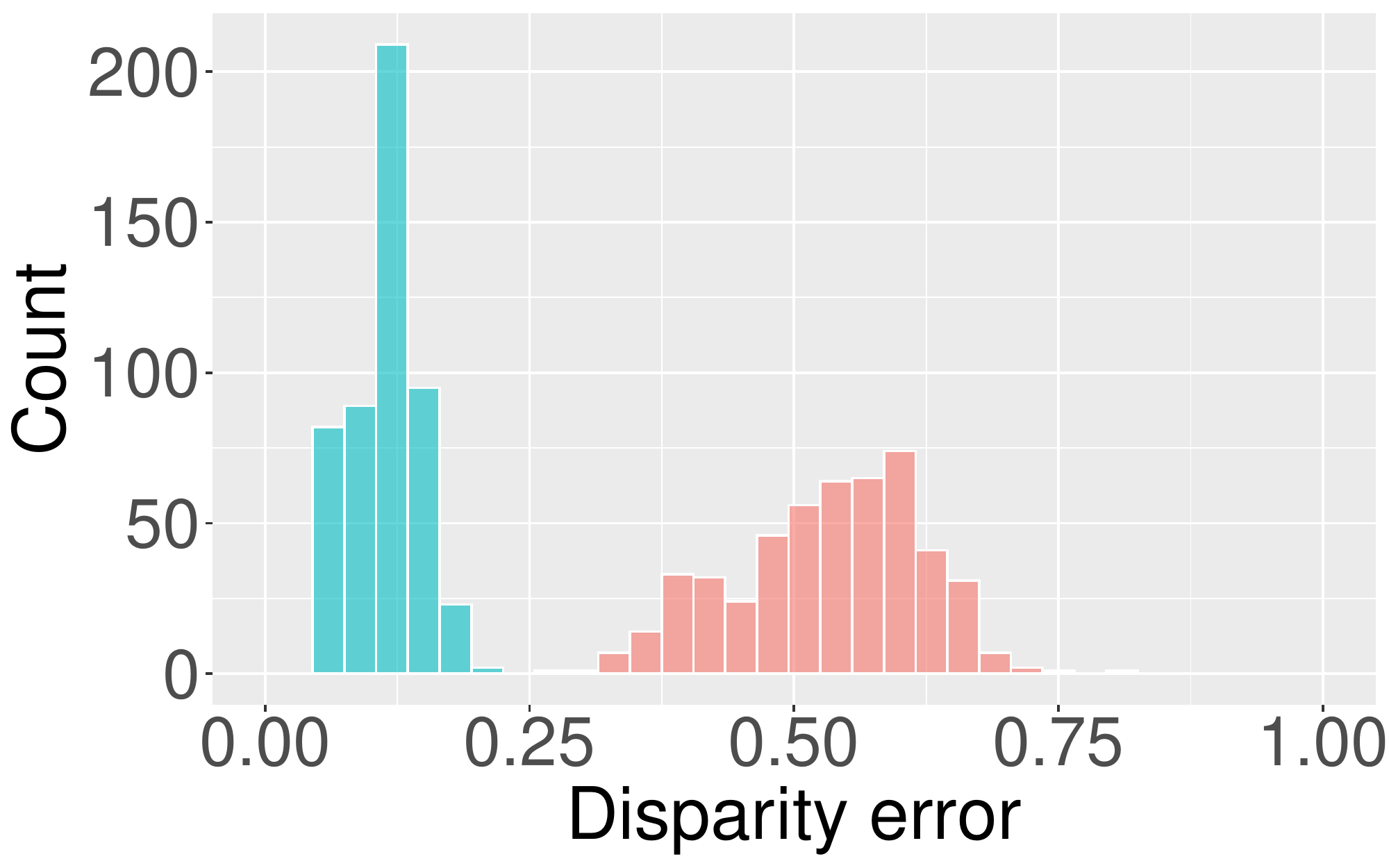}}
	\hfil

	\subfloat[]{
		\label{fig:s0_s2_1_d}
		\includegraphics[width=0.23\textwidth]{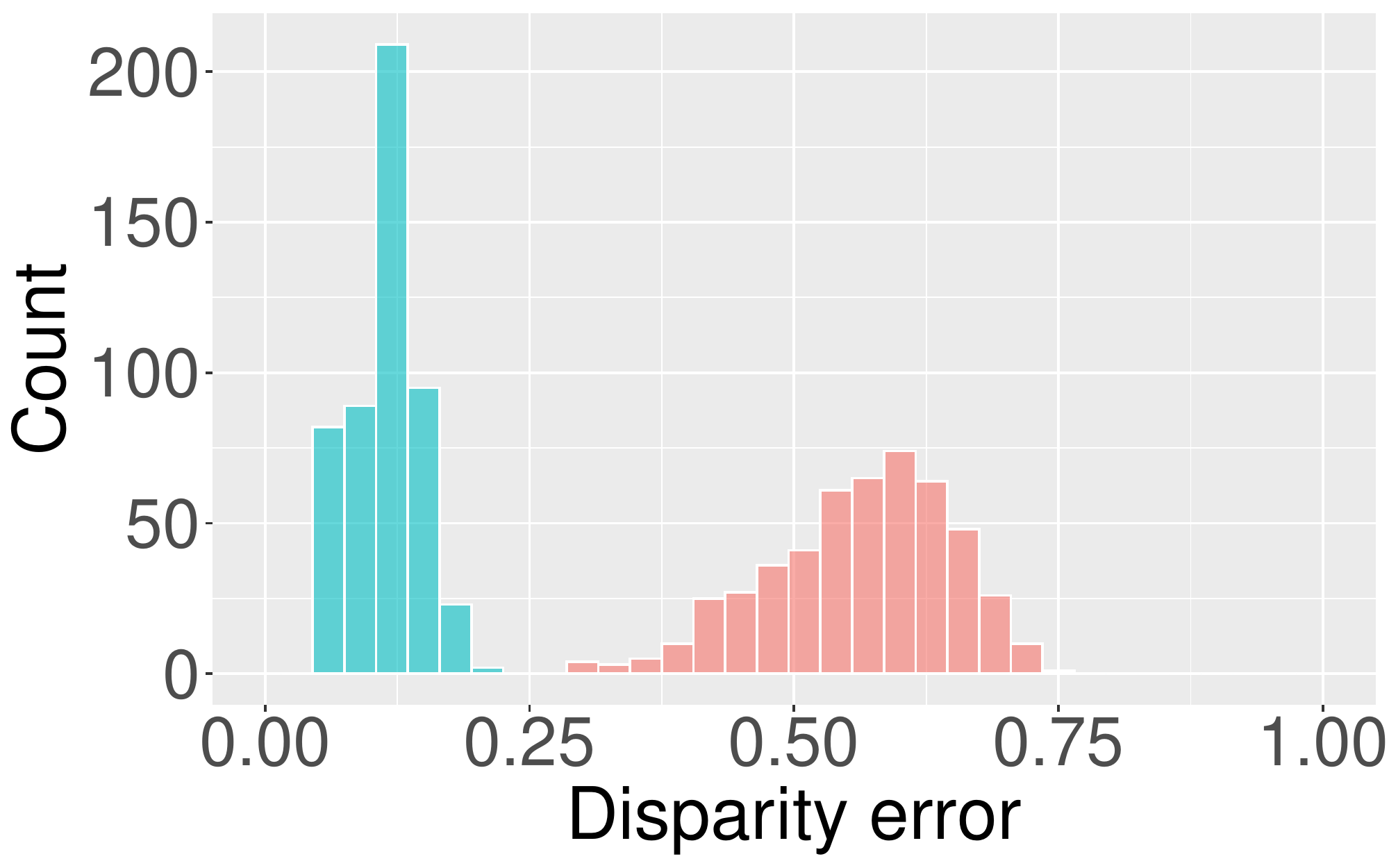}}
	\hfil
	\subfloat[]{
		\label{fig:s1_s2_1_d}
		\includegraphics[width=0.23\textwidth]{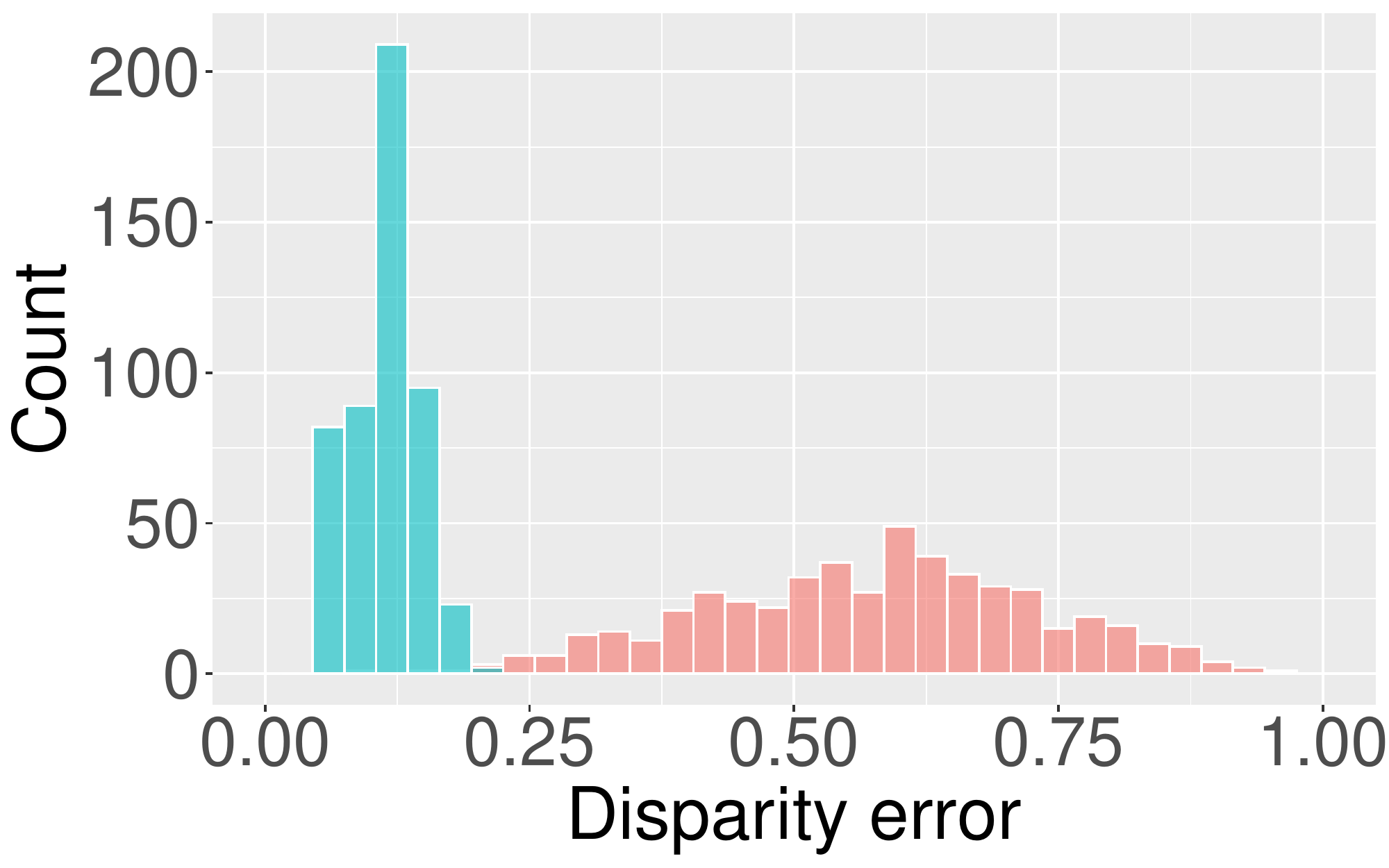}}
	\hfil
	\subfloat[]{
		\label{fig:s0_s1_s2_1_d}
		\includegraphics[width=0.23\textwidth]{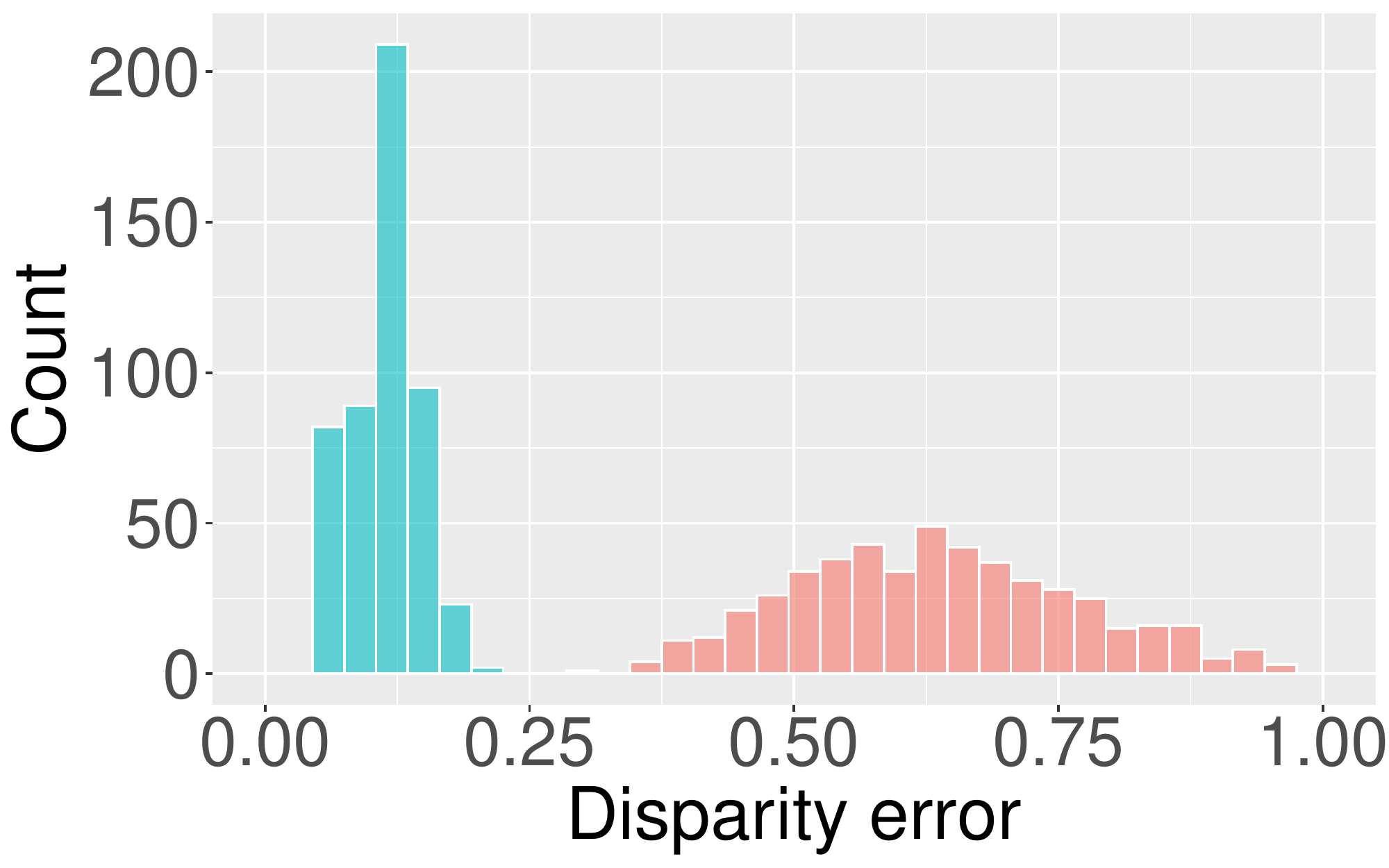}}
	\caption{Distributions of disparity error $E_{0,1,2}$ in normal case (cyan bars) and in attack cases (red bars) for Scenario 1. (a)~No attack vs. $S_{0}$ attacked; (b)~No attack vs. $S_{1}$ attacked; (c)~No attack vs. $S_{2}$ attacked; (d)~No attack vs. $S_{0},S_{1}$ attacked; (e)~No attack vs. $S_{0},S_{2}$ attacked; (f)~No attack vs. $S_{1},S_{2}$ attacked; (g)~No attack vs. $S_{0},S_{1},S_{2}$ attacked.}
	\label{fig:scen_1_exp_d}

\end{figure*}

\begin{figure*}
	\centering
	\subfloat[]{
		\label{fig:s0_1_r}
		\includegraphics[width=0.23\textwidth]{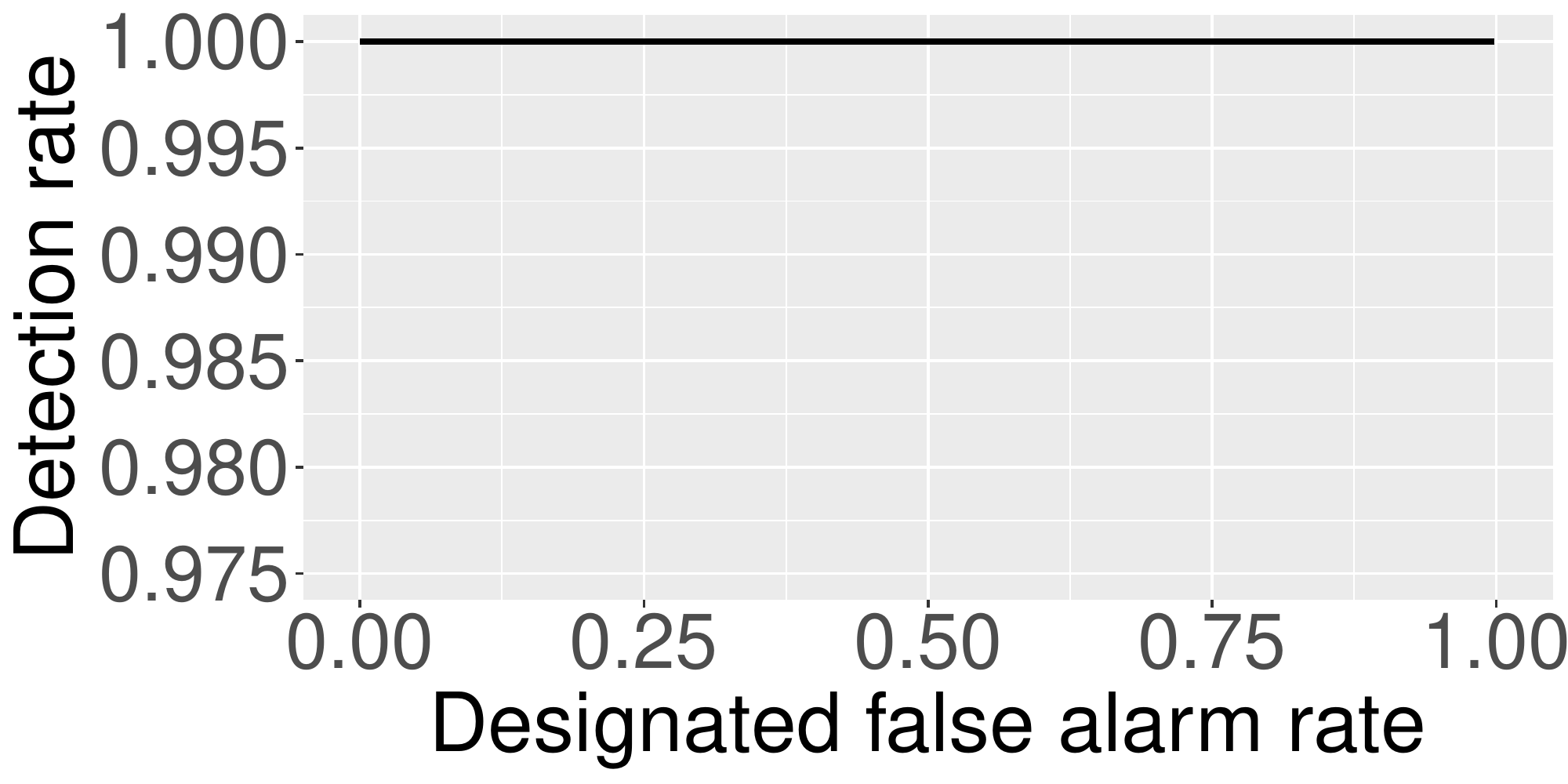}}
	\hfil
	\subfloat[]{
		\label{fig:s1_1_r}
		\includegraphics[width=0.23\textwidth]{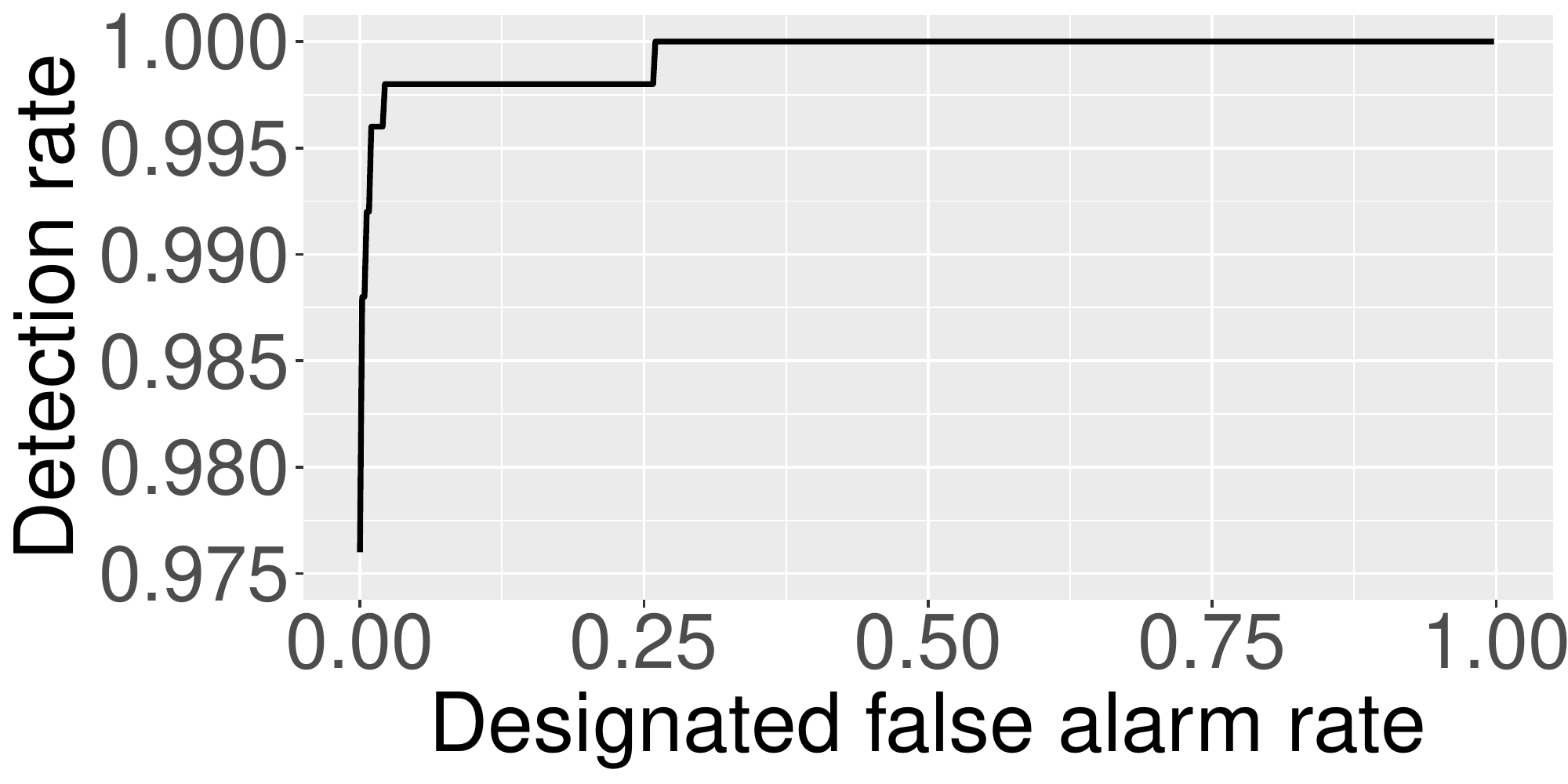}}
	\hfil
	\subfloat[]{
		\label{fig:s2_1_r}
		\includegraphics[width=0.23\textwidth]{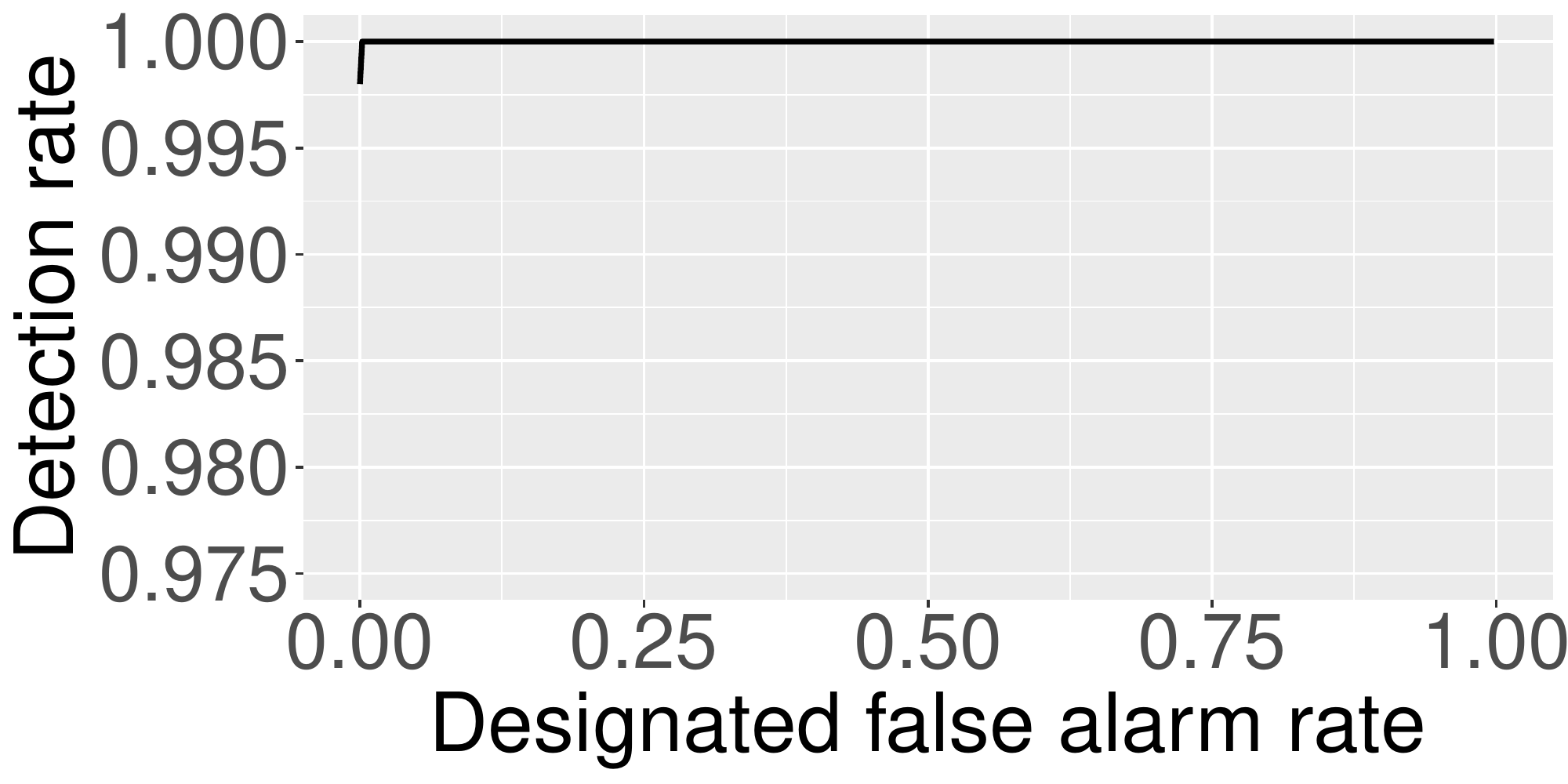}}
	\hfil
	\subfloat[]{
		\label{fig:s0_s1_1_r}
		\includegraphics[width=0.23\textwidth]{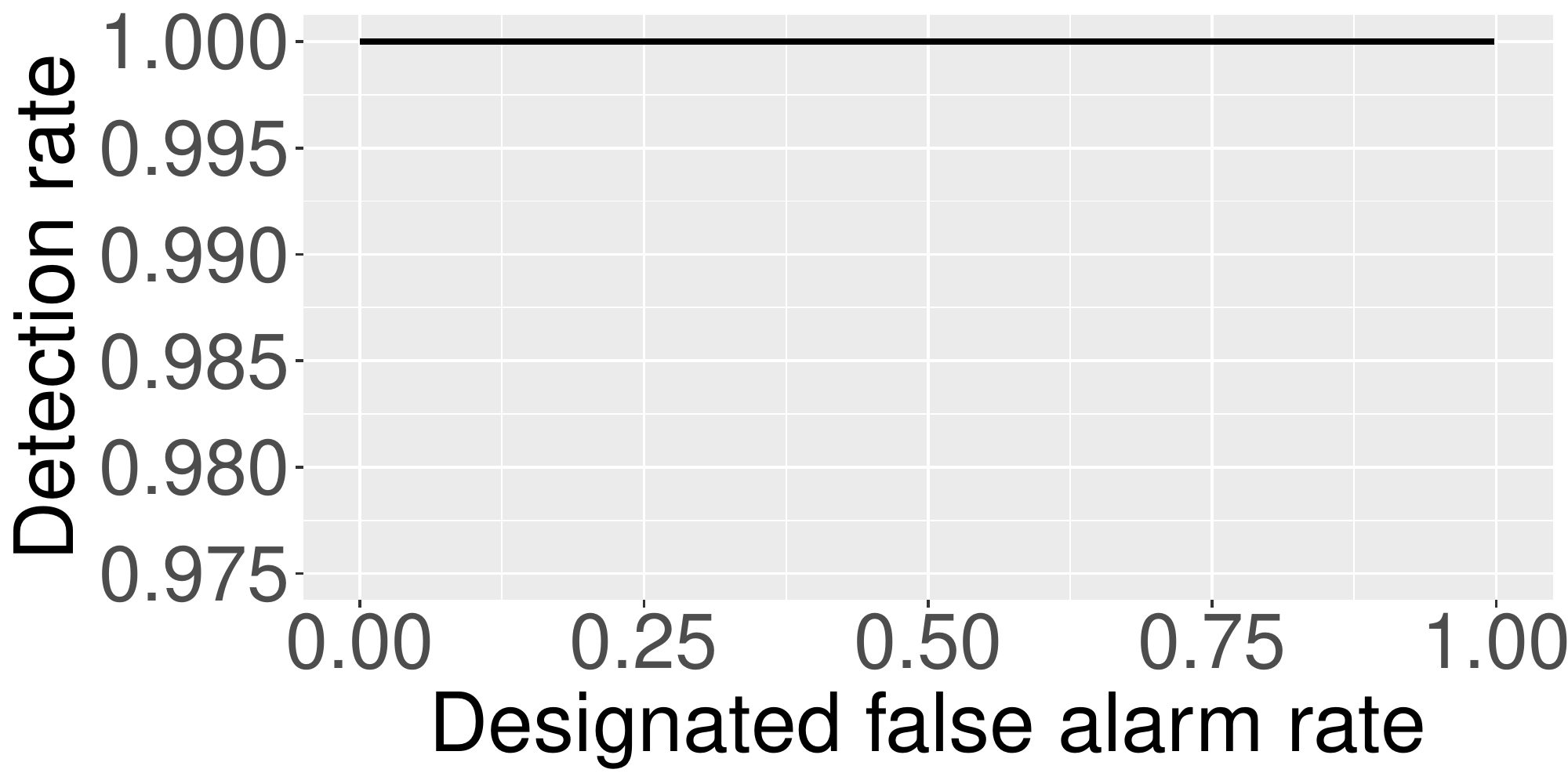}}
	\hfil

	\subfloat[]{
		\label{fig:s0_s2_1_r}
		\includegraphics[width=0.23\textwidth]{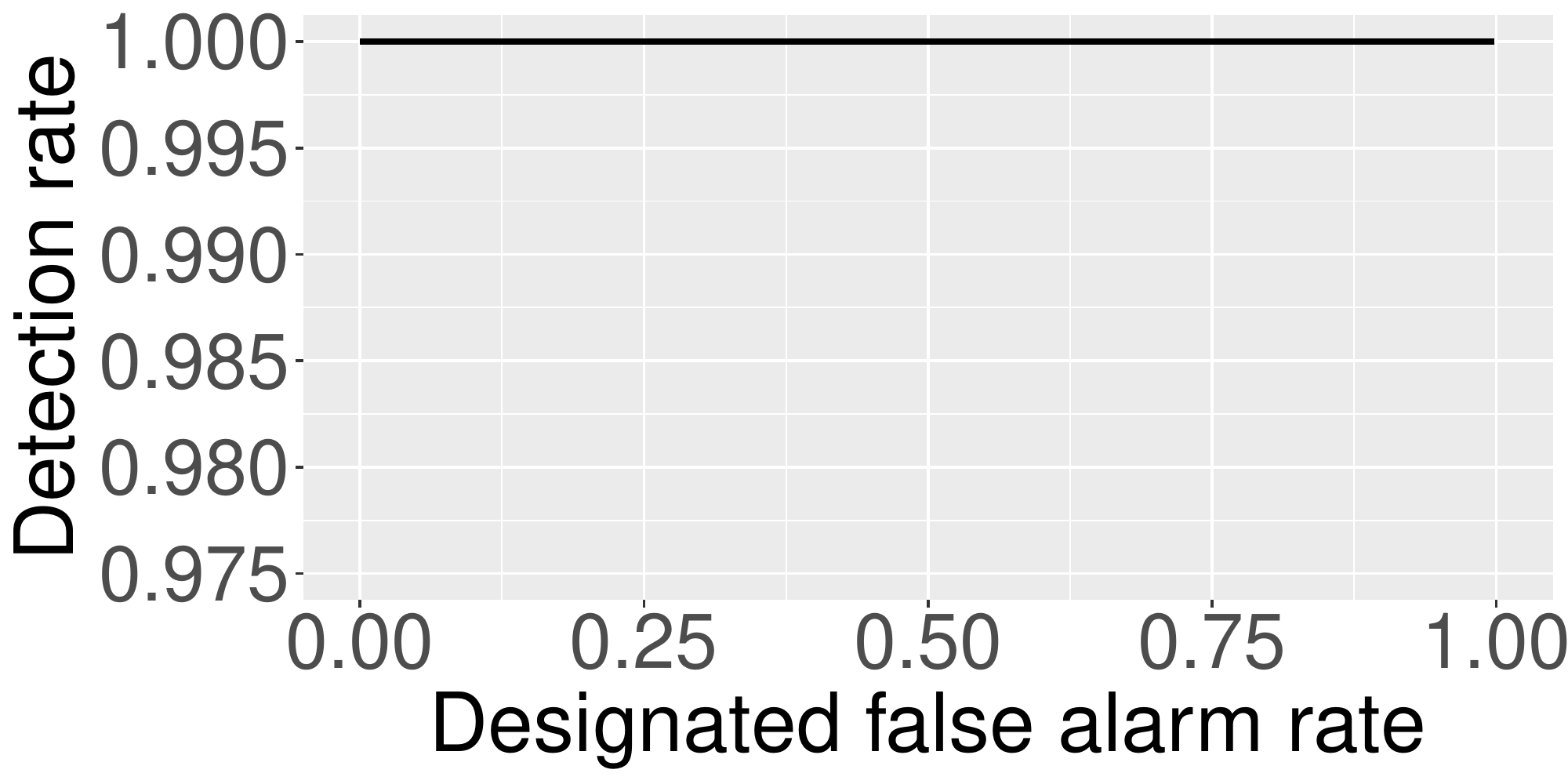}}
	\hfil
	\subfloat[]{
		\label{fig:s1_s2_1_r}
		\includegraphics[width=0.23\textwidth]{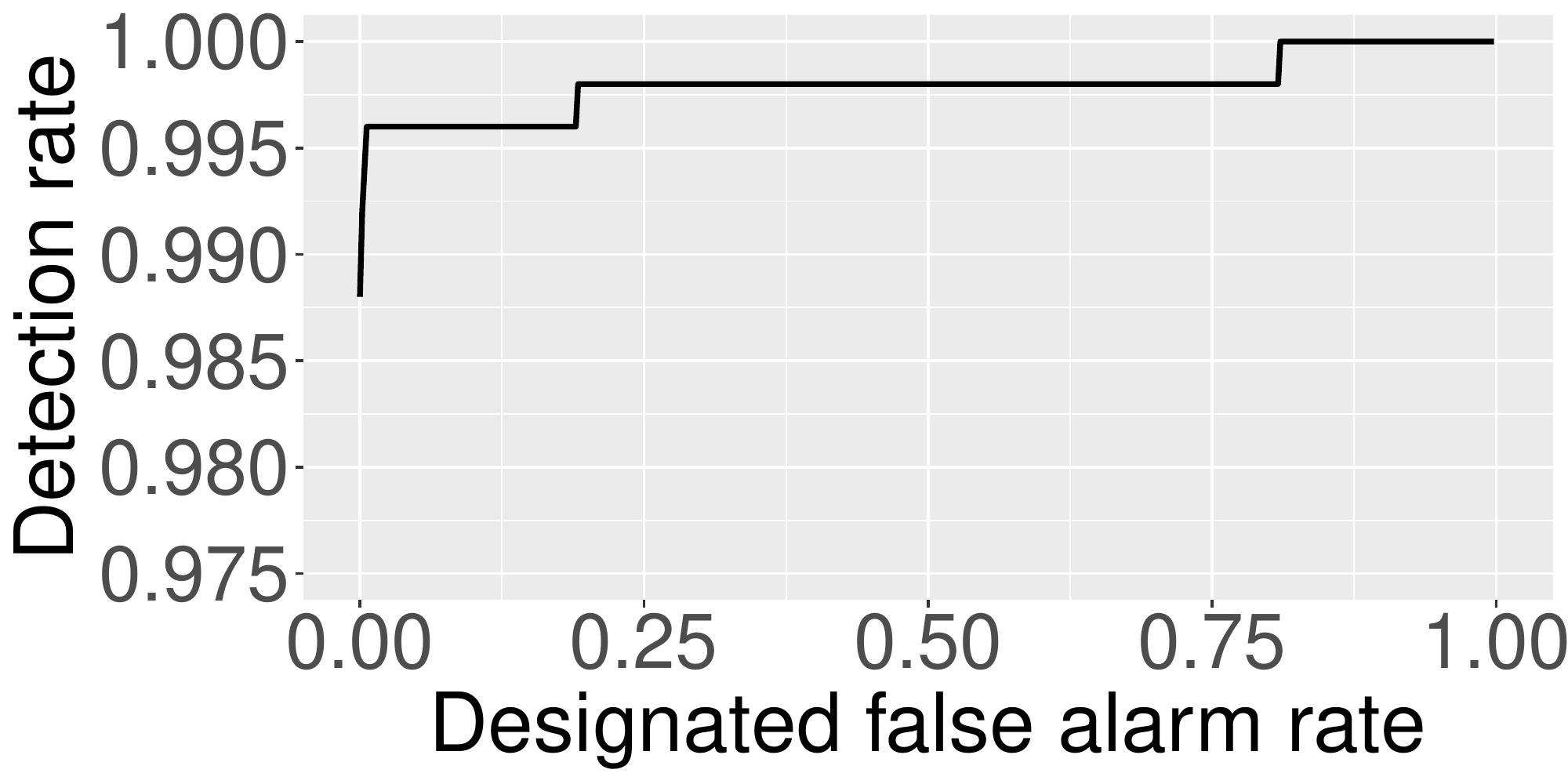}}
	\hfil
	\subfloat[]{
		\label{fig:s0_s1_s2_1_r}
		\includegraphics[width=0.23\textwidth]{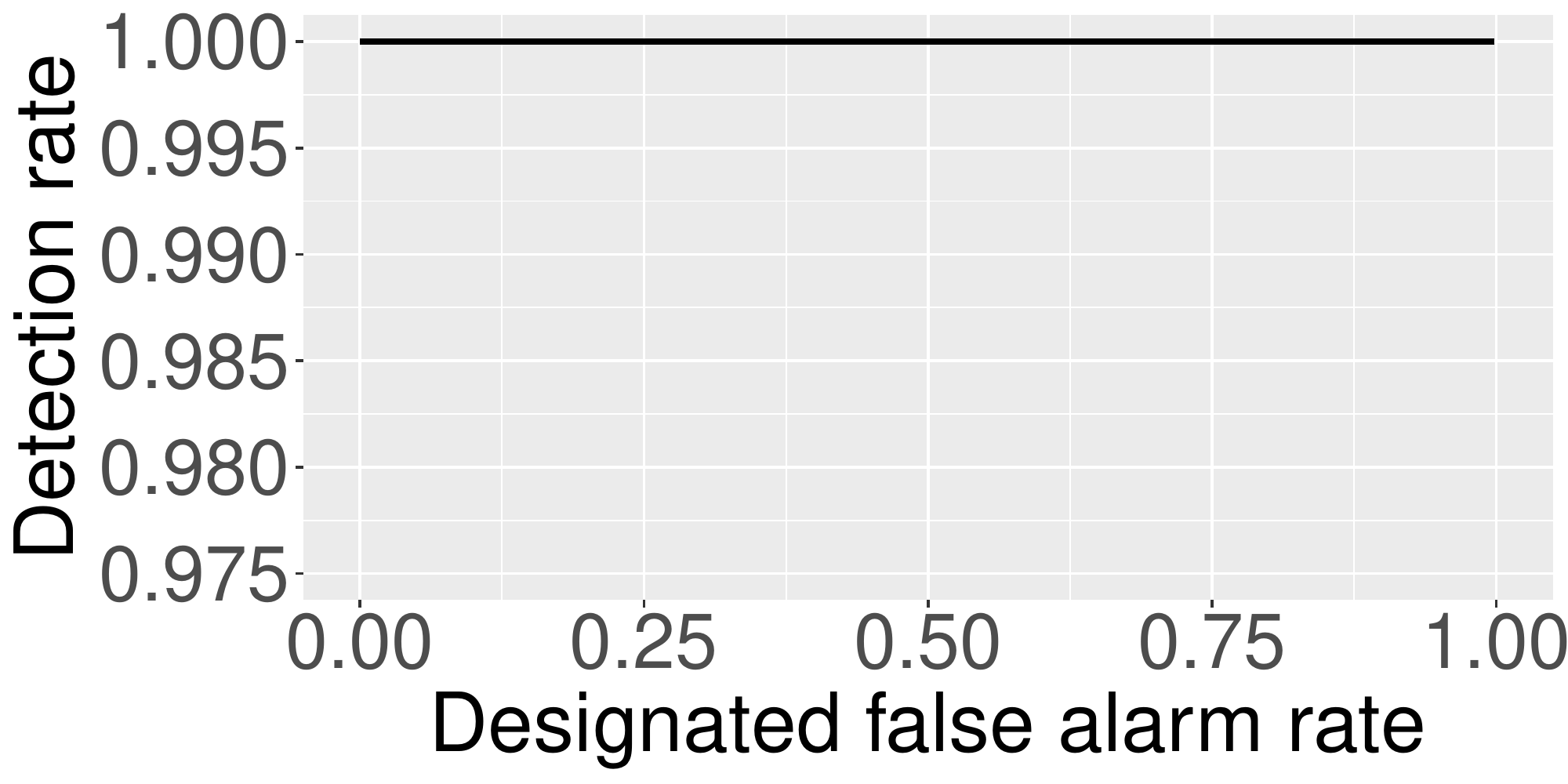}}
	\caption{Detection rate varies with the designated false alarm rate $r$ in each attack case for Scenario 1. (a)~$S_{0}$ is attacked; (b)~$S_{1}$ is attacked; (c)~$S_{2}$ is attacked; (d)~$S_{0}$ and $S_{1}$ are attacked; (e)~$S_{0}$ and $S_{2}$ are attacked; (f)~$S_{1}$ and $S_{2}$ are attacked; (g)~$S_{0}$, $S_{1}$ and $S_{2}$ are attacked.}
	\label{fig:scen_1_exp_r}

\end{figure*}

To detect optical attacks, we compare the two disparity maps: $DM_{1,2}$ and $DM_{0,2}$. Since in the two procedures described above we use $D_2$ as the reference image, the two produced disparity maps have the same scale and share the same view. Thus, we can compare them directly. Here, it shall be noted that $DM_{0,2}$ contains sparse disparity information, since the distances in the point cloud $D_0$ are not densely measured. Therefore, in this comparison procedure, we only compare pixels that have valid disparity in $DM_{0,2}$. For valid pixels, we take the KITTI stereo benchmark~\cite{menze15object} as a reference and design our own standard, in which a disparity inconsistency for pixel $a$ is counted if and only if
\begin{equation}\label{eq:disparity_error}
\begin{cases}
    |DM_{0,2}(a)-DM_{1,2}(a)|>3,\\
    \frac{|DM_{0,2}(a)-DM_{1,2}(a)|}{min(DM_{0,2}(a),DM_{1,2}(a))}>0.05.
\end{cases}
\end{equation}

Based on the \textbf{pixel-level} disparity inconsistencies, we evaluate the disparity error, denoted as $E_{0,1,2}$, between $DM_{0,2}$ and $DM_{1,2}$. In particular, the \textbf{disparity error} is defined as the ratio of the quantity of \textbf{pixel-level} disparity inconsistencies over the total number of valid pixels.

Finally, to detect an optical attack, we need to evaluate the ranges of disparity errors in normal cases and attack cases. We believe that the two ranges are distinguishable and we can then use a threshold $\theta_{0,1,2}$ to determine whether there is an optical attack. In particular, we determine that one of the three sensors is under attack if and only if $E_{0,1,2}>\theta_{0,1,2}$.

The threshold $\theta_{0,1,2}$ is determined offline based on the value distribution of $E_{0,1,2}$ when the three-sensor system is in a safe environment, since only the correct data of optical sensors are available on an autonomous vehicle in normal conditions. According to the statistical law, we use a large number of samples of the disparity error to represent its real distribution in normal cases and define a designated false alarm rate $r$ to arbitrarily set $r \times 100\%$ of them as virtual outliers, where $r \in [0,1]$. Then, the critical value separating inliers from outliers is $\theta_{0,1,2}$:
\begin{equation}\label{eq:detection_threshold}
    \frac{\text{\# samples of } E_{0,1,2} > \theta_{0,1,2}}{\text{\# samples of } E_{0,1,2}} = r.
\end{equation}
In this manner, the threshold, which only moves within the value range of disparity error samples, is determined by the value of $r$. Intuitively, to obtain the best detection performance, we should maximize the detection rate and minimize the designated false alarm rate. Hence, we show how the detection rate varies when adjusting the threshold via $r$.

\subsection{Experiments for Scenario 1}

\begin{table}[t]
    \centering
	\caption{Detection rate comparison between the method in our framework and the baseline}\label{tbl:dete_exp_r}
	\begin{tabular}{|c|c|c|c|}
	    \hline
	        \multirow{2}{*}{Method} & \multirow{2}{*}{Granularity} & \multicolumn{2}{c|}{Avg. Detection Rate} \\
		\cline{3-4}
		    && Scenario 1 & Scenario 2 \\
		\hline
    	\hline
    	Ours ($r=0\%$) & Pixel-level & $99.46\%$ & $99.94\%$ \\
    	Ours ($r=1\%$) & Pixel-level & $99.89\%$ & $99.97\%$ \\
    	Ours ($r=2\%$) & Pixel-level & $99.89\%$ & $100\%$ \\
    	Ours ($r=3\%$) & Pixel-level & $99.91\%$ & $100\%$ \\
    	Ours ($r=5\%$) & Pixel-level & $99.91\%$ & $100\%$ \\
    	\hline
    	Baseline (IoU $=0.5$) & Object-level & $65.32\%$ & $67.17\%$ \\
    	Baseline (IoU $=0.7$) & Object-level & $59.54\%$ & $63.39\%$\\
    	\hline
	\end{tabular}

\end{table}

\subsubsection{Setup}

To validate the hypothesis for this scenario, we conduct extensive experiments. We consider all possible attack cases where any sensor or any combination of the three sensors gets attacked. We use the data of one LiDAR and two cameras from the customized KITTI raw dataset~\cite{geiger13vision} to produce affected sensor data for each attack case. The production scheme is described in Section~\ref{subsubsec:setup}. The PSMNet model used in the experiments is provided by Wang \textit{et al.}~\cite{wang19pseudo}, which is trained on Scene Flow dataset~\cite{mayer16large} and KITTI object detection dataset~\cite{geiger12are}. As for metrics, we measure the disparity error distribution and the rate of correct detection for each attack case.

In the literature, there is no existing solution for optical attack detection. Therefore, to compare our scheme with possible solutions, we implement a possible baseline solution to optical attack detection that first extracts \textbf{object-level} features from the data of two individual sensors respectively, and then detects attacks by measuring the mismatches between the two sets of features. Specifically for Scenario 1, we implement the baseline with the backbone of PIXOR~\cite{yang18pixor} for extracting \textbf{object-level} features from point clouds and the backbone of Faster R-CNN~\cite{ren15faster} for extracting from images. We set IoU to $0.5$ and $0.7$ for determining feature mismatches.

\begin{figure*}[!t]
	\centering
	\subfloat[]{
		\label{fig:s0_2_d}
		\includegraphics[width=0.23\textwidth]{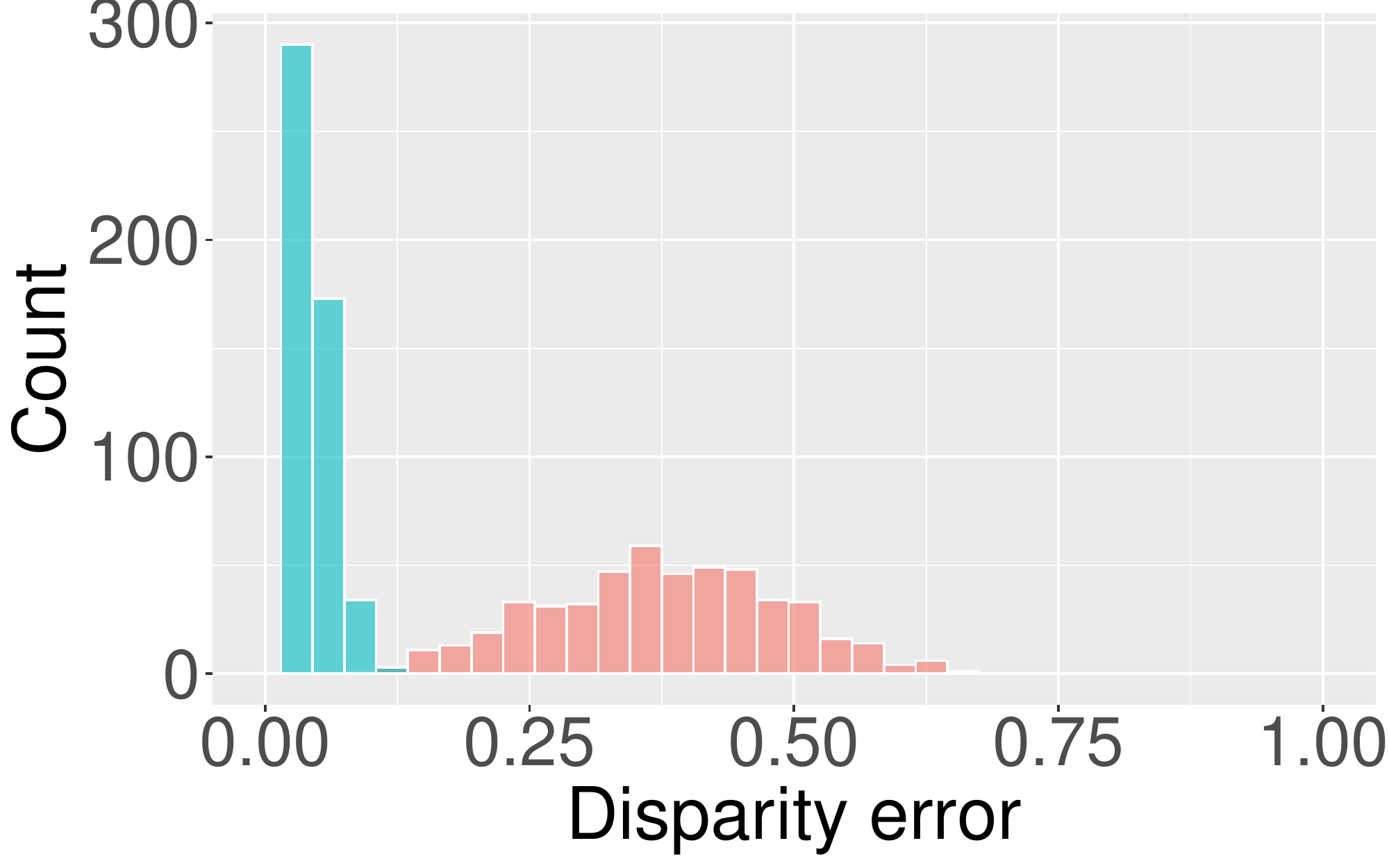}}
	\hfil
	\subfloat[]{
		\label{fig:s1_2_d}
		\includegraphics[width=0.23\textwidth]{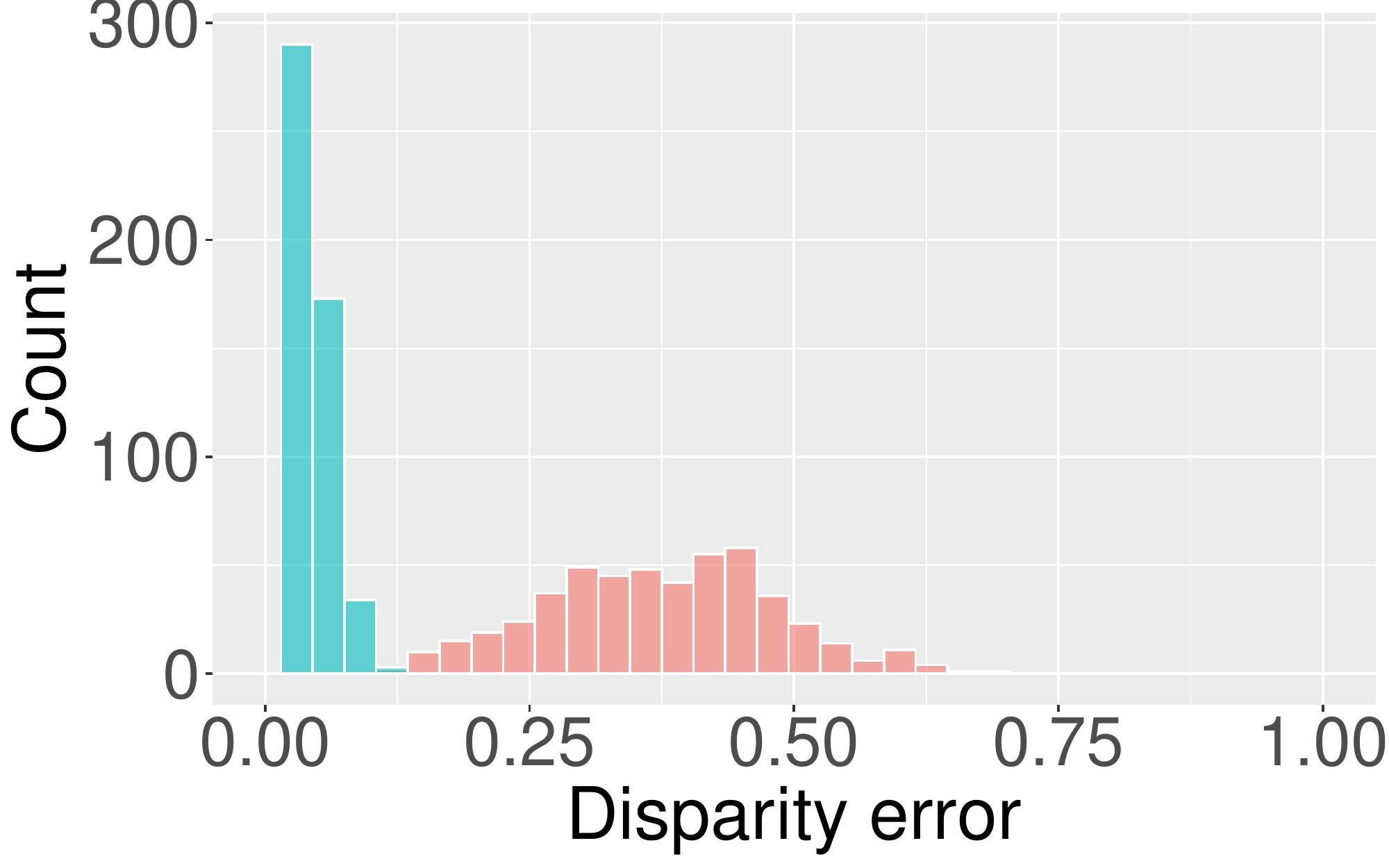}}
	\hfil
	\subfloat[]{
		\label{fig:s2_2_d}
		\includegraphics[width=0.23\textwidth]{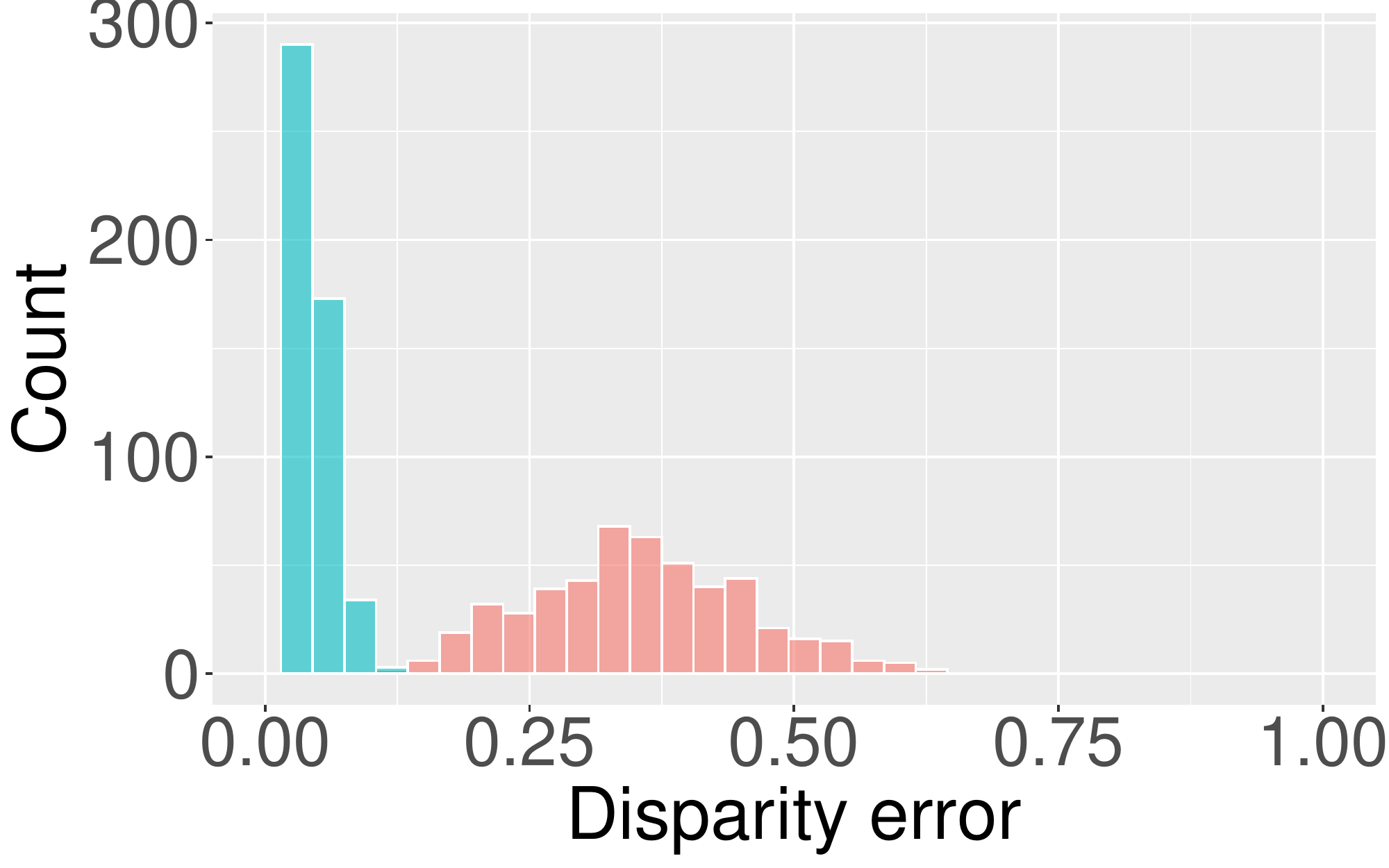}}
	\hfil
	\subfloat[]{
		\label{fig:s0_s1_2_d}
		\includegraphics[width=0.23\textwidth]{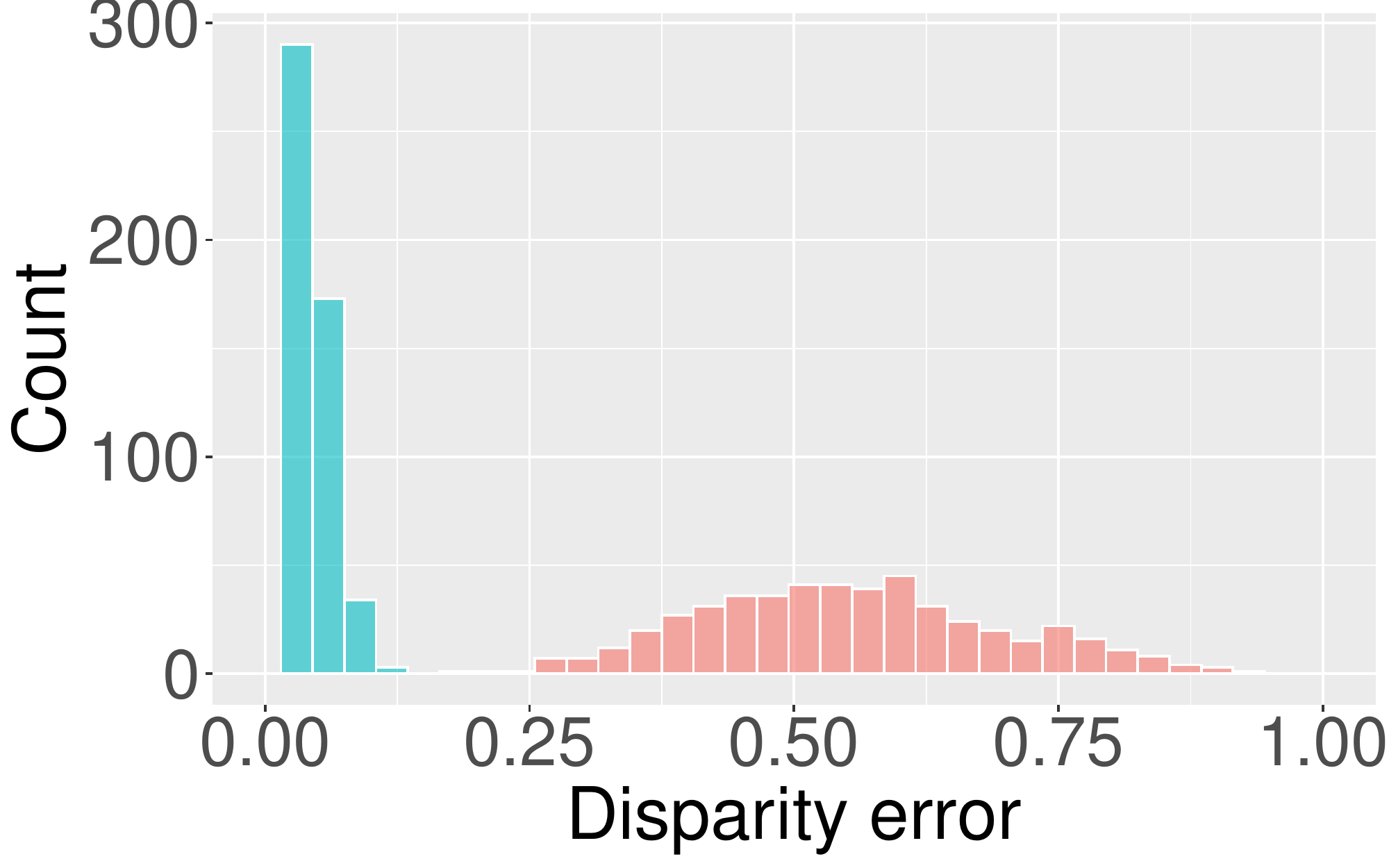}}
	\hfil

	\subfloat[]{
		\label{fig:s0_s2_2_d}
		\includegraphics[width=0.23\textwidth]{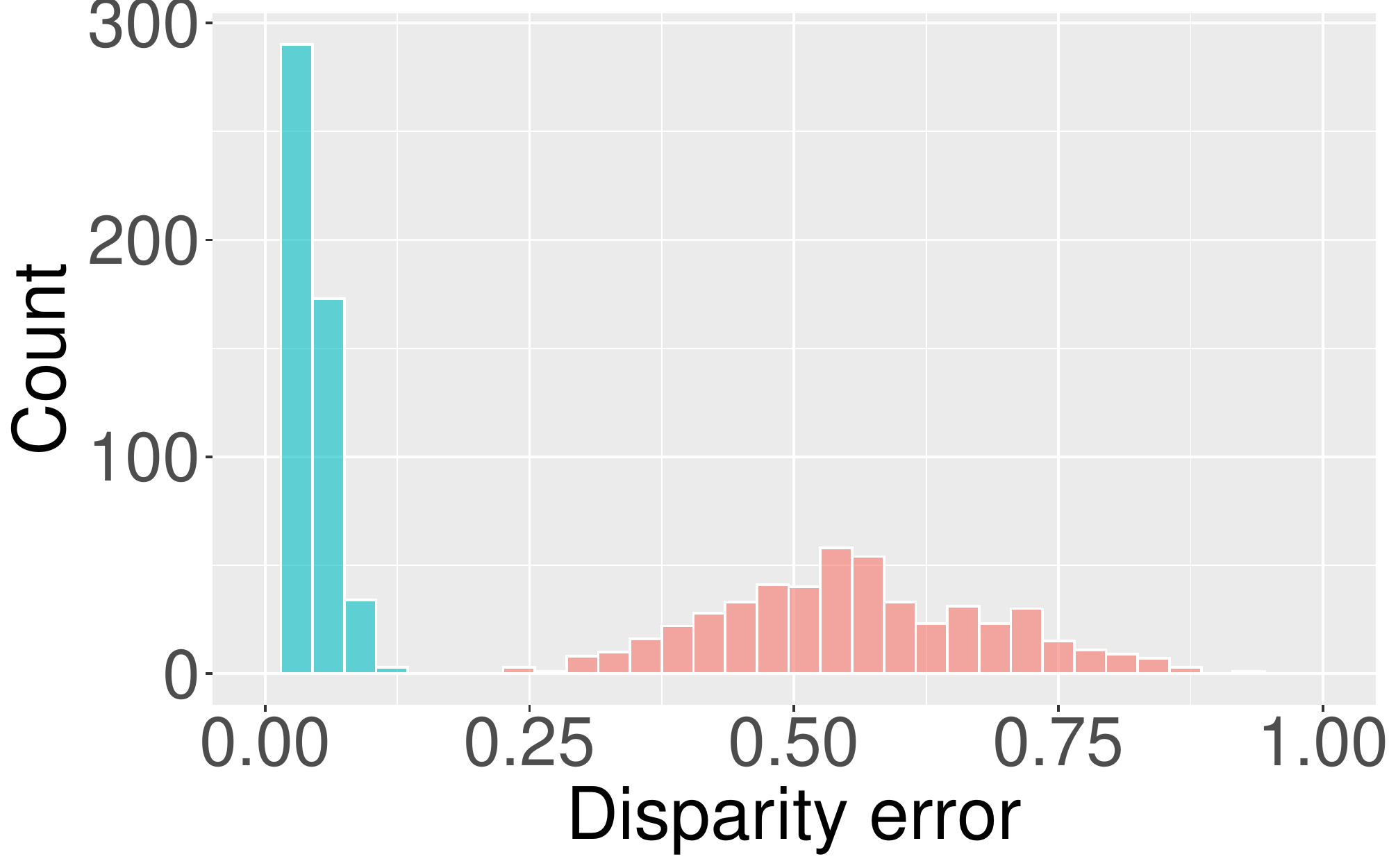}}
	\hfil
	\subfloat[]{
		\label{fig:s1_s2_2_d}
		\includegraphics[width=0.23\textwidth]{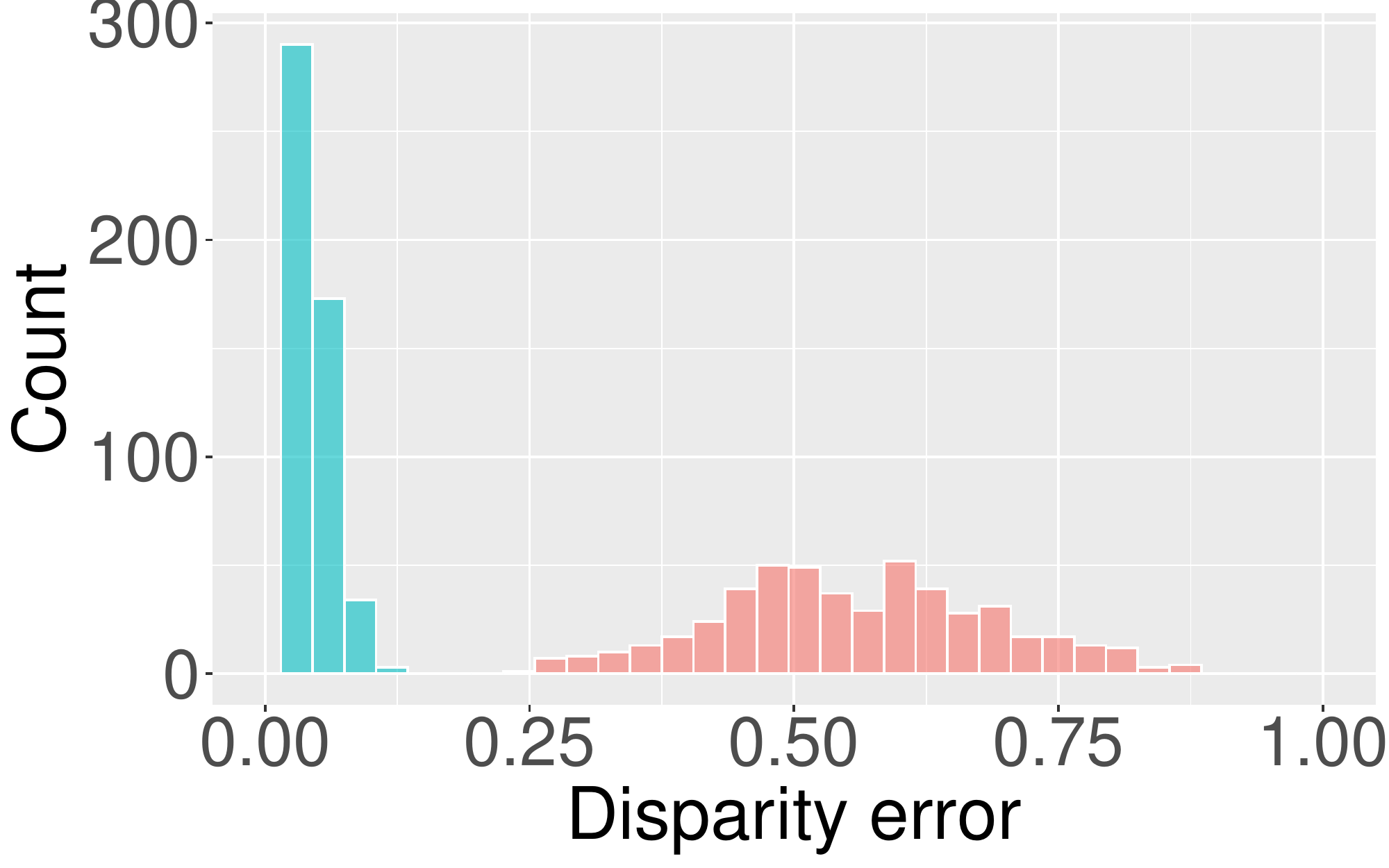}}
	\hfil
	\subfloat[]{
		\label{fig:s0_s1_s2_2_d}
		\includegraphics[width=0.23\textwidth]{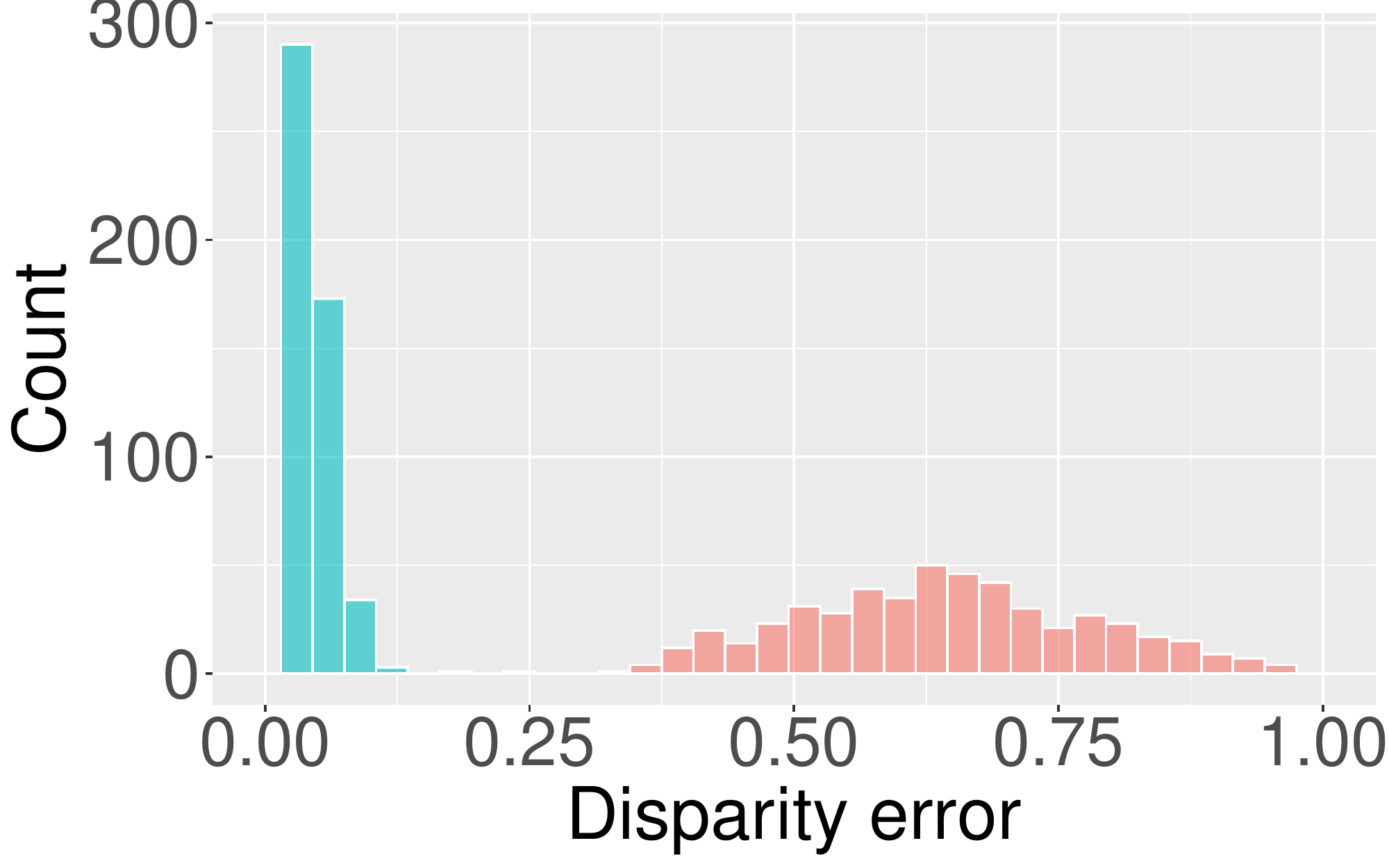}}
	\caption{Distributions of disparity error $E_{0,1,2}$ in normal case (cyan bars) and in attack cases (red bars) for Scenario 2. (a)~No attack vs. $S_{0}$ attacked; (b)~No attack vs. $S_{1}$ attacked; (c)~No attack vs. $S_{2}$ attacked; (d)~No attack vs. $S_{0},S_{1}$ attacked; (e)~No attack vs. $S_{0},S_{2}$ attacked; (f)~No attack vs. $S_{1},S_{2}$ attacked; (g)~No attack vs. $S_{0},S_{1},S_{2}$ attacked.}
	\label{fig:scen_2_exp_d}

\end{figure*}

\begin{figure*}
	\centering
	\subfloat[]{
		\label{fig:s0_2_r}
		\includegraphics[width=0.23\textwidth]{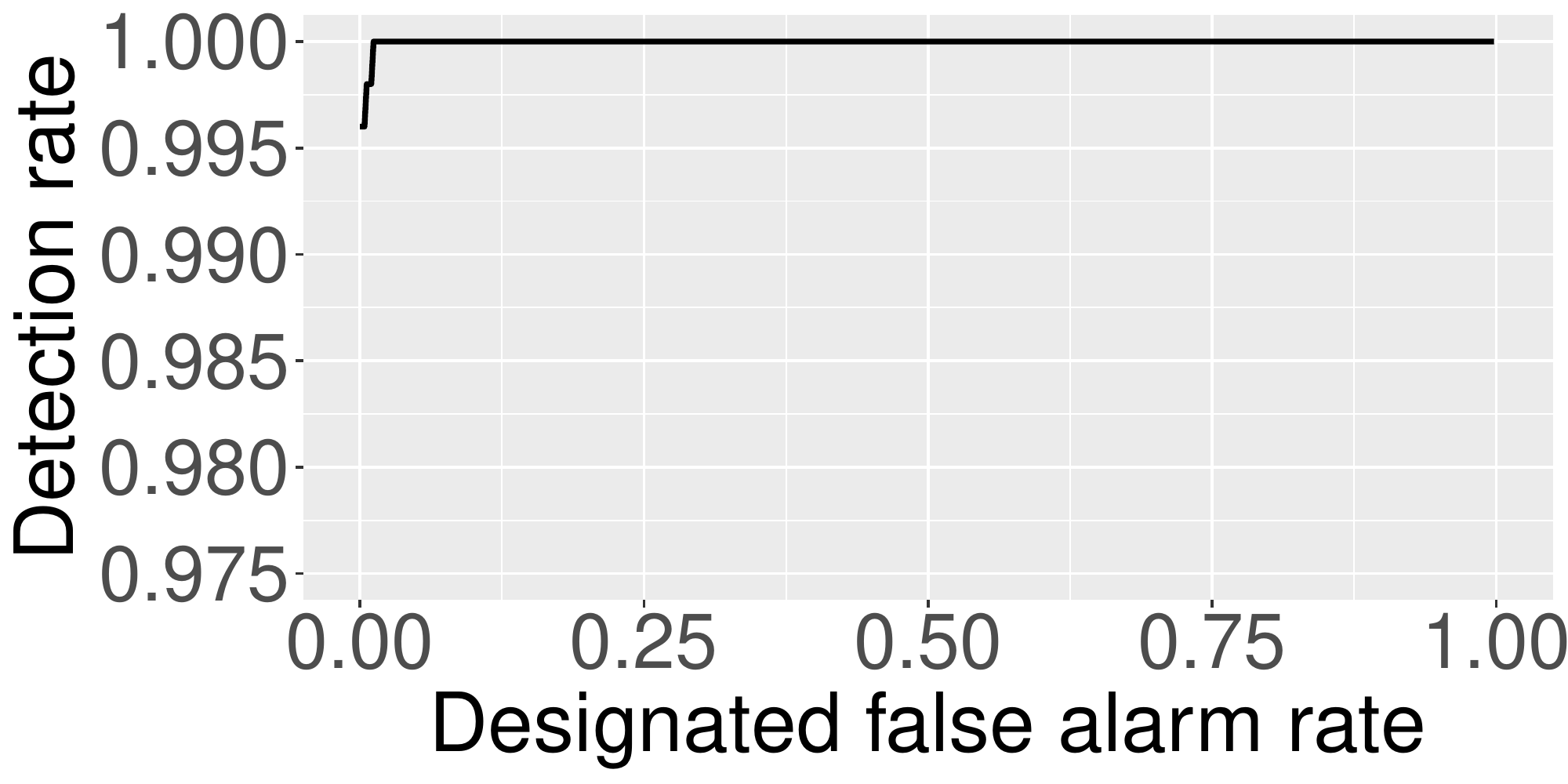}}
	\hfil
	\subfloat[]{
		\label{fig:s1_2_r}
		\includegraphics[width=0.23\textwidth]{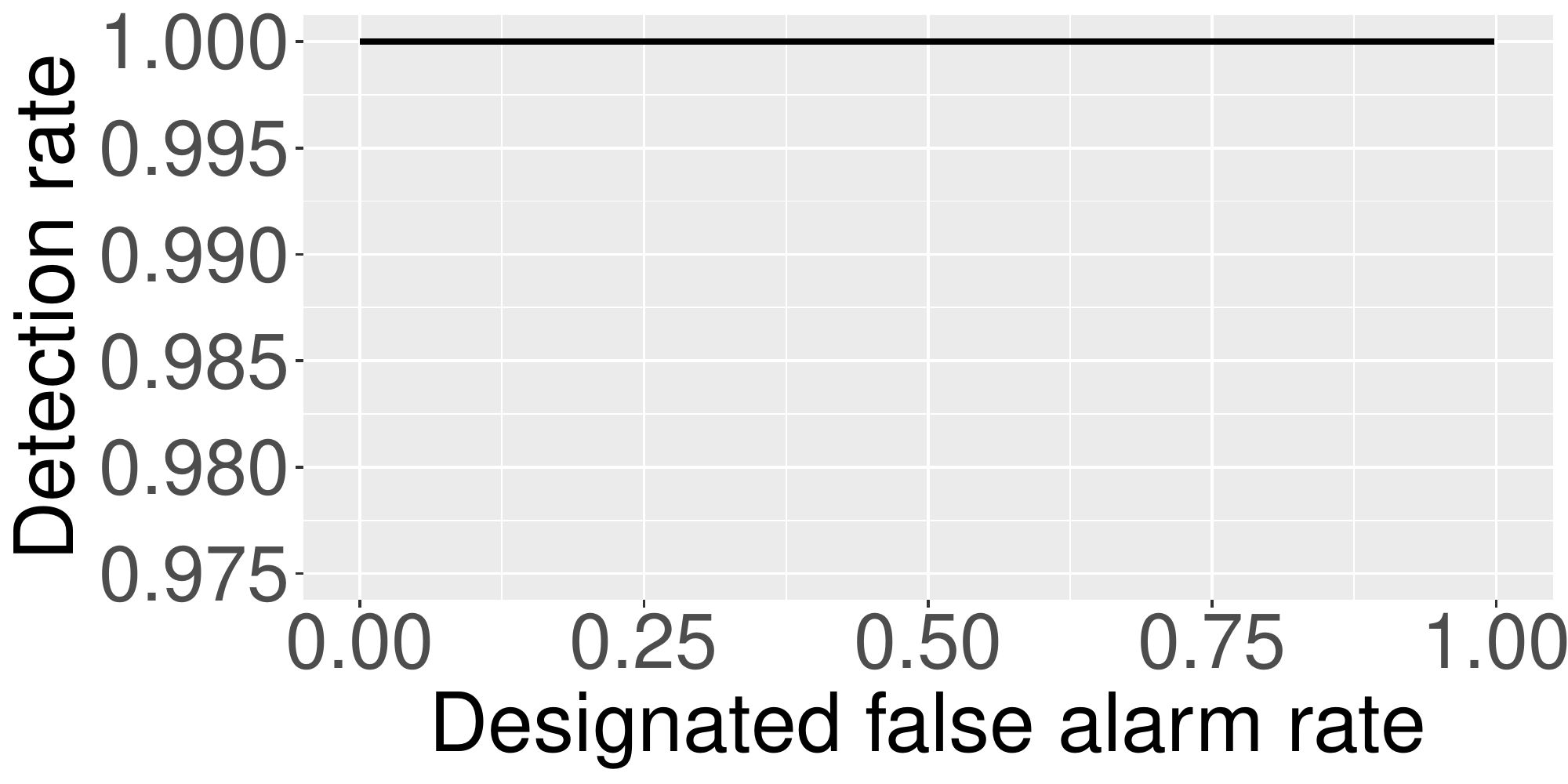}}
	\hfil
	\subfloat[]{
		\label{fig:s2_2_r}
		\includegraphics[width=0.23\textwidth]{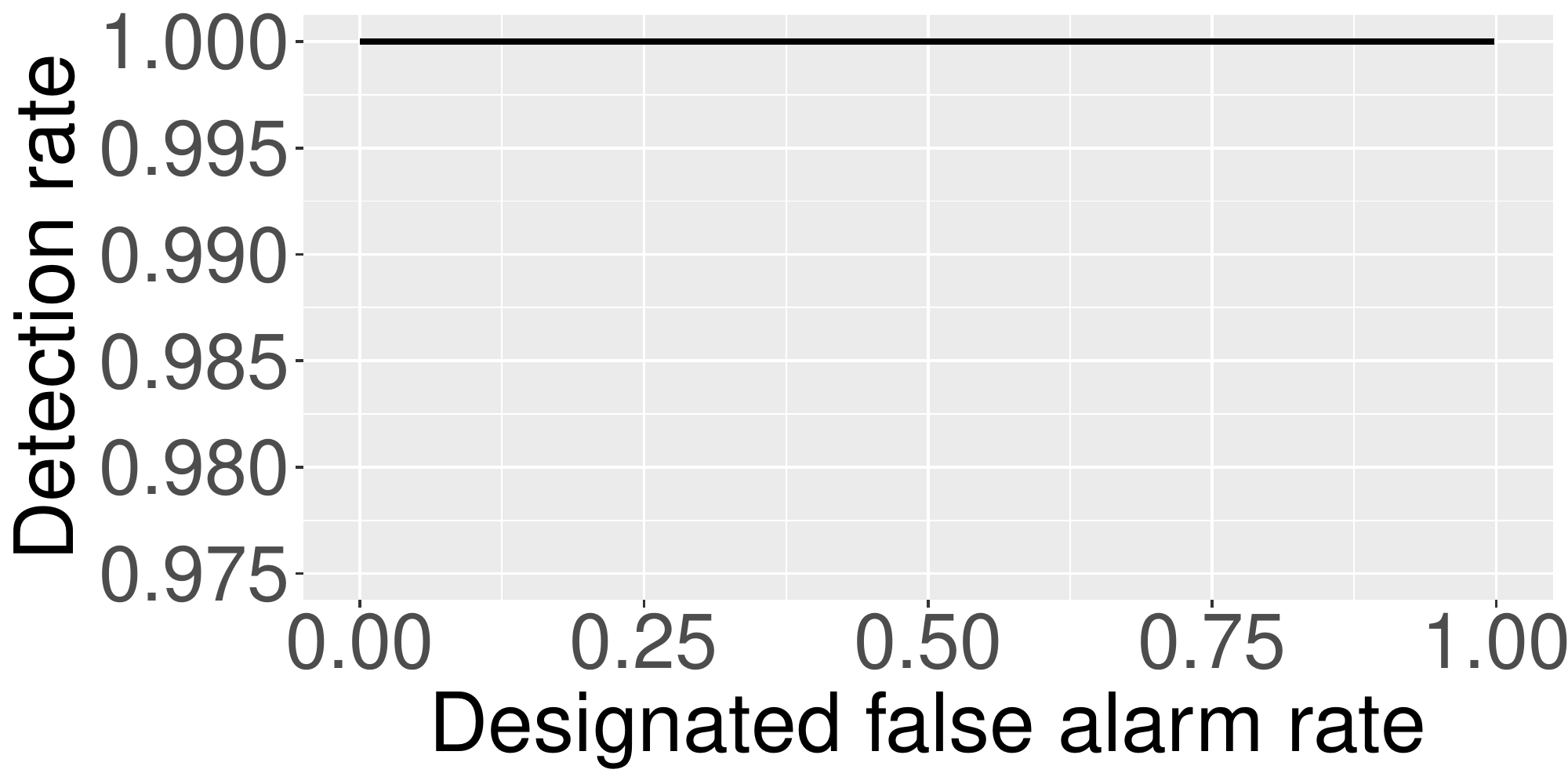}}
	\hfil
	\subfloat[]{
		\label{fig:s0_s1_2_r}
		\includegraphics[width=0.23\textwidth]{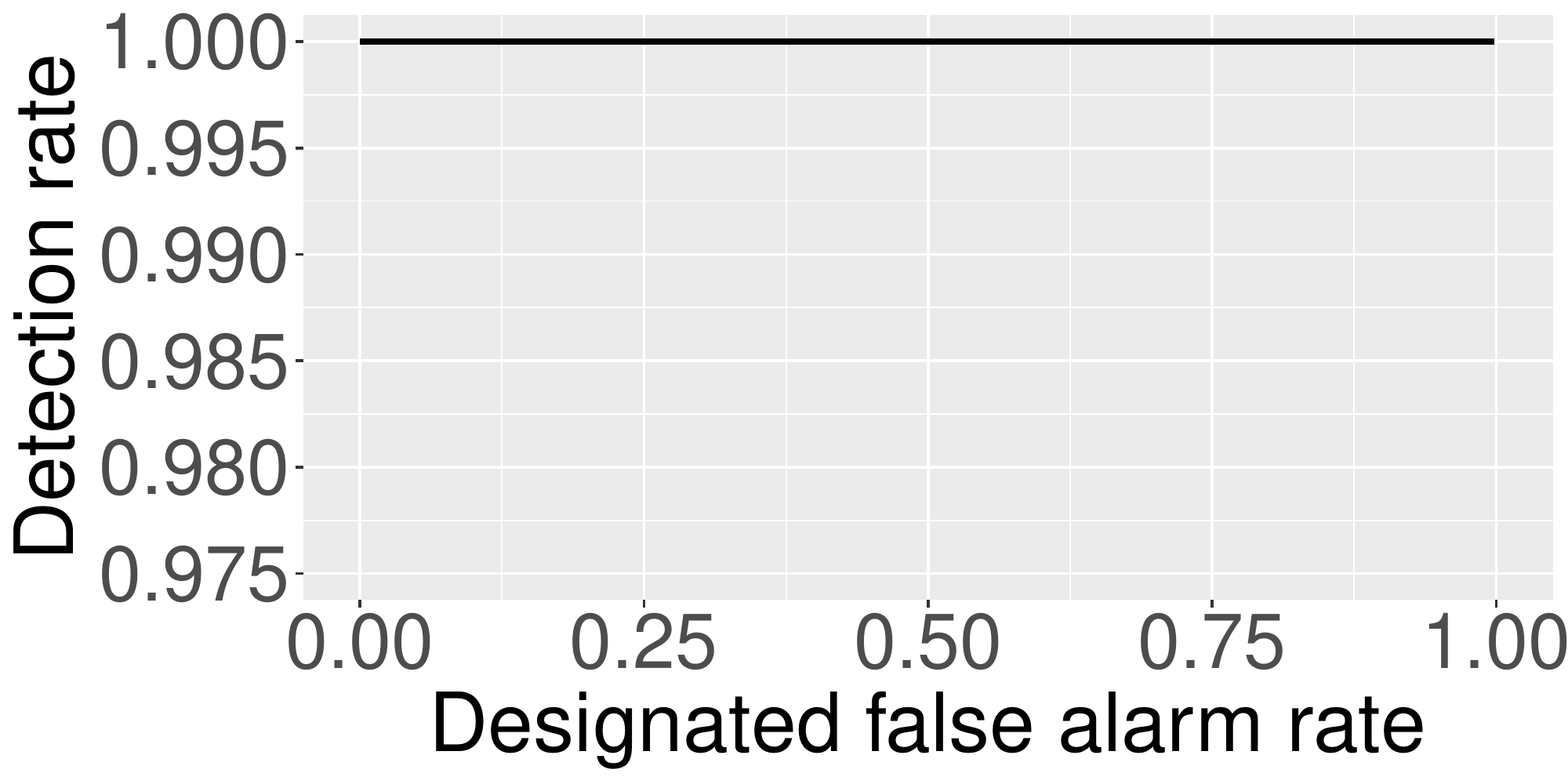}}
	\hfil

	\subfloat[]{
		\label{fig:s0_s2_2_r}
		\includegraphics[width=0.23\textwidth]{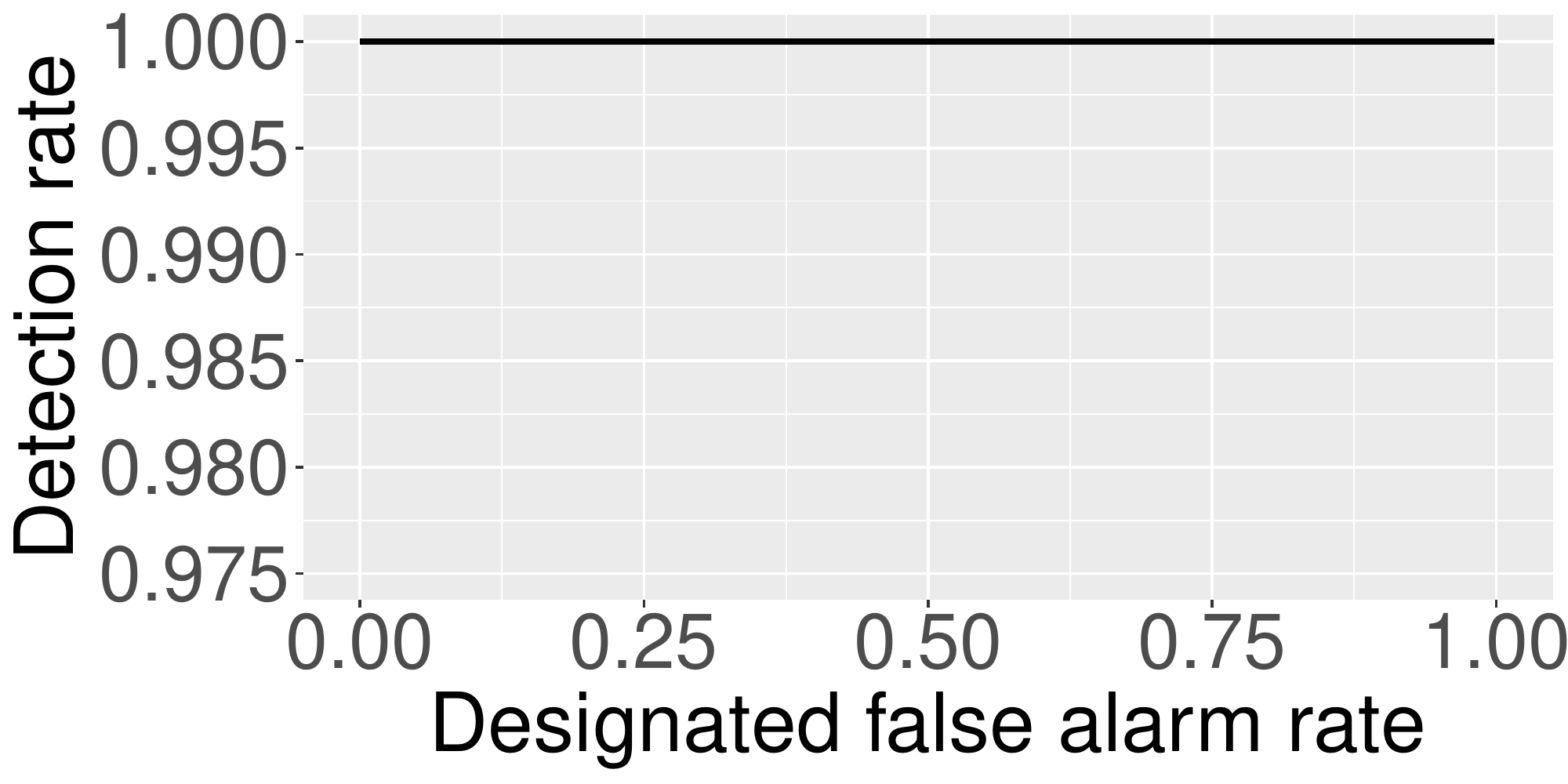}}
	\hfil
	\subfloat[]{
		\label{fig:s1_s2_2_r}
		\includegraphics[width=0.23\textwidth]{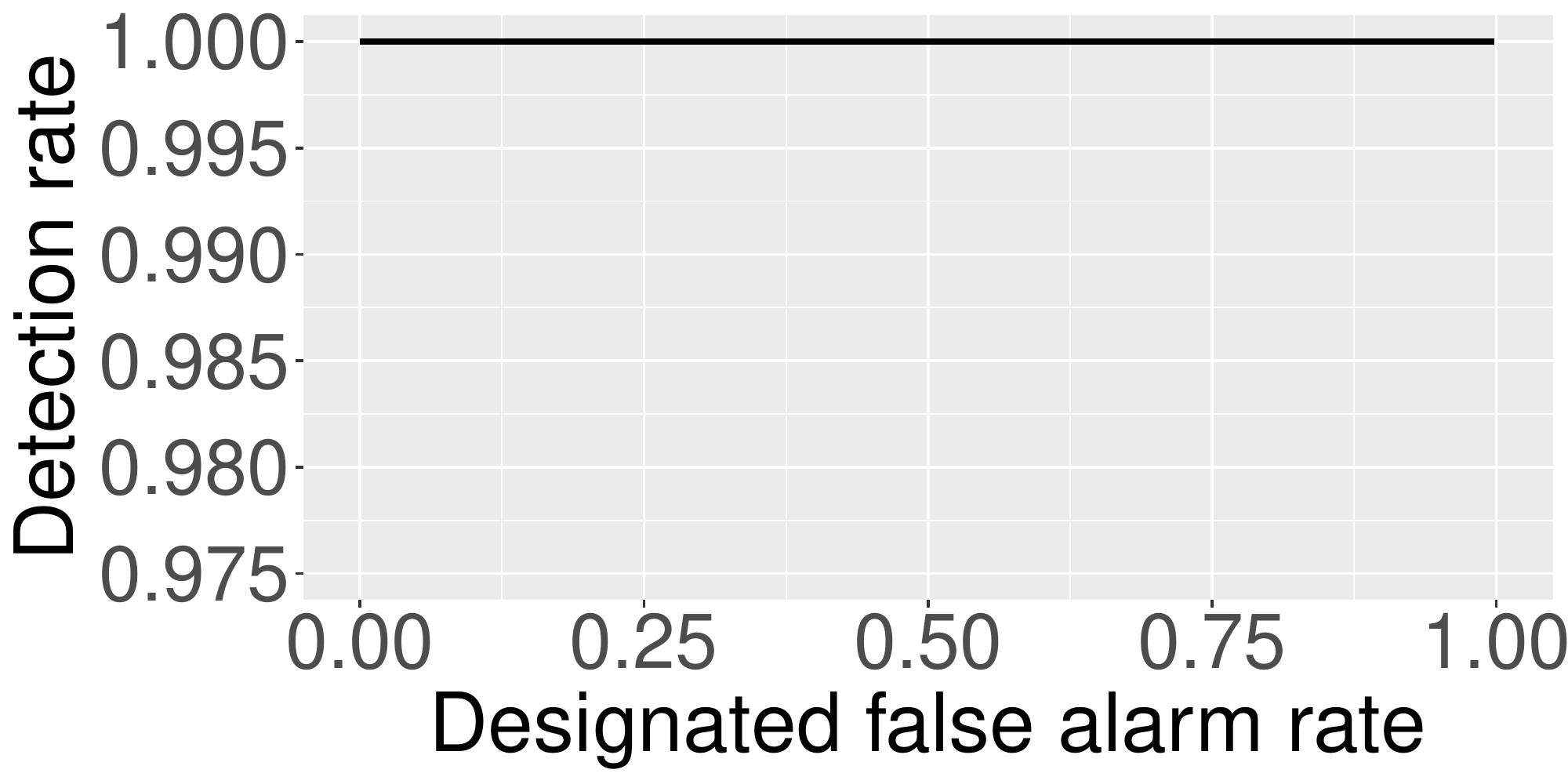}}
	\hfil
	\subfloat[]{
		\label{fig:s0_s1_s2_2_r}
		\includegraphics[width=0.23\textwidth]{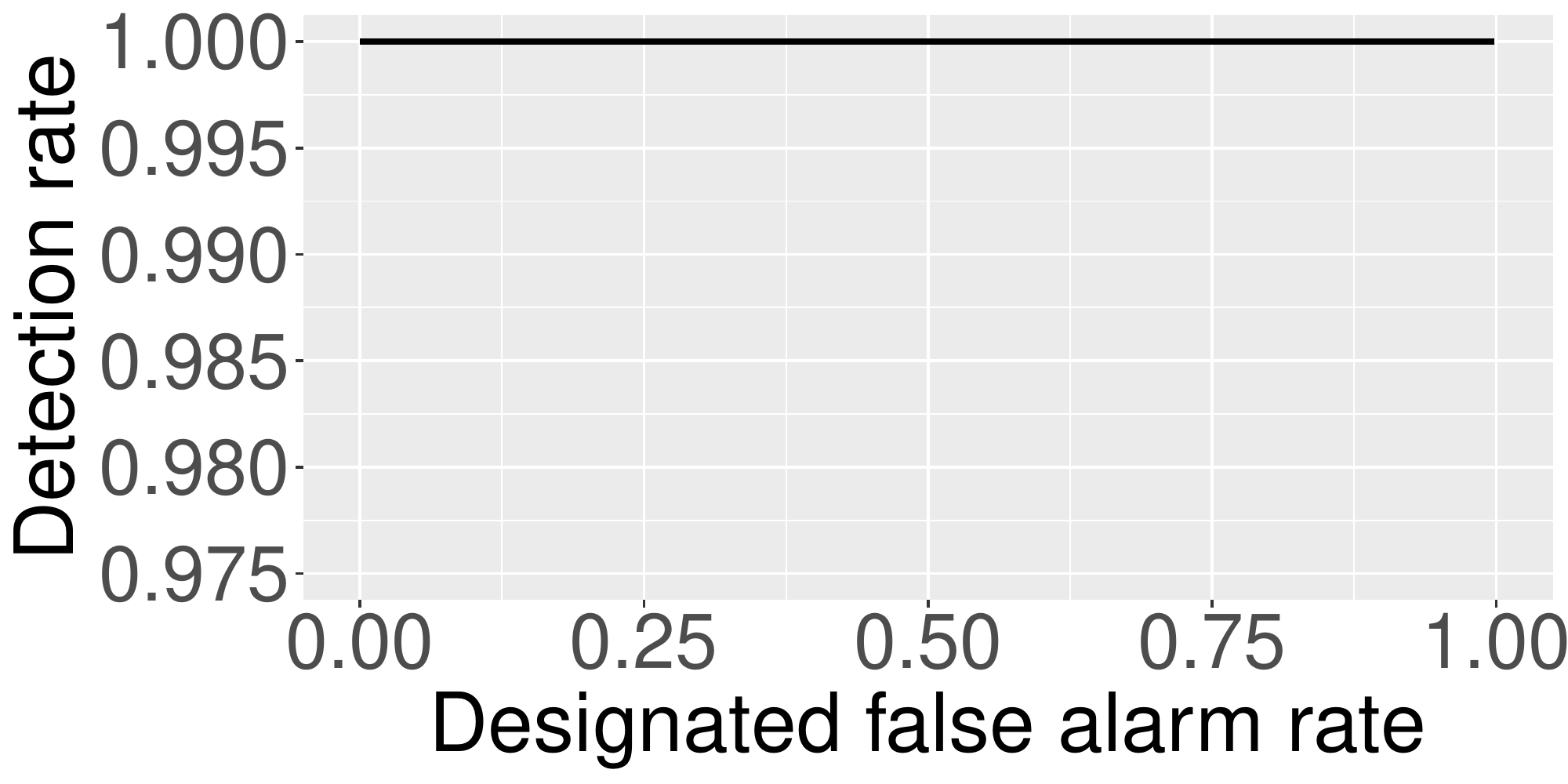}}
	\caption{Detection rate varies with the designated false alarm rate $r$ in each attack case for Scenario 2. (a)~$S_{0}$ is attacked; (b)~$S_{1}$ is attacked; (c)~$S_{2}$ is attacked; (d)~$S_{0}$ and $S_{1}$ are attacked; (e)~$S_{0}$ and $S_{2}$ are attacked; (f)~$S_{1}$ and $S_{2}$ are attacked; (g)~$S_{0}$, $S_{1}$ and $S_{2}$ are attacked.}
	\label{fig:scen_2_exp_r}

\end{figure*}

\subsubsection{Results}

The experimental results are shown in Fig.~\ref{fig:scen_1_exp_d} and Fig.~\ref{fig:scen_1_exp_r}. In Fig.~\ref{fig:scen_1_exp_d}, we compare the distributions of disparity errors between the normal case (i.e., no attack, in cyan bars) with each one of the seven possible attack cases (red bars). We first observe that the disparity errors are smaller than $20\%$ in most normal cases. By comparison, the disparity errors in most attack scenarios are larger than $25\%$. These results indicate that our detection scheme is sensitive enough, so that there is almost no overlap between the distribution in the normal case and the distributions in those attack cases.

In Fig.~\ref{fig:scen_1_exp_r}, we adjust the threshold $\theta_{0,1,2}$ used to declare attacks by varying the designated false alarm rate $r$, and evaluate the attack detection rate versus $r$. As we can see from the figures, among seven attack scenarios, the performance is perfect in five cases, where the detection rate hits $100\%$ for all possible values of $r$. And even in the non-perfect cases (i.e., (b) and (f)), the proposed detection system can obtain more than $99.5\%$ detection rate with less than $5\%$ false alarm. Such results confirm our hypothesis for such a three-sensor system.

We also show the detection rate comparison between our proposed detection method ($r \leq 5\%$) and the baseline for Scenario 1 in Table~\ref{tbl:dete_exp_r}, where we can observe that our method outperforms the baseline by about $35\%$ with IoU $=0.5$ and about $40\%$ with IoU $=0.7$. The reason is that our method detects optical attack by measuring \textbf{pixel-level} disparity inconsistencies, which is much denser and more fine-grained than the \textbf{object-level} detection used in the baseline. In most cases where the attack only partially occludes important objects or does not occlude them at all, the \textbf{object-level} detection is highly likely to fail, while our proposed method can still function normally.

\subsection{Scenario 2: Three Cameras}

For this scenario, we consider a three-sensor system that consists of three cameras, denotes as $S_{0}$, $S_{1}$, and $S_{2}$, from the right to the left. Similar to the previous scenario, we also consider the data generated by the sensors as $D_{0}$, $D_{1}$, and $D_{2}$. The detection system that we design for this scenario is shown as the combination of Block B and Block C in Fig.~\ref{fig:detection}.

In the system illustrated in Fig.~\ref{fig:detection} (Block B \& Block C), we designate camera $S_2$ as the reference camera to generate two disparity maps. In our experiments, we find it more convenient to implement our detection scheme when the leftmost or rightmost camera is used as the reference camera.

Since the sensor data of this scenario are all images, to generate the two disparity maps, we feed $D_{2}$ with $D_{0}$ and $D_{1}$ to the depth estimation model, respectively. It shall be noted that, since the distance between $S_0$ and $S_2$ is usually different from the distance between $S_1$ and $S_2$, we need to adjust the disparity in $DM_{0,2}$ by updating the baseline $b$ accordingly. After the disparity maps $DM_{0,2}$ and $DM_{1,2}$ are generated, the rest of the procedures in the detection method are the same as those in the previous scenario.

\subsection{Experiments for Scenario 2}

\subsubsection{Setup}

Here, we use the data of three cameras in the customized KITTI raw dataset~\cite{geiger13vision} and also consider all possible attack cases. The baseline method is implemented with the backbone of Faster R-CNN~\cite{ren15faster}. The rest of settings are the same as those in the experiments for Scenario~1.

\subsubsection{Results}

We show the experiment results in Fig.~\ref{fig:scen_2_exp_d} and Fig.~\ref{fig:scen_2_exp_r}. In Fig.~\ref{fig:scen_2_exp_d}, we compare the distributions of disparity errors between the normal case (i.e., no attack, in cyan bars) with each one of the seven possible attack cases (red bars). Similar to Scenario~1, we observe that the disparity errors are less than $10\%$ in virtually all normal cases. By comparison, the disparity errors in $99\%$ of attack scenarios are larger than $12.5\%$.

In Fig.~\ref{fig:scen_2_exp_r}, we vary the threshold $\theta_{0,1,2}$ used to declare attacks via $r$, and evaluate the attack detection rate versus the designated false alarm rate. We can observe that the detection performance is perfect in all cases, where the detection rate remains $100\%$ when $r$ varies from $0$ to $1$. Comparing the results in Scenario 2 with results in Scenarios 1, we notice that the detection performance in scenario 2 is slightly better. We believe this is due to the facts that the disparity maps in this scenario are generated using the same method and there are more valid pixels in the comparison.

In Table~\ref{tbl:dete_exp_r}, the performance comparison for this scenario still shows that our proposed detection method outperforms the baseline by a large margin (more than $30\%$), which again shows the merit of \textbf{pixel-level} detection.

\subsection{Empirical Findings}

To briefly summarize, the findings from the attack detection experiments for the two three-sensor systems are listed as follows:

\begin{itemize}
    \item 
    The experimental results confirm our hypothesis that there exists a detection system that can detect optical attacks on the two three-sensor systems with high accuracy and low false alarm rate.
    
    \item
    The detection rate is insensitive to the designated false alarm rate. As long as the detection rate is maintained at a high level, the designated false alarm rate should be set as low as possible, empirically less than $5\%$.
    
    \item
    In those two three-sensor systems, any sensor or any combination of sensors being attacked can cause the disparity error beyond the threshold.
\end{itemize}

Based on these findings, we further develop the identification approach for the proposed framework.

\section{Attack Identification}
\label{ch3:sec.identification}

In this section, we present the second procedure of our framework which identifies the compromised sensors in a system with one LiDAR $S_{0}$ and $n$ cameras, namely, $S_{1}$ to $S_{n}$ from the right to the left, where $n \geq 3$, based on the empirical findings from the detection method. This method is inspired by Error Correction Codes (ECC) and can achieve the identification as long as there are no more than $n-2$ sensors being attacked simultaneously. In addition, we also demonstrate the proof of the correctness of our identification method, as well as show its effectiveness and accuracy via experiments.

We now introduce a few definitions that are used in the rest of this section. For every sensor $S_{i}$, its state $s_{i}$ can switch between normal state and attack state
\begin{equation}\label{eq:sensor_state}
s_{i}=\begin{cases}
    1, \text{ if } S_{i} \text{ is attacked,}\\
    0, \text{ otherwise,}
\end{cases}
\end{equation}
where $i\in\{0,1,\cdots,n\},n \geq 3$. The sensor state vector in the system is defined as:
\begin{equation}\label{eq:sensor_state_vector}
\textbf{\textit{s}}:=[s_{0},s_{1},\cdots,s_{n}],
\end{equation}
which is the hidden ground truth that we try to identify.

For disparity error $E_{i,j,k}$ among sensors $S_{i},S_{j}$, and $S_{k}$, we use $e_{i,j,k}$ to indicate whether $E_{i,j,k}$ exceeds the corresponding threshold $\theta_{i,j,k}$ in which
\begin{equation}\label{eq:error_state}
e_{i,j,k}=\begin{cases}
    1, \text{ if }E_{i,j,k}>\theta_{i,j,k},\\
    0, \text{ otherwise},
\end{cases}
\end{equation}
where $i,j,k \in \{0,1,\cdots,n\},i<j<k$. And similarly, we use the disparity error state vector $\textbf{\textit{e}}$ to represent the states of disparity errors of three-sensor combinations in the system.

Since the system consists of one LiDAR and $n$ cameras, the combination of any three sensors from it must be either one LiDAR with two cameras or three cameras. According to the empirical finding drawn from the experiments in the previous section, any sensor or any combination of sensors from such three-sensor sets being compromised leads to the corresponding disparity error higher than the threshold. Therefore, based on the definitions in Eqn.~(\ref{eq:sensor_state}) and Eqn.~(\ref{eq:error_state}), we have
\begin{equation}\label{eq:finding}
e_{i,j,k}=(s_{i} \lor s_{j} \lor s_{k}),
\end{equation}
where $i,j,k \in \{0,1,\cdots,n\},i<j<k$, and $\lor$ is logical OR operation.

\subsection{Calculation of Disparity Error State Vector}

Given $n+1$ sensors, we use the leftmost sensor $S_{n}$ as the reference camera and calculate disparity error state vector
\begin{equation}\label{eq:error_state_vector}
\textbf{\textit{e}}:=[e_{0,1,n},e_{0,2,n},\cdots,e_{n-2,n-1,n}],
\end{equation}
where $n \geq 3$. In the calculation, the disparity maps generated by $S_{n}$ with every remaining sensor are compared with each other using the same standard described by Eqn.~(\ref{eq:disparity_error}). Then, by the definition in Eqn.~(\ref{eq:error_state}), $\textbf{\textit{e}}$ is obtained via thresholding the resulted disparity errors from comparison. We also show this calculation process in the form of pseudocode in Algorithm~\ref{algo:cal_e} that takes the data of one LiDAR and $n$ cameras and a list of thresholds as inputs and outputs the disparity error state vector $\textbf{\textit{e}}$. Specifically, it first generates $n$ disparity maps with the sensor data and compares each two of them to obtain disparity errors, and then calculates $\textbf{\textit{e}}$ by encoding disparity errors with thresholds.

Note that the thresholds for calculating $\textbf{\textit{e}}$ are also determined offline using one designated false alarm rate $r$. The approach is similar as in the detection procedure. For every disparity error, we collect sufficient samples when the system is safe and consider $r \times 100\%$ of samples as virtual outliers. The thresholds are then set to the maximal values of inliers. Hence, Eqn.~(\ref{eq:detection_threshold}) can be rewritten as:
\begin{equation}
    \frac{\text{\# samples of } E_{i,j,n} > \theta_{i,j,n}}{\text{\# samples of } E_{i,j,n}} = r,
\end{equation}
where $r \in [0,1]$, $i,j \in \{0,1,\cdots,n-1\},i<j$. Unlike the detection rate which is insensitive to $r$, our subsequent experiments indicate that the identification rate drops linearly with $r$ increasing and the best identifying performance is achieved when $r=1\%$.

\subsection{Identification of Sensor State Vector}

\setlength{\textfloatsep}{6pt}

\begin{algorithm}[t]
\caption{\textit{Calculation of disparity error state vector $\textbf{\textit{e}}$ }}\label{algo:cal_e}
\KwIn{$\textbf{\textit{D}}$: a list of sensor data with length equal to $|\textbf{\textit{D}}|$, where $D_{0}$ is point cloud and the rest are images; $\boldsymbol{\theta}$: a list of thresholds.}
\KwOut{$\textbf{\textit{e}}$: the disparity error state vector.}
select $D_{n}$ as reference image, where $n=|\textbf{\textit{D}}|-1$\;

get disparity map $DM_{0,n}$ using $D_{0}$ and $D_{n}$ (Scenario 1 in Section~\ref{ch3:sec.detection})\;

\For{$i\gets{1}$ \KwTo $n-1$}{
    get disparity map $DM_{i,n}$ using $D_{i}$ and $D_{n}$ (Scenario 2 in Section~\ref{ch3:sec.detection})\;
}
\For{$i\gets{0}$ \KwTo $n-2$}{
    \For{$j\gets{i+1}$ \KwTo $n-1$}{
        get disparity error $E_{i,j,n}$ by comparing $DM_{i,n}$ with $DM_{j,n}$ (Section~\ref{ch3:sec.detection})\;
        
        \If{\text{\normalfont $E_{i,j,n}$ exceeds threshold $\theta_{i,j,n}$}}{
            assign $1$ to disparity error state $e_{i,j,n}$\;
        }
        \Else{
            assign $0$ to $e_{i,j,n}$\;
        }
    }
}
\Return{$\textbf{\textit{e}}$}\;

\end{algorithm}

We now elaborate on how to infer $\textbf{\textit{s}}$ according to $\textbf{\textit{e}}$. We consider the following three cases of $\textbf{\textit{e}}$:
\begin{itemize}
    \item If all elements in $\textbf{\textit{e}}$ are $0$s, according to Eqn.~(\ref{eq:finding}) and Eqn.~(\ref{eq:error_state_vector}), $s_i=0$ for $0\leq i\leq n$. In other words, no sensor is attacked. 
    \item
    If only some elements in $\textbf{\textit{e}}$ are $0$s, we have the following Lemma~\ref{thrm:lem_1} to identify all attacked sensors.
    \item
    If all elements in $\textbf{\textit{e}}$ are $1$s and no more than $n-2$ sensors are attacked simultaneously, we can repeatedly use Lemma~\ref{thrm:lem_1} and Lemma~\ref{thrm:lem_2} to identify all attacked sensors.
\end{itemize}

\begin{lemma}\label{thrm:lem_1}
In a system with $n+1$ sensors, if there exist $i_{0},j_{0}$ such that $e_{i_{0},j_{0},n}=0$, then $s_i=e_{i,i_{0},n}$ for  $0 \leq i < i_{0}$, and $s_{i}=e_{i_{0},i,l-1}$ for $i_0 <i < n$.
\end{lemma}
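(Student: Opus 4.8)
The plan is to reduce the entire statement to the logical identity \eqref{eq:finding}, which asserts that any admissible triple satisfies $e_{i,j,k}=s_i\lor s_j\lor s_k$. Because every entry of the disparity error state vector $\textbf{\textit{e}}$ in \eqref{eq:error_state_vector} is computed against the fixed reference sensor $S_n$, each relevant quantity has the form $e_{i,j,n}=s_i\lor s_j\lor s_n$, and the proof becomes a short deduction in Boolean algebra. The empirical finding encoded in \eqref{eq:finding} already covers both the one-LiDAR-two-camera triples (those containing $S_0$) and the three-camera triples, so I may apply it uniformly to every index pair.

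First I would extract the consequences of the hypothesis $e_{i_0,j_0,n}=0$. Applying \eqref{eq:finding} gives $s_{i_0}\lor s_{j_0}\lor s_n=0$, and since a logical OR vanishes only when every disjunct vanishes, this forces $s_{i_0}=0$, $s_{j_0}=0$, and $s_n=0$ simultaneously. Thus a single observation $e_{i_0,j_0,n}=0$ certifies three \emph{clean anchors}: the sensors $S_{i_0}$, $S_{j_0}$, and the reference $S_n$ are all guaranteed to be in the normal state.

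The key idea is then that two clean sensors suffice to isolate the state of any third. For an index $i$ with $0\leq i<i_0$, the triple $(i,i_0,n)$ is written in strictly increasing order, so \eqref{eq:finding} yields $e_{i,i_0,n}=s_i\lor s_{i_0}\lor s_n=s_i\lor 0\lor 0=s_i$, which is exactly the claimed formula. Symmetrically, for $i_0<i<n$ the triple $(i_0,i,n)$ is admissible and gives $e_{i_0,i,n}=s_{i_0}\lor s_i\lor s_n=s_i$; here I would remark that the subscript $l-1$ appearing in the statement is a typographical slip for the fixed reference index $n$. Together with the directly recovered values $s_{i_0}=s_{j_0}=s_n=0$, these two families of equalities determine $s_i$ for every $i\in\{0,\dots,n\}$, so the full sensor state vector $\textbf{\textit{s}}$ is recovered.

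I do not expect a substantive obstacle, since the lemma is essentially a single decoding step once \eqref{eq:finding} is available; the only care required is bookkeeping. I would verify that each triple I invoke is written with strictly increasing indices so that it genuinely corresponds to an entry of $\textbf{\textit{e}}$ as defined in \eqref{eq:error_state_vector}, confirm that the value $i=j_0$ (which lies in the range $i_0<i<n$ under the convention $i_0<j_0<n$) produces the consistent reading $s_{j_0}=0$, and check that the two ranges together with the three anchors leave no sensor index unaddressed.
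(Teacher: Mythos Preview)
Your proposal is correct and follows essentially the same approach as the paper: deduce $s_{i_0}=s_{j_0}=s_n=0$ from the hypothesis via \eqref{eq:finding}, then substitute these zeros back into \eqref{eq:finding} for the triples $(i,i_0,n)$ and $(i_0,i,n)$ to read off each remaining $s_i$. Your observation that the subscript $l-1$ is a typographical slip for $n$ is also apt, as the paper's own proof uses $n$ (with one matching stray $l-1$) in exactly that spot.
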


\begin{figure*}[!t]
    \centering
	\includegraphics[width=0.98\textwidth]{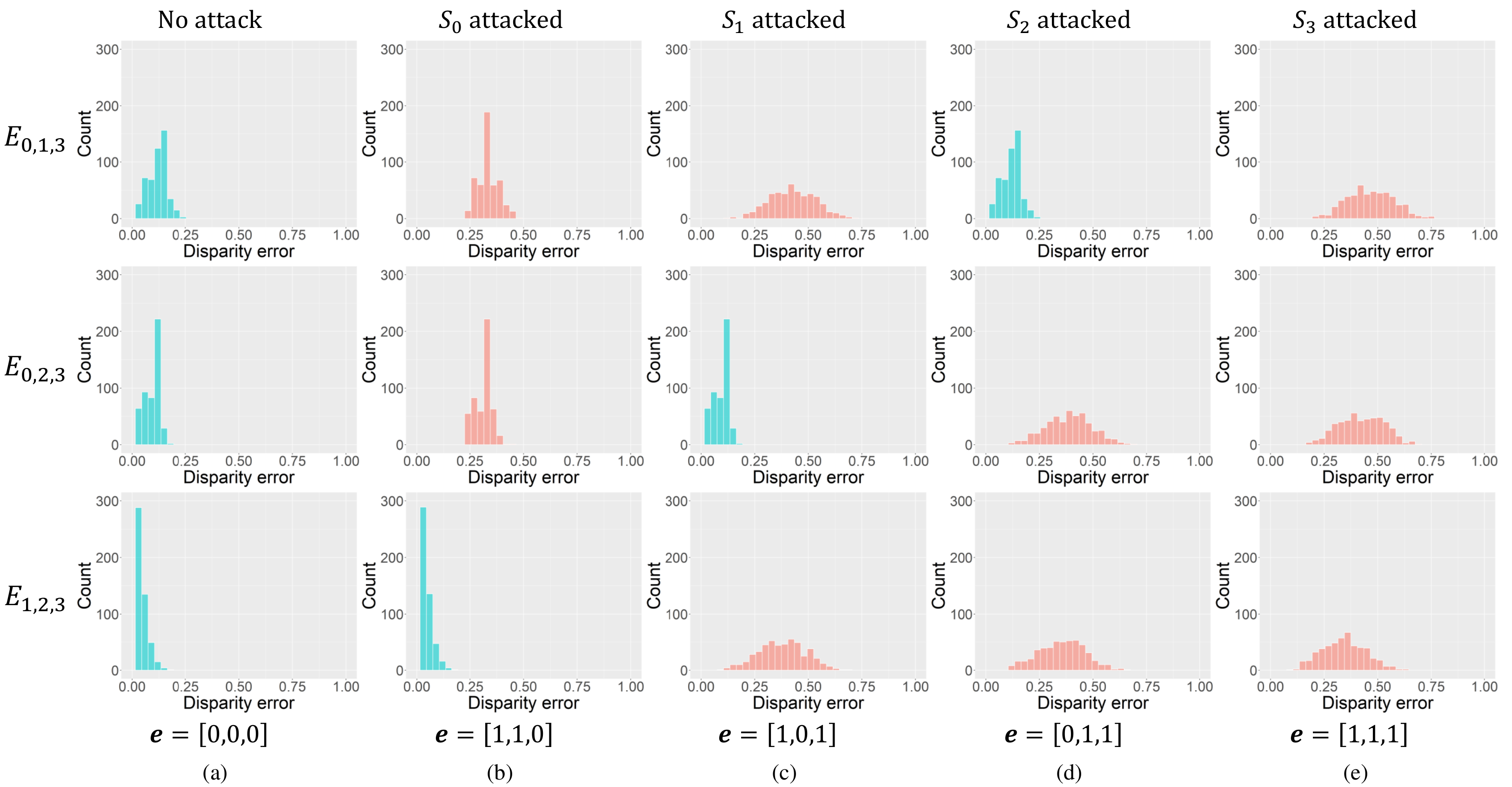}
	\caption{Distributions of disparity error $E_{0,1,3}$ (1st row), disparity error $E_{0,2,3}$ (2nd row) and disparity error $E_{1,2,3}$ (3rd row) in normal scenario (1st column) and in attack scenarios (2nd--5th columns). Cyan bars mean that the disparity errors in a scenario involve no attacked sensor, while red bars indicate that the disparity errors in a scenario involve attacked sensor. (a)~No disparity errors exceed thresholds, so $\textbf{\textit{e}}=[0,0,0]$ indicating no optical attack; (b)~$E_{0,1,3}$ and $E_{0,2,3}$ exceed thresholds, so $\textbf{\textit{e}}=[1,1,0]$ indicating that $S_{0}$ is attacked; (c)~$E_{0,1,3}$ and $E_{1,2,3}$ exceed thresholds, so $\textbf{\textit{e}}=[1,0,1]$ indicating that $S_{1}$ is attacked; (d)~$E_{0,2,3}$ and $E_{1,2,3}$ exceed thresholds, so $\textbf{\textit{e}}=[0,1,1]$ indicating that $S_{2}$ is attacked; (e)~All three disparity errors exceed thresholds, so $\textbf{\textit{e}}=[1,1,1]$ indicating that $S_{3}$ is attacked.}
	\label{fig:iden_exp_d}

\end{figure*}

\begin{figure*}
	\centering
	\subfloat[]{
		\label{fig:s0_i_r}
		\includegraphics[width=0.23\textwidth]{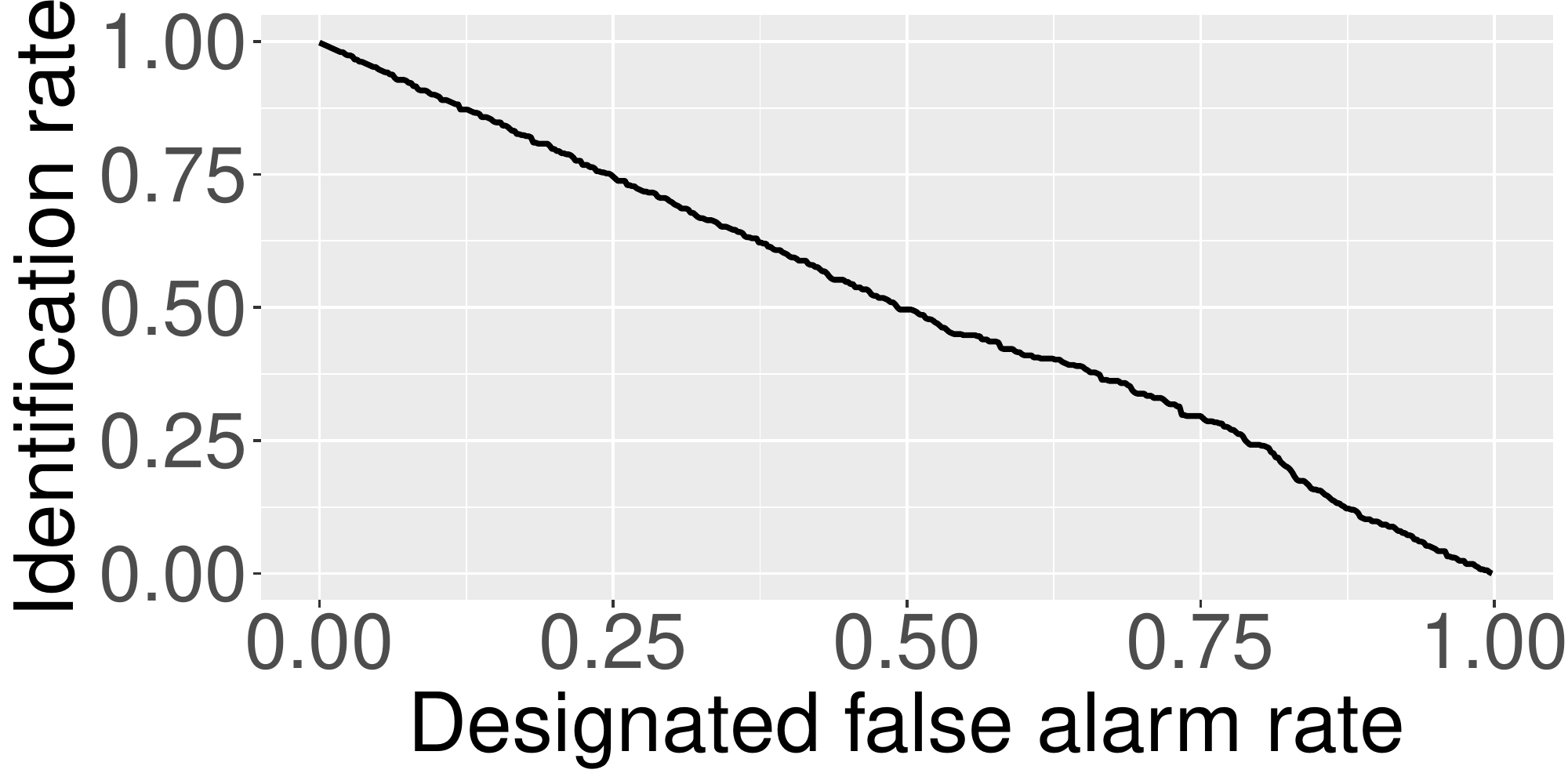}}
	\hfil
	\subfloat[]{
		\label{fig:s1_i_r}
		\includegraphics[width=0.23\textwidth]{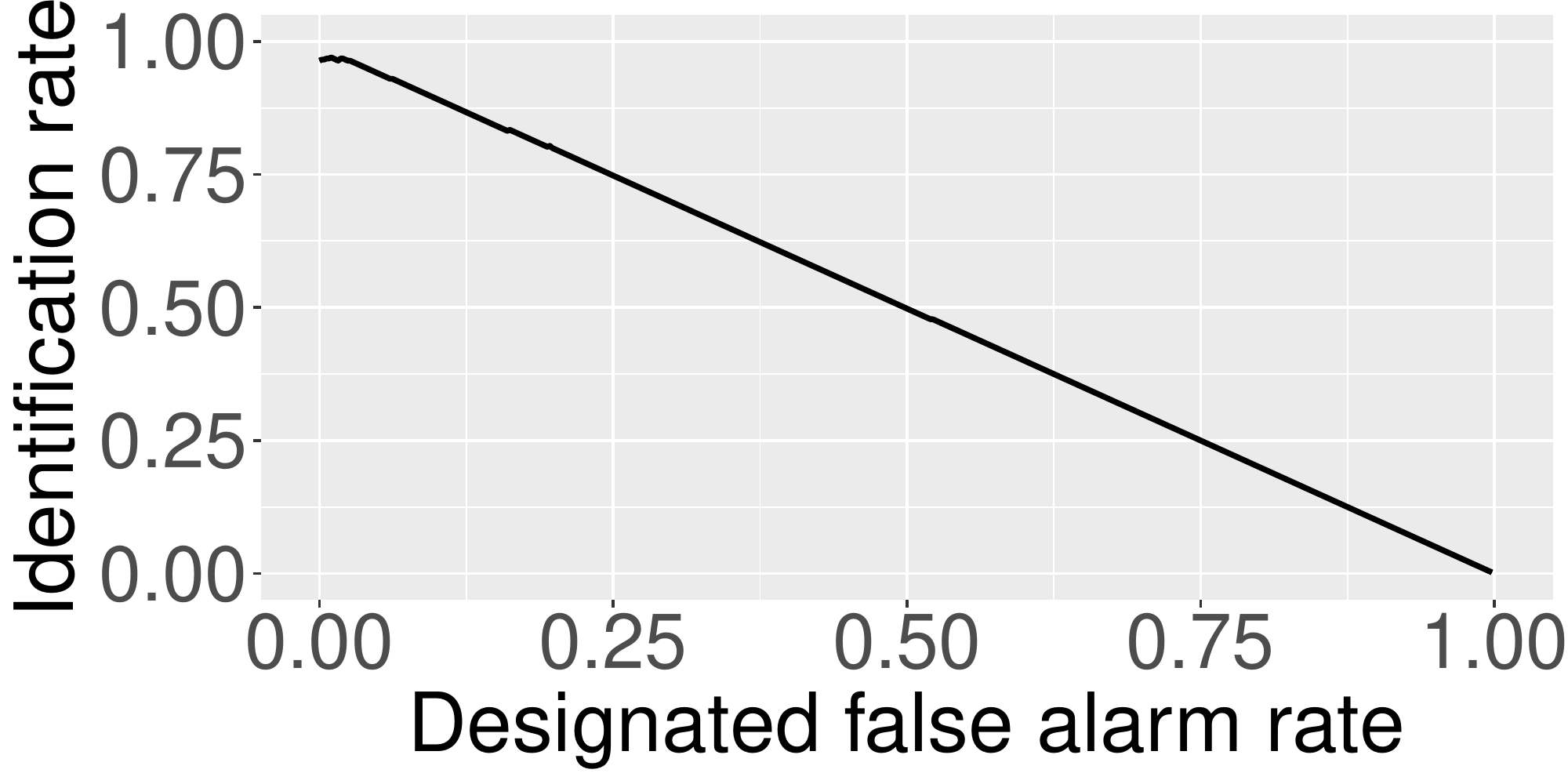}}
	\hfil
	\subfloat[]{
		\label{fig:s2_i_r}
		\includegraphics[width=0.23\textwidth]{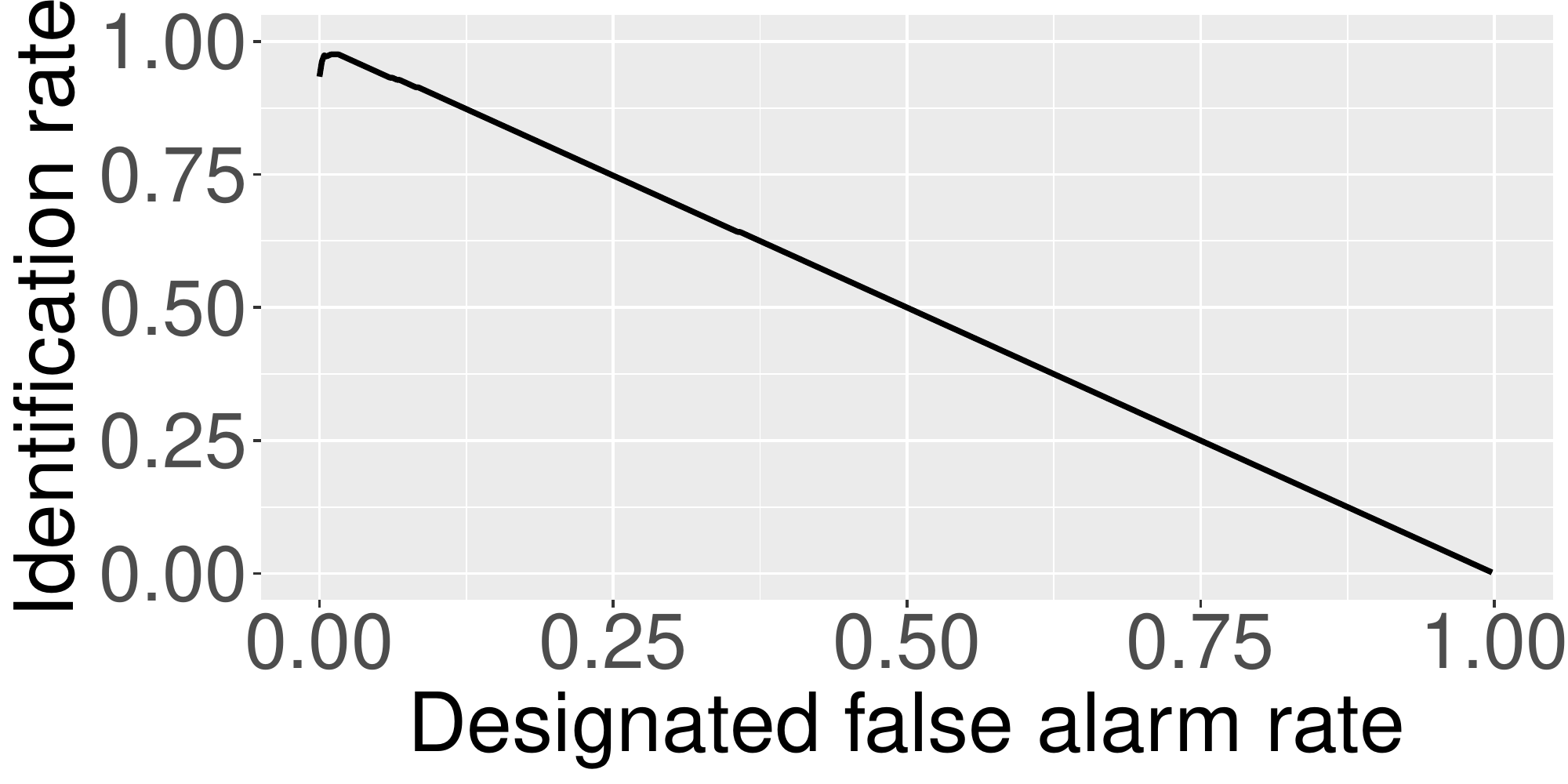}}
	\hfil
	\subfloat[]{
		\label{fig:s3_i_r}
		\includegraphics[width=0.23\textwidth]{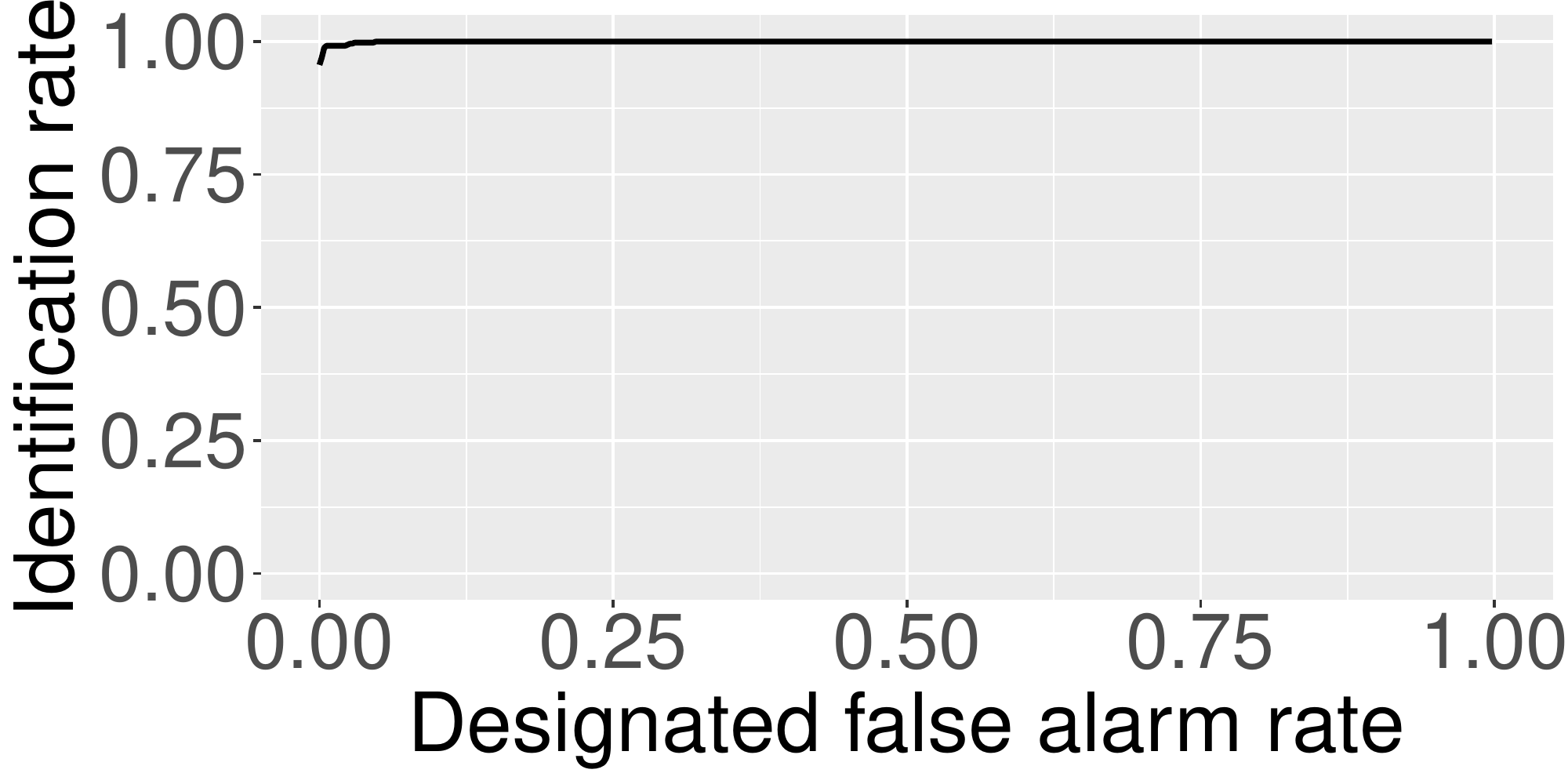}}
	\caption{Identification rate varies with the designated false alarm rate $r$ in each attack scenario. (a)~$S_{0}$ is attacked; (b)~$S_{1}$ is attacked; (c)~$S_{2}$ is attacked; (d)~$S_{3}$ is attacked.}
	\label{fig:iden_exp_r}

\end{figure*}

\begin{proof}
If there exist $i_{0},j_{0}$ such that $e_{i_{0},j_{0},n}=0$,
according to Eqn.~(\ref{eq:finding}), we can have:
\begin{equation}
e_{i_{0},j_{0},n}=(s_{i_{0}} \lor s_{j_{0}} \lor s_{n})=0
\end{equation}
which implies
\begin{equation}
s_{i_{0}}=s_{j_{0}}=s_{n}=0
\end{equation}
Now we only need to focus on the state of $s_{0},s_{1}, \ldots, s_{n-1}$.

For any $ i \in \{0,1,\cdots,i_{0}-1\}$,
\begin{equation}
e_{i,i_{0},l-1} = (s_{i} \lor s_{i_{0}} \lor s_{n}) = (s_{i} \lor 0 \lor 0) = s_{i},
\end{equation}
and  for any $ i \in \{i_{0}+1,i_{0}+2,\cdots,n-1\}$,
\begin{equation}
e_{i_{0},i,n} = (s_{i_{0}} \lor s_{i} \lor s_{n}) = (0 \lor s_{i}\lor 0) = s_{i}.
\end{equation}
\end{proof}

Lemma~\ref{thrm:lem_1} shows that we can identify the sensor state vector $\textbf{\textit{s}}$ if at least one element in $\textbf{\textit{e}}$ is $0$. For the case where all elements in $\textbf{\textit{e}}$ are $1$s, we have the following lemma.

\begin{lemma}\label{thrm:lem_2}
In a system with $n+1$ sensors, when there are no more than $n-2$ sensors being compromised simultaneously, if the elements of $\textbf{\textit{e}}$ are all $1$s, then $s_{n}=1$.
\end{lemma}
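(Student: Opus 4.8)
The plan is to argue by contradiction. Suppose, toward a contradiction, that all elements of $\textbf{\textit{e}}$ equal $1$ yet $s_n = 0$. Since every triple recorded in $\textbf{\textit{e}}$ uses the reference camera $S_n$ (see Eqn.~(\ref{eq:error_state_vector})), the empirical identity in Eqn.~(\ref{eq:finding}) specializes to $e_{i,j,n} = s_i \lor s_j \lor s_n$ for all $0 \leq i < j \leq n-1$. Under the assumption $s_n = 0$, this collapses to $e_{i,j,n} = s_i \lor s_j$, so the hypothesis that every $e_{i,j,n} = 1$ becomes the statement that $s_i \lor s_j = 1$ for every pair of indices $i < j$ drawn from $\{0,1,\ldots,n-1\}$.

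First I would translate this pairwise condition into a counting statement. The condition ``$s_i \lor s_j = 1$ for every pair'' is precisely the assertion that no two of the sensors $S_0,\ldots,S_{n-1}$ are simultaneously unattacked: if there existed two distinct indices $a < b$ in $\{0,\ldots,n-1\}$ with $s_a = s_b = 0$, then $e_{a,b,n}$ would equal $0$, contradicting the assumption that all elements of $\textbf{\textit{e}}$ are $1$. Hence at most one of these $n$ sensors can have state $0$, which forces at least $n-1$ of them to be attacked.

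Next I would close the contradiction by comparing with the standing hypothesis. Together with the assumed $s_n = 0$, the total number of compromised sensors in the system is at least $n-1$. But the lemma assumes that no more than $n-2$ sensors are compromised simultaneously, and $n-1 > n-2$, a contradiction. Therefore the assumption $s_n = 0$ is untenable, and we conclude $s_n = 1$.

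The argument is short, and the only place demanding care is the counting step: I must verify that the pairwise-cover condition genuinely ranges over \emph{all} pairs from the full index set $\{0,\ldots,n-1\}$, so that the pigeonhole bound yields exactly $n-1$ forced attacks, and that the reference index $n$ is correctly excluded from this set. Because the final inequality $n-1 > n-2$ leaves no slack, the bound of $n-2$ simultaneous attacks is used sharply; any off-by-one error in the index range would break the contradiction, so that bookkeeping is where I would concentrate the verification.
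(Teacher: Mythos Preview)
Your proof is correct and follows essentially the same contradiction strategy as the paper. The only cosmetic difference is the direction of the counting: the paper notes that at most $n-2$ attacked sensors among $n+1$ leaves at least three normal ones, so if $S_n$ is normal there must be two others $S_{i^*},S_{j^*}$ with $e_{i^*,j^*,n}=0$; you instead bound the attacked sensors from below to reach the same contradiction.
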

 
\begin{proof}
Since there are no more than $n-2$ sensors being attacked, the states of at least three sensors are $0$s. If $S_{n}$ is normal, then there exist $i^*$ and $j^*$, where $i^*<j^*$, such that sensors $S_{i^*}$ and $S_{j^*}$ are normal, i.e., $s_{i^*}=s_{j^*}=s_{n}=0$. In this case, $e_{i^*,j^*,n}=0$, which contradicts the fact that all the elements of $\textbf{\textit{e}}$ are $1$s. Therefore, $s_{n}=1$.
\end{proof}

\begin{algorithm}[t]
\caption{\textit{Identification of sensor state vector $\textbf{\textit{s}}$ }}\label{algo:iden_s}
\KwIn {$\textbf{\textit{D}}$: a list of sensor data with length equal to $|\textbf{\textit{D}}|$, where $D_{0}$ is point cloud and the rest are images; $\boldsymbol{\theta}$: a list of thresholds.}
\KwOut{$\textbf{\textit{A}}$: the list of compromised sensors.}
\setcounter{AlgoLine}{0}

select $D_{n}$ as reference image, where $n=|\textbf{\textit{D}}|-1$\;

calculate $\textbf{\textit{e}}$ using Algorithm~\ref{algo:cal_e} with $\textbf{\textit{D}}$ and $\boldsymbol{\theta}$\;

\If{\text{\normalfont there exists 1 in $\textbf{\textit{e}}$}}{
    \If{\text{\normalfont there exists 0 in $\textbf{\textit{e}}$}}{
        \tcp{use Lemma~\ref{thrm:lem_1} to infer $\textbf{\textit{s}}$}
        find $i_{0},j_{0}$ which satisfy $e_{i_{0},j_{0},n}=0$\;
        
        \For{$i\gets{0}$ \KwTo $n-1$}{
            assign $\min(i, i_0)$ to $k_1$, $\max(i, i_0)$ to $k_2$\;
            
            \If{\text{$e_{k_1,k_2,n}=1$ {\normalfont \textbf{and}} $k_1 \neq k_2$}}{
                sensor state $s_{i}$ is 1\;
                
                push attacked sensor $S_{i}$ into $\textbf{\textit{A}}$\;
            }
        }
    }
    \Else{
        \tcp{use Lemma~\ref{thrm:lem_2} to infer $s_{n}$}
        sensor state $s_{n}$ is 1\;
        
        push attacked sensor $S_{n}$ into $\textbf{\textit{A}}$\;
        
        \tcp{infer rest sensors recursively}
        \If{$n>3$}{
            remove $D_{n}$ from $\textbf{\textit{D}}$\;
            
            go to line $1$ to rerun with updated $\textbf{\textit{D}}$\;
        }
    }
}
\Return{$\textbf{\textit{A}}$}\;

\end{algorithm}

When all elements of $\textbf{\textit{e}}$ in a system with $n+1$ sensors are $1$, though we cannot directly find out the states of all sensors, Lemma~\ref{thrm:lem_2} can identify the last sensor's state. After that, we can virtually remove the last sensor and consider a system with $n$ sensors. We recalculate the $\textbf{\textit{e}}$ by Algorithm~1 for such $n$ sensors, then determine $\textbf{\textit{s}}$ according to Lemma~\ref{thrm:lem_1} and Lemma~\ref{thrm:lem_2}. We repeat this process until the states of all sensors are identified. We also present this identification algorithm as pseudocode in Algorithm~\ref{algo:iden_s} that takes the same inputs as Algorithm~\ref{algo:cal_e} and outputs a list of attacked sensors. Specifically, it begins with calculating $\textbf{\textit{e}}$ using Algorithm~\ref{algo:cal_e} with the inputs, and then infers the sensor state vector $\textbf{\textit{s}}$ using Lemma~\ref{thrm:lem_1} when some elements in $\textbf{\textit{e}}$ are $0$s. When all elements in $\textbf{\textit{e}}$ are $1$s, it first infers $s_{n}$, the state of the reference camera, using Lemma~\ref{thrm:lem_2}, and then excludes the data of the reference camera from the inputs and infers the rest of sensor states by rerunning Algorithm~\ref{algo:iden_s} with the updated inputs.

\subsection{Experiments}

\setlength{\textfloatsep}{18pt}

\begin{table}[t]
    \centering
	\caption{Identification rate at particular values of the designated false alarm rate in attack scenarios}\label{tbl:iden_exp_r}
	\begin{tabular}{|l|c c c c|l|}
	    \hline
	        \multirow{2}{1em}{$~r$}& \multicolumn{4}{c|}{Attacked Sensor} & \multirow{2}{0em}{Average} \\
		\cline{2-5}
		    & $S_{0}$ & $S_{1}$ & $S_{2}$ & $S_{3}$ & \\
		\hline
    	\hline
    	$0\%$ & $\textbf{99.80\%}$ & $96.40\%$ & $93.40\%$ & $95.60\%$ & $96.30\%$ \\
    	$1\%$ & $98.80\%$ & $\textbf{97.00\%}$ & $\textbf{97.60\%}$ & $99.20\%$ & $\textbf{98.15\%}$ \\
    	$2\%$ & $98.00\%$ & $96.80\%$ & $97.20\%$ & $99.20\%$ & $97.80\%$ \\
    	$3\%$ & $96.60\%$ & $96.00\%$ & $96.20\%$ & $99.80\%$ & $97.15\%$ \\
    	$5\%$ & $94.80\%$ & $94.00\%$ & $94.20\%$ & $\textbf{100\%}$ & $95.75\%$\\
    	\hline
	\end{tabular}

\end{table}

\subsubsection{Setup}

To verify the effectiveness and evaluate the precision of our identification scheme, we conduct substantial experiments. Since the identification scheme is designed to be in a form of recursion, the experiments here are for the base case where a system consists of one LiDAR ($S_{0}$) and $n=3$ cameras ($S_{1}$, $S_{2}$, and $S_{3}$). And according to the constraint, there is at most one attacked sensor in the system.

In the experiments, we consider the normal scenario and all possible attack scenarios where each one of the four sensors gets attacked. We use the data of one LiDAR and three cameras from the customized KITTI raw dataset~\cite{geiger13vision}. To generate the compromised sensor data for each attack scenario, we do the same as in Section~\ref{subsubsec:setup}. The depth estimation model is provided by~\cite{wang19pseudo}. As for metrics, we measure the disparity error distribution and the rate of correct identification for each attack scenario.

\subsubsection{Results}

We present the experimental results in Fig.~\ref{fig:iden_exp_d}, Fig.~\ref{fig:iden_exp_r}, and Table~\ref{tbl:iden_exp_r}. In Fig.~\ref{fig:iden_exp_d}, the three rows of sub-figures represent the distributions of disparity error $E_{0,1,3}$ (1st row), disparity error $E_{0,2,3}$ (2nd row), and disparity error $E_{1,2,3}$ (3rd row), respectively. And the columns in Fig.~\ref{fig:iden_exp_d} represent five scenarios.

In Fig.~\ref{fig:iden_exp_d}, we can first compare the disparity errors in each row. Similar to the results in the last section, the results clearly illustrate that the distribution of disparity errors can help to detect whether there is an attack. For example, in the first row for $E_{0,1,3}$, when there is no attack on $S_0$, $S_1$, and $S_3$, the disparity errors (cyan bars) are mostly less than $23\%$. By comparison, the disparity errors (red bars) are larger than $23\%$ when any of the sensors is attacked. Such results affirm the  feasibility and correctness of defining attacks according to the disparity error state. Three sub-figures in each of the five attack scenarios clearly show that there is a unique pattern of the combination of $E_{0,1,3}$, $E_{0,2,3}$, and $E_{1,2,3}$ for each attack scenario. For instance, when there is no attack launched, the three disparity errors are all within certain boundaries, representing the disparity error state vector $\textbf{\textit{e}}=[0,0,0]$. And if any one of the sensors is attacked, the disparity errors involving that sensor will exceed boundaries, leading to another unique $\textbf{\textit{e}}$.

In Fig.~\ref{fig:iden_exp_r}, we vary the thresholds by adjusting the designated false alarm rate $r$ and compute the corresponding identification rate in four attack scenarios. It is obvious that, if the attacked sensor is not the reference, then the identification rate drops linearly when $r$ increases from $0$ to $1$. On the other hand, when the reference sensor $S_3$ is attacked, then the identification rate remains very close to $100\%$. 

Such observations suggest that choosing a small $r$ may lead to the best performance overall. To identify the best $r$, we conduct some experiments to investigate the impact of $r$, when it is within the range of $0$ to $5\%$. The numerical results are shown in Table~\ref{tbl:iden_exp_r}. As we can see, the best average identification rate for the four attack scenarios occurs at $r=1\%$.

\subsection{Discussion}

Though the identification method of our framework can accurately identify attacked sensors, it is limited to the condition where no more than $n-2$ sensors are attacked simultaneously in a system with $n+1$ sensors. We plan to address this limitation through cross-vehicle sensing data validation in our future studies.

\begin{figure}[t]
	\centering
	\subfloat[]{
		\label{fig:l_dr}
		\includegraphics[width=0.45\textwidth]{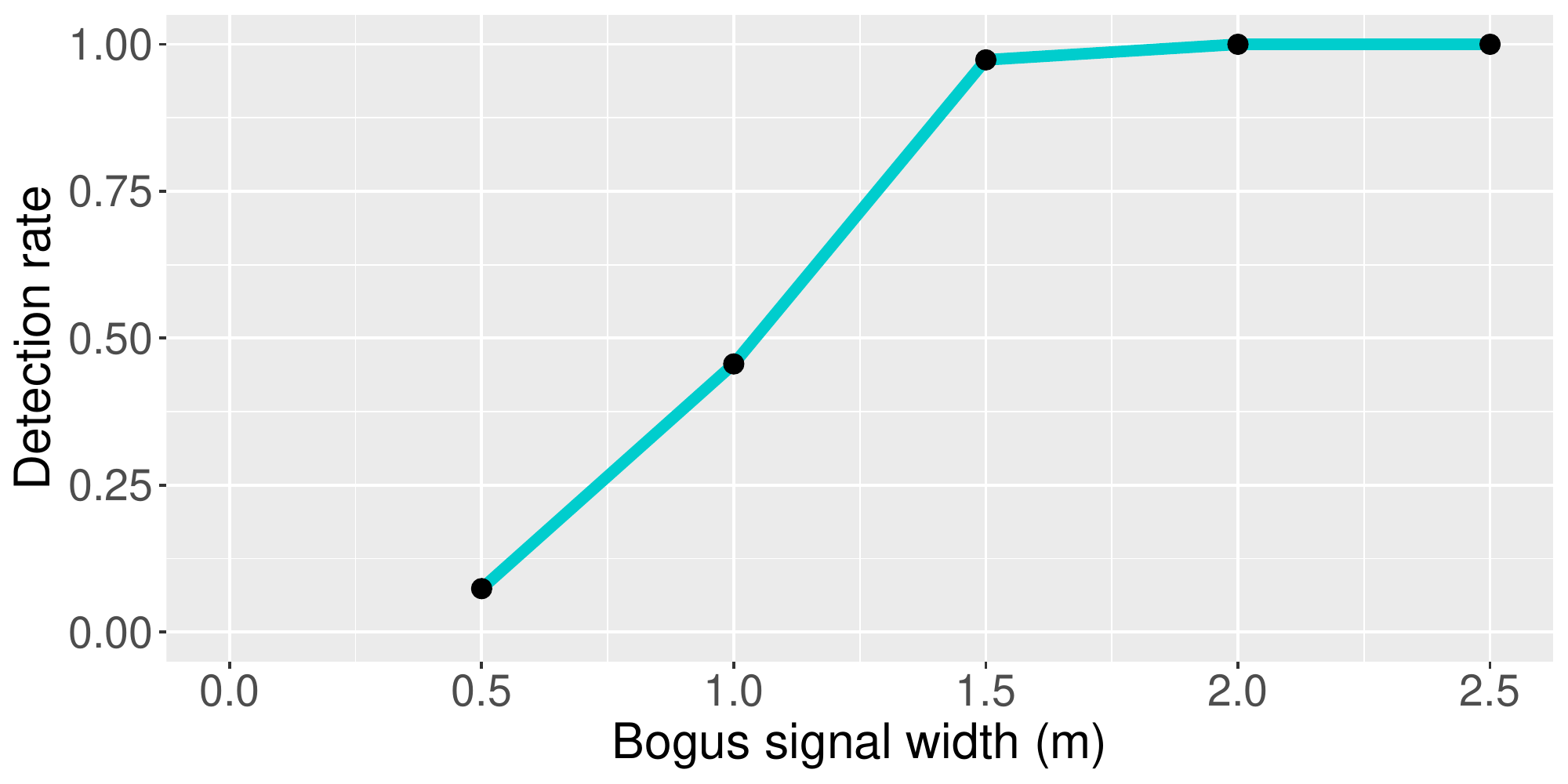}}
	\hfil
	\subfloat[]{
		\label{fig:l_ap}
		\includegraphics[width=0.45\textwidth]{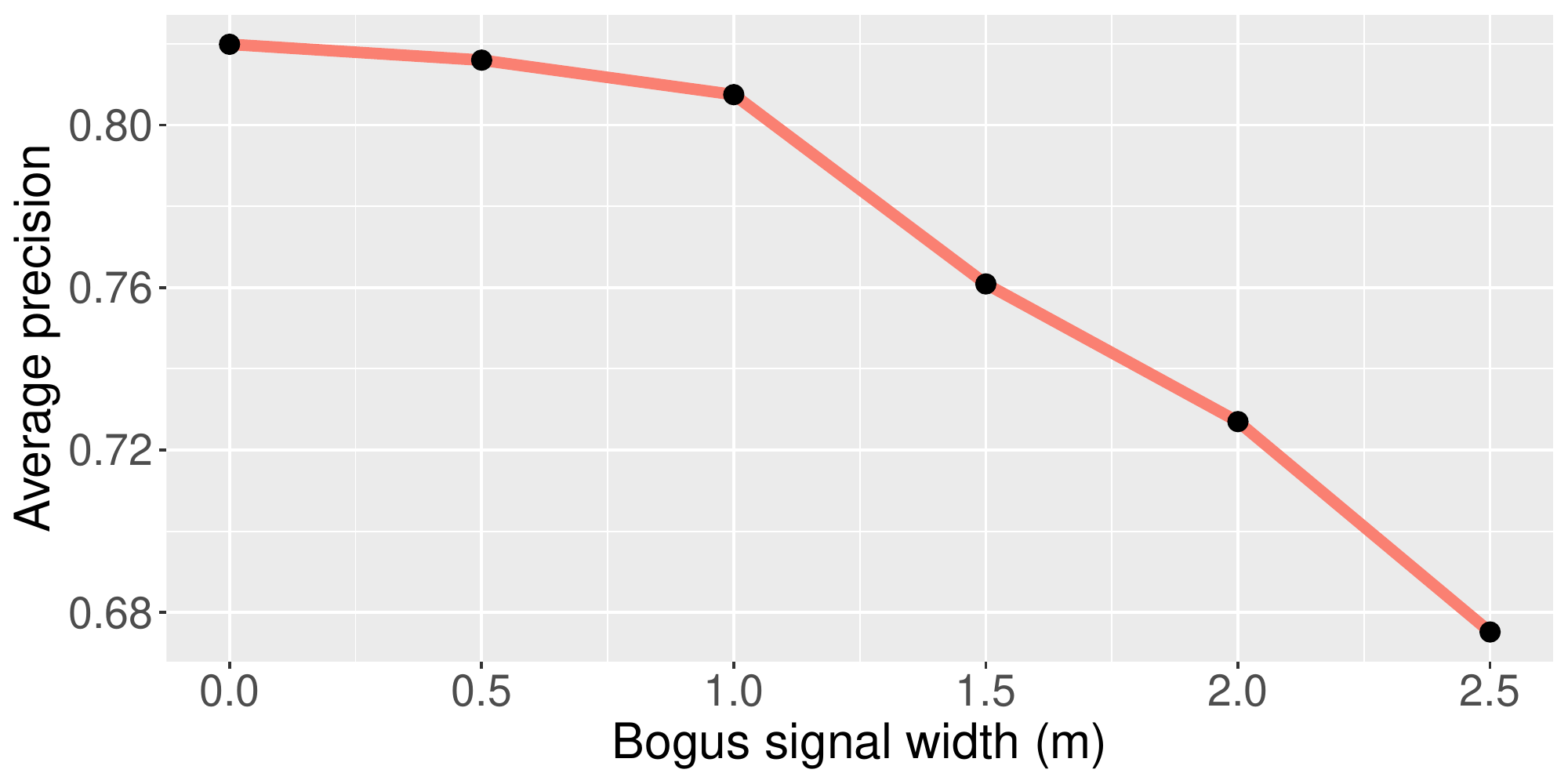}}
	\caption{Sensitivity of our framework and AP of PointRCNN when the attack is against LiDAR. (a)~Detection rate for attacks against LiDAR varies with the width of bogus signal; (b)~Average precision of PointRCNN varies with the width of bogus signal.}
	\label{fig:defense_l}

\end{figure}

\section{Framework Sensitivity}
\label{ch3:sec.sensitivity}

With the best designated false alarm rate $r$ determined, we conduct further experiments to investigate how sensitive our framework is, namely, for optical attacks with what range of settings (width of bogus signal, size of facula) our framework works effectively. Empirically, the milder optical attacks are, the more difficult they get detected. Meanwhile, we also measure how much the perception function of AVs is influenced by the optical attacks with different settings using state-of-the-art object detection algorithms. Those algorithms usually possess a certain degree of resistance to minor optical attacks, so our framework does not have to be universally sensitive.

In this section, we use experiments to demonstrate that our framework has excellent sensitivity to the attacks on LiDAR with settings that object detection model cannot overcome. As for the attacks on camera, our framework is also sensitive in most cases, but shows limit when the attack is too mild. The experiments consist of two parts: the first part is for the sensitivity to the attacks on LiDAR, and the second part for the sensitivity to the attacks on camera.

\subsection{Metrics}

To measure the sensitivity, we use the detection rate of our framework with $r=1\%$, since detection rate can also reflect the performance of identification procedure. As described in Section~\ref{ch3:sec.identification}, the identification procedure of our framework is directly developed upon multiple detection processes, so the identification rate is highly correlated with the detection rate.

As for evaluating the corresponding performance of object detection algorithms used in AVs, we follow the KITTI object detection benchmark~\cite{geiger12are} and calculate the average precision of vehicle detection with IoU threshold equal to $70\%$.

\begin{figure}[!t]
	\centering
	\subfloat[]{
		\label{fig:c_dr}
		\includegraphics[width=0.45\textwidth]{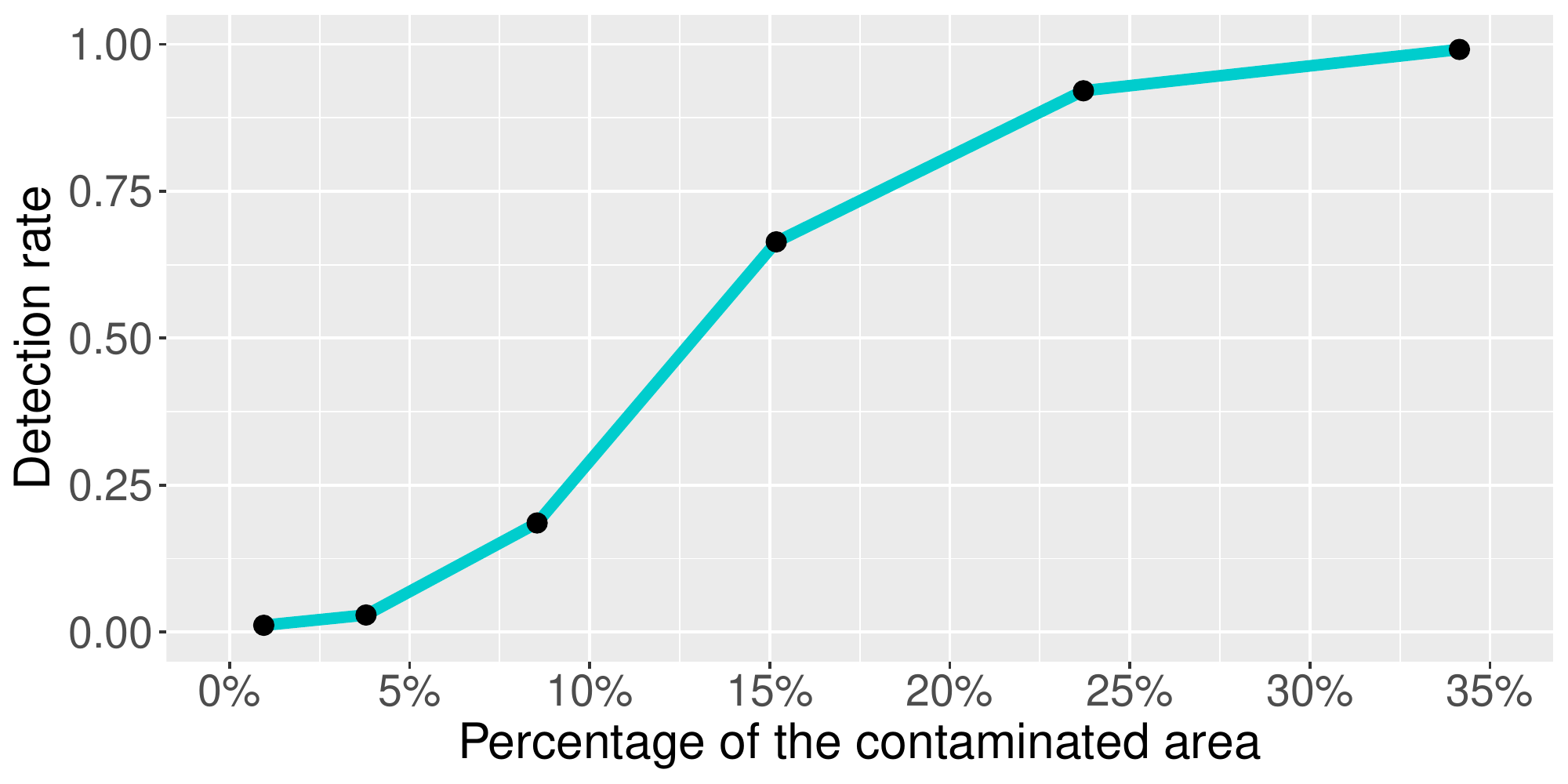}}
	\hfil
	\subfloat[]{
		\label{fig:c_ap}
		\includegraphics[width=0.45\textwidth]{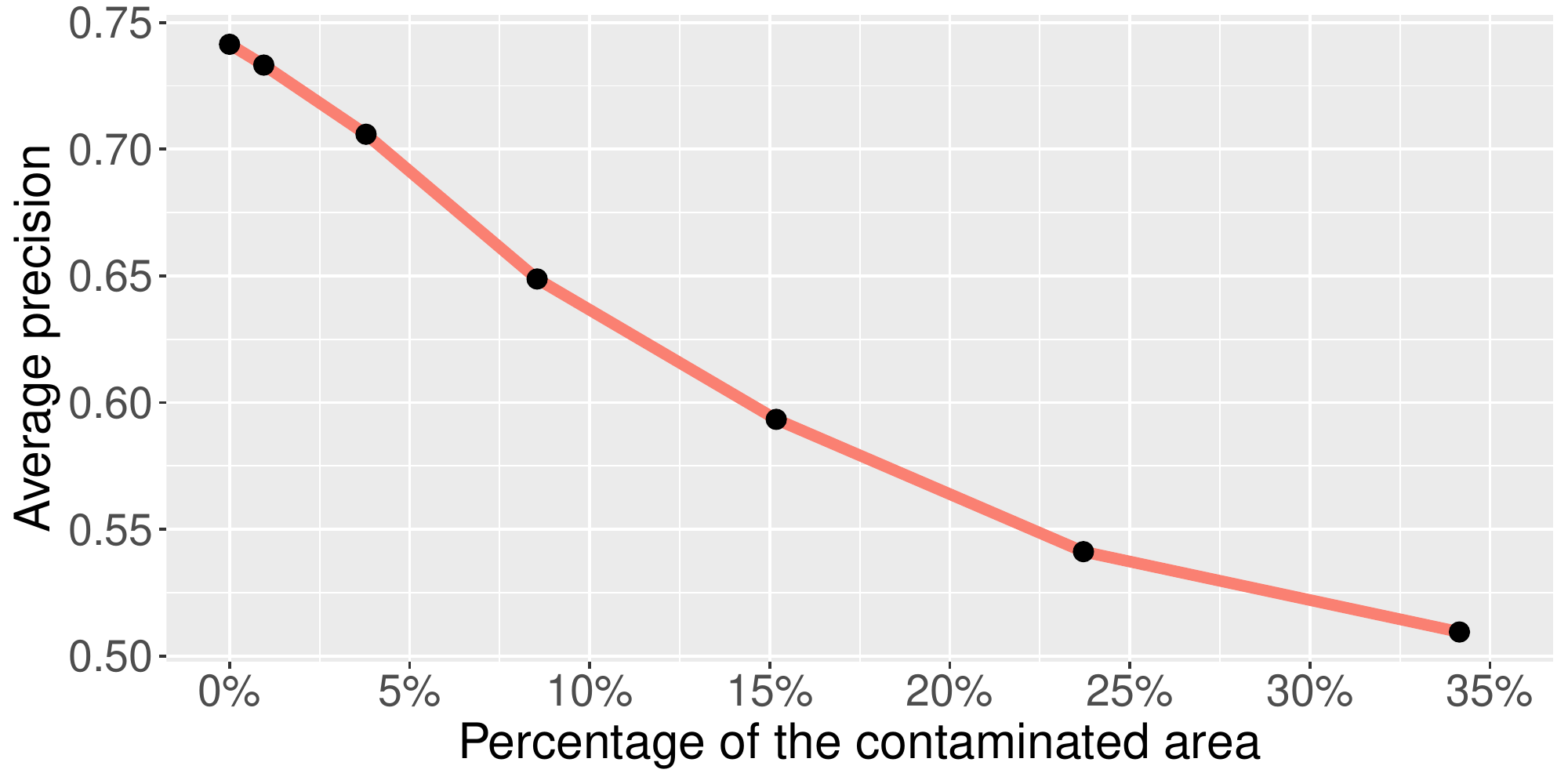}}
	\caption{Sensitivity of our framework and AP of YOLOv3 when the attack is against camera. (a)~Detection rate for attacks against camera varies with the percentage of the area contaminated by facula; (b)~Average precision of YOLOv3 varies with the percentage of the area contaminated by facula.}
	\label{fig:defense_c}

\end{figure}

\subsection{Experimental Setup}

We conduct our experiments on the dataset provided in the KITTI object detection benchmark~\cite{geiger12are} which contains sensor data of one LiDAR and two cameras. As described in Section~\ref{subsubsec:setup}, we divide the labeled part of the dataset into training set and test set. The training set is used to train the object detection models, while the test set is for generating compromised sensor data.

To find out the sensitivity of our framework to the optical attacks on LiDAR, we produce five affected point clouds for every set of sensor data in the test set. The five affected point clouds contain a bogus signal with a width of 0.5 meter, 1.0 meter, 1.5 meters, 2.0 meters, and 2.5 meters, respectively. And the object detection algorithm chosen for this part of experiments is PointRCNN~\cite{shi19pointrcnn}, a state-of-the-art 3D object detection algorithm that takes a point cloud as input.

In term of the experiment setup for evaluating sensitivity to the attacks on camera, we generate six pairs of compromised stereo pictures for each set of sensor data in the test set. The left pictures of the six pairs are overlaid with a Gaussian facula with radius of 37.5 pixels, 75 pixels, 112.5 pixels, 150 pixels, 187.5 pixels, and 225 pixels, respectively. And the corresponding percentages of the contaminated area in images are 0.95\%, 3.79\%, 8.54\%, 15.18\%, 23.71\%, and 34.15\%. The object detection algorithm for this part of experiments is YOLOv3~\cite{redmon18yolov3}, which is one of the most popular real-time object detection algorithms using images as input.

In the experiments, we feed the compromised sensor data to our framework and the selected object detection model, and then evaluate them via the aforementioned metrics. The PSMNet model used in the framework is provided by~\cite{wang19pseudo}.

\subsection{Experiment Results}

\subsubsection{Sensitivity to the Attacks on LiDAR}

As shown in Fig.~\ref{fig:defense_l}, with the increase of the width of the bogus signal, the detection rate of our framework surges, while the average precision of PointRCNN declines. The AP of PointRCNN decreases very slightly when the width of the bogus signal is within $1.0$ meter, which implies that PointRCNN exhibits some resistance to minor disturbing signals. On the other hand, when the size of the bogus signal is larger, the average precision starts dropping rapidly. However, it should be noted that when the width is at $1.5$ meters, the total decline of AP is only $0.06$, while the detection rate of our framework already reaches nearly $100\%$. These results show clearly that our framework is highly sensitive to the attacks on LiDAR that cannot be mitigated by object detection algorithms.

\subsubsection{Sensitivity to the Attacks on Camera}

The experiment results for this part are illustrated in Fig.~\ref{fig:defense_c}. The tendencies of the detection rate of our framework and the AP of object detection model are similar as those in the first part of the experiments. Particularly, when the percentage of the contaminated area is within the range of $5\%$ to $15\%$, although our framework has a small detection rate, the AP of YOLOv3 maintains at a high level, which means that the perception has not been compromised due to small attacks. When the percentage of the contaminated area in images increases to $23.71\%$, the detection rate of our framework surpasses $90\%$. In the meantime, the AP of YOLOv3 drops significantly to $54.12\%$. These results suggest that our framework has a strong sensitivity to the attacks on camera when the contaminated area in images is greater than $20\%$. Less than that, the framework may show some limitations.

\section{Conclusion}
\label{ch3:sec.conclusion}

In this chapter, we have systematically investigated the mitigation of attacks on optical devices (LiDAR and camera) that are essential to perform accurate object and event detection and response (OEDR) tasks in autonomous driving systems. Specifically, we proposed a framework to detect and identify sensors that are under attack. For the attack detection, we considered two common three-sensor systems, (1) one LiDAR and two cameras, and (2) three cameras, and we developed effective procedures to detect any attack on each of them. Using real datasets and the state-of-the-art machine learning model, we conducted extensive experiments confirming that our detection scheme can detect attacks with high accuracy and a low false alarm rate. Based on the detection models, we further developed an identification model that is capable of identifying up to $n-2$ attacked sensors in a system with one LiDAR and $n$ cameras. For the identification procedure, we proved its correctness and used experiments to validate its performance. At last, we investigated the sensitivity of our framework and showed its excellence in this aspect.

\chapter{Sensor Data Validation for Connected Autonomous Vehicles}
\label{ch4:sensor}

\section{Introduction}

In the foreseeable future, it is expected that autonomous vehicles will be a part of our daily traffic. By then, autonomous vehicles will have already started to evolve again, from single-vehicle systems to collaborative multi-vehicle systems. The combination of autonomous driving, Multi-access Edge Computing (MEC), and 5G network technology can be a strong candidate for the V2X-based collaborative autonomous driving system, which consists of connected vehicular nodes and base station nodes.

However, the collaboration of autonomous vehicles does not eliminate the security threats to individual autonomous vehicles, but raises even higher requirements for safety and security instead, since the environment perception results produced by one autonomous vehicular node can be shared among all the connected vehicles in the same collaborative system. In other words, one perception error could lead to disasters for the whole collaborative system. Therefore, security and safety threats to autonomous driving should be reconsidered in the multi-vehicle perspective.

In this study, we focus on detecting the threat of optical attacks to LiDARs for connected autonomous vehicles. Although there are previous detection methods designed for single autonomous vehicles, e.g., the method proposed by Zhang et al.~\cite{zhang21detecting}, they only suit single-vehicle systems and do not take advantages of the multi-vehicle collaborative system. Simply deploying the previous countermeasures on every vehicular node in the collaborative system does not produce more safety bonus.

In this chapter, we propose a data validation method by leveraging point clouds from multiple neighboring vehicular nodes in a collaborative autonomous driving system. To perform the data validation task for multiple connected autonomous vehicles, there are two major challenges: (1) since huge amount of point clouds are generated every second, transmitting them entirely would pose a great burden on the 5G network; (2) the scans of objects in point clouds, e.g., vehicles, are usually severely incomplete at the unlit side, causing inaccurate validation. To tackle the first challenge, we propose to only validate the scans within validation regions which are the proposals produced by point cloud region proposal networks, such as PointNet++\cite{qi17pointnet++}. This largely reduces the transmission overhead and does not overlook potential information distortions caused by optical attacks. In regard to the second challenge, we fill the scan of objects with a symmetrical copy of it according to the symmetry of the objects. Furthermore, we conduct preliminary experiments to evaluate our method. And the results show that our method can detect optical attacks against LiDARs with a fair accuracy and a relatively low false positive rate for connected autonomous vehicles.

The rest of this chapter is organized as follows. In Section~\ref{ch4:sec:system}, we first briefly introduce the collaborative autonomous driving system that we consider in this study. Then, we present the threat model in Section~\ref{ch4:sec:threat}. Next, we elaborate on our data validation method in Section~\ref{ch4:sec:method} and demonstrate our experiments in Section~\ref{ch4:sec:experiments}. Finally, we conclude this chapter in Section~\ref{ch4:sec:conclusion}.

\section{System Model}
\label{ch4:sec:system}

In this preliminary study, we consider a V2X-based collaborative autonomous driving system deployed with 5G-based Multi-access Edge Computing (MEC) technology. The system consists of multiple connected vehicular nodes and base station nodes. As for the vehicular nodes, they are equipped with full autonomous driving capability primarily based on cameras and LiDARs. In the aspect of navigation, the vehicular nodes are equipped with GPS and high definition map (HD map), which empower them with good self-localization capability. In terms of the base station nodes, they are equipped with abundant computation power and can handle big network throughput. And the 5G network enables low latency communication among the system nodes.

\section{Threat Model}
\label{ch4:sec:threat}

In this study, we mainly consider the optical attacks against LiDARs. Basically, this type of attack is launched by emitting laser beams with the same frequency as used by the victim LiDARs, so that the point clouds generated by the victim LiDARs are injected with spoofed points~\cite{petit15remote,shin17illusion}. To spoof a large amount of points and allocate them according to specific shapes, the time sequence of the victim LiDARs must be given in advance, and the timing to emit laser beams for attacking must be accordingly planned carefully and controlled very precisely~\cite{cao19adversarial}. Due to the difficulty in launching such attacks, we consider the attack cases where the attacker only attacks one of the neighboring vehicular nodes at a time. And the consequence of such optical attacks is that the victim autonomous vehicular node could detect ghost objects, possibly causing wrong driving decisions.

\section{Data Validation Method}
\label{ch4:sec:method}

In this section, we elaborate on the details of our proposed data validation method. Specifically, we first briefly explain the preprocessing phase for data preparation. Then, we elaborate on selecting validation regions and completing scans within validation regions. Next, we introduce surface mesh generation and discretization. Finally, we explain the detection of the optical attacks against LiDARs through comparing the discretized surface meshes and thresholding. The first two steps are executed on vehicular nodes, while the last two steps are done by the base station node. The structure of our proposed method is illustrated in Fig.~\ref{fig:structure}.

\begin{figure*}[!t]
    \centering
    \includegraphics[width=0.9\textwidth]{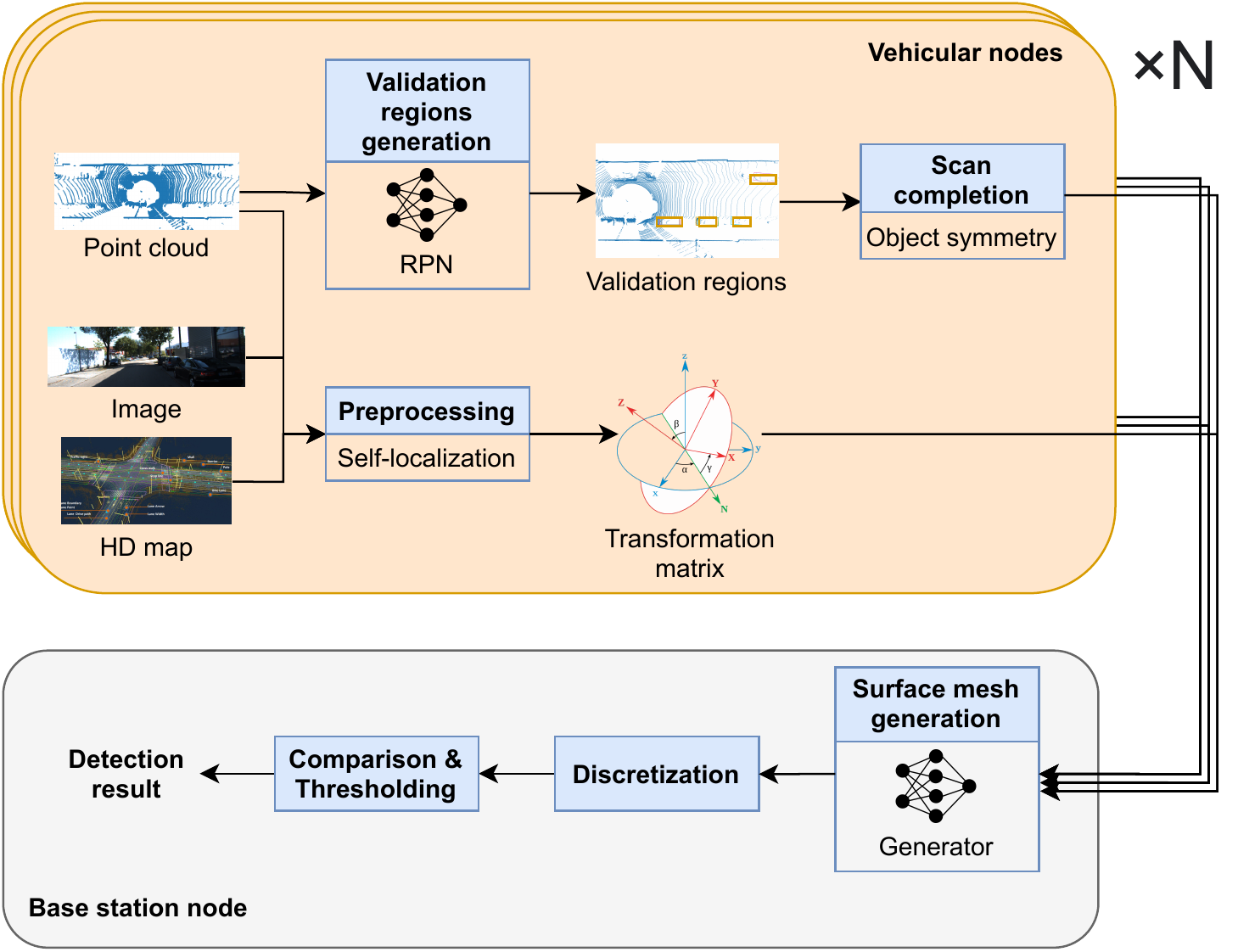}
    \caption{The structure of the data validation method for the multi-autonomous-vehicle scenario.}
    \label{fig:structure}
\end{figure*}

\subsection{Preprocessing}

As mentioned in Section~\ref{ch4:sec:system}, we consider connected autonomous vehicles in a 5G and MEC based collaborative system, in which vehicular nodes and base station nodes are equipped with computation capability. Since all of the harvested raw sensor data and the produced data from each autonomous vehicular node are based on the node's own coordinate system, the data must be aligned first before used for validation.

In this study, we convert all the coordinate systems of vehicular nodes and base station nodes of the collaborative system to the world coordinate system. We assume that autonomous vehicular nodes are equipped with high definition map (HD map) and capable of accurate self-localization. In other words, every autonomous vehicular node can calculate a transformation matrix $\textbf{Tr}=(\textbf{R}|\textbf{t})$ consisting of a translation vector $\textbf{t}$ and a rotation matrix $\textbf{R}$ using image data, point clouds, GPS data, and HD map, so that every vehicular coordinate system can be transformed to align with the world coordinate. In the subsequent phases, we use the transformation matrix $\textbf{Tr}$ to align LiDAR scans and validation regions.

\subsection{Validation Regions and Scan Completion}

\begin{figure}[!t]
    \centering
    \includegraphics[width=0.6\textwidth]{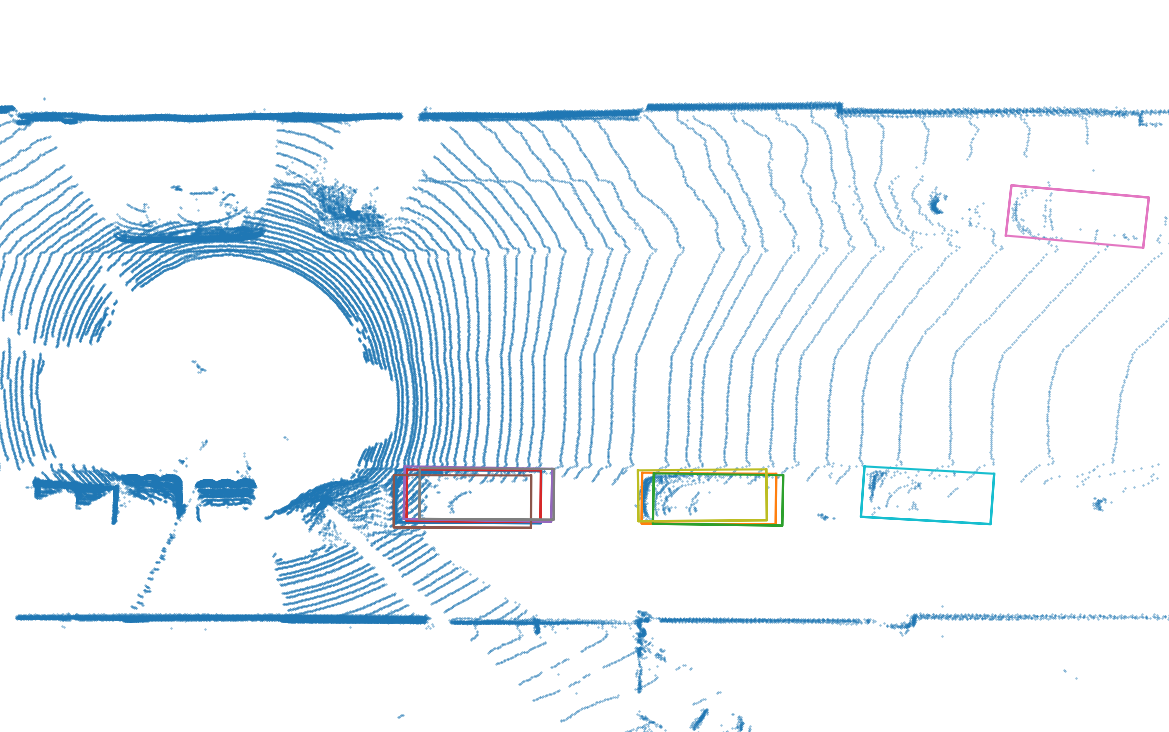}
    \caption{Validation regions (boxes) in the bird's eye view of a point cloud.}
    \label{fig:region}
\end{figure}

On one hand, to validate the genuineness of point clouds by leveraging sensor data sources of multiple neighboring vehicular nodes, the point clouds must be transmitted to a nearby MEC base station node to be processed together. On the other hand, transmitting entire frames of point clouds from each neighboring vehicular node to one nearby base station node causes great transmission overhead to the 5G network, considering the facts that huge amount of point clouds are generated every second. In addition, information distortions caused by optical attacks can appear anywhere in a point cloud, so we must scale down the data for validation and maintain information distortions still detectable at the same time.

To overcome this challenge, we use point cloud region proposal network (RPN), e.g., PointNet++~\cite{qi17pointnet++}, to generate proposals, and propose to only validate the scans within the proposals. The rationale of selecting the proposals as validation regions is three-fold. First, common paradigm of processing point clouds for autonomous driving is two-staged schemes consisting of a region proposal network and a regression network, so using proposals generated by an RPN for validation brings no extra computation overhead. Second, since the optical attacks can cause detection models to detect ghost objects or miss detecting real objects, and since all the final detected bounding boxes are chosen from generated proposals, validating scans only within proposals never overlooks potential information distortions caused by optical attacks. Third, generating proposals is very fast. According to Xu et al.~\cite{xu21soda}, the time for generating proposals with RPN is less than $10\%$ of the whole processing time of a two-staged point cloud 3D detection scheme.

\begin{figure}[!t]
    \centering
    \includegraphics[width=0.9\textwidth]{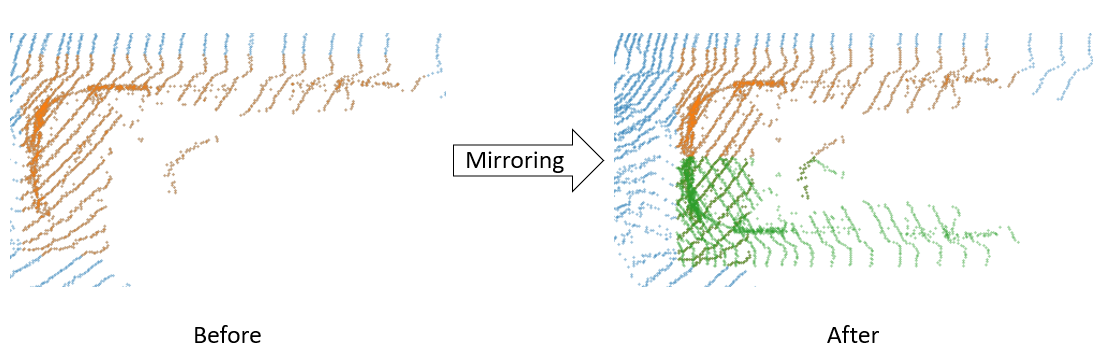}
    \caption{Points in the validation region before and after mirroring. The scan of the vehicle is severely incomplete at the unlit side (bottom right portion of the first figure) before the scan completion. After the mirroring operation according to the symmetry of the vehicle, a part of the void portion is filled with points (the second figure).}
    \label{fig:mirroring}
\end{figure}

In addition, when LiDARs scan objects, e.g., vehicles, only one side of objects is usually recorded in a frame of point cloud, resulting in zero information about the unlit side of objects, as shown in the first sub-figure in Fig.~\ref{fig:mirroring}. And this causes troubles in surface mesh generation and validation accuracy. The reason is that, due to the information vacuum of the unlit side of objects, the size of the surface meshes generated from the proposals is much smaller than the proposal size, so the area for comparison and thresholding (the overlap between two discretized surface meshes) is not large enough, causing inaccurate validation.

To overcome this challenge, we adopt a mirroring technique. Specifically, we make a symmetrical copy of the scan of objects in every proposal according to the object symmetry, and concatenate it with the original scan, which we call \emph{mirroring}, as shown in the second sub-figure in Fig.~\ref{fig:mirroring}. As we can observe, after the mirroring operation, a large portion of the information vacuum area is filled with points, which enlarges the validation area when comparing two discretized surface meshes, and in consequence boosts the validation accuracy.

To briefly summerize the first two steps, each vehicular node first performs preprocessing operation and calculate its transformation matrix. Then, they use point cloud region proposal network to generate proposals. Next, they send the proposal regions along with the transformation matrix to the base station node. After converting the received proposals to the world coordinate system and making a quick summary, the base station node distributes the complete set of proposal regions to each neighboring vehicular node. At last, each vehicular node crops the point cloud according to the complete set of proposal regions, performs mirroring operation within each proposal, and send all the proposals with cropped point cloud to the base station node.

\subsection{Surface Mesh Generation and Discretization}

After receiving the complete set of cropped point clouds from each neighboring vehicular node, the base station generate a surface mesh $sm_{i,j}$ for proposal region $pr_{i}$ and the vehicular node $vn_{j}$. There are many excellent algorithms for surface mesh generation from point cloud, e.g., Point2Mesh~\cite{hanocka20point2mesh}. The first reason why we converts point clouds to surface meshes is that surface meshes can approximately represent the semantic information contained in point clouds, which is the surface of objects measured by LiDARs, while a set of points in point clouds do not show semantics directly. And the second reason is that surface meshes can be transformed to distributions which can be efficiently compared with each other to obtain the distances among them, while directly comparing the point clouds poses great computational overhead to the base station node.

After the surface meshes are generated, we discretize the surface meshes into 2D grids with resolution of $0.1$ meter, the same resolution as used in~\cite{chen17multi}. For each grid, the value is computed as the height of the grid center. The purpose of the discretization is to further downscale the comparison space to save computation power.

\subsection{Comparison and Thresholding}

Finally, we perform the comparison and thresholding on the discretized surface meshes. The method is similar to that in Chapter~\ref{ch3:sensor}. First, for validating the data inside a proposal region $pr_{i}$, we need at least three discretized surface meshes, denoted as $dsm_{i,j_{1}}$, $dsm_{i,j_{2}}$, and $dsm_{i,j_{3}}$, from three different vehicular nodes, $vn_{j_{1}}$, $vn_{j_{2}}$, and $vn_{j_{3}}$. By comparing them with each other, we can obtain the distances $dist_{i,j_{1},j_{2}}$, $dist_{i,j_{1},j_{3}}$, $dist_{i,j_{2},j_{3}}$. For example, the distance between $dsm_{i,j_{1}}$ and $dsm_{i,j_{2}}$ is measured as:
\begin{equation}
    dist_{i,j_{1},j_{2}} = \frac{\sum_{g \in G} |dsm^{g}_{i,j_{1}} - dsm^{g}_{i,j_{2}}|}{|G|},
\end{equation}
\begin{equation}
    G = \{g : |dsm^{g}_{i,j_{1}} - dsm^{g}_{i,j_{2}}| > \epsilon\},
    \label{ch4:C}
\end{equation}
where $g$ denotes girds in $dsm$, $\epsilon$ is the average distance among discretized surface meshes of the ground, and $|G|$ denotes the size of $G$. Since every validation region contains scan of the ground, the equation~\ref{ch4:C} indicates that we only take into account the grids where the grid-level distance between two discretized surface meshes is larger than $\epsilon$, to avoid the final distance being severely normalized by the grids containing only scan of the ground. Then, we identify the vehicular node that is under attack by observing which distances surpass the threshold $\theta$. For example, if the vehicular node $vn_{j_{1}}$ is attacked, the distances $dist_{i,j_{1},j_{2}}$ and $dist_{i,j_{1},j_{3}}$ would be larger than $\theta$, while $dist_{i,j_{2},j_{3}}<\theta$. For the cases where more than three vehicular nodes are involved, we can apply the algorithms in Section~\ref{ch3:sec.identification} to identify the attacked nodes.

Similar as in Chapter~\ref{ch3:sensor}, the threshold $\theta$ is determined offline in attack-free cases, based on the value distribution of the distances and the designated false positive rate $r$. To this end, a large amount of samples of normal distances (generated in attack-free cases) are needed. And $r$ represents the percentage of the samples that are falsely considered as outliers. By setting $r$, we can control $\theta$. We can write the relation between the $\theta$ and $r$ as:
\begin{equation}
    \frac{\text{\# samples of } dist > \theta}{\text{\# samples of } dist} = r.
\end{equation}

\section{Experiments}
\label{ch4:sec:experiments}

In this section, we conduct some preliminary experiments to evaluate our data validation method. At first, we briefly introduce the dataset we use for the experiments and the experimental setups. Then, we present the preliminary experiment results in detail. And the preliminary results show that our data validation method can detect the optical attacks against LiDARs with a fair accuracy.

\subsection{Dataset}

In this preliminary study, we use the KITTI raw dataset~\cite{geiger13vision} to simulate the scenario of multiple connected autonomous vehicles. To this end, we specifically choose the recordings that only contain static objects from the dataset. And each time we select three frames of point clouds in temporal order from the recordings. The time interval between adjacent two frames is 0.5 seconds. Since only the ego-vehicle was moving when point clouds were recorded, we can use such three selected frames of point clouds to simulate the three frames of point clouds recorded at the same time from three vehicular nodes at different positions.

\subsection{Setups}

In this preliminary study, we consider the attack cases where the attacker attacks one of the vehicular nodes by injecting fake points into its point clouds, so that the victim vehicular node may detect ghost vehicles. To forge such compromised point clouds, we randomly select LiDAR scans of real vehicles and allocate them at random positions in attack-free point clouds. As for the point cloud region proposal network used in our method, we choose PointNet++~\cite{qi17pointnet++}. And in terms of the surface mesh generator, we select Point2Mesh~\cite{hanocka20point2mesh}. We sample the distances among the discretized surface meshes of the ground and set $\epsilon=0.04$. We conduct the experiments for both the attack-free cases and the attack cases, and report the results in the next subsection.

\subsection{Results}

\begin{figure*}
	\centering
	\subfloat[]{
		\label{distribution}
		\includegraphics[width=0.48\textwidth]{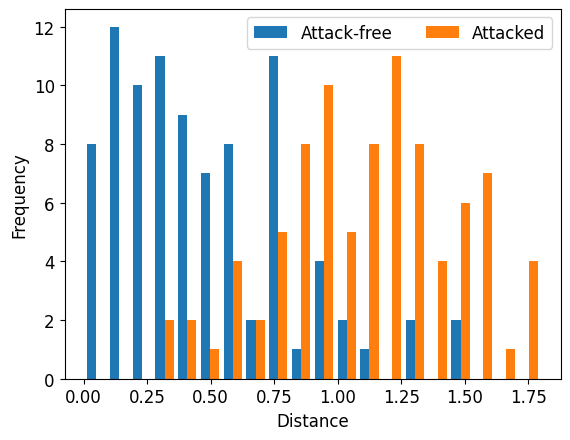}}
	\hfil
	\subfloat[]{
		\label{detection_rate}
		\includegraphics[width=0.48\textwidth]{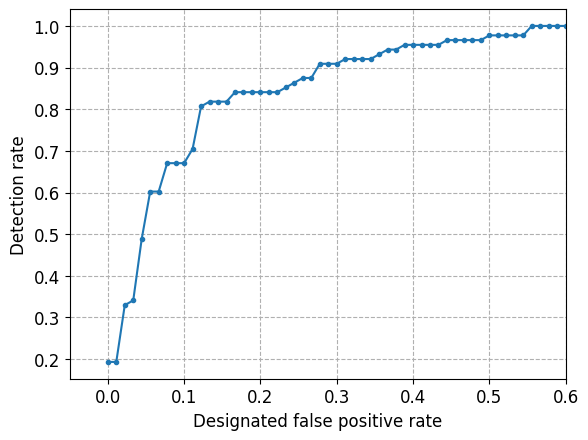}}
	\caption{Preliminary experiment results of the data validation for the the multi-autonomous-vehicle scenario. In (a), we show the distance distribution of the discretized surface meshes in the attack-free cases and the attacked cases. In (b), we illustrate how the detection rate varies with the designated false positive rate.}
	\label{experiment}
\end{figure*}

We present the preliminary experiment results in Fig.~\ref{experiment}. In Fig.~\ref{distribution}, we compare the distance distributions of the discretized surface meshes between the attack-free cases (blue bars) and the attacked cases (orange bars). We first observe that the distances among the discretized surface meshes are smaller than $0.8$ in the majority of attack-free cases. By comparisons, the distances in most attacked cases are larger than $0.8$. These results suggest that our data validation method demonstrates good sensitivity, so that there is only a relatively small overlap between the distribution of attack-free cases and the distribution of attacked cases.

In Fig.~\ref{distribution}, we adjust the threshold $\theta$ used for the declaration of attacks by changing the designated false positive rate $r$. As we can observe, when $r=0.1$, the detection rate is very close to $70\%$. And when we set $r$ to $0.3$, the detection rate hits over $90\%$. These results indicate that our data validation method can achieve a fair detection rate when the false positive rate is relatively low.

\section{Conclusion}
\label{ch4:sec:conclusion}

In this chapter, we addressed the data validation task for connected autonomous vehicles. Specifically, we proposed a data validation method which leverages multiple point clouds from different neighboring vehicular nodes. On one hand, we tackled the challenge of transmission overhead caused by large data size by only validating scans within validation regions which are the proposals produced by point cloud RPN. On the other hand, we overcome the challenge of severely incomplete scans of objects within proposals by filling them with points according to the symmetry of objects. At last, we conducted some preliminary experiments and demonstrated the results of comparing the distance distributions of discrete surface meshes between attack-free cases and attacked cases and showed how detection rate varies with the designated false positive rate. The results showed that our method can detect optical attacks against LiDARs with a fair accuracy and a relatively low false positive rate for multiple connected autonomous vehicles.

\chapter{Impact Evaluation of Adversarial Attacks on Driving Safety}
\label{ch5:impact}

\section{Introduction}
\label{ch5:sec.introduction}

Over the past decade, autonomous driving has gained significant developments and demonstrated its great commercial potentials~\cite{davies18waymo, hawkins19tesla}. The commercial potentials have attracted enormous investment as well as various malicious attacks~\cite{ren19security, wyglinski13security}, for example,  close-proximity sensor attacks, remote cyberattacks, perturbation attacks, and patch attacks.

Environment perception and other tasks of autonomous driving systems heavily rely on deep learning models. Researchers have demonstrated that adversarial examples, which are originally designed to affect general-purpose deep learning models, can also be used to cause malfunctions in autonomous driving tasks~\cite{zhou18deep, ranjan19attacking, eykholt18robust, lu17adversarial, metzen17universal, song18physical, chen18shapeshifter, zhang20adversarial, mathew20monocular, cao19adversarial}. In these studies, researchers usually use the decline of accuracy, or the erroneous rate increase of the deep learning models, to measure the effectiveness of attacks. Amplified by media reports, these attacks are casting cloud and posing psychological barriers to the broader adoption of autonomous driving~\cite{slovick2017security}. 

From the perspective of autonomous driving, however, the ultimate concern is driving safety. Without a doubt, the inaccurate detection results of a deep learning model in the presence of attacks may impact driving safety, and in some situations, misdetection of traffic signs~\cite{eykholt18robust} might have disastrous consequence. Nevertheless, \textit{driving safety is a combined effort of many factors in a dynamic environment}, and the deteriorated model performance does not necessarily lead to safety hazards. The linkage between the performance of a deep learning model under adversarial attacks and driving safety is not studied in the literature. In particular, there are no clear answers to the following questions: 1) Does the precision decline or the erroneous rate increase of the deep learning models under attacks represent their robustness in regard to driving safety of autonomous vehicles? In other words, does a larger decline in accuracy of an attacked deep learning model indicate a higher risk of driving safety? Similarly, does a slight decrease in accuracy of a deep learning model under attacks indicate mild risk of driving safety? 2) If the answers to the previous questions are all no, what are the reasons behind?

In this chapter, we aim to answer the aforementioned questions by evaluating the impact of two types of representative adversarial attacks, \textbf{perturbation attacks} and \textbf{patch attacks}, on driving safety of vision-based autonomous driving systems, rather than  the accuracy of deep learning models. We also investigate the reasons causing the decoupling between the detection precision of adversarial attacks and driving safety.

This study considers vision-based autonomous driving which mainly relies on stereo cameras for the task of environmental sensing. The vision-based object detectors that we consider in this chapter are Stereo R-CNN~\cite{li19stereo} and DSGN~\cite{chen20dsgn}, two state-of-the-art methods in this area.

To facilitate this study, we propose an \textbf{end-to-end} driving safety evaluation framework with a set of designed driving safety performance metrics, where the evaluation framework can directly take the results of the 3D object detector as input and outputs the scores of the driving safety performance metrics as the final assessment.

To implement such an evaluation framework, we are faced with two nontrivial technical challenges. First, the results of the 3D object detector only contain static information, such as position and dimension. Thus, we cannot determine which objects are moving and which are static. Second, to realistically generate a planned trajectory for the self-driving ego-vehicle, real driving constraints, such as speed limits for different road types and dynamics models for different vehicles, must be provided to the motion planning module. Considering that the driving scenarios change dynamically, we need to select appropriate real driving constraints accordingly for driving safety assessment.

To tackle the first challenge, we train a CNN-based classifier with manually labeled ground truth to categorize whether an object is moving or static. For the second challenge, we train another classifier with road type labels to classify the road type of each scenario, so as to select appropriate driving constraints.

To obtain comprehensive experiment results, we apply the aforementioned two types of adversarial attacks with different attack intensities in our evaluation framework and measure the rate that the motion planner successfully finds a trajectory, the rate of collision occurrence, and the rate that the ego-vehicle drives safely from the initial position to the goal region. In the meantime, we also measure the precision changes of the vision-based 3D object detectors when they are under attacks. By linking the impact of adversarial attacks on driving safety and on 3D object detection together, we manage to find the answers to our motivation questions. The main contributions of this chapter can be summarized as follows.

\begin{itemize}
\item
We propose an \textit{end-to-end} driving safety evaluation framework that directly takes the produced results of the 3D object detector as input and outputs driving safety performance scores as the evaluation outcome. With modular design, each individual module can be easily replaced so that the framework can be adapted to evaluate the driving safety of different self-driving systems threatened by various attacks.
\item
We conduct extensive experiments and observe that the changes in object detection precision and the changes in driving safety performance metrics caused by adversarial attacks are decoupled. Therefore, the answers to our motivation questions are all no. And we also observe that DSGN is more robust than Stereo R-CNN in terms of driving safety.
\item
We investigate the reasons behind our answers to those questions. We find that the reason for the decoupling is that it is easier for perturbation attacks to mislead object detectors to detect ghost objects at roadside which cause little influence on driving safety but huge impact on detection precision. We also find out that the reason why DSGN is more robust than Stereo R-CNN is that the latter is purely based on deep neural network, while DSGN adopts the Spatial Pyramid Pooling (SPP) in its architecture which can alleviate the attack effects.
\end{itemize}

The rest of this chapter is organized as follows. In Section~\ref{ch5:sec.attackmodels}, we first briefly introduce the attack model of the two adversarial attacks studied in this chapter. In Section~\ref{ch5:sec.system}, we elaborate on our proposed end-to-end evaluation framework and the driving safety performance metrics. In Section~\ref{ch5:sec.experiments}, we present the experiment design and results. In Section~\ref{ch5:sec.ablationstudy}, we investigate the causes of our observations with an ablation study. Finally, we conclude this chapter in Section~\ref{ch5:sec.conclusion}.

\section{Attack Models}
\label{ch5:sec.attackmodels}

\begin{figure*}[!t]
    \centering
    \includegraphics[width=1.0\textwidth]{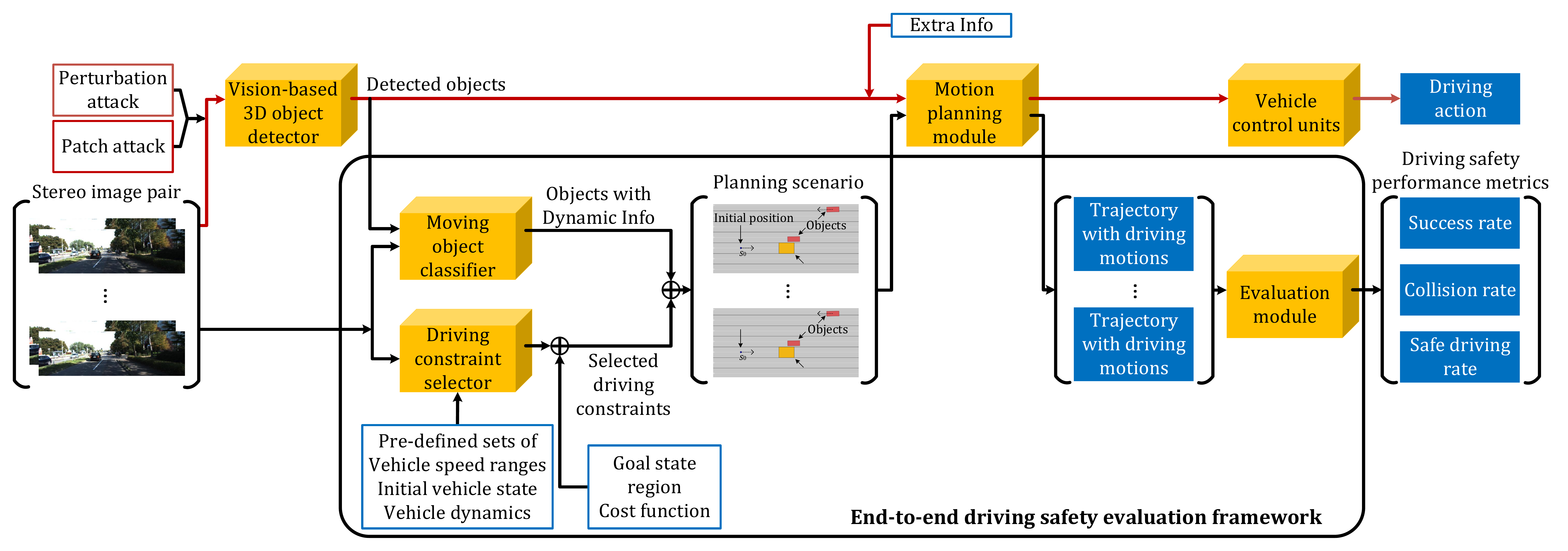}
    \caption{The end-to-end driving safety evaluation framework.}
    \label{ch5:fig:system_model}
\end{figure*}

We assess the impact of two types of adversarial attacks, perturbation attacks and patch attacks, on driving safety. Here, we briefly introduce the attack models.

\textbf{Perturbation Attack.} The goal of this type of adversarial attacks is to make the deep learning model dysfunctional by adding small changes to each pixel in the image that are imperceptible to human eyes. With prior knowledge of the deep learning model, attackers can launch perturbation attacks by tapping into the self-driving system and perturbing camera images. We consider the method of PGD~\cite{madry18towards} to achieve input-specific attacks. Consider a perturbation $\delta^{\text{per}}$ and an image pair $(I_{l}, I_{r})$, where $\delta^{\text{per}}$ has the same dimension as $I_{l}$ and $I_{r}$. Let the initial perturbed image pair $(\tilde{I}^{\text{per}}_{l, 0}, \tilde{I}^{\text{per}}_{r, 0}) = (I_{l}, I_{r})$. The attack is carried out by updating the perturbation using the projected loss gradient of the 3D object detector through multiple iterations with
\begin{equation}\label{eq:eq_2}
\delta^{\text{per}}_{n} = \text{Clip}_{\epsilon}\{\alpha \times \text{sign}(\nabla_{(I_{l}, I_{r})}L(O_{\theta}(\tilde{I}^{\text{per}}_{l, n}, \tilde{I}^{\text{per}}_{r, n}), b^{\text{true}}))\}
\end{equation}
and
\begin{equation}\label{eq:eq_3}
(\tilde{I}^{\text{per}}_{l, n+1}, \tilde{I}^{\text{per}}_{r, n+1}) = (\tilde{I}^{\text{per}}_{l, n} + \delta^{\text{per}}_{n}, \tilde{I}^{\text{per}}_{r, n} + \delta^{\text{per}}_{n})
\end{equation}
where $\text{Clip}_{\epsilon}\{\cdot\}$ ensures that the value is within $[-\epsilon, \epsilon]$, $\alpha$ is the parameter that controls the attack intensity, $\text{sign}(\cdot)$ denotes the sign function, $O_{\theta}(\cdot, \cdot)$ represents the vision-based 3D object detector parametrized by $\theta$, $L(\cdot, \cdot)$ is the loss function of $O_{\theta}(\cdot, \cdot)$, $b^{\text{true}}$ is the ground truth label paired with $(I_{l}, I_{r})$, and $0 \leqslant n \leqslant N-1$. For convenience, we denote the perturbation attack as $(\tilde{I}^{\text{per}}_{l}, \tilde{I}^{\text{per}}_{r}) = A^{\text{per}}(I_{l}, I_{r}, b^{\text{true}}, \epsilon, \alpha, N)$.

\textbf{Patch Attack.} The patch attack is designed to model the real-world poster-printing attack  in~\cite{eykholt18robust}. In the context of vision-based 3D object detection, a patch attack is launched to mislead the detector so that it detects ghost objects by including the patch in the view of the image. With prior knowledge of the deep learning model, attackers can train a patch offline, print it out, and put the physical patch inside the view of cameras to launch the attack. For example, the attacker can place the patch at the roadside where the vision-based self-driving car passes by. Since a real-world 3D point appears at different positions in two stereo images, we consider a patch $\delta^{\text{pat}}$ that is pasted onto $I_{l}$ at $loc_{l}$ and onto $I_{r}$ at $loc_{r}$, where $(loc_{l}, loc_{r}) \in \mathcal{L}$, $\mathcal{L}$ represents a set of random position pairs. Let $\lambda_{loc_{l}, loc_{r}} \in \Lambda$ be the disparity between $loc_{l}$ and $loc_{r}$, where $\Lambda$ denotes a set of valid disparities in the physical world. Let $\tau \in \mathcal{T}$ be a transformation that can be applied to $\delta^{\text{pat}}$, where $\mathcal{T}$ includes rotations. Then, the patched image pair can be represented as $(\tilde{I}^{\text{pat}}_{l}, \tilde{I}^{\text{pat}}_{r}) = A^{\text{pat}}(I_{l}, I_{r}, \delta^{\text{pat}}, loc_{l}, loc_{r}, \tau)$. To implement this attack, the patch is optimized as
\begin{equation}\label{eq:eq_4}
\mathop{\text{argmin}}_{\delta^{\text{pat}}} \mathbb{E}_{(I_{l}, I_{r}) \sim \mathcal{I}, (loc_{l}, loc_{r}) \sim \mathcal{L}, \tau \sim \mathcal{T}} L(O_{\theta}(\tilde{I}^{\text{pat}}_{l}, \tilde{I}^{\text{pat}}_{r}), b^{*}),
\end{equation}
where $b^{*}$ denotes the predefined 3D bounding boxes used for misleading the object detector and serves as the optimization target here.

\section{End-to-end Driving Safety Evaluation \\ Framework}
\label{ch5:sec.system}

As discussed in Chapter~\ref{ch2:literature}, many previous studied only showed that deep learning models of autonomous driving can be compromised by adversarial attacks, but they did not systematically assess the attack impact on driving safety. Our goal is to answer the questions raised in Section~\ref{ch5:sec.introduction} by investigating the impact of perturbation attacks and patch attacks on driving safety of vision-based autonomous vehicles. This investigation considers not only the performance of the attacked deep learning models but also their impact on the overall safety, which is a combined effect of different functional modules involved in autonomous driving.

To this end, we design an end-to-end driving safety evaluation framework. \textbf{End-to-end} means that our system directly takes 3D object detection results as input and outputs the driving safety scores. Moreover, our evaluation framework adopts a modular design, so that each module can be easily replaced with other methods to assess the driving safety of different autonomous driving systems. Note that  the existing simulators, such as Baidu Apollo~\cite{apollo19perception} and CARLA~\cite{dosovitskiy17carla}, either have a low level of customization or are not compatible with real autonomous driving datasets. Therefore, we implement our own evaluation framework with real autonomous driving dataset to evaluate the impact of adversarial attacks on driving safety. In this section, we first introduce our evaluation framework model for vision-based autonomous driving and the driving safety metrics, then elaborate on the framework implementation details.

\subsection{Framework}

Our evaluation framework works along with the data flow of vision-based autonomous driving systems. In Figure~\ref{ch5:fig:system_model}, the black lines represent the data flow of our evaluation framework, while the red lines are for the data flow of the autonomous driving system. Usually inside the vision-based autonomous vehicle, a pair of stereo images $(I_{l}, I_{r} \in \mathbb{R}^{h \times w \times 3})$ is first fed as the input to the 3D object detection module $O_{\theta}(\cdot, \cdot)$, which is parameterized by $\theta$ and generates detected objects in 3D bounding boxes $b$ (denoted as $b=O_{\theta}(I_{l}, I_{r})$) as the output. Next, the bounding boxes $b$ along with some extra information are passed to the the motion planning module $M(\cdot)$. At last, the vehicle control units execute the driving motion orders from the motion planning module. As depicted in Figure~\ref{ch5:fig:system_model}, our proposed end-to-end driving safety evaluation framework directly takes the detected objects of $O_{\theta}(\cdot, \cdot)$ as input, uses the same motion planning module of the autonomous vehicle, and outputs scores for driving safety metrics. This modular design makes it possible for our evaluation framework to be adopted by other self-driving systems which have different implementation of the aforementioned modules.

As described in Section~\ref{ch5:sec.introduction},  two main technical challenges need to be addressed after the object detection results are fed into our evaluation framework. First, the 3D bounding boxes $b$ as the object detection results only contain static information, i.e., object category, box dimensions, box center position in 3D space, and the confidence score. Base on the static information of one frame of data, we cannot distinguish between moving objects and static objects. However, the subsequent motion planning module requires dynamic information of objects as part of its input. Second, to realistically produce planned trajectory for the self-driving vehicle, driving constraints, including speed limits for different road types and dynamics constraints for different moving vehicles (acceleration, jerk, energy, etc.), must be considered to comply with the real driving scenarios. In addition, as the real driving scenarios can change dynamically, we must choose appropriate real driving constraints accordingly for driving safety evaluation.

\begin{figure*}
\centering

\subfloat[Clean image input.]{
    \includegraphics[width=0.35\textwidth]{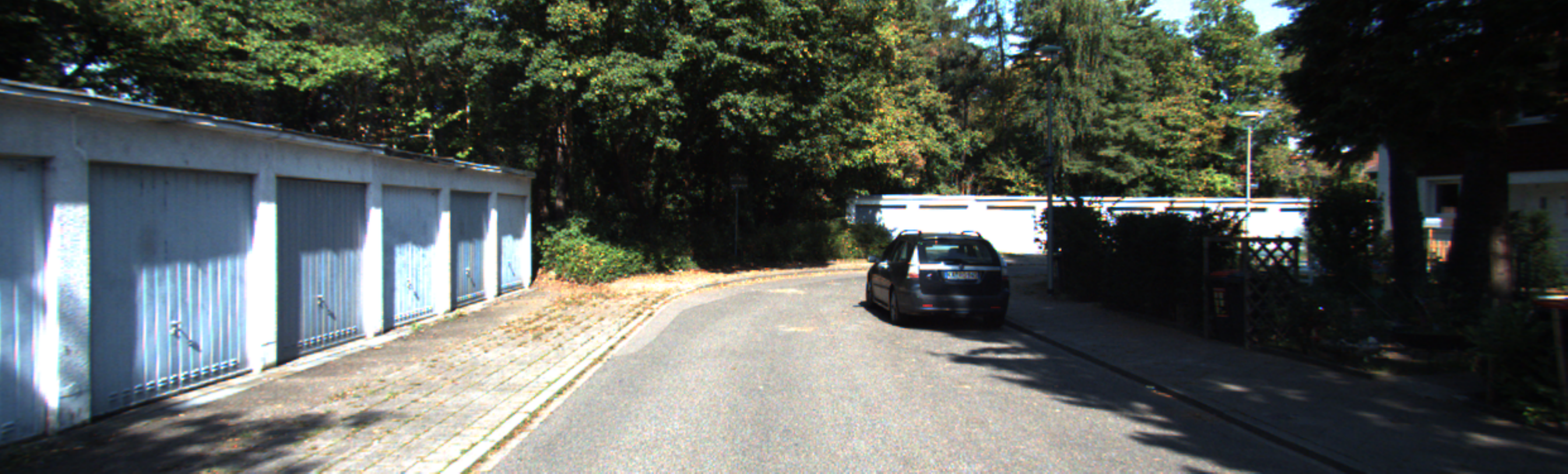}
}\hspace{12pt}
\subfloat[Ground truth of object detection.]{
    \includegraphics[width=0.35\textwidth]{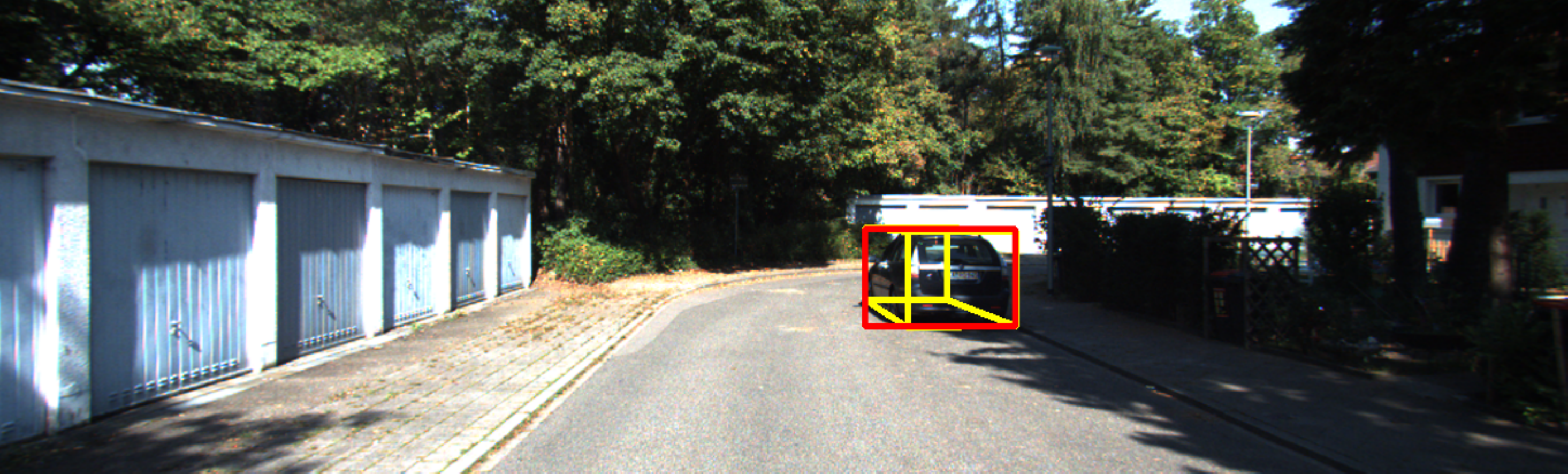}
} \\
\subfloat[Detection results of DSGN without attack.]{
    \includegraphics[width=0.35\textwidth]{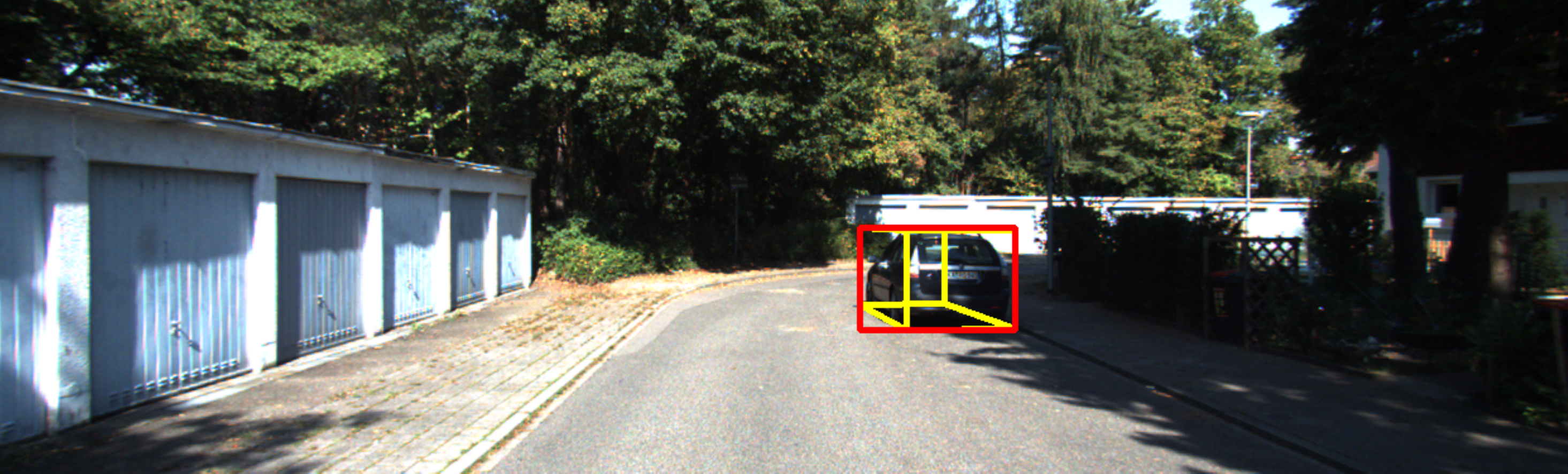}
}\hspace{12pt}
\subfloat[Detection results of DSGN under attack.]{
    \includegraphics[width=0.35\textwidth]{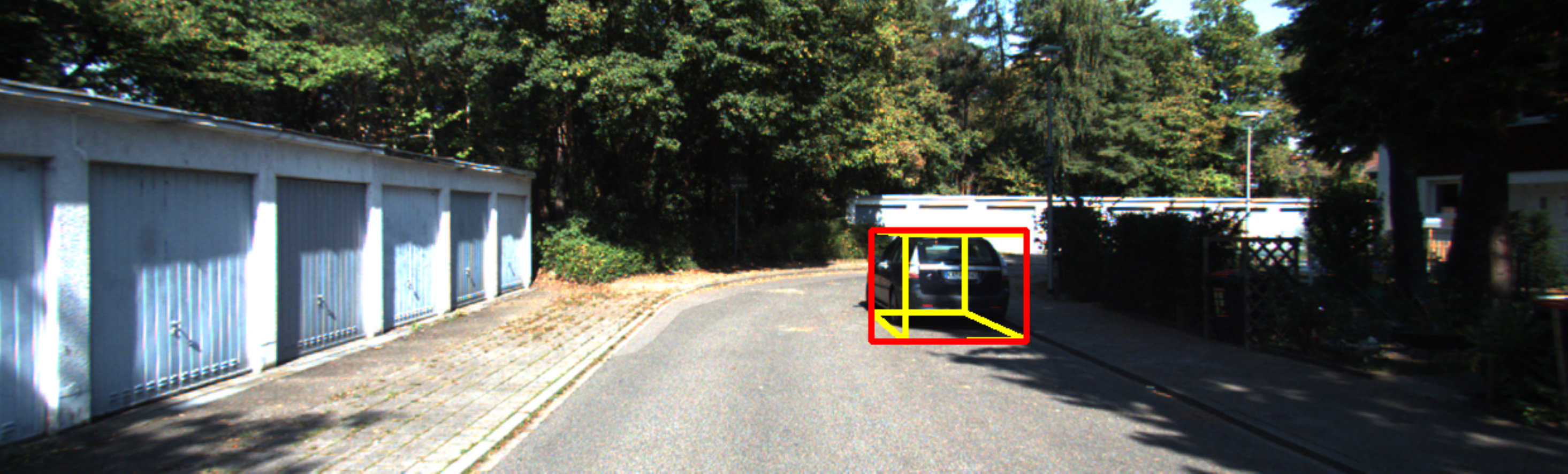}
} \\
\subfloat[Detection results of Stereo R-CNN without attack.]{
    \includegraphics[width=0.35\textwidth]{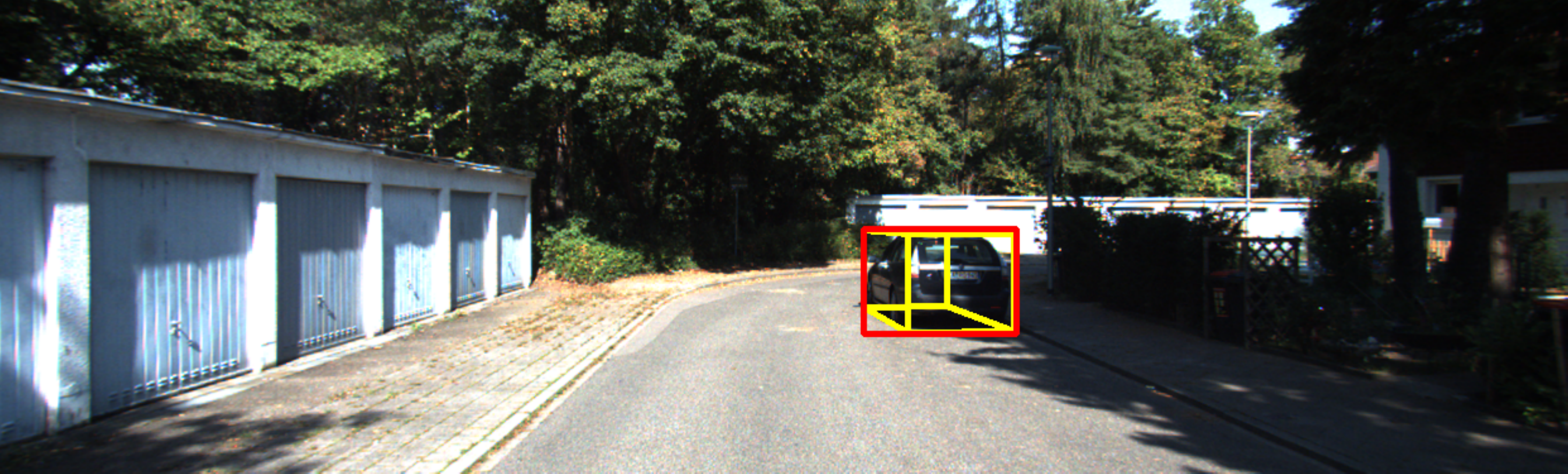}
}\hspace{12pt}
\subfloat[Detection results of Stereo R-CNN under attack.]{
    \includegraphics[width=0.35\textwidth]{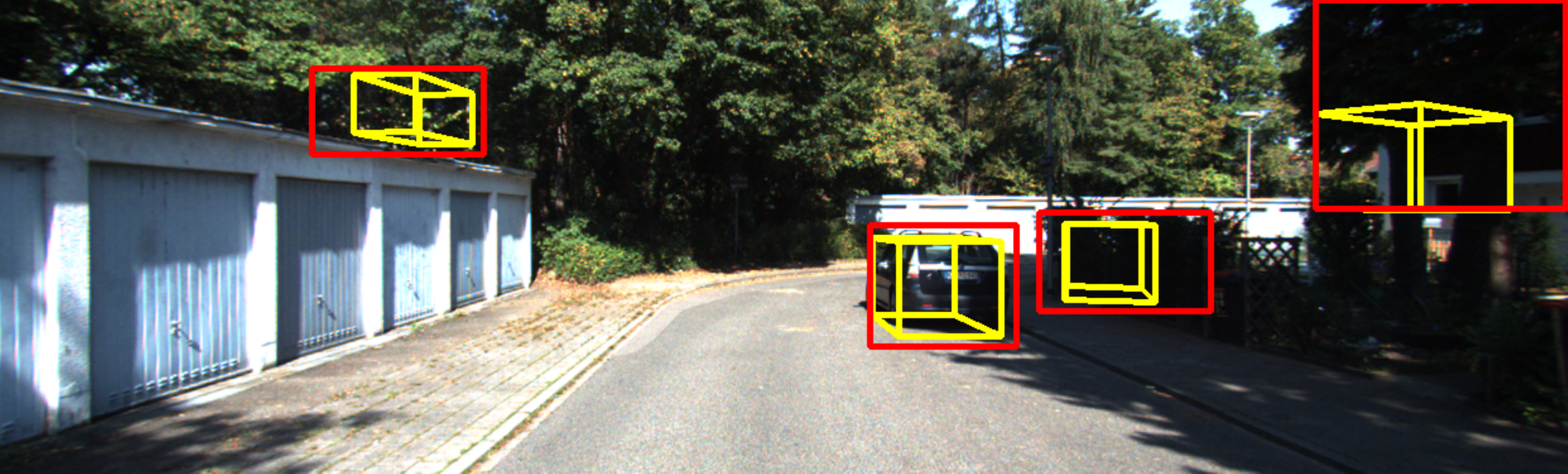}
} \\
\subfloat[Clean image input.]{
    \includegraphics[width=0.35\textwidth]{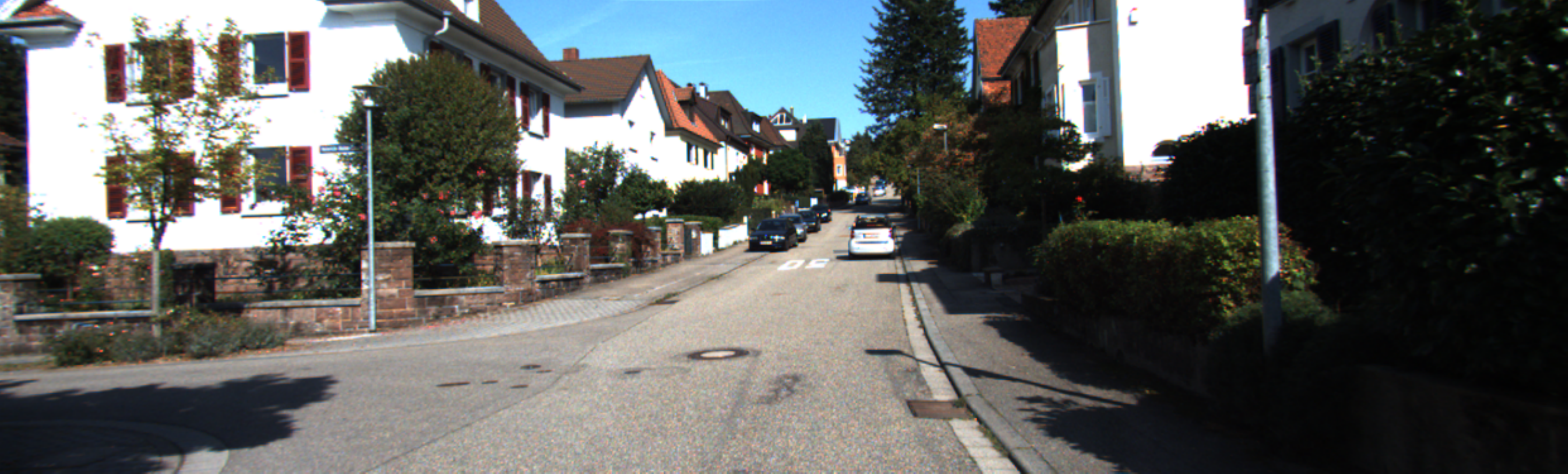}
}\hspace{12pt}
\subfloat[Ground truth of object detection.]{
    \includegraphics[width=0.35\textwidth]{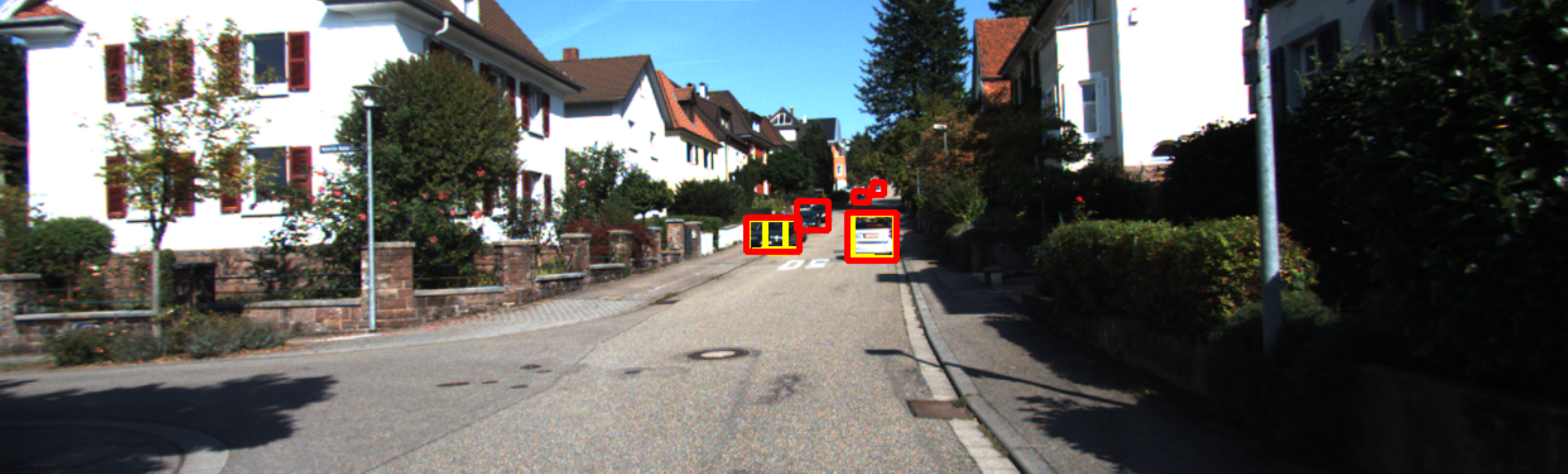}
} \\
\subfloat[Detection results of DSGN without attack.]{
    \includegraphics[width=0.35\textwidth]{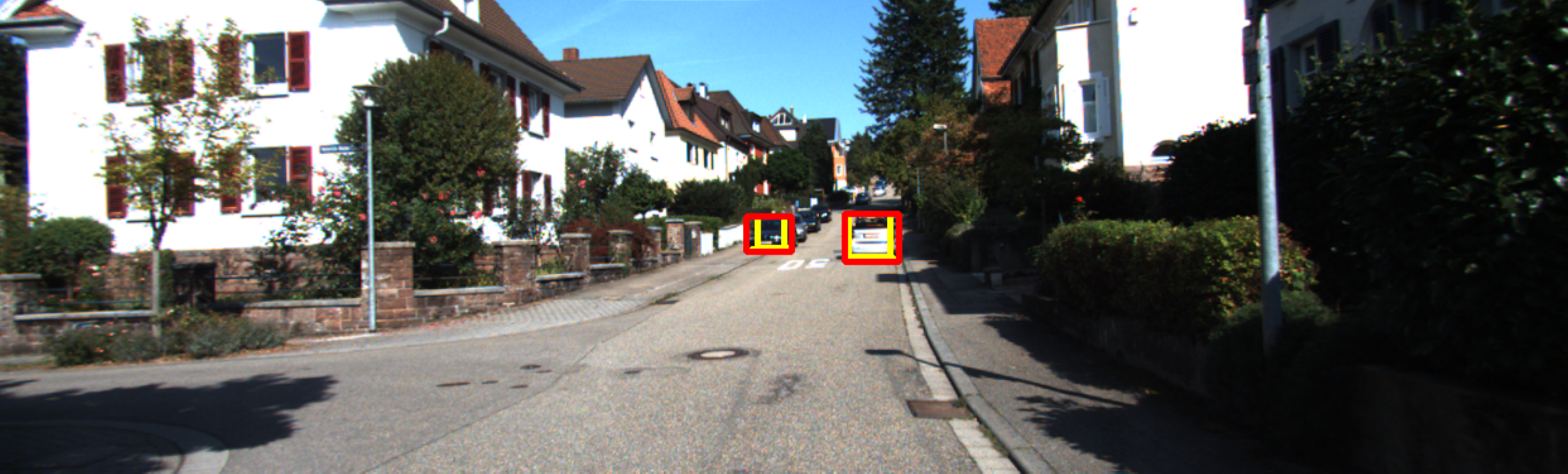}
}\hspace{12pt}
\subfloat[Detection results of DSGN under attack.]{
    \includegraphics[width=0.35\textwidth]{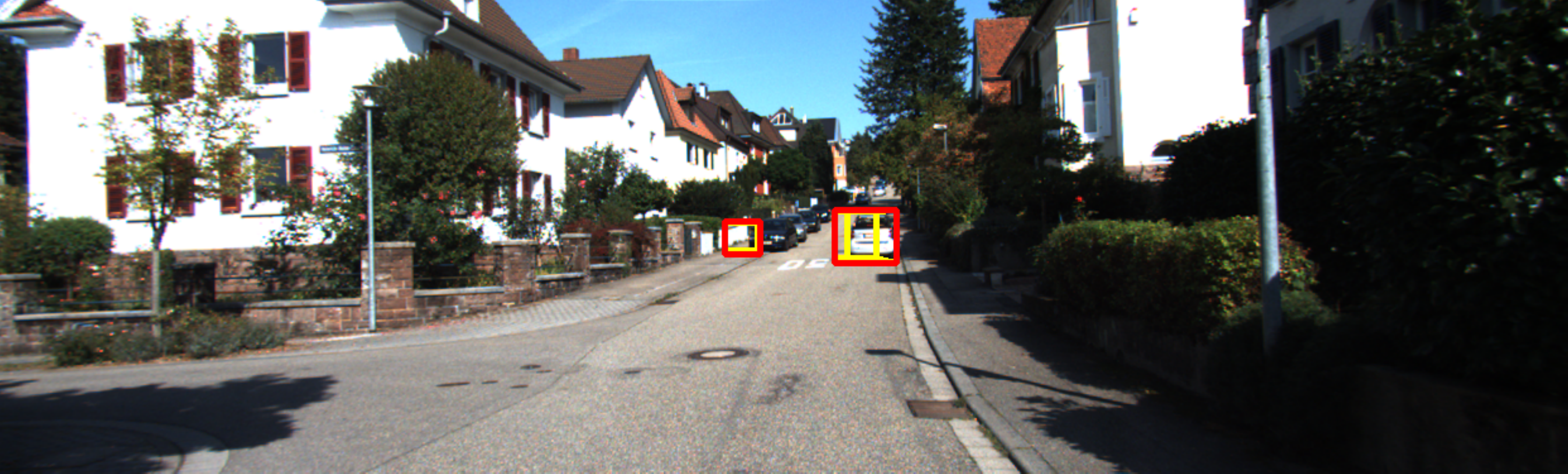}
} \\
\subfloat[Detection results of Stereo R-CNN without attack.]{
    \includegraphics[width=0.35\textwidth]{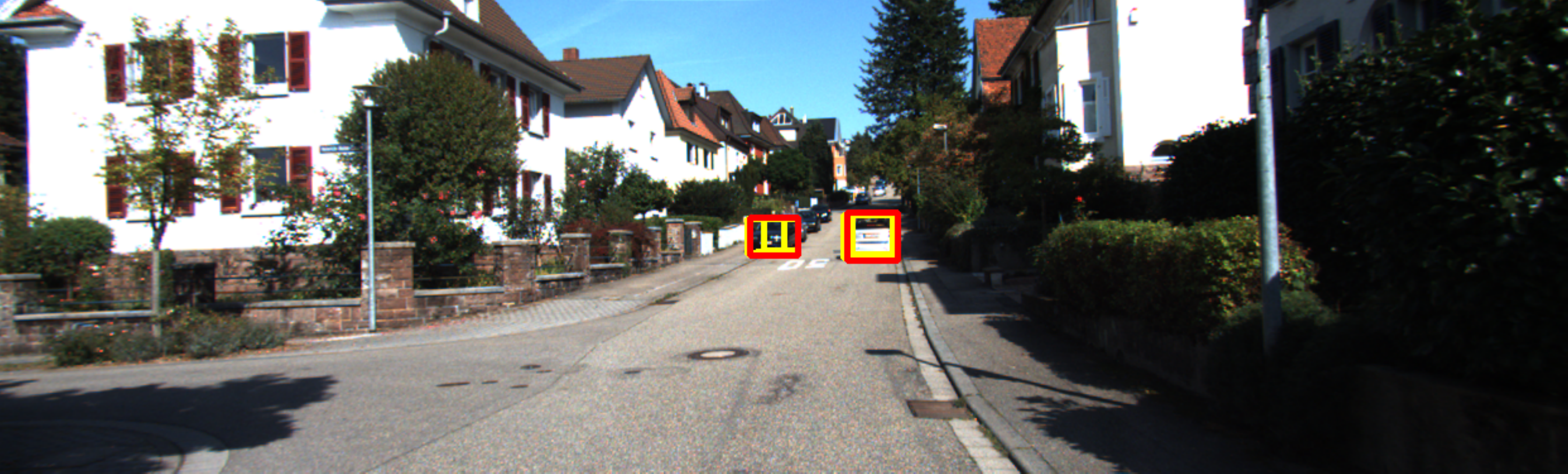}
}\hspace{12pt}
\subfloat[Detection results of Stereo R-CNN under attack.]{
    \includegraphics[width=0.35\textwidth]{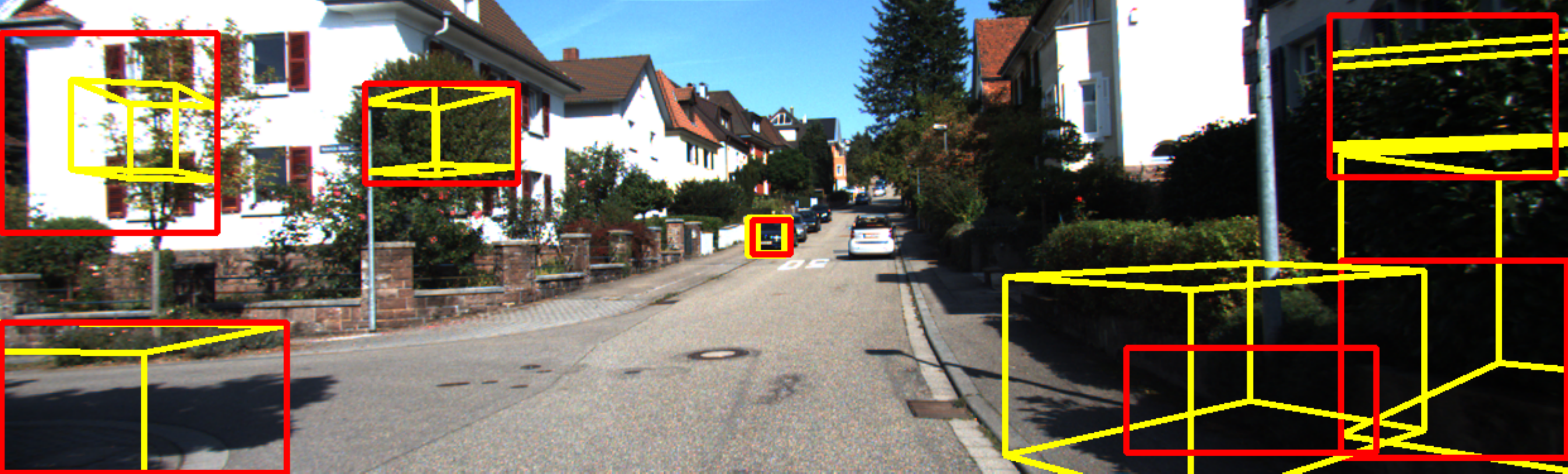}
} \\
\caption{When there is no attack, both Stereo R-CNN and DSGN can detect objects accurately as shown in (c), (e), (i), and (k). When the perturbation attack is launched, the two models produce erroneous object detection results including inaccurate detection of real objects in (d), (j), and false detection of ghost objects in (f), (l).}
\label{fig:plot_perturbation}
\end{figure*}

To tackle the first challenge, we train a CNN-based moving object classifier $C(\cdot, \cdot, \cdot)$ to distinguish between dynamic and static objects by leveraging continuous frames of image data. We manually label each object with the ground truth indicating whether this object is moving or not. By doing this, we associate the object detection results with dynamic information. We denote this process as $\vec{b}=C(b, I_{l}, I_{r})$.

To address the second challenge, we train another CNN-based model $S(\cdot, \cdot)$ with road type labels as the driving constraint selector, so that it can classify the road type of driving scenarios and select proper real driving constraints for the evaluation. We denote this part as $(s_{0}, r, d)=S(I_{l}, I_{r})$, where $s_{0}$ is the initial vehicle state, $r$ is the allowed speed range, and $d$ represents the vehicle dynamics. In this chapter, we define a vehicle state $s := (p, v, \varphi, \omega)$ as a combination of position $p$, velocity $v$, orientation $\varphi$, and steering angle $\omega$ at a specific moment. Note that both the two aforementioned models, $C(\cdot, \cdot, \cdot)$ and $S(\cdot, \cdot)$, are trained on KITTI raw dataset~\cite{geiger13vision}.

Then, together with goal region $g$ and cost function $c$, we combine the processed results of both the moving object classifier and the driving constraint selector to form a planning scenario. After that, the scenario is fed to the motion planning module $M(\cdot)$ that outputs a temporal sequence of planned vehicle states $\{s_{t}\}$ (a trajectory with planned driving motions), which is denoted as $\{s_{t}\} = M(\vec{b}, s_{0}, r, g, c, d)$, where $1 \leqslant t \leqslant T$.

The final assessment of driving safety is conducted by the evaluation module based on processing a large number of driving scenarios. Specifically, the evaluation module incorporates the planned trajectory into the planning scenario and detects collision for each driving scenario in the dataset. Then, it generates driving safety performance scores based on all detected collisions. Note that we refer to a collision as the physical contact of objects. In this chapter, we evaluate driving safety on the KITTI object detection dataset~\cite{geiger12are}.

Next, we introduce the driving safety performance metrics and present the details of the framework implementation.

\subsection{Driving Safety Performance Metrics}
We define the driving safety as the vehicular ability to safely moves without causing dangers or harms to passengers, the ego vehicle, and surrounding objects. And we use the probabilities of dangers and accidents to quantify it. Therefore, to evaluate the driving safety of the vision-based autonomous driving system in a quantitative manner, we define a set of driving safety performance metrics as follows.

\begin{itemize}

\item
\textbf{Successful planning rate}. In some scenarios, the motion planning module may not be able to generate a trajectory solution, which imposes a risk in driving safety. Thus, we define the successful planning rate as $m_{suc}=\frac{k_{trj}}{k_{dts}}$, where $k_{dts}$ is the total number of scenarios in a dataset, and $k_{trj}$ is the number of scenarios in that dataset where a trajectory can be successfully generated, no matter whether it is collision-free or not. For the sake of simplicity, this metric is referred to as \textit{the success rate}.

\item
\textbf{Collision rate}. We define the collision rate, $m_{cls}$, as the percentage of scenarios in all successfully planned trajectories where a collision occurs. Let $m_{cls}=\frac{k_{cls}}{k_{trj}}$, where $k_{cls}$ is the number of scenarios with collision occurrence. Collision rate approximately reflects the collision probability under different levels of threats.

\item
\textbf{Safe driving rate}. The safe driving rate, $m_{saf}$, is defined as the percentage of scenarios in a dataset where a collision-free trajectory can be produced by the motion planning module. We denote it as $m_{saf}=\frac{k_{trj}-k_{cls}}{k_{dts}}=m_{suc}-\frac{k_{cls}}{k_{trj}}\frac{k_{trj}}{k_{dts}}=m_{suc}-m_{cls}m_{suc}=(1-m_{cls})m_{suc}$. 
\end{itemize}

In this chapter, we only focus on fatal driving risks when referring to the driving safety. By measuring successful planning rate and collision rate, we capture the two most risky driving scenarios in autonomous driving, i.e., the failure of path planning and collision.

Note that successful planning rate and collision rate are also common performance metrics measuring the quality of motion planning~\cite{elbanhawi14sampling}.
Furthermore, safe driving rate is jointly determined by both successful planning rate and collision rate, which is a more direct measure of driving safety.

\begin{figure*}[!t]
    \centering
    
    \subfloat[DSGN, $\alpha=0.4$]{
        \includegraphics[width=0.19\textwidth]{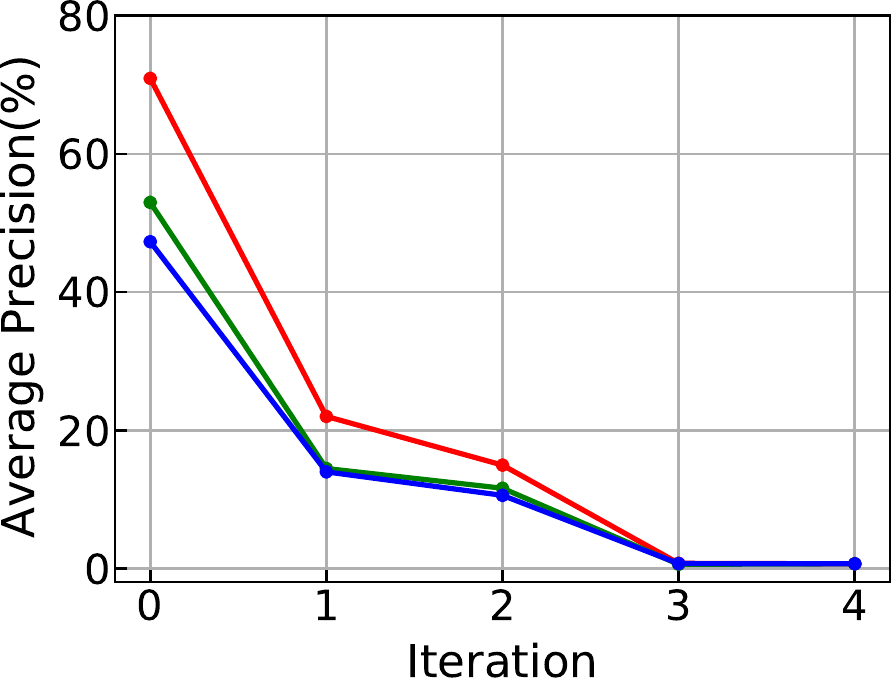}
    }\hspace{12pt}
    \subfloat[Stereo R-CNN, $\alpha=0.4$]{
        \includegraphics[width=0.19\textwidth]{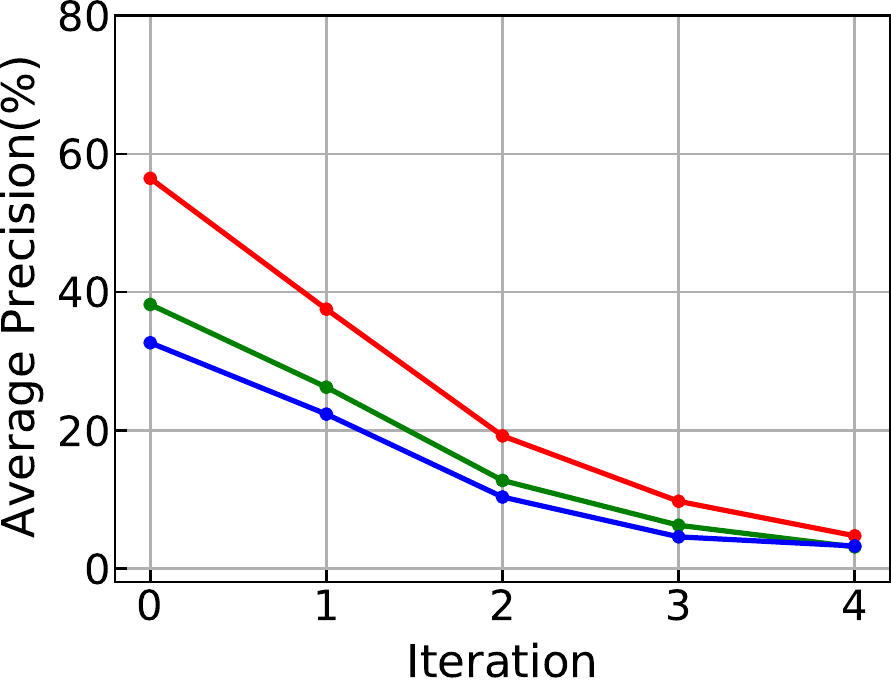}
    }\hspace{12pt}
    \subfloat[DSGN, $\alpha=1$]{
        \includegraphics[width=0.19\textwidth]{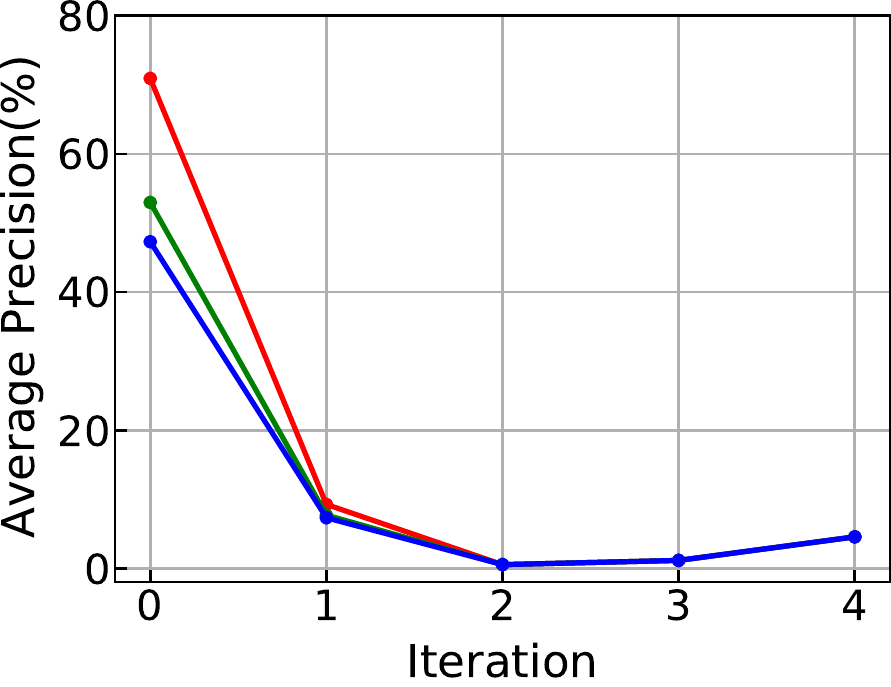}
    }\hspace{12pt}
    \subfloat[Stereo R-CNN, $\alpha=1$]{
        \includegraphics[width=0.19\textwidth]{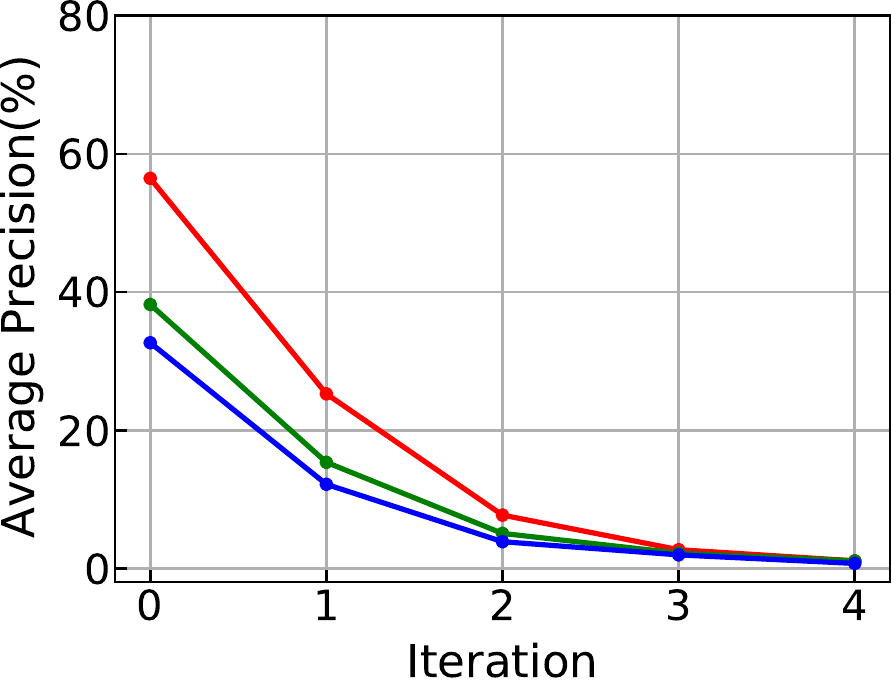}
    } \\
    \includegraphics[width=0.5\textwidth]{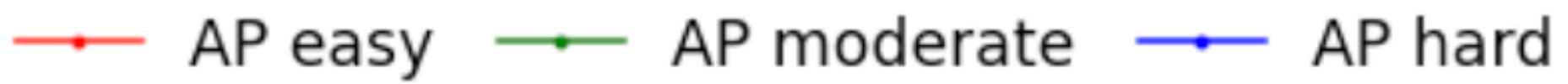}
    
    \caption{Average precision for 3D object detection under the perturbation attack.}
    \label{fig:ap_perturbation}
\end{figure*}

\begin{figure*}[!t]
    \centering
    
    \text{Changing to left lane}\par\medskip
    \begin{minipage}[t]{\textwidth}
    \centering 
    \includegraphics[width=0.22\textwidth]{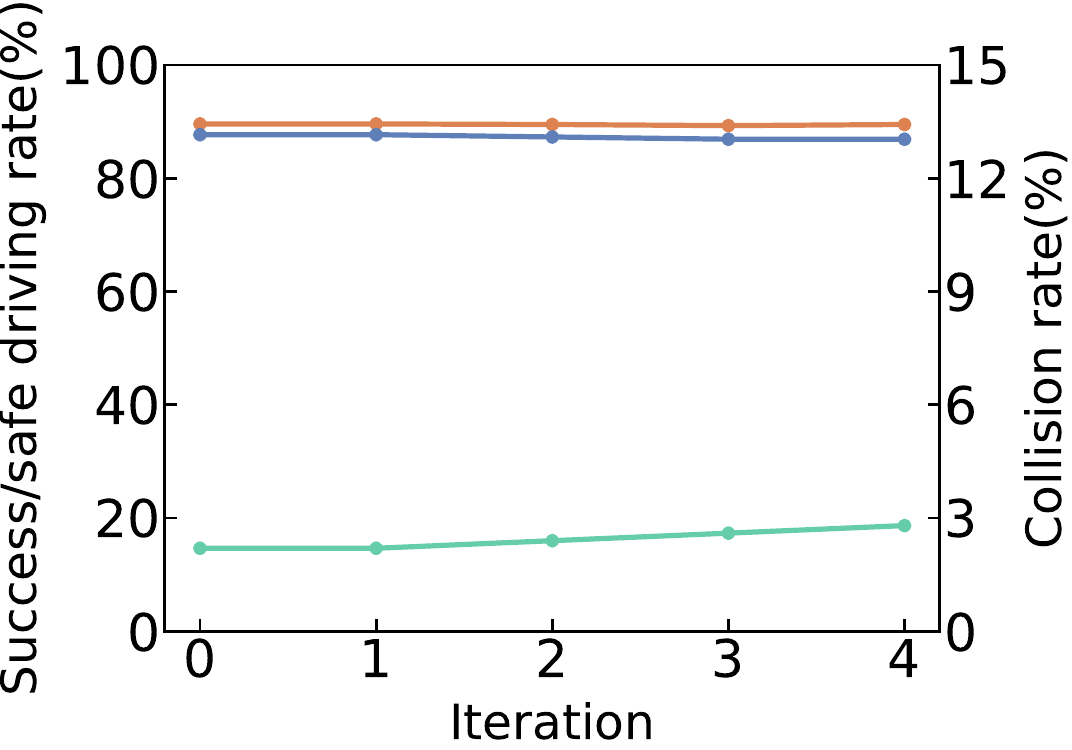}\hspace{6pt}
    \includegraphics[width=0.22\textwidth]{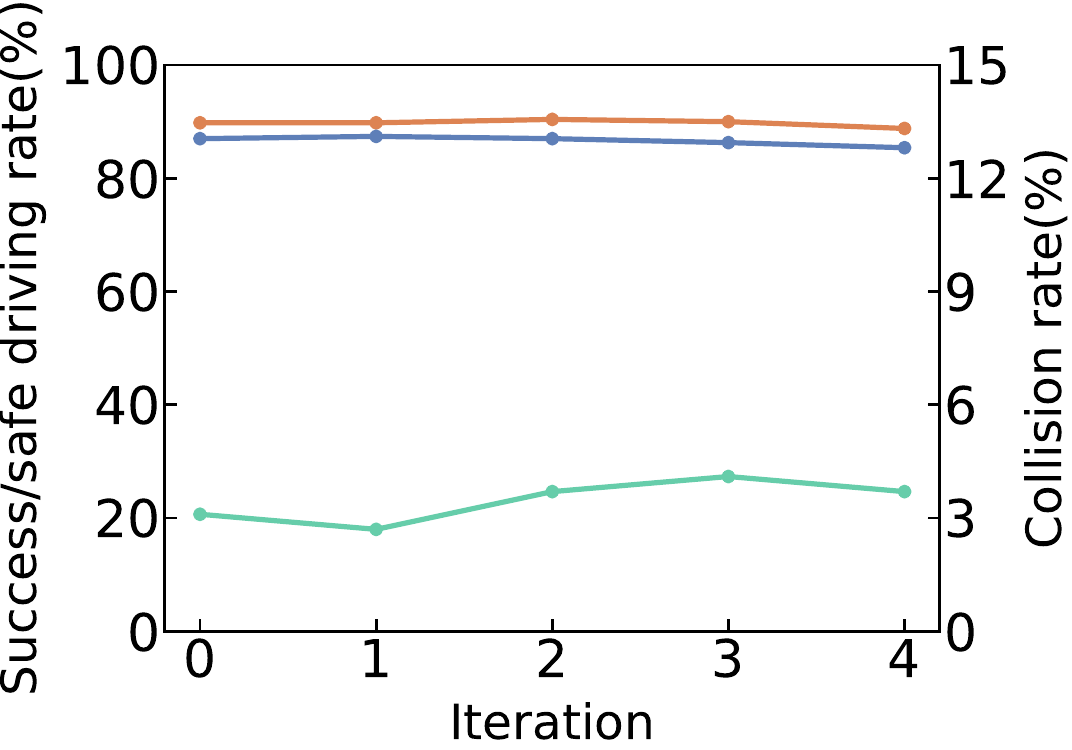}\hspace{6pt}
    \includegraphics[width=0.22\textwidth]{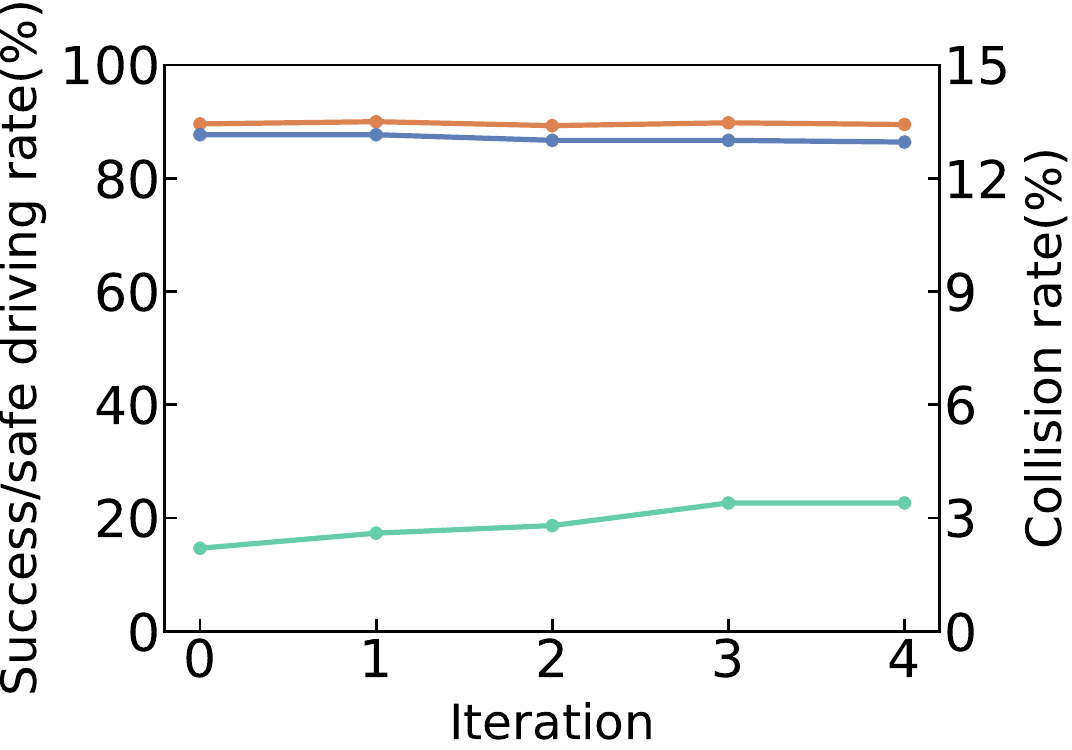}\hspace{6pt}
    \includegraphics[width=0.22\textwidth]{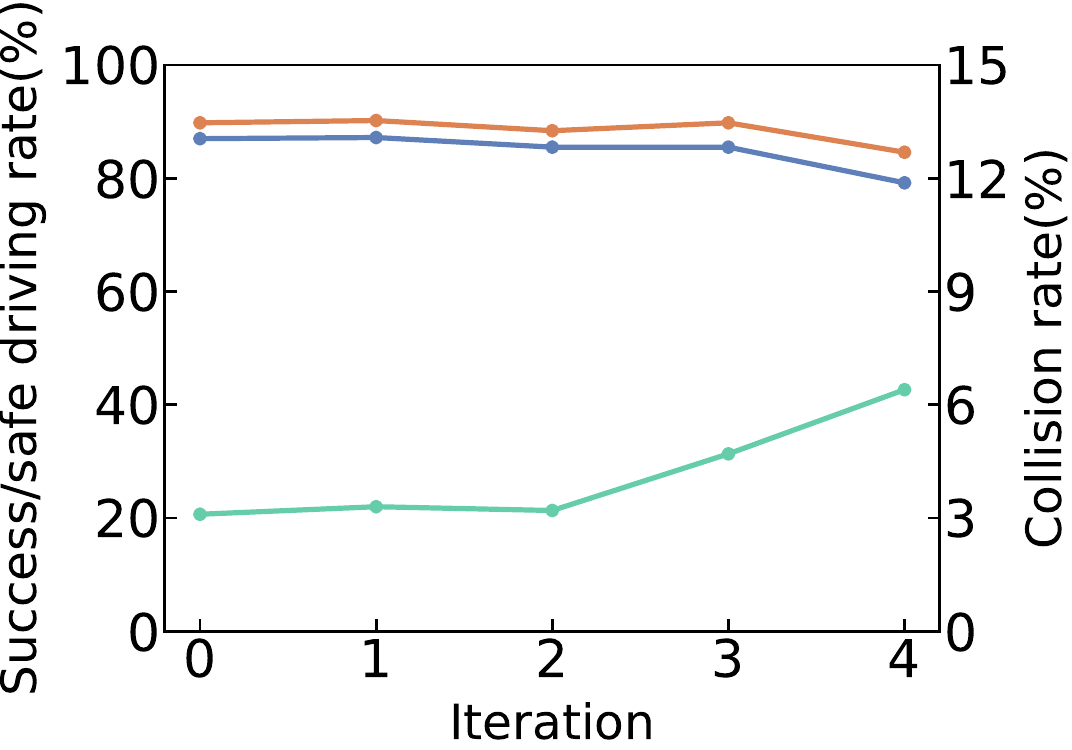}
    \end{minipage}
    
    \text{Keeping lane}\par\medskip
    \begin{minipage}[t]{\textwidth}
    \centering 
    \includegraphics[width=0.22\textwidth]{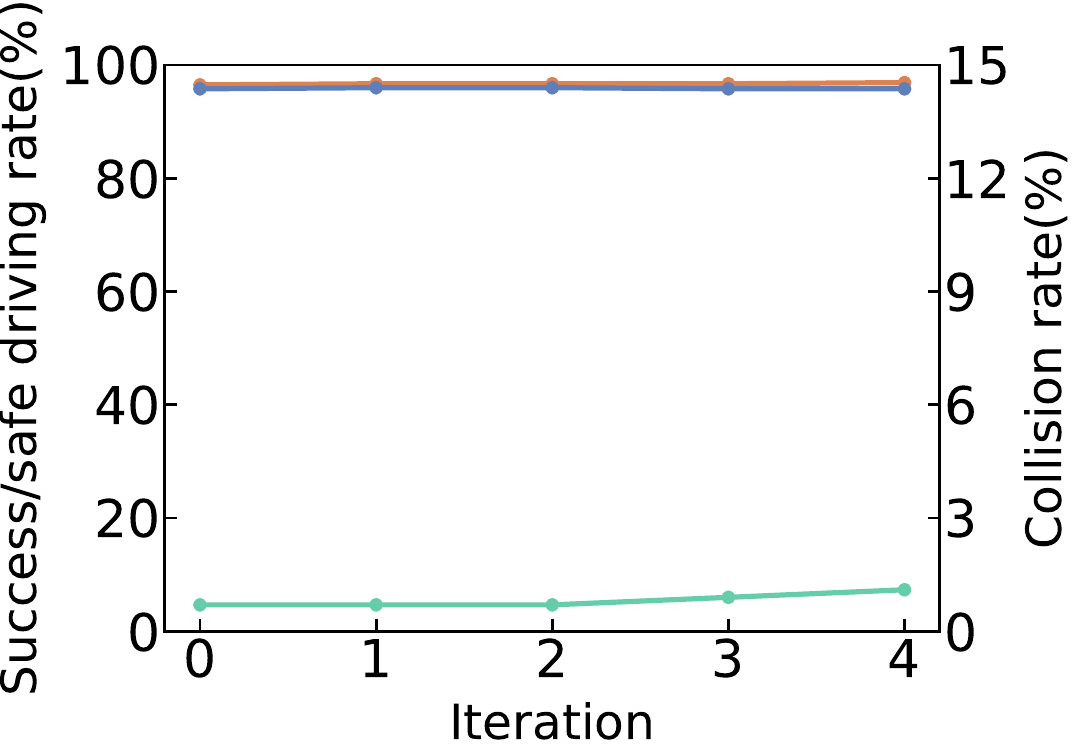}\hspace{6pt}
    \includegraphics[width=0.22\textwidth]{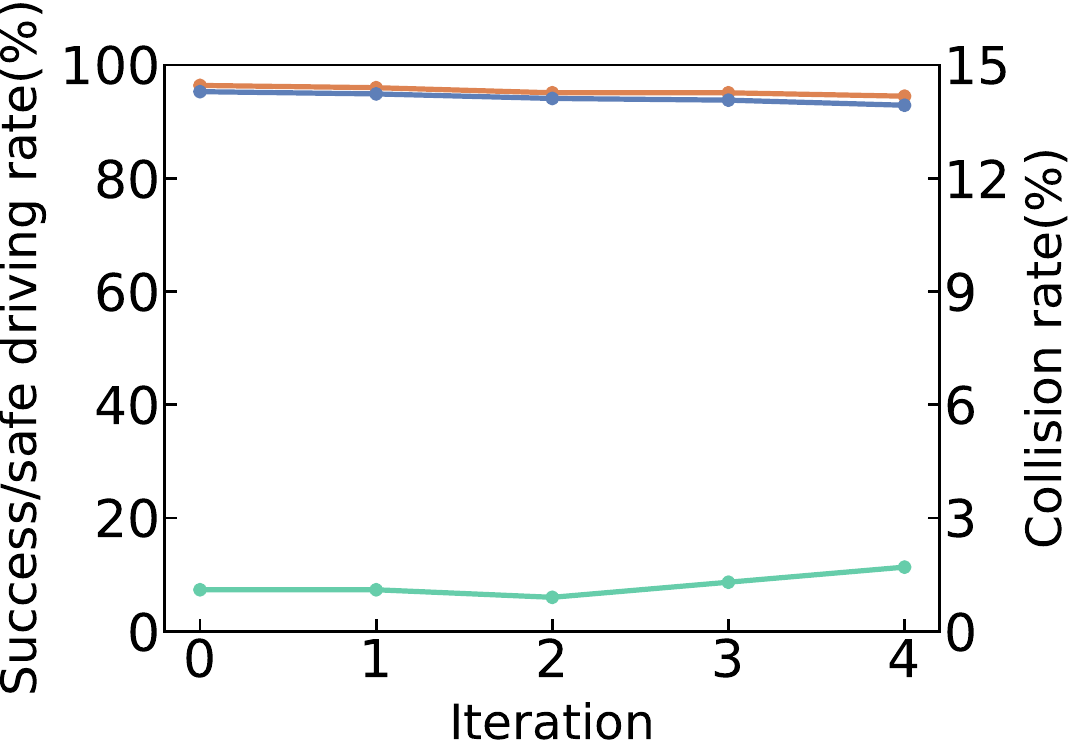}\hspace{6pt}
    \includegraphics[width=0.22\textwidth]{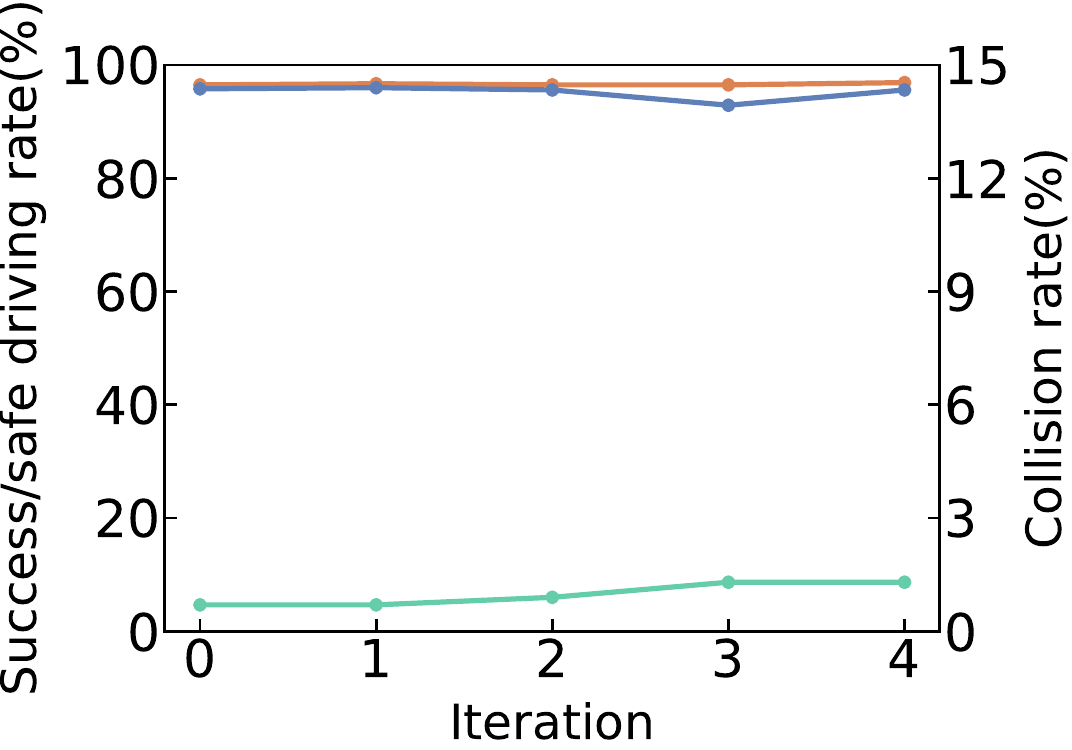}\hspace{6pt}
    \includegraphics[width=0.22\textwidth]{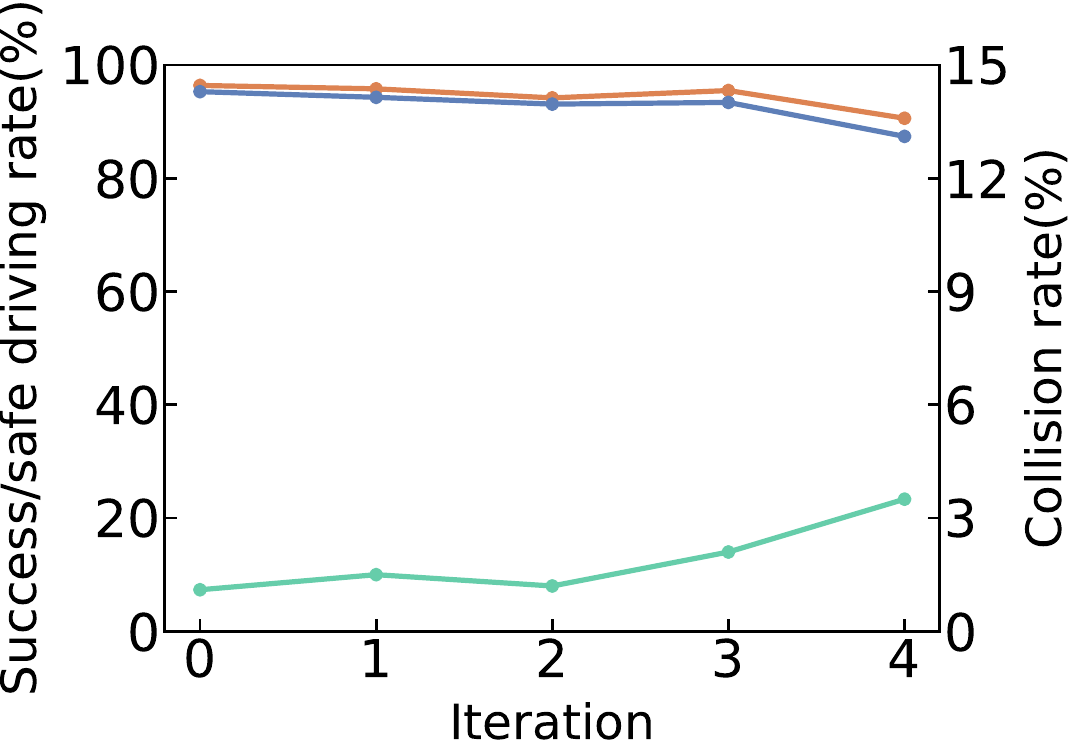}
    \end{minipage}
    
    \text{Changing to right lane}\par\medskip
    \subfloat[DSGN, $\alpha=0.4$]{
        \includegraphics[width=0.22\textwidth]{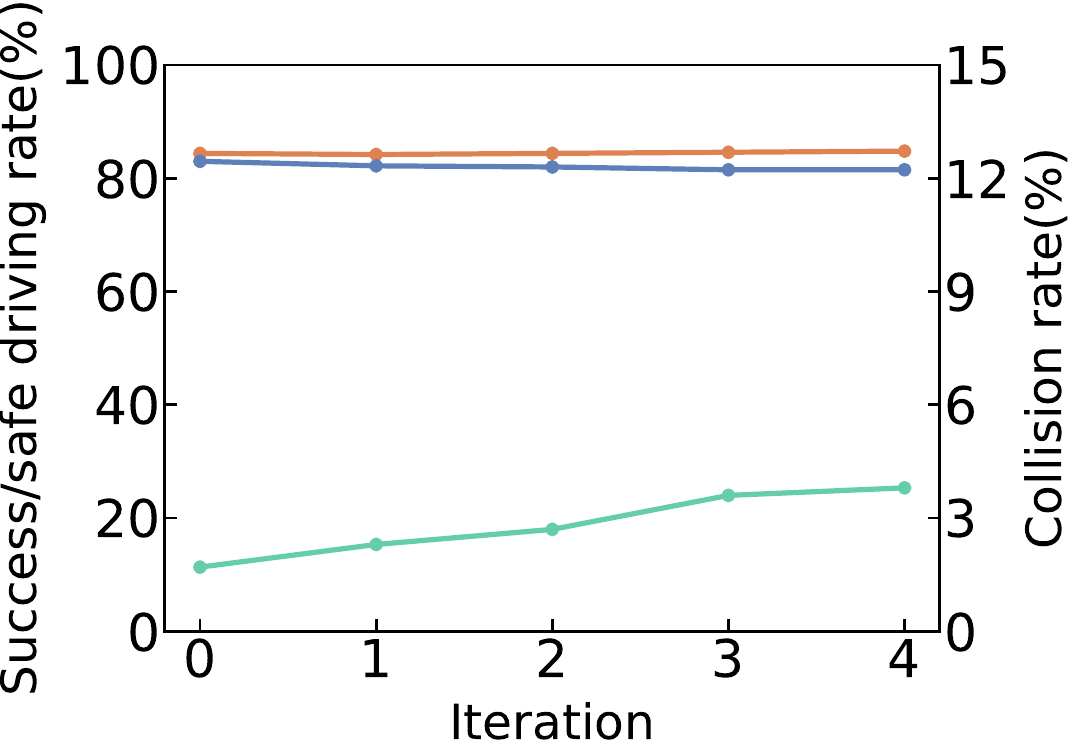}
    }\hspace{1.5pt}
    \subfloat[Stereo R-CNN, $\alpha=0.4$]{
        \includegraphics[width=0.22\textwidth]{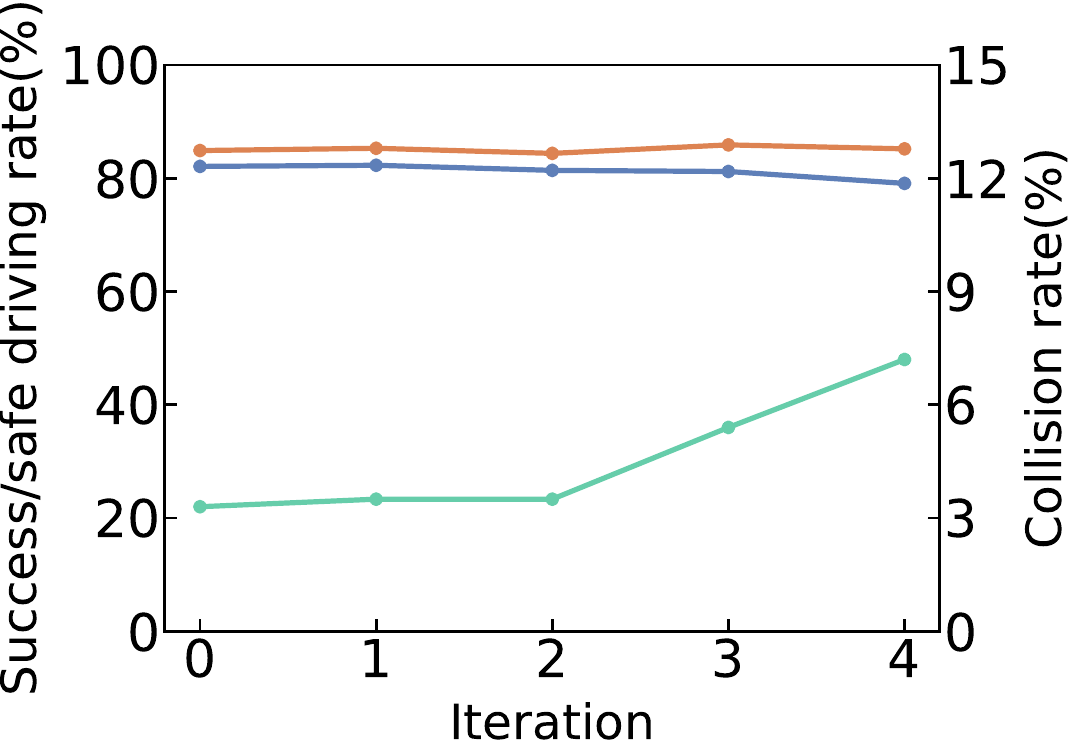}
    }\hspace{1.5pt}
    \subfloat[DSGN, $\alpha=1$]{
        \includegraphics[width=0.22\textwidth]{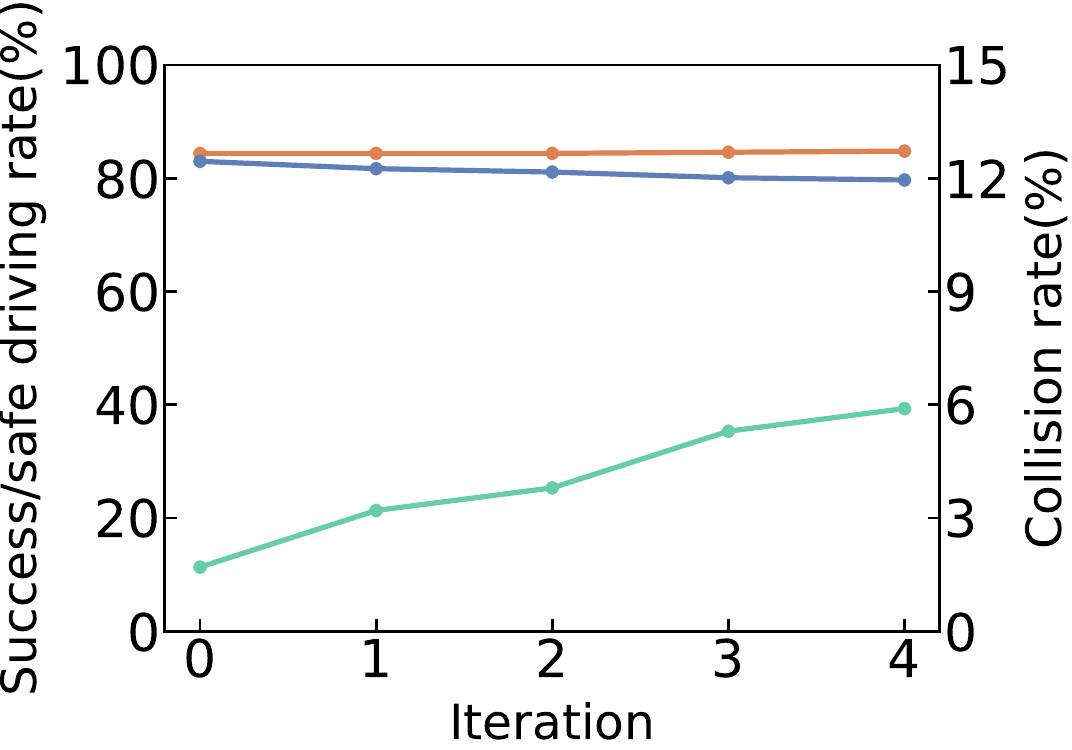}
    }\hspace{1.5pt}
    \subfloat[Stereo R-CNN, $\alpha=1$]{
        \includegraphics[width=0.22\textwidth]{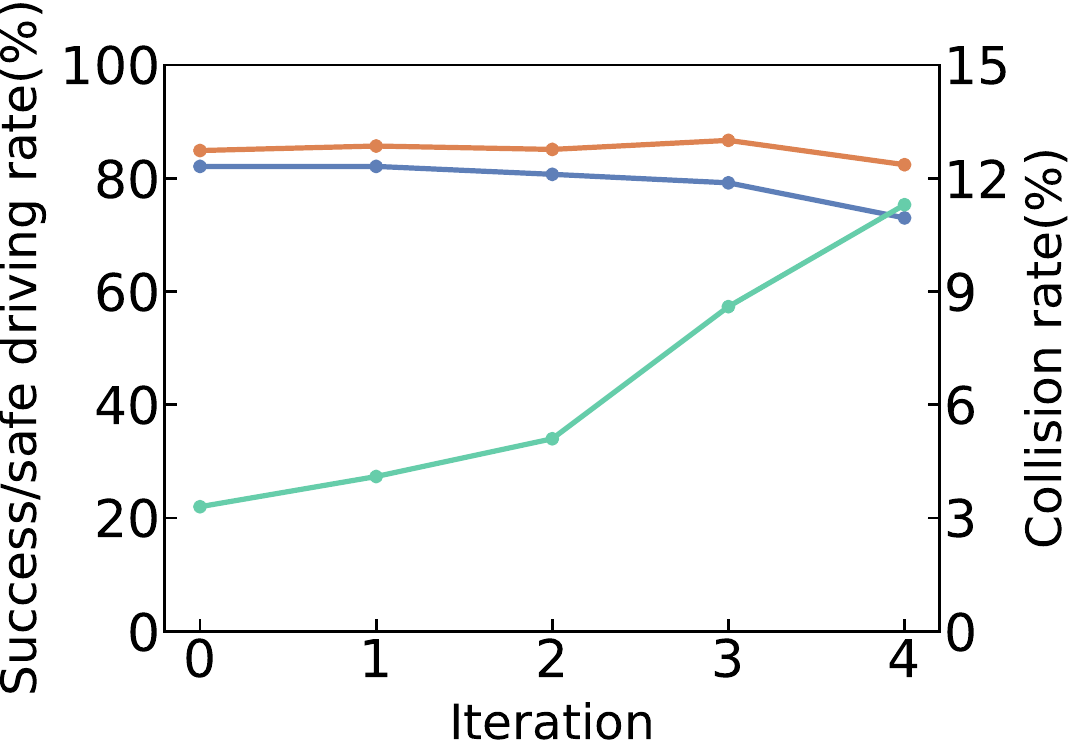}
    } \\
    \includegraphics[width=0.7\textwidth]{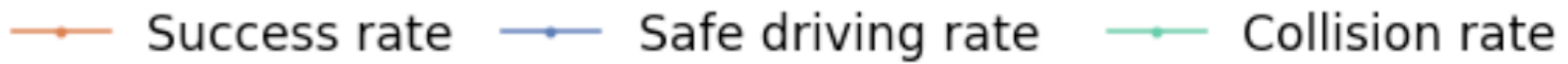}
    
    \caption{Driving safety performance metrics under the perturbation attack.}
    \label{fig:ds_perturbation}
\end{figure*}

\subsection{Implementation}

To implement this end-to-end driving safety evaluation framework for vision-based autonomous driving, we adopt two pre-trained models for the object detection module, namely, Stereo R-CNN~\cite{li19stereo} and DSGN~\cite{chen20dsgn}, which are currently two state-of-the-art methods in the area of vision-based 3D object detection. As for the motion planning module and the evaluation module, we use CommonRoad~\cite{althoff17commonroad} as the framework and leverage the built-in A*~\cite{hart68a} with sampled motion primitives as the motion planning method.

\begin{table*}[hbt!]
    \centering
    \caption{\\\textsc{Driving Safety Performance Metrics under the Perturbation Attack ($\alpha=0.4$)}}
    \resizebox{0.95\textwidth}{!}{
        \begin{tabular}{c|c|ccccc|ccccc}
        \hline
        \multirow{2}{*}{}          & Model & \multicolumn{5}{c|}{DSGN}             & \multicolumn{5}{c}{Stereo R-CNN}      \\ \cline{2-12} 
         & Iteration & Unattacked & 1     & 2     & 3     & 4     & Unattacked & 1     & 2     & 3     & 4     \\ \hline
        \multirow{3}{*}{Success rate ($\%$)}      & Left  & 89.6 & 89.6 & 89.5 & 89.3 & 89.5 & 89.8 & 89.8 & 90.4 & 90.0 & 88.8 \\
         & Straight  & 96.5      & 96.7 & 96.7 & 96.7 & 96.9 & 96.4      & 96.0 & 95.1 & 95.1 & 94.5 \\
         & Right     & 84.4      & 84.2 & 84.4 & 84.6 & 84.8 & 84.9      & 85.3 & 84.4 & 85.9 & 85.2 \\ \hline
        \multirow{3}{*}{Collision rate ($\%$)}    & Left  & 2.2 & 2.2 & 2.4 & 2.6 & 2.8 & 3.1 & 2.7 & 3.7 & 4.1 & 3.7 \\
         & Straight  & 0.7      & 0.7 & 0.7 & 0.9 & 1.1 & 1.1      & 1.1 & 0.9 & 1.3 & 1.7 \\
         & Right     & 1.7      & 2.3 & 2.7 & 3.6 & 3.8 & 3.3      & 3.5 & 3.5 & 5.4 & 7.2 \\ \hline
        \multirow{3}{*}{Safe driving rate ($\%$)} & Left  & 87.7 & 87.7 & 87.3 & 86.9 & 86.9 & 87.0 & 87.4 & 87.0 & 86.3 & 85.4 \\
         & Straight  & 95.8      & 96.0 & 96.0 & 95.8 & 95.8 & 95.3      & 94.9 & 94.1 & 93.8 & 92.9 \\
         & Right     & 83.0      & 82.2 & 82.0 & 81.5 & 81.5 & 82.1      & 82.3 & 81.4 & 81.2 & 79.1 \\ \hline
        \end{tabular}
    }
    \label{tab:ds_perturbation_0.4}
\end{table*}

\begin{table*}[hbt!]
    \centering
    \caption{\\\textsc{Driving Safety Performance Metrics under the Perturbation Attack ($\alpha=1$)}}
    \resizebox{0.95\textwidth}{!}{
        \begin{tabular}{c|c|ccccc|ccccc}
        \hline
        \multirow{2}{*}{}          & Model & \multicolumn{5}{c|}{DSGN}             & \multicolumn{5}{c}{Stereo R-CNN}      \\ \cline{2-12} 
         & Iteration & Unattacked & 1     & 2     & 3     & 4     & Unattacked & 1     & 2     & 3     & 4     \\ \hline
        \multirow{3}{*}{Success rate ($\%$)}      & Left  & 89.6 & 90.0 & 89.3 & 89.8 & 89.5 & 89.8 & 90.2 & 88.4 & 89.8 & 84.6 \\
         & Straight  & 96.5      & 96.7 & 96.5 & 96.5 & 96.9 & 96.4      & 95.8 & 94.2 & 95.5 & 90.6 \\
         & Right     & 84.4      & 84.4 & 84.4 & 84.6 & 84.8 & 84.9      & 85.7 & 85.1 & 86.7 & 82.4 \\ \hline
        \multirow{3}{*}{Collision rate ($\%$)}    & Left  & 2.2 & 2.6 & 2.8 & 3.4 & 3.4 & 3.1 & 3.3 & 3.2 & 4.7 & 6.4 \\
         & Straight  & 0.7      & 0.7 & 0.9 & 1.3 & 1.3 & 1.1      & 1.5 & 1.2 & 2.1 & 3.5 \\
         & Right     & 1.7    & 3.2 & 3.8 & 5.3 & 5.9 & 3.3      & 4.1 & 5.1 & 8.6 & 11.3 \\ \hline
        \multirow{3}{*}{Safe driving rate ($\%$)} & Left  & 87.7 & 87.7 & 86.7 & 86.7 & 86.4 & 87.0 & 87.2 & 85.5 & 85.5 & 79.2 \\
         & Straight  & 95.8      & 96.0 & 95.6 & 92.9 & 95.6 & 95.3      & 94.3 & 93.1 & 93.4 & 87.4 \\
         & Right     & 83.0      & 81.7 & 81.1 & 80.1 & 79.7 & 82.1      & 82.1 & 80.7 & 79.2 & 73.0 \\ \hline
        \end{tabular}
    }
    \label{tab:ds_perturbation_1}
\end{table*}

To implement the moving object classifier, we extract in total $600$ real driving scenarios from the KITTI raw dataset~\cite{geiger13vision} and manually label each object in each driving scenario with a moving/static property. We use 500 scenarios for training and 100 scenarios for validation. To determine whether an object is moving or not in a driving scenario, we refer to the previous and the subsequent image frames of that scenario. Though it is easy for human to judge the moving object from sequential frames, two people are assigned to manually label the scenarios independently in order to eliminate personal bias or errors in manual labelling. The independently produced labels are checked together for consistency, and no inconsistent labelling is found. We adopt the $16$-layer VGG net~\cite{simonyan14very} as the core network of the moving object classifier and replace its fully connected layers, i.e., $fc6$, $fc7$, and $fc8$, with a flatten layer, a new fully connected layer with a dropout layer and ReLU activation function, and another new fully connected layer with a dropout layer and a sigmoid activation function, respectively, to make sure that there is only one output score to indicate the probability of a moving object. The validation results suggest that the accuracy of the trained moving object classifier is 98.31\%.

In addition, to implement the driving constraint selector, we also leverage the KITTI raw dataset~\cite{geiger13vision} to train the model so that it can classify the road type of a scenario. Specifically, we divide the dataset into two subsets, i.e., \textbf{street} and \textbf{highway}. The street subset consists of city and residential scenarios where the traffic speed is relatively low, while the highway subset contains highway scenarios in which vehicles move much faster. Accordingly, we pre-define two sets of motion primitives for two road types so that the selector can pick the motion primitive with appropriate speed ranges and steering angle ranges after classifying the road type. The selector also chooses the dynamics constraints for moving vehicles predicted by the moving object classifier. The network architecture of the driving constraint selector consists of $5$ convolution layers connected by max-pooling layers and $1$ fully connected layer with dropout. Both convolution layers and the fully connected layer use ReLU as the activation function. After excluding the scenarios without cars, we select 444 scenarios as the training dataset and 112 scenarios as the validation dataset. The validation result indicates that the accuracy of the driving constraint selector achieves 94.64\%.

\begin{figure*}
\centering
\subfloat[Clean image input.]{
    \includegraphics[width=0.35\textwidth]{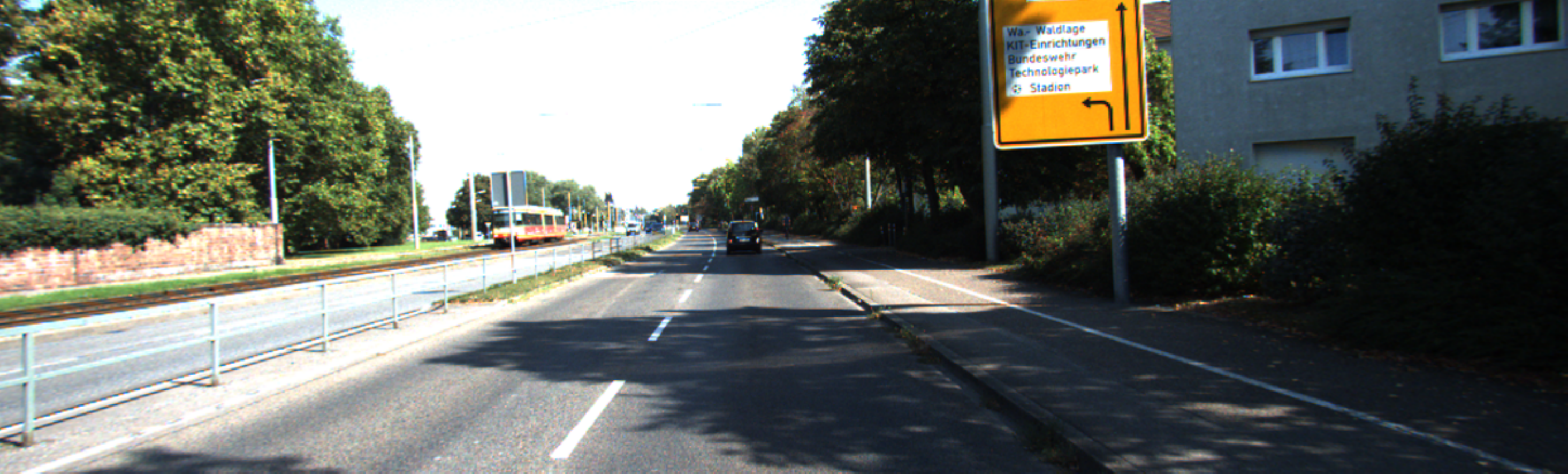}
}\hspace{12pt}
\subfloat[Ground truth of object detection.]{
    \includegraphics[width=0.35\textwidth]{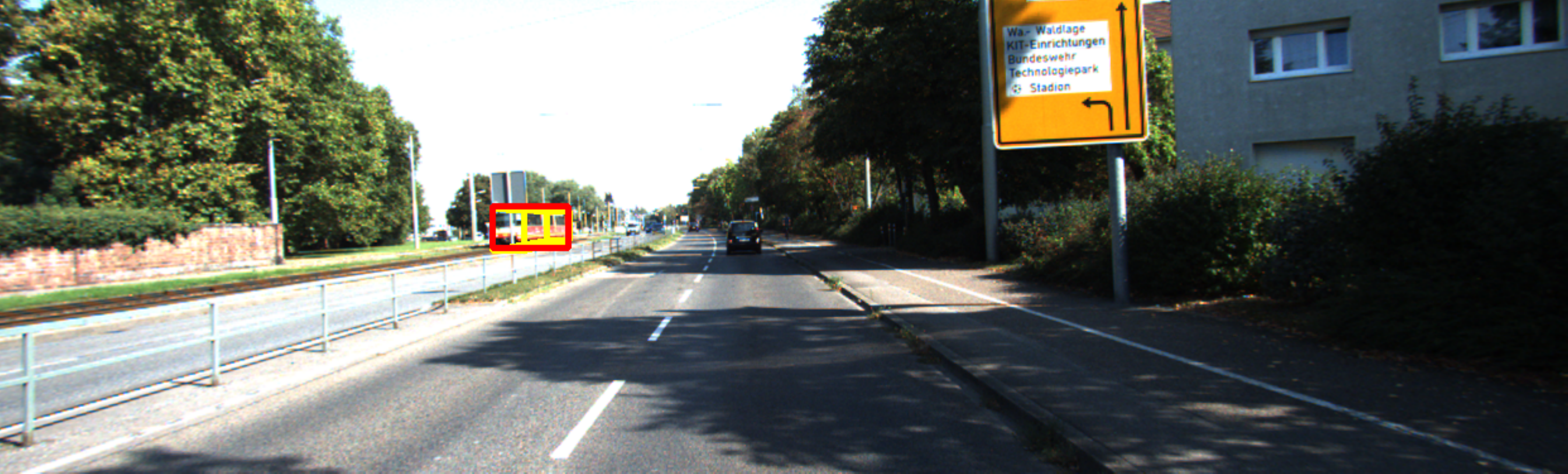}
} \\
\subfloat[Detection results of DSGN without attack.]{
    \includegraphics[width=0.35\textwidth]{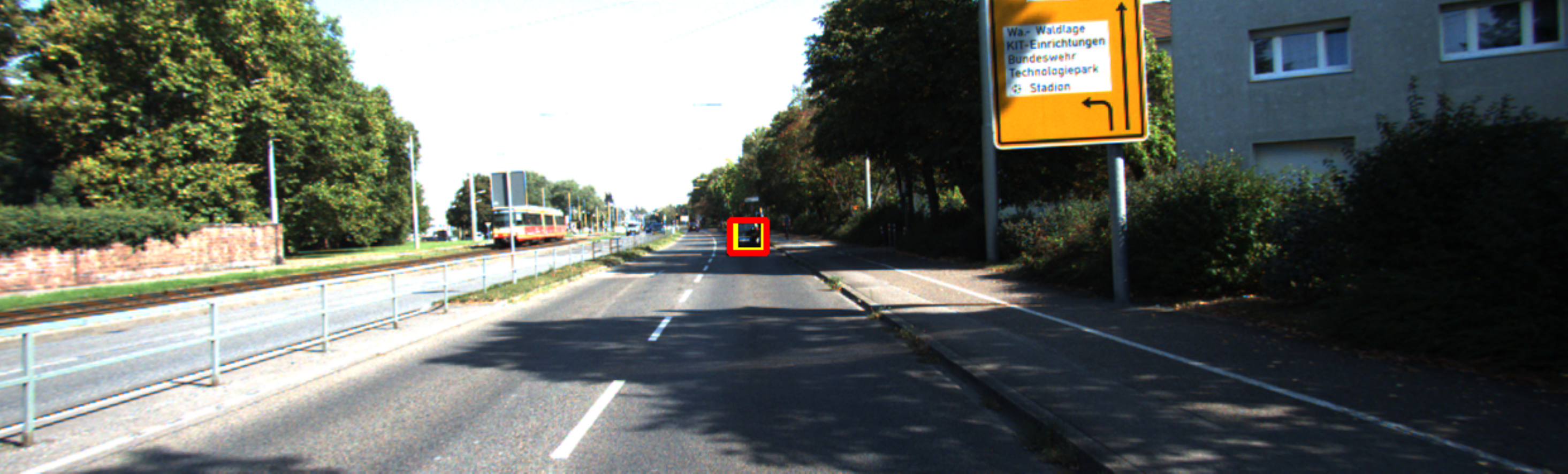}
}\hspace{12pt}
\subfloat[Detection results of DSGN under attack.]{
    \includegraphics[width=0.35\textwidth]{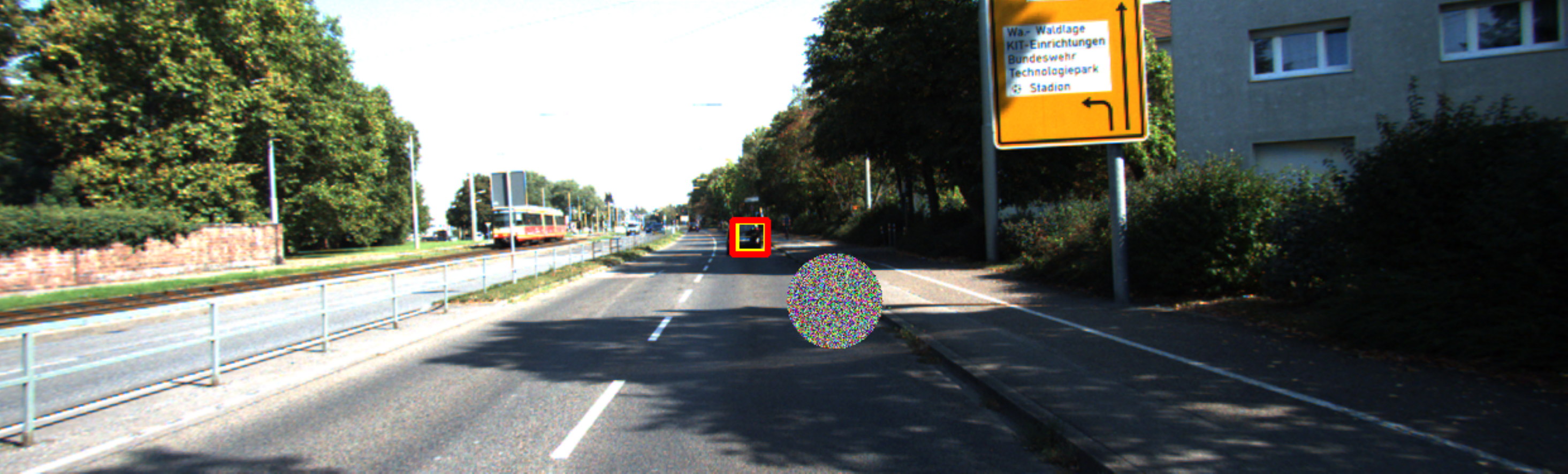}
} \\
\subfloat[Detection results of Stereo R-CNN without attack.]{
    \includegraphics[width=0.35\textwidth]{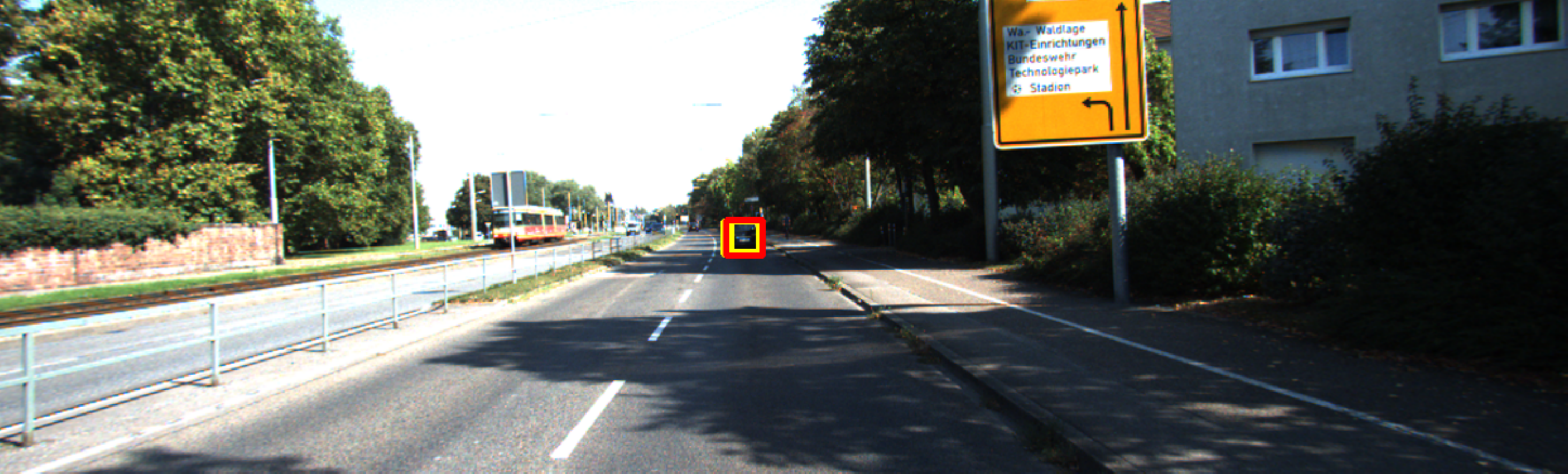}
}\hspace{12pt}
\subfloat[Detection results of Stereo R-CNN under attack.]{
    \includegraphics[width=0.35\textwidth]{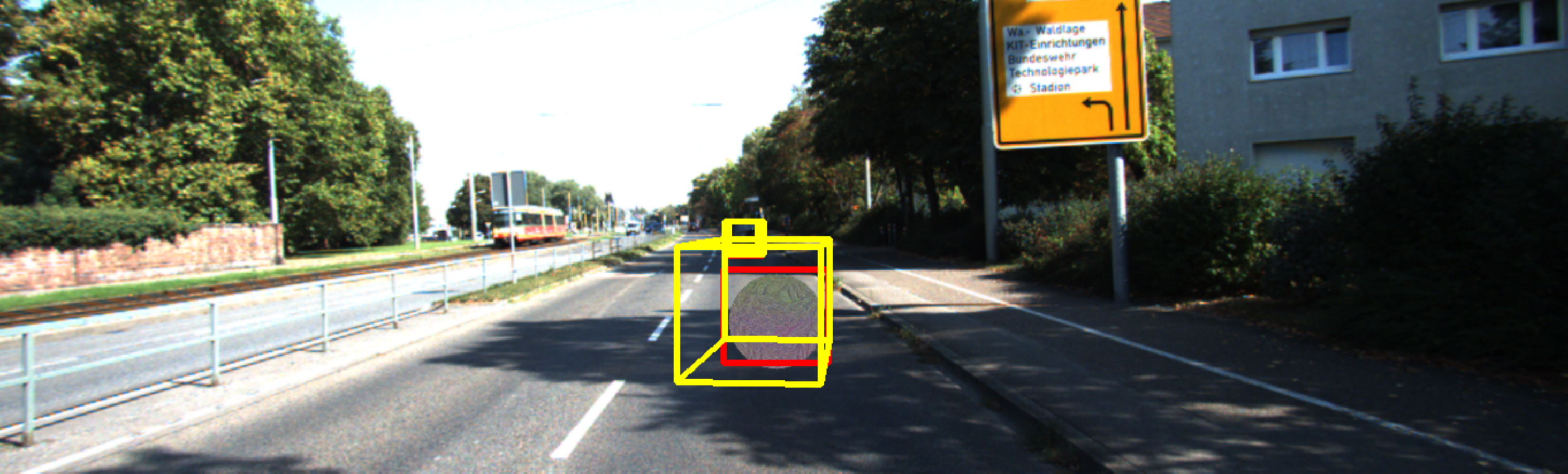}
} \\
\subfloat[Clean image input.]{
    \includegraphics[width=0.35\textwidth]{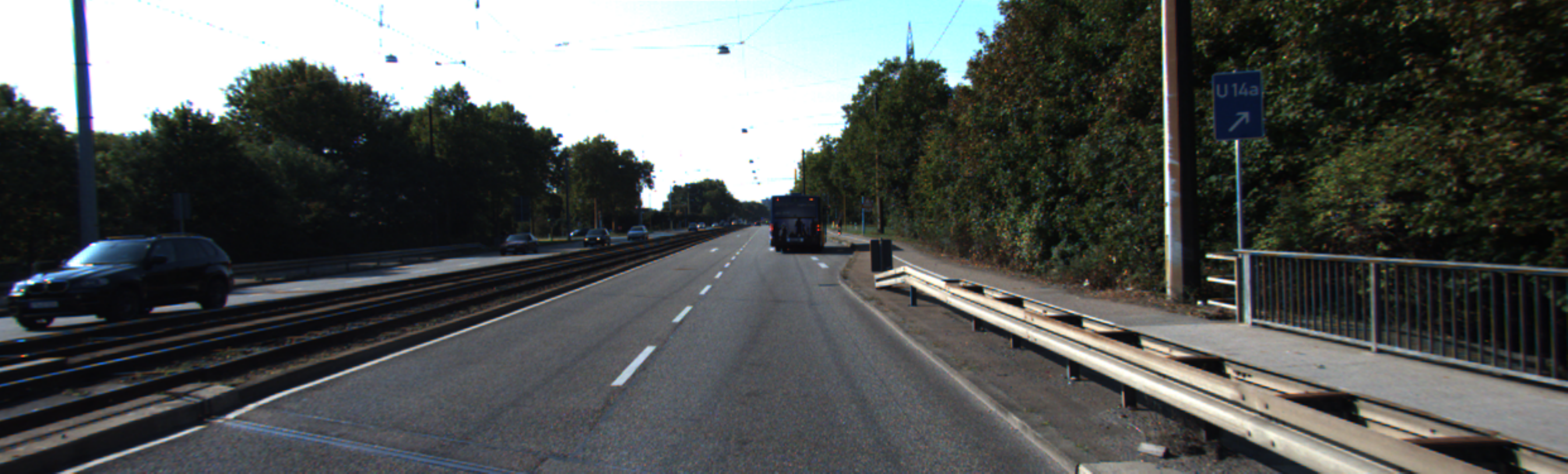}
}\hspace{12pt}
\subfloat[Ground truth of object detection.]{
    \includegraphics[width=0.35\textwidth]{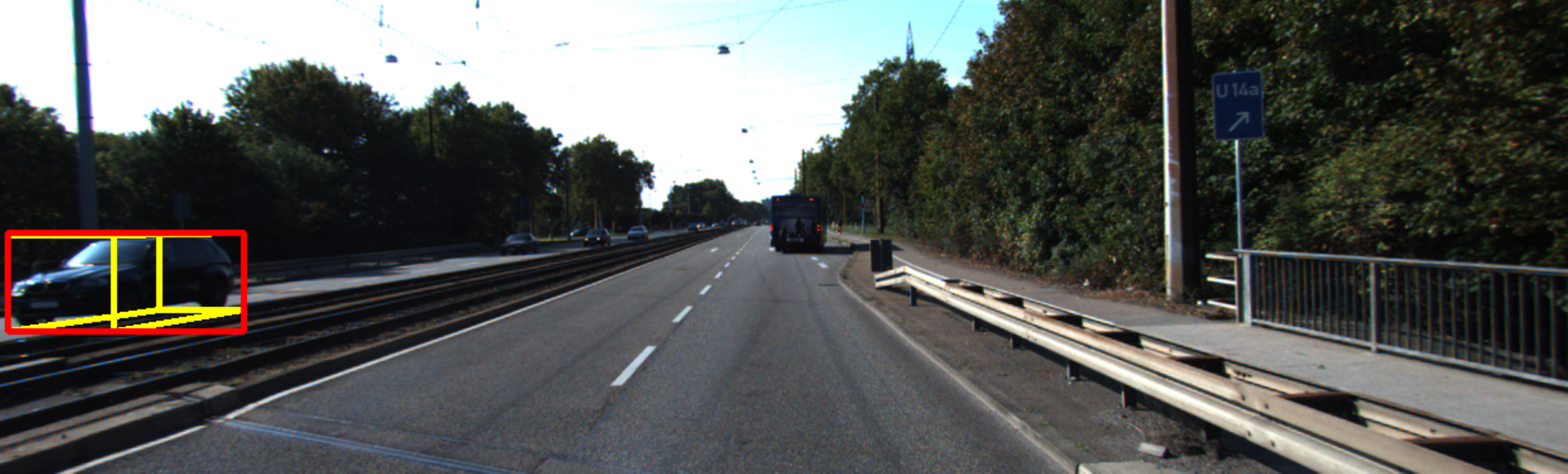}
} \\
\subfloat[Detection results of DSGN without attack.]{
    \includegraphics[width=0.35\textwidth]{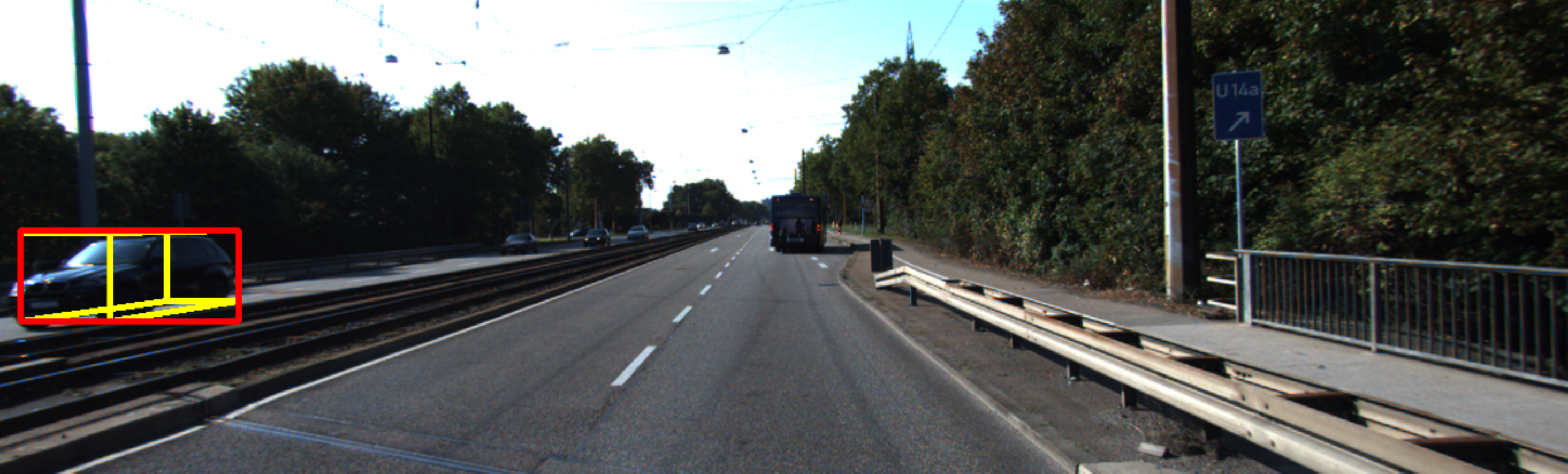}
}\hspace{12pt}
\subfloat[Detection results of DSGN under attack.]{
    \includegraphics[width=0.35\textwidth]{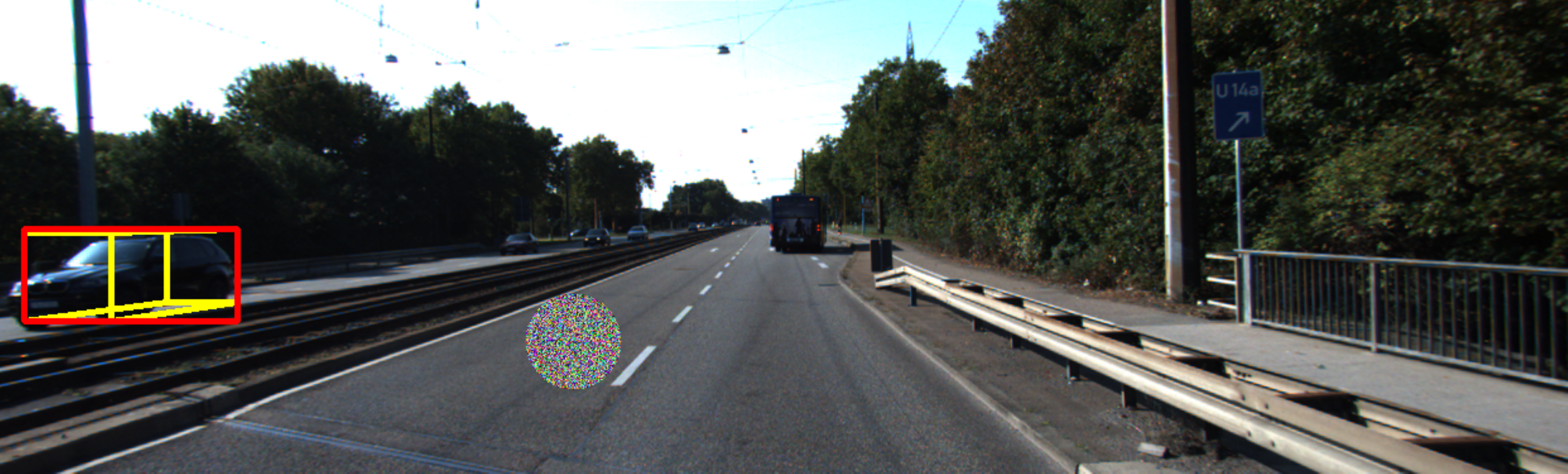}
} \\
\subfloat[Detection results of Stereo R-CNN without attack.]{
    \includegraphics[width=0.35\textwidth]{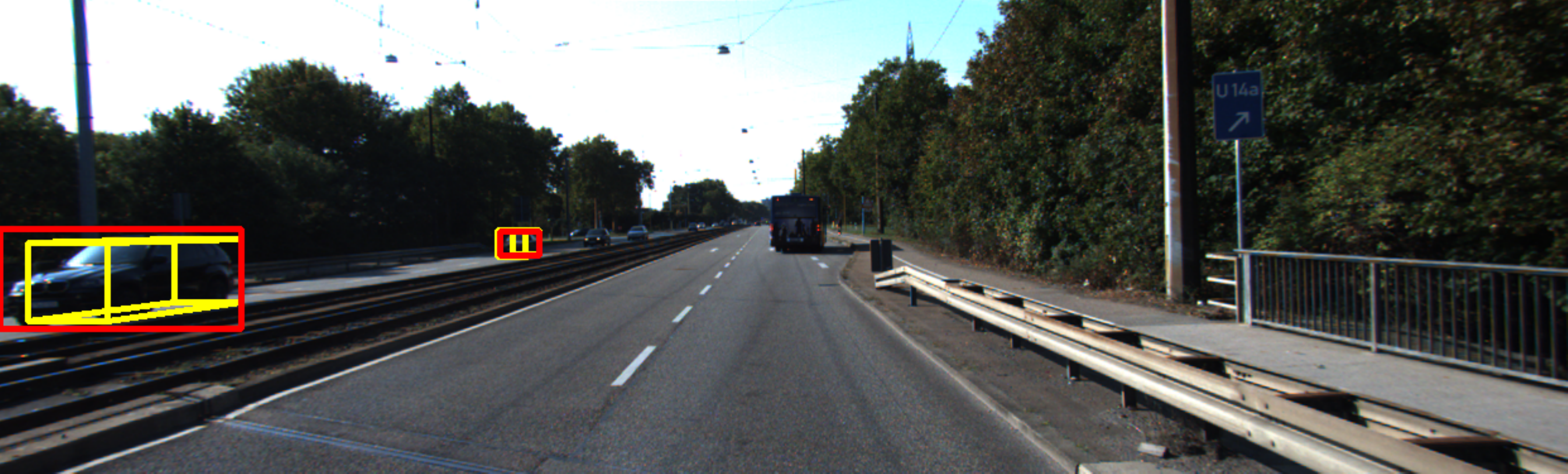}
}\hspace{12pt}
\subfloat[Detection results of Stereo R-CNN under attack.]{
    \includegraphics[width=0.35\textwidth]{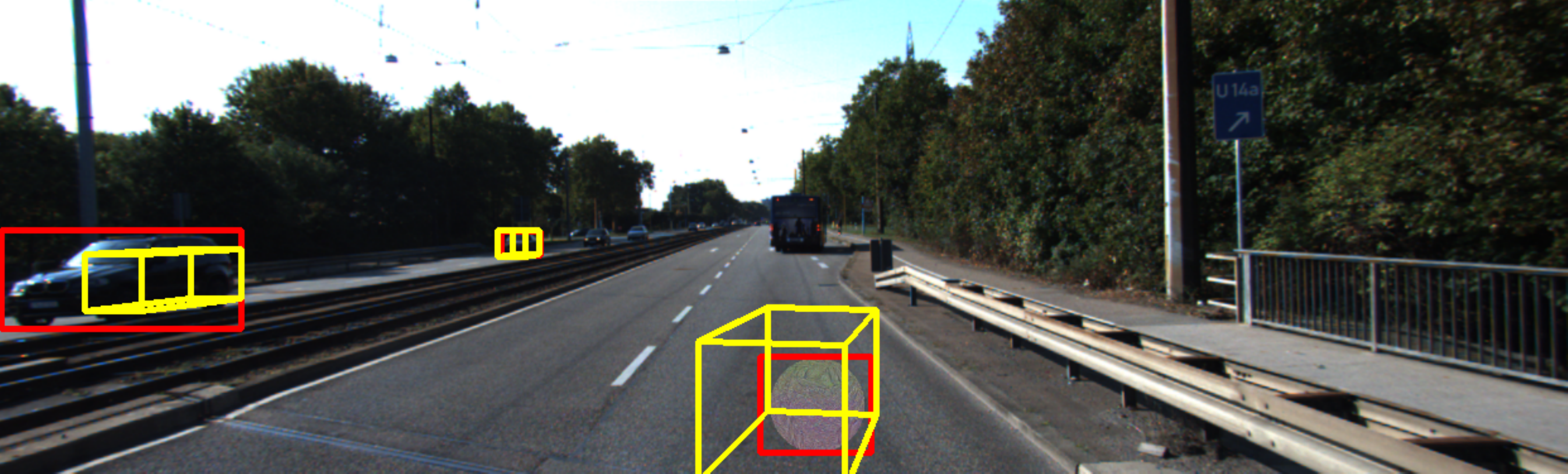}
} 
\caption{The patch attack triggers 3D object detectors to generate ghost bounding boxes at in the area of the patch as shown in (f) and (l). It has little influence on the detection of the objects away from the patch.}
\label{fig:plot_patch}
\end{figure*}

\section{Experiments}
\label{ch5:sec.experiments}

We conduct extensive experiments to investigate the impact of  perturbation attacks and patch attacks on driving safety of vision-based autonomous vehicles. We first introduce the common setup for all experiments, then elaborate on the specific settings for each attack experiment and present corresponding evaluation results. Finally, we summarize our findings at the end of this section.

\subsection{Common Setup}

In this chapter, we conduct all experiments by applying the two types of adversarial attacks with different settings in our driving safety evaluation framework. Specifically, the evaluation framework includes two object detection modules, Stereo R-CNN~\cite{li19stereo} and DSGN~\cite{chen20dsgn}. In order to assess the impact comprehensively, we gradually increase the attack intensity by changing the attack settings in fine-grained steps. For the same purpose, we also consider three driving intentions of the ego-vehicle for each scenario when planning the trajectory, namely, \textbf{changing to left lane}, \textbf{changing to right lane}, and \textbf{keeping lane}, which are abbreviated as \textbf{left}, \textbf{right}, and \textbf{straight}, respectively. For these three cases, the initial position of the ego-vehicle is the same and the goal region is located 15 meters away from the initial position but within three different adjacent lanes. Moreover, we randomly assign  an initial speed within the selected speed range to each moving vehicle, including the ego-car. Specifically, the  initial speed for moving vehicles in street scenarios is randomly assigned within the range of $[22, 29]$ km/h, considering the $30$ km/h speed limit in German cities, campus and residential areas. The initial speed in highway scenarios is randomly assigned within the range of $[40, 47]$ km/h, concerning the $50$ km/h speed limit of built-up roads in Germany. For each attack, after the framework processes all the scenarios and generates the motion planning results, it assesses the attack impact on the performance metrics of driving safety as well as on the accuracy of 3D object detector. By linking these two attack impacts together, we manage to obtain evaluation results that help answer the questions raised in Section~\ref{ch5:sec.introduction}. The models of Stereo R-CNN~\cite{li19stereo} and DSGN~\cite{chen20dsgn} are pretrained with 3712 data points from the KITTI object detection dataset~\cite{geiger12are}. For each experiment setting, we test 600 real driving scenarios. The platform that we use is a Ubuntu 18.04 server equipped with an Nvidia Tesla V100 GPU.

In our experiments, the evaluation of driving safety is based on the trajectory produced by the motion planning module and measured by the driving safety performance metrics. In terms of evaluating the accuracy of the vision-based 3D object detector, we adopt the KITTI object detection benchmark that tests the detector with a three-level standard, namely, \textbf{easy}, \textbf{moderate}, and \textbf{hard}~\cite{geiger12are}. We follow the standard to measure the \textbf{average precision} (AP) of the detector with \textbf{Intersection over Union} (IoU) larger than 70\%.

\begin{table*}[hbt!]
    \centering
    \caption{\\\textsc{Average Precision for 3D Object Detection under Patch Attack}}
    \resizebox{0.95\textwidth}{!}{
        \begin{tabular}{c|c|ccccc|ccccc}
        \hline
        \multirow{3}{*}{}        & Model    & \multicolumn{5}{c|}{DSGN}                & \multicolumn{5}{c}{Stereo R-CNN}          \\ \cline{2-12} 
         &
          \multirow{2}{*}{Scenario} &
          \multirow{2}{*}{Unattacked} &
          \multirow{1}{*}{Random} &
          \multicolumn{3}{c|}{Specific Attack} &
          \multirow{2}{*}{Unattacked} &
          \multirow{1}{*}{Random} &
          \multicolumn{3}{c}{Specific Attack} \\ \cline{5-7} \cline{10-12} 
                                         &          &       & Attack & Left  & Straight & Right &       & Attack & Left  & Straight & Right \\ \hline
        \multirow{1}{*}{3D} & easy     & 70.94 & 63.85 & 65.72 & 64.96    & 64.87 & 56.47 & 53.17 & 48.14 & 47.82    & 50.07 \\
                          Detection    & moderate & 52.98 & 48.20 & 51.47 & 50.77    & 51.63 & 38.20 & 37.07 & 36.27 & 35.23    & 38.02 \\
                                  AP ($\%$)   & hard     & 47.29 & 44.30 & 46.80 & 46.15    & 46.48 & 32.66 & 31.88 & 31.21 & 30.60    & 32.45 \\ \hline
        \end{tabular}
    }
    \label{tab:ap_patch}
\end{table*}

\begin{table*}[hbt!]
    \centering
    \caption{\\\textsc{Driving Safety Performance Metrics under the Patch Attack}}
    \resizebox{0.95\textwidth}{!}{
        \begin{tabular}{c|c|ccc|ccc}
        \hline
        \multirow{2}{*}{}              & Model    & \multicolumn{3}{c|}{DSGN}                    & \multicolumn{3}{c}{Stereo R-CNN}              \\ \cline{2-8} 
                                              & Scenario & Unattacked & Random Attack & Specific Attack & Unattacked & Random Attack & Specific Attack \\ \hline
        \multirow{3}{*}{Success rate(\%)}   & Left     & 89.6 & 90.0 & 90.0 & 89.8 & 70.3 & 83.7 \\
                                            & Straight & 96.5 & 96.5 & 96.7 & 96.4 & 81.2 & 57.5 \\
                                            & Right    & 84.4 & 84.6 & 84.8 & 84.9 & 68.7 & 74.4 \\ \hline
        \multirow{3}{*}{Collision rate(\%)} & Left     & 2.2  & 2.6  & 2.8  & 3.1  & 1.9  & 2.1  \\
                                            & Straight & 0.7  & 0.7  & 0.9  & 1.1  & 1.4  & 4.7  \\
                                            & Right    & 1.7  & 2.3  & 2.1  & 3.3  & 3.9  & 4.2  \\ \hline
        \multirow{3}{*}{Safe driving rate(\%)} & Left     & 87.7       & 87.7          & 87.5            & 87.0       & 68.9          & 81.9            \\
                                            & Straight & 95.8 & 95.8 & 95.8 & 95.3 & 80.1 & 54.7 \\
                                            & Right    & 83.0 & 82.6 & 83.0 & 82.1 & 65.9 & 71.2 \\ \hline
        \end{tabular}
    }
    \label{tab:ds_patch}
\end{table*}

\subsection{Perturbation Attack}

In order to perform the perturbation attack against autonomous driving systems at various intensities, we adjust two parameters $\alpha$ and $n$ in Eqn.~(\ref{eq:eq_2}). To ensure that the perturbation is imperceivable to human eyes, their values usually should be set as small as possible. Specifically, we set the value of $\alpha$ as $0.4$ and $1$, to represent medium to high attack intensities, respectively. The number of iterations $n$ changes from $1$ to $4$ accordingly, so that the modification on image pixel values is constrained within the range of $[0.4,4]$. We note that even the attack with the lowest attack intensity, i.e., $\alpha=0.4$ and $n=1$, can cause significant decline in the accuracy of 3D object detectors. Moreover, the produced perturbation and the input stereo images have the same dimension.

\begin{table*}[hbt!]
\centering
\caption{\\\textsc{Driving Safety Performance Metrics of Stereo R-CNN under the Patch Attack with Various Intentions}}
\resizebox{0.95\textwidth}{!}{
\begin{tabular}{c|c|ccc}
\hline
                       & Specific attack & \multicolumn{3}{c}{Random attack}                  \\ \cline{2-5} 
                       & —               & Same intentions$^1$ & Different intentions$^2$ & Unattacked  \\ \hline
Success rate (\%)      & 71.9            & 76.2            & 91.6                 & 90.4       \\ \hline
Collision rate (\%)    & 3.5             & 2.3             & 1.5                  & 2.4        \\ \hline
Safe driving rate (\%) & 69.3            & 74.4            & 90.1                 & 88.1       \\ \hline
\end{tabular}
}
\\
$^1$ "Same intentions" refers to cases where the attack intention and the driving intention are the same.
\\
$^2$ "Different intentions" refers to cases where the attack intention differs from the driving intention.
\label{tab:intentions}
\end{table*}

\begin{table}[hbt!]
\centering
\caption{\\\textsc{Safe Driving Rate using Different Planning Algorithms}}
\begin{tabular}{c|c|c|c}
\hline
\multirow{2}{*}{}                       & Planning algorithm & GBFS            & A*              \\ \cline{2-4} 
                                        & Scenario           & \multicolumn{2}{c}{Ground Truth} \\ \hline
\multirow{3}{*}{Safe driving rate (\%)} & Left               & 87.9            & 89.7            \\
                                        & Straight           & 98.0            & 98.0            \\
                                        & Right              & 82.3            & 85.2            \\ \hline
\end{tabular}
\label{tab:plan_alg}
\end{table}

\begin{table*}[hbt!]
\centering
\caption{\\\textsc{Safe Driving Rate with Different Inputs}}
\resizebox{0.95\textwidth}{!}{
\begin{tabular}{c|c|c|ccc|ccc}
\hline
\multirow{2}{*}{}                       & Model    & —    & \multicolumn{3}{c|}{DSGN} & \multicolumn{3}{c}{Stereo R-CNN} \\ \cline{2-9} 
 &
  Scenario &
  \begin{tabular}[c]{@{}c@{}}Ground \\ Truth\end{tabular} &
  Unattacked &
  \begin{tabular}[c]{@{}c@{}}Perturbation\\ Attack\end{tabular} &
  \begin{tabular}[c]{@{}c@{}}Patch\\ Attack\end{tabular} &
  Unattacked &
  \begin{tabular}[c]{@{}c@{}}Perturbation\\ Attack\end{tabular} &
  \begin{tabular}[c]{@{}c@{}}Patch\\ Attack\end{tabular} \\ \hline
\multirow{3}{*}{Safe driving rate (\%)} & Left     & 89.7 & 87.7    & 86.4   & 87.7   & 87.0      & 79.2      & 68.9      \\
                                        & Straight & 98.0 & 95.8    & 95.6   & 95.8   & 95.3      & 87.4      & 80.1      \\
                                        & Right    & 85.2 & 83.0    & 79.7   & 82.6   & 82.1      & 73.0      & 65.9      \\ \hline
\end{tabular}
}
\label{tab:atk_impact}
\end{table*}

\textbf{Evaluation.} The effect of the perturbation attack in some driving scenarios is shown in Figure~\ref{fig:plot_perturbation} where we can see that the attack causes inaccurate detection of real objects and false detection of ghost objects. We present the impact of the perturbation attack with different settings on average precision of 3D object detection and on driving safety metrics in Figure~\ref{fig:ap_perturbation} and Figure~\ref{fig:ds_perturbation}, respectively. The numerical results of driving safety scores can be found in Table~\ref{tab:ds_perturbation_0.4} and Table~\ref{tab:ds_perturbation_1}. When the number of iterations $n$ is 0, it indicates that there is no attack applied. From Figure~\ref{fig:ap_perturbation}, we can observe that with the enhanced attack intensity by increasing $\alpha$ and $n$, the average precision of both object detection models drops significantly, while the driving safety metrics only show very small changes. Take DSGN as an example. When $\alpha$ is 0.4 and $n$ is increased from $0$ (no attack) to $1$, the AP declines by more than half for all three levels of the benchmark standard, i.e., from 70.94\% to 21.99\% for the category of AP easy, from 52.98\% to 14.45\% for the category of AP moderate, and from 47.29\% to 13.96\% for the category of AP hard. When $n$ is 3, the AP of DSGN almost reaches 0. However, in the meantime, the driving safety performance metrics in Figure~\ref{fig:ds_perturbation} barely change for all three intention cases (e.g., collision rate is only changed from $1.7\%$ to $3.6\%$ for the case of changing to right lane). When $\alpha = 1$ and $n$ is increased from $0$ (no attack) to $4$, the AP of DSGN drops even more significantly, but the driving safety metrics only demonstrate slightly larger changes than that when $\alpha = 0.4$ (e.g., the safe driving rate drops by $0.8\%$ when $\alpha = 0.4$ and by $1.3\%$ when $\alpha = 1$ for the case of changing to left lane). The experiment results clearly indicate that the perturbation attack can dramatically affect the performance of 3D object detection methods, but does not have much influence on the driving safety. In other words, a larger precision decline of the vision-based 3D object detectors under the perturbation attack does not indicate higher risk of driving safety.

Moreover, by comparing DSGN and Stereo R-CNN in terms of driving safety under perturbation attacks, we can observe that the changes in driving safety metrics for Stereo R-CNN tend to be larger than the changes for DSGN when both of them are tested in the same driving intention scenarios and at the same intensity. Therefore, Stereo R-CNN is more prone to  perturbation attacks than DSGN in regard to driving safety.

\subsection{Patch Attack}

Different from  perturbation attacks, the size of a patch in a patch attack is much smaller than the size of an input image. In our patch attack experiments, the radius of the patch is limited to $38$ pixels. Here, the patch attack is launched as a white-box attack, which means that the patch is trained for Stereo R-CNN and DSGN, respectively. Specifically, we train the patch according to Eqn.~\ref{eq:eq_3} by placing the patch at a random position in stereo image pair and setting $b^{*}$ in Eqn.~\ref{eq:eq_4} accordingly for each training scenario. To ensure that the patch for Stereo R-CNN and the patch for DSGN are equally optimized, we use the same learning step size and the same number of epochs when training patches. 

We design two attack approaches, namely, \textbf{random attack} and \textbf{specific attack}. Random attacks are to place the trained patch at a random position within the entire image no matter which driving intention case it is. In other words, the attack intention may or may not be consistent with the driving intention in random attacks. Specific attacks are also to place the patch randomly but within a certain region of the image, depending on the driving intention case, e.g., if the driving intention is  changing to right lane, then the patch is placed in the right part of the image. In other words, the attack intention is always consistent with the driving intention in specific attacks. By designing two attack approaches, we can create different attack intensities for patch attacks. Specifically, the attack intensity of random attacks is lower than that of specific attacks.

\textbf{Evaluation.} The performance of patch attacks in some driving scenarios is shown in Figure~\ref{fig:plot_patch} in which we can observe that patch attacks cause false detection of ghost objects. The  impact of  patch attacks on object detection and driving safety are shown in Table~\ref{tab:ap_patch} and Table~\ref{tab:ds_patch} respectively. From the tables, we can observe that, when different attack approaches are applied, the average precision of both object detection models declines slightly, while some of the driving safety metrics degrade significantly. For example, when random patch attacks are applied to the Stereo R-CNN model, AP declines slightly for all three levels of the benchmark standard, i.e., from $56.47\%$ to $53.17\%$ for the case of AP easy, from $38.20\%$ to $37.07\%$ for the case of AP moderate, and  $32.66\%$ from to $31.88\%$ for the case of AP hard. However, the driving safety performance metrics of Stereo R-CNN have a relatively larger drop under random patch attacks (e.g., safe driving rate drops from $95.3\%$ to $80.1\%$ for the case of keeping lane). At the same time, for specific patch attacks, Stereo R-CNN shows the similar average precision decline which is only within the range of $[0.21\%, 8.65\%]$, while significant driving safety performance degradation can be observed (e.g., the safe driving rate decreases to half for the case of keeping lane). The experiment results suggest that a slight precision decline of the 3D object detectors under  patch attacks does not indicate mild risk of driving safety. 

Since the driving safety performance of Stereo R-CNN can be significantly affected by patch attacks, we further investigate the performance under the attacks where the attack intention is the same as the driving intention, and the attacks where the attack intention is different from the driving intention. The results are listed in Table~\ref{tab:intentions}. From Table~\ref{tab:intentions}, we can see that  the driving safety performance under the attacks where the driving intention and the attack intention are different is very similar to that in unattacked scenarios, and the performance under the attacks where the attack intention is the same as the driving intention is very close to that in specific attack scenarios.

Furthermore, the DSGN model again shows its much better robustness in object detection and driving safety under patch attacks. We can observe that,  even under the well-designed specific patch attacks, DSGN's average precision decline is only less than $6\%$, and the driving safety performance metrics almost remain unchanged, while Stereo R-CNN performs worse under both  random patch attacks and  specific patch attacks.

\begin{figure*}[!t]
\subfloat[No attack.]{
    \includegraphics[width=0.32\textwidth]{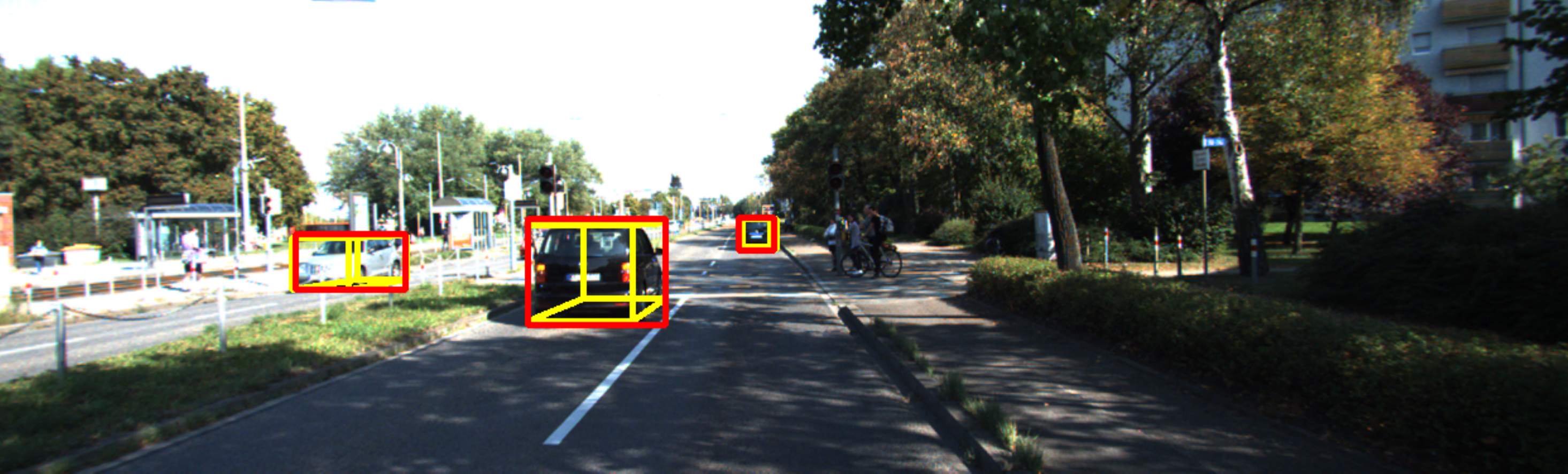}
    \label{fig:ab_1_stereo_1}
}
\subfloat[Attack applied.]{
    \includegraphics[width=0.32\textwidth]{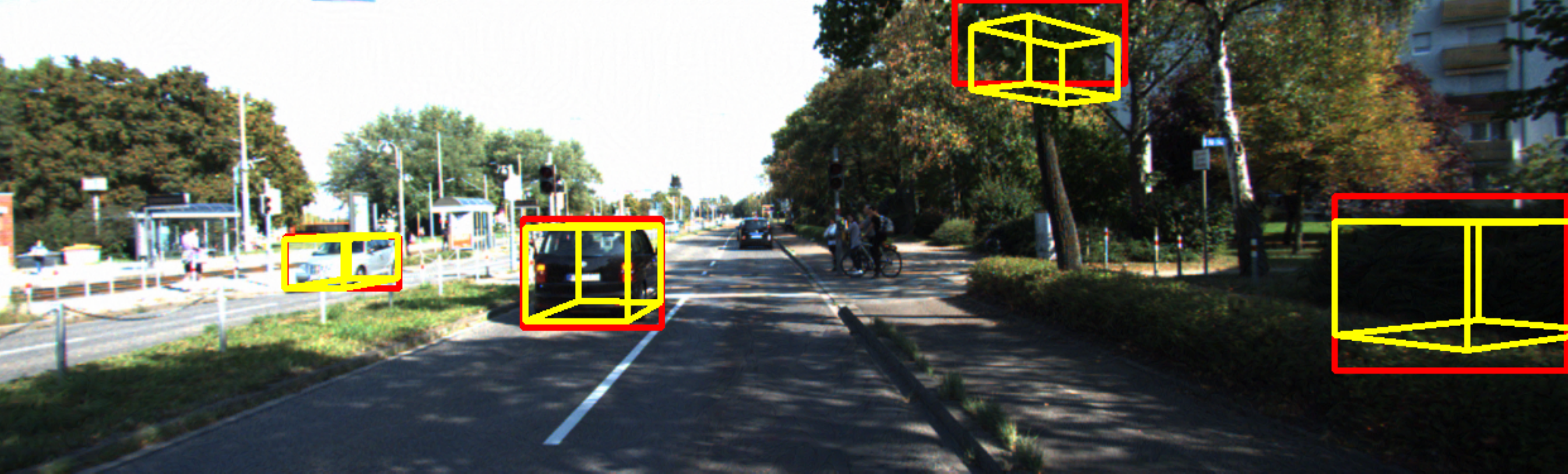}
    \label{fig:ab_1_stereo_2}
}
\subfloat[Attack applied and texture of side area replaced.]{
    \includegraphics[width=0.32\textwidth]{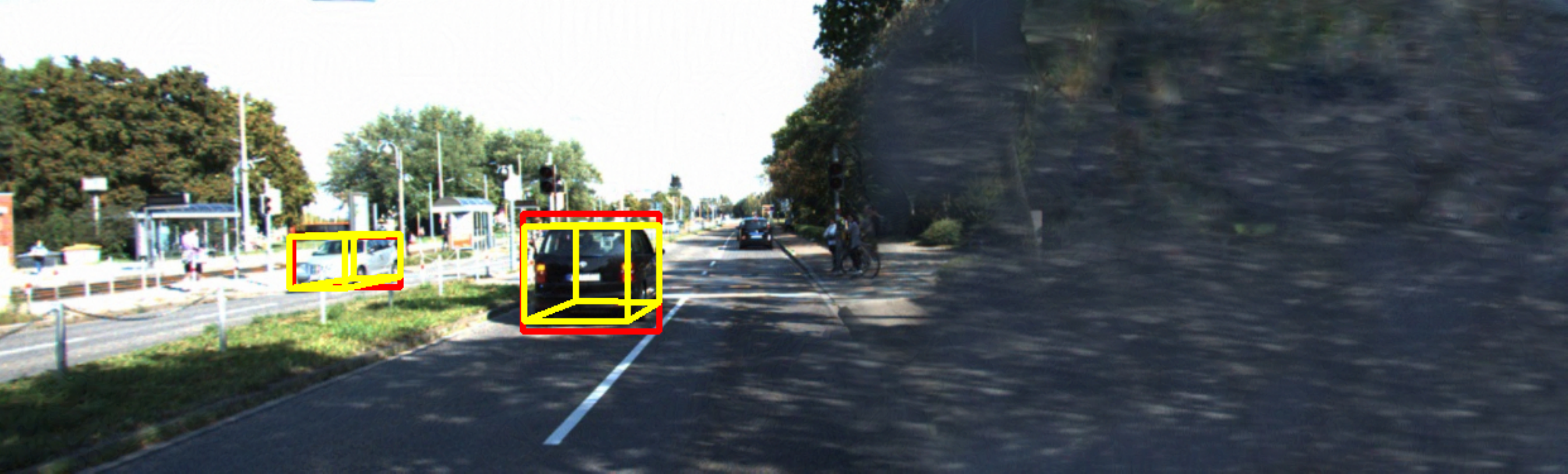}
    \label{fig:ab_1_stereo_3}
}
\caption{Results of the texture replacement experiment for Stereo R-CNN.}
\label{fig:ab_1_stereo}
\end{figure*}

\begin{figure*}[!t]
\centering
\subfloat[No attack.]{
    \includegraphics[width=0.31\textwidth]{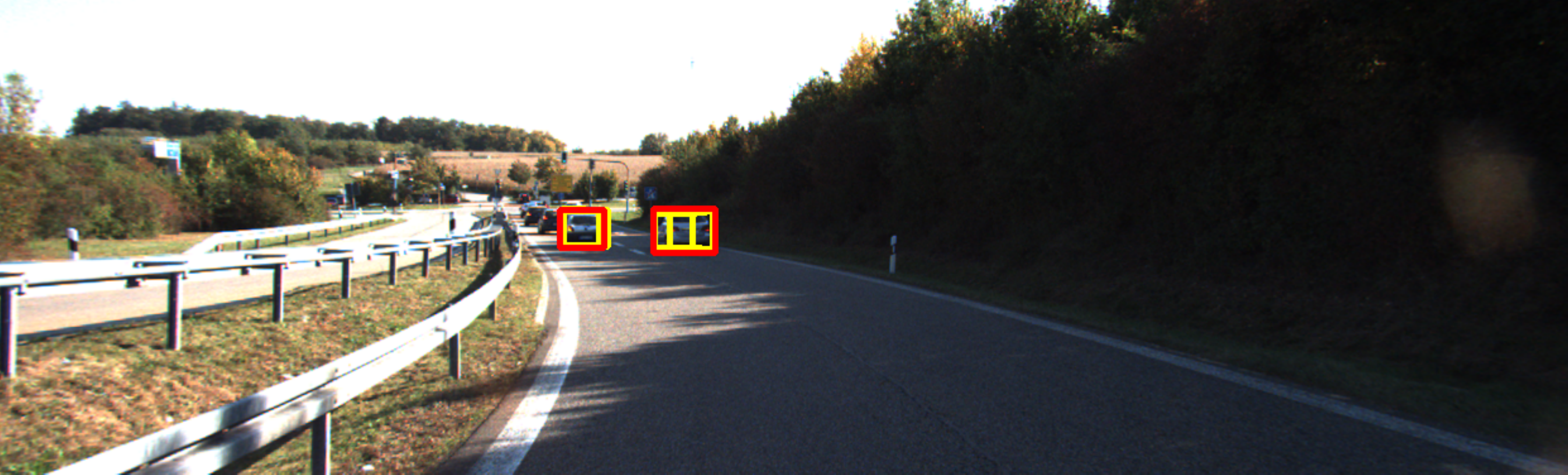}\label{fig:ab_1_dsgn_1}
}
\subfloat[Attack applied.]{
    \includegraphics[width=0.31\textwidth]{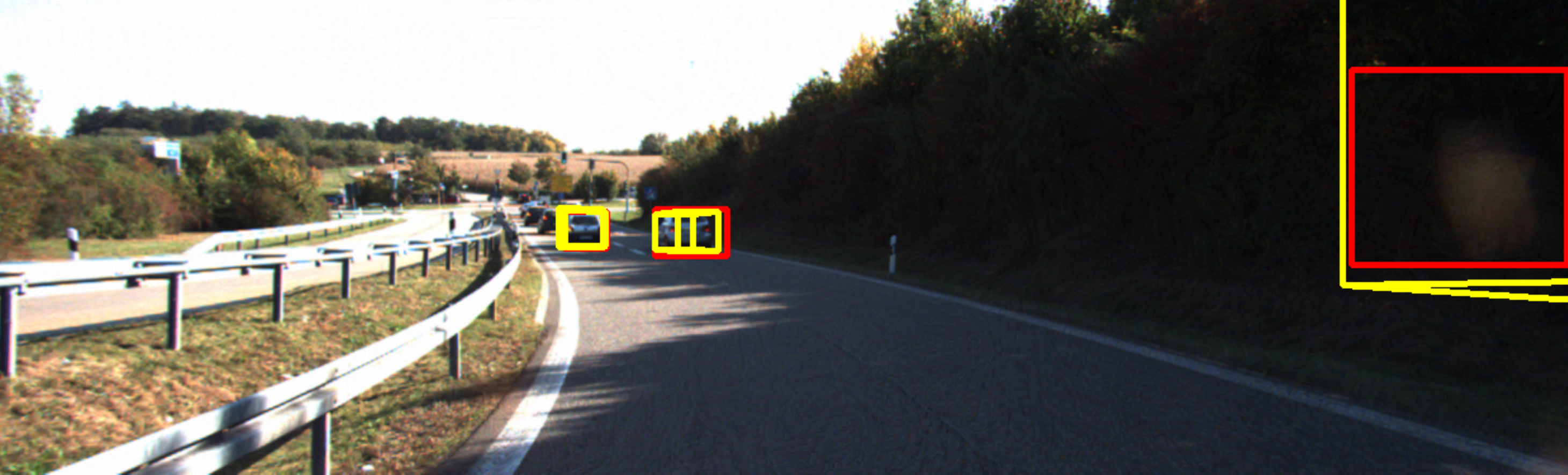}\label{fig:ab_1_dsgn_2}
}
\subfloat[Attack applied and texture of side area replaced.]{
    \includegraphics[width=0.31\textwidth]{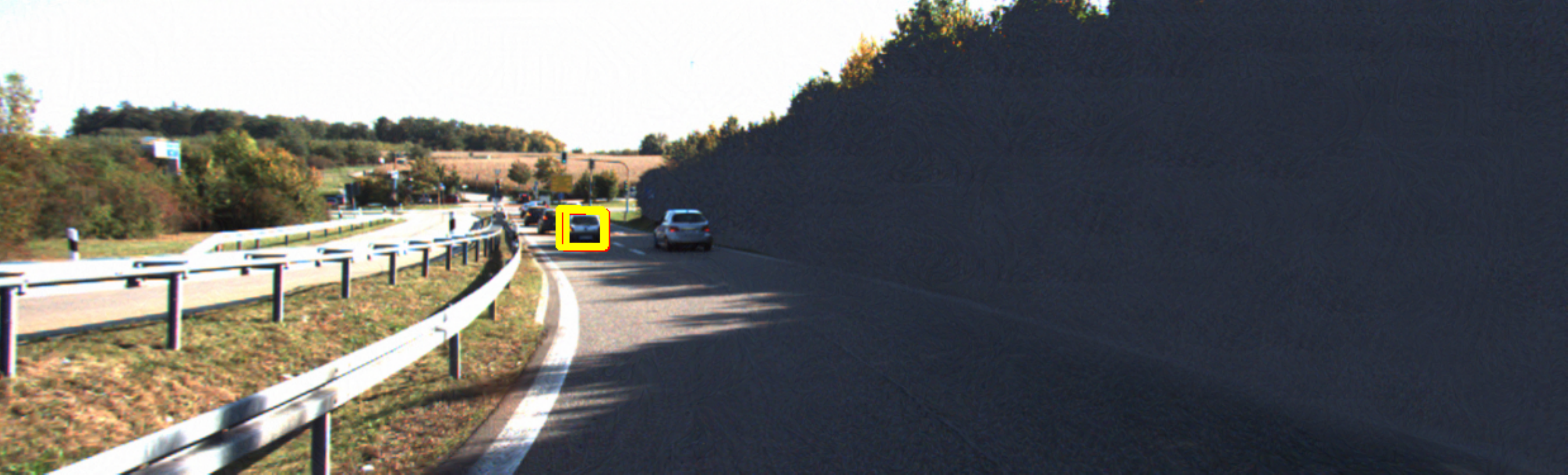}\label{fig:ab_1_dsgn_3}
}
\caption{Results of the texture replacement experiment for DSGN.}
\label{fig:ab_1_dsgn}
\end{figure*}

\subsection{Attack Impact Demonstration}

To demonstrate that the performance of different models under different attacks is mainly caused by adversarial attacks, not by the motion planning algorithms, we conduct two experiments.

We first evaluate the performance of the motion planning module using different inputs and then calculate the safe driving rates in different scenarios. Specifically, we first use the ground truth data of 3D object detection as the inputs to evaluate two popular motion planning algorithms, A* and Greedy-BFS, so as to show the impact of the motion planning module on the driving safety. The experimental results are summarized in Table~\ref{tab:plan_alg}. From Table~\ref{tab:plan_alg}, we can first observe that, when ground truth data are used, the A* planning algorithm can achieve the safe driving rates $89.7\%$, $98.0\%$, and $85.2\%$ for the three driving intention scenarios, respectively, while the Greedy-BFS algorithm can achieve the safe driving rates $87.9\%$, $98.0\%$, and $82.3\%$ for the three driving intentions scenarios, respectively. The performance of A* and the performance of the Greedy-BFS are very close. In other words, the performance variance demonstrated by DSGN and Stereo R-CNN under adversarial attacks is irrelevant to the selection of the motion planning algorithm. Since the performance of A* is slightly better than that of Greedy-BFS, we select A* as the motion planning algorithm for our driving safety evaluation framework.

We then use detection data without attacks (i.e., unattacked) and detection data under two types of attacks to demonstrate the impact of detection module and adversarial attacks on the driving safety. The results are shown in Table~\ref{tab:atk_impact}. From Table~\ref{tab:atk_impact}, we can observe that the safe driving rates produced by the detection data without attacks are slightly smaller than the safe driving rates when the ground truth data are used as inputs. This slight difference is caused by the accuracy of the two models. Finally, compared with the safe driving rates when the inputs are unattacked detection data, the safe driving rates under adversarial attacks are significantly declined in all driving intention scenarios. Since all experiments are conducted using the same motion planning algorithm, we can conclude that the declines in the driving safety performance metrics are primarily caused by adversarial attacks.

\subsection{Findings}

To briefly summarize, the findings from the experiments of perturbation attacks and patch attacks are listed as follows:

\begin{itemize}
\item
A larger precision decline of the attacked vision-based object detectors does not necessarily indicate a higher risk of driving safety. Similarly, a slight precision decline of the vision-based object detectors under attacks does not necessarily indicate a small risk of driving safety, either. Hence, the precision decline or the erroneous rate increase of the vision-based object detectors under attacks cannot represent their robustness with respect to driving safety of autonomous vehicles.

\item
Stereo R-CNN is less robust than DSGN in terms of driving safety and detection accuracy when the attacks launched on them are at the same intensity level.  Hence, DSGN is a better selection of the vision-based 3D object detection for its stronger robustness and higher detection precision.
\end{itemize}

In terms of these two findings, we further design more experiments to explain the real causes behind them in the next section.

\section{Ablation Study}
\label{ch5:sec.ablationstudy}

In this section, we first investigate the reason of the decoupling between the precision of 3D object detectors and the driving safety performance metrics under adversarial attacks. Second, we investigate the reason why the DSGN model is more robust than the Stereo R-CNN model.

\begin{figure*}[!t]
\centering
\begin{minipage}{0.070\textwidth}
    \centerline{\includegraphics[width=1\textwidth]{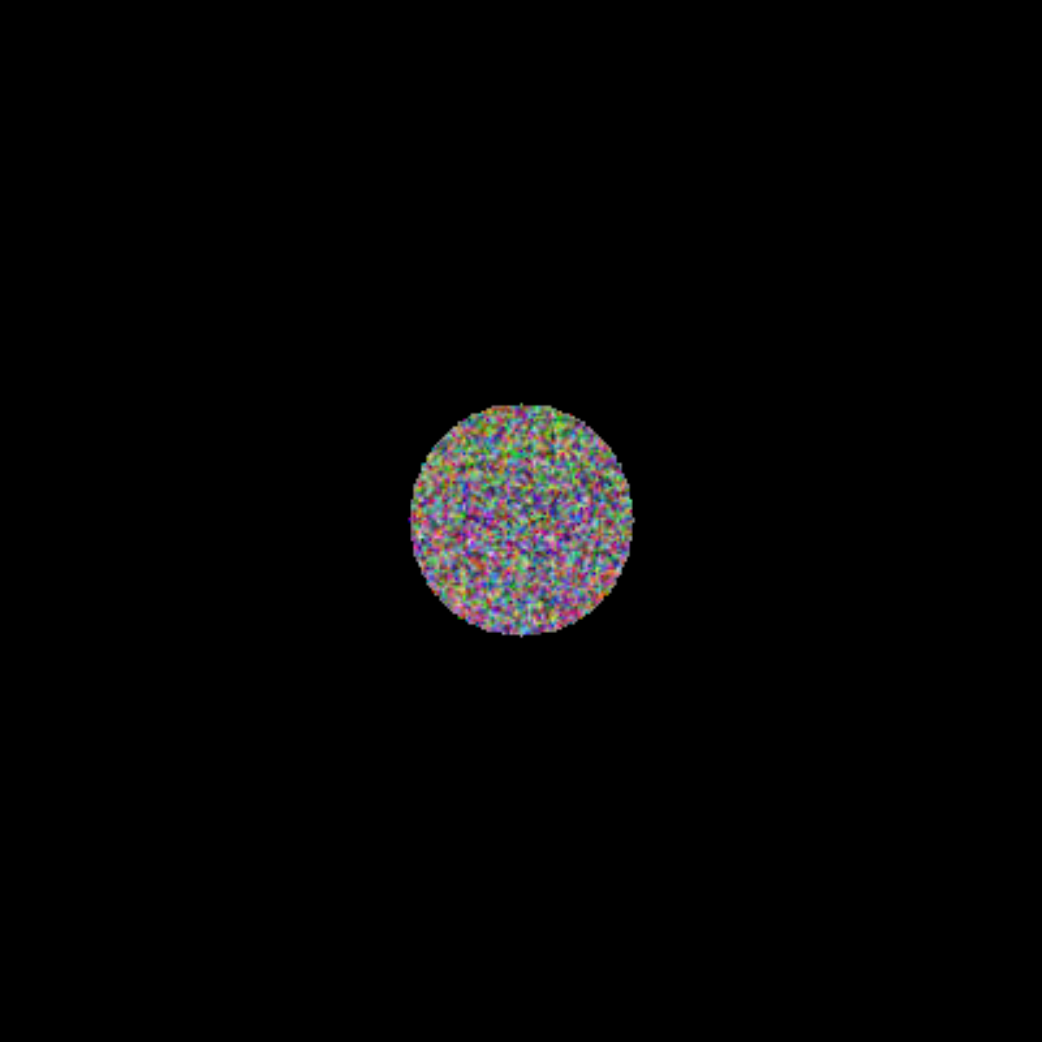}}
    \centerline{\scriptsize Input}
\end{minipage}
\begin{minipage}{0.070\textwidth}
    \centerline{\includegraphics[width=1\textwidth]{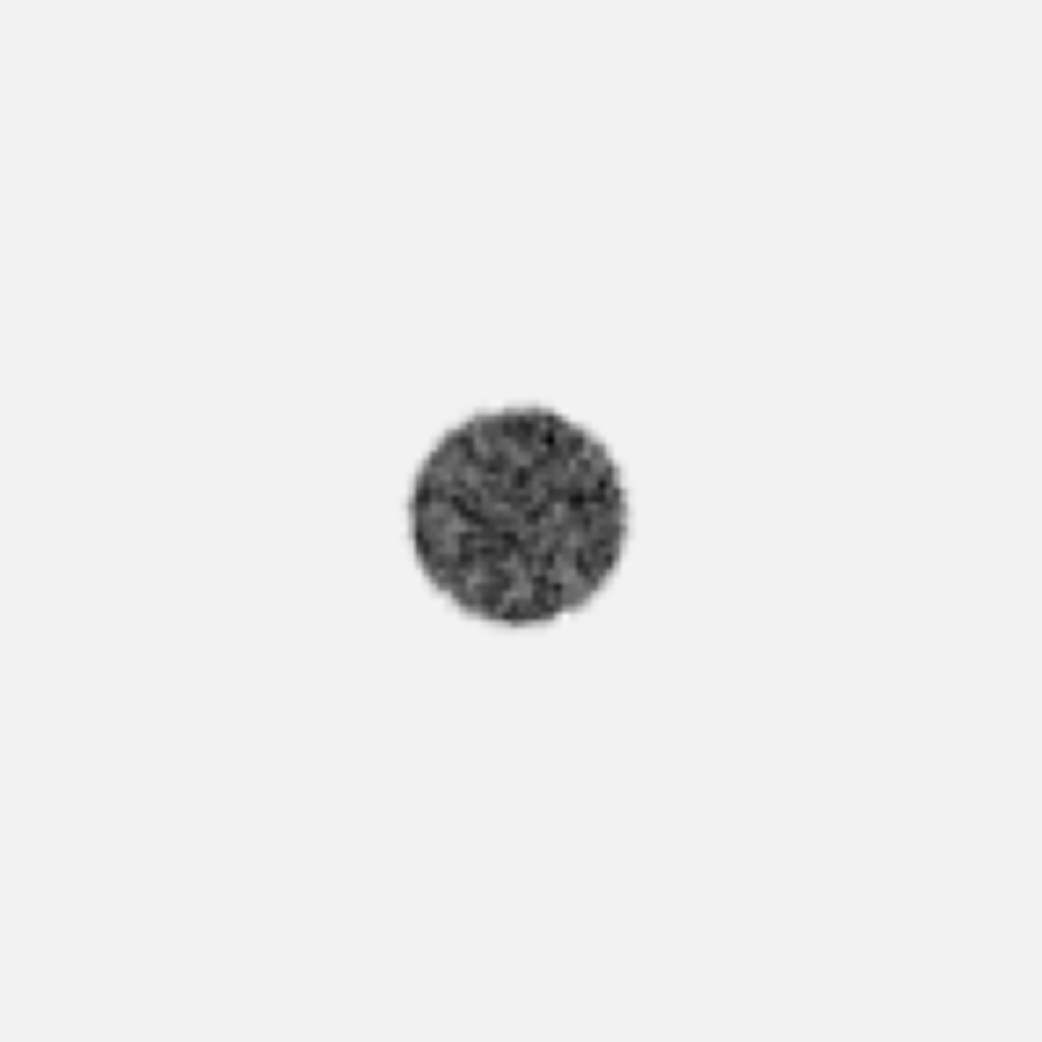}}
    \centerline{\scriptsize conv1}
\end{minipage}
\begin{minipage}{0.070\textwidth}
    \centerline{\includegraphics[width=1\textwidth]{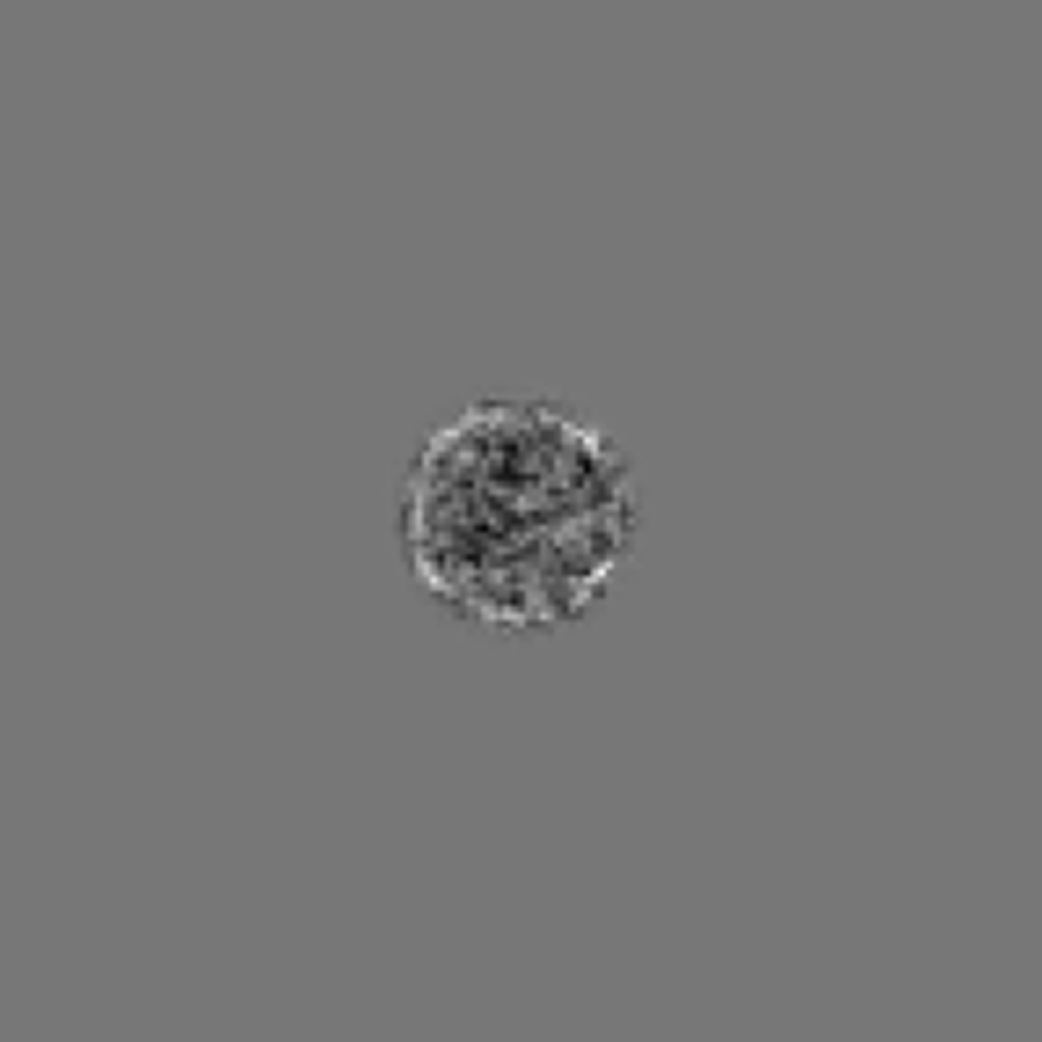}}
    \centerline{\scriptsize conv2}
\end{minipage}
\begin{minipage}{0.070\textwidth}
    \centerline{\includegraphics[width=1\textwidth]{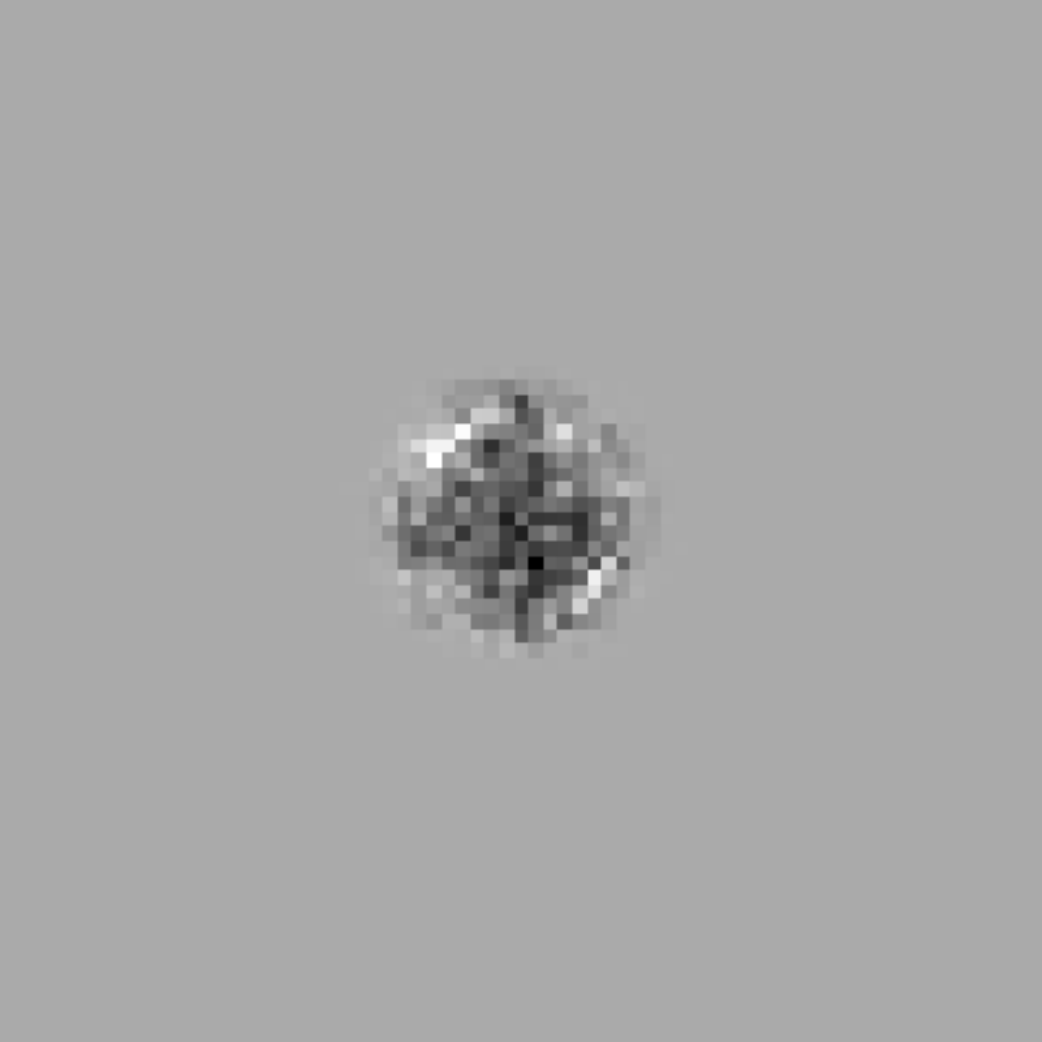}}
    \centerline{\scriptsize conv3}
\end{minipage}
\begin{minipage}{0.070\textwidth}
    \centerline{\includegraphics[width=1\textwidth]{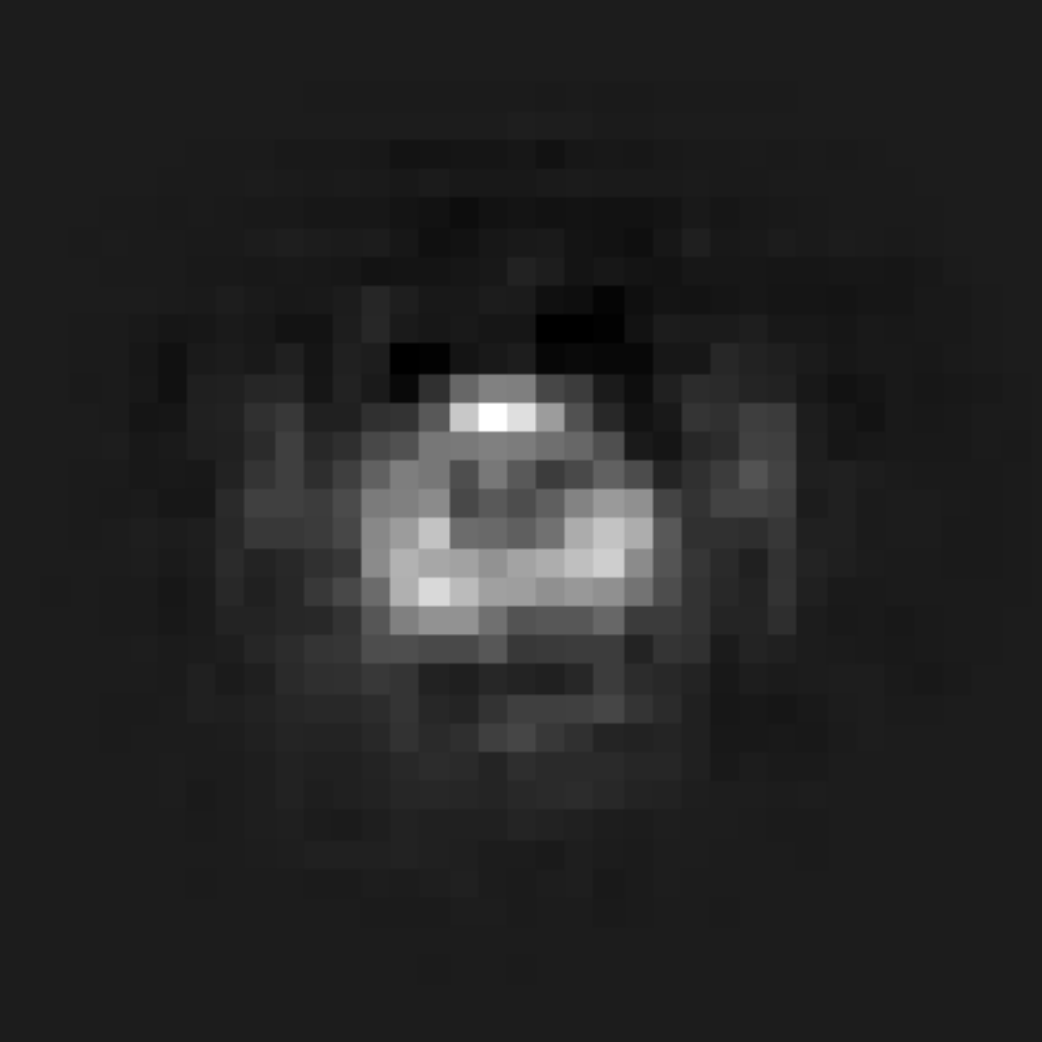}}
    \centerline{\scriptsize conv4}
\end{minipage}
\begin{minipage}{0.070\textwidth}
    \centerline{\includegraphics[width=1\textwidth]{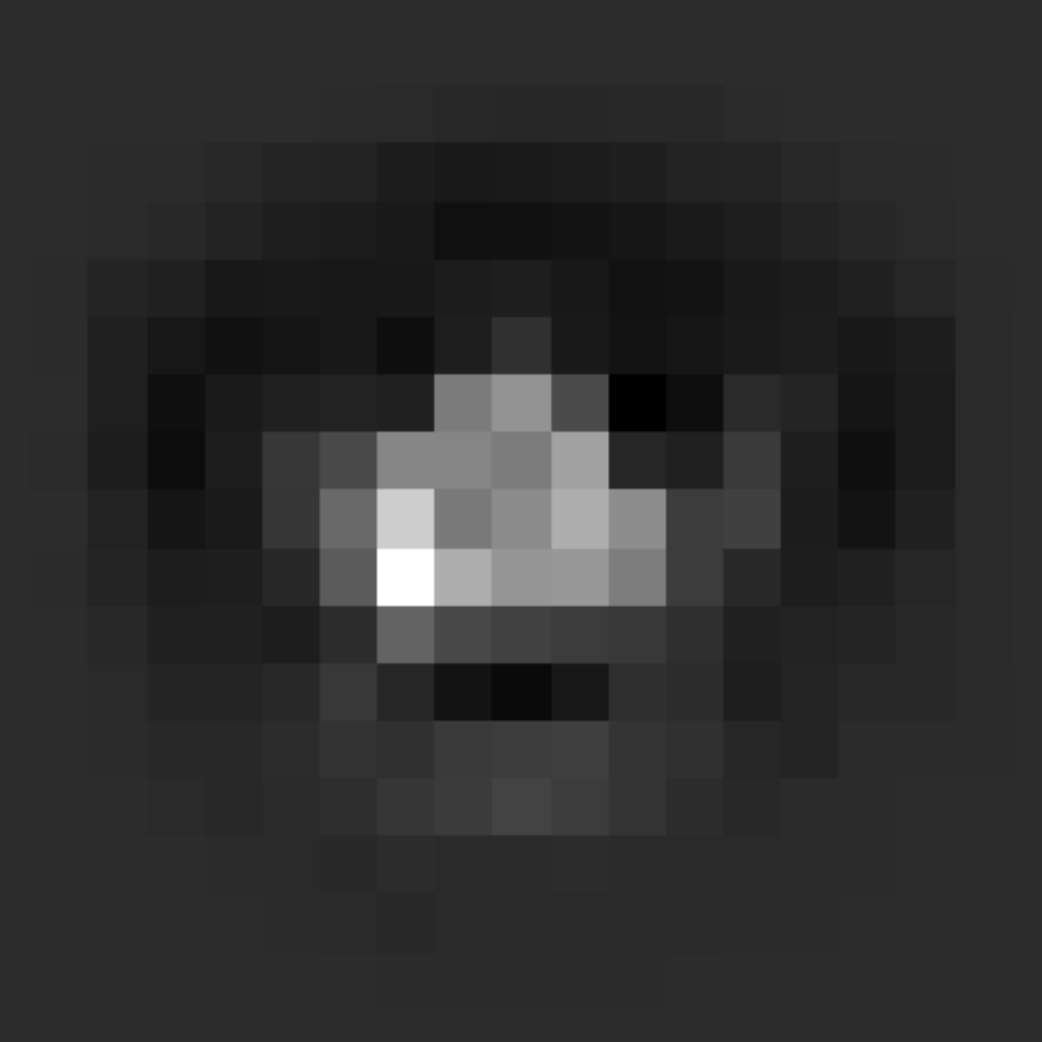}}
    \centerline{\scriptsize conv5}
\end{minipage}
\begin{minipage}{0.070\textwidth}
    \centerline{\includegraphics[width=1\textwidth]{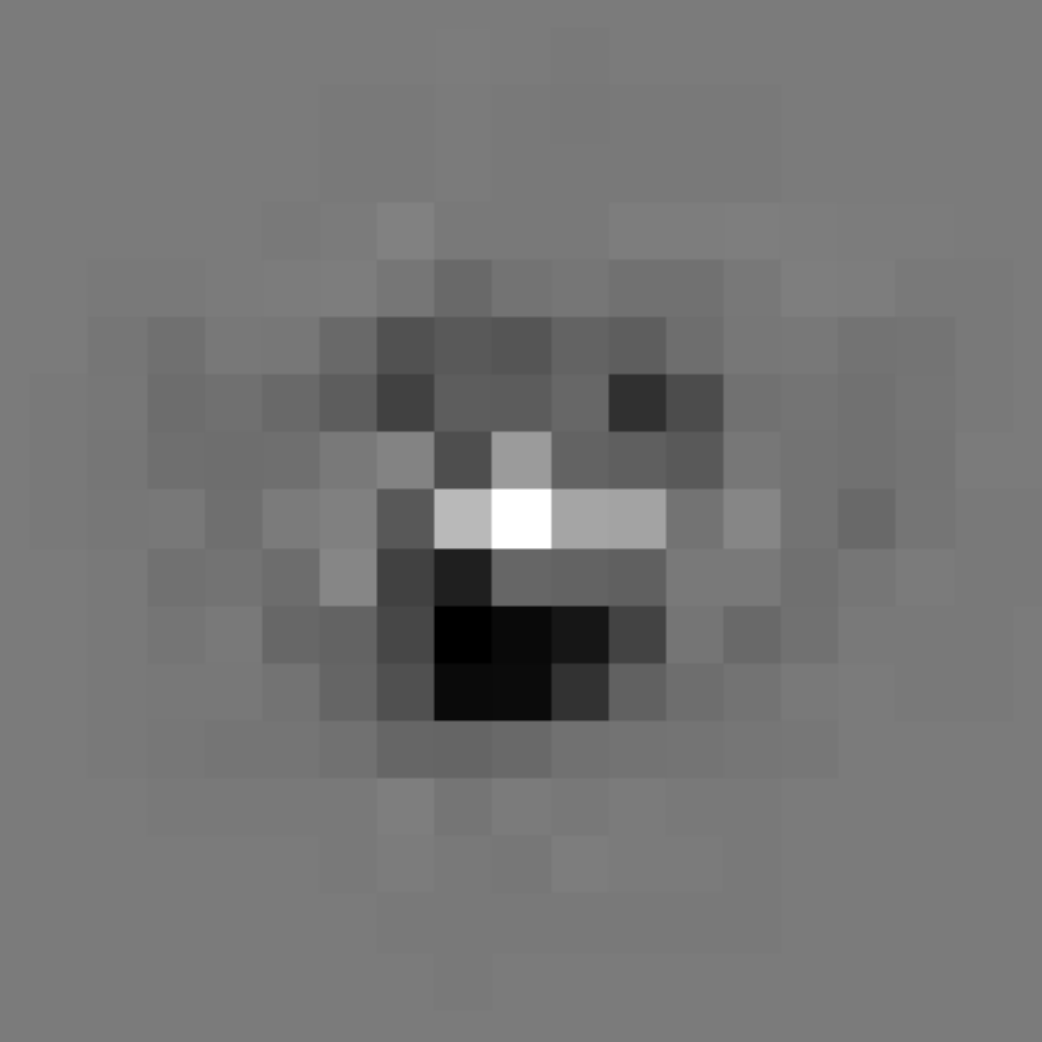}}
    \centerline{\scriptsize conv6}
\end{minipage}
\begin{minipage}{0.070\textwidth}
    \centerline{\includegraphics[width=1\textwidth]{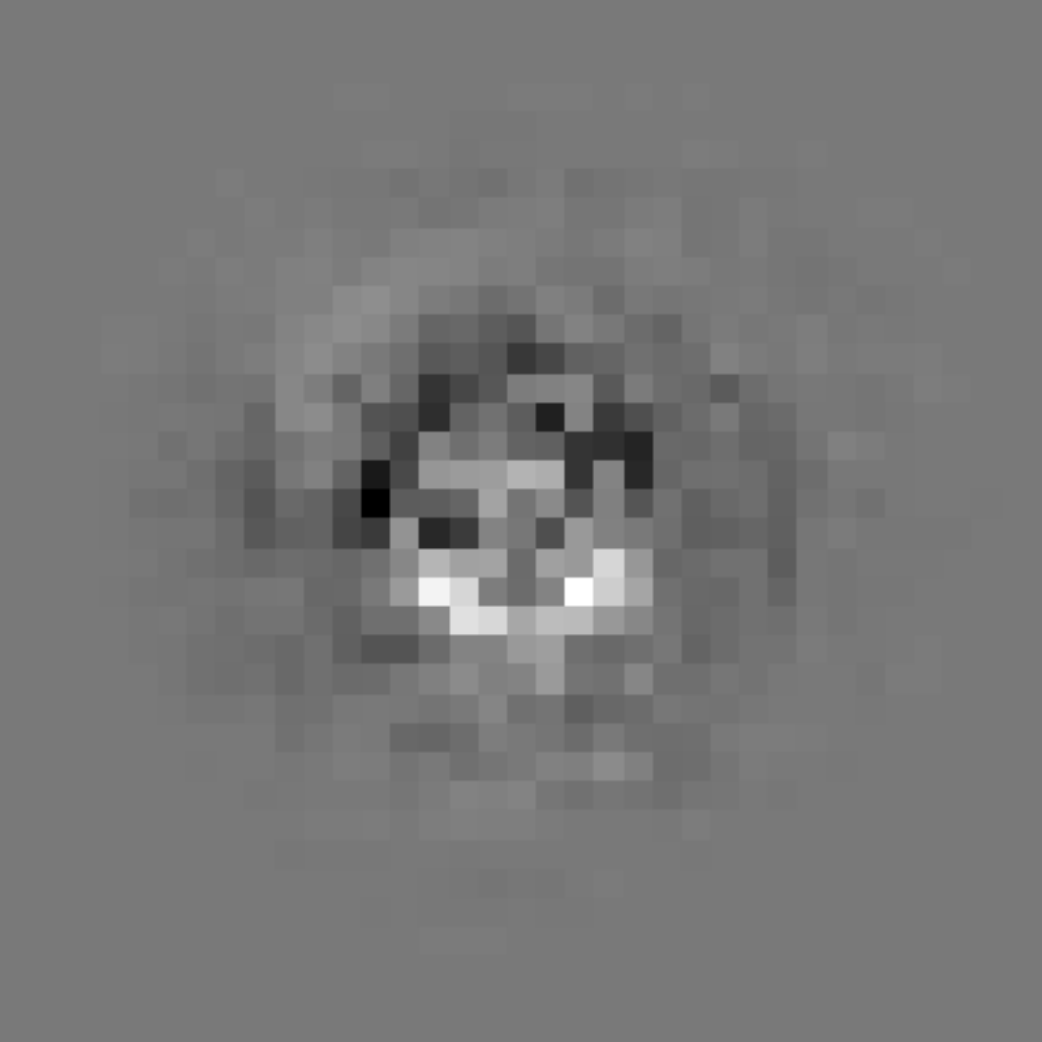}}
    \centerline{\scriptsize upsample1}
\end{minipage}
\begin{minipage}{0.070\textwidth}
    \centerline{\includegraphics[width=1\textwidth]{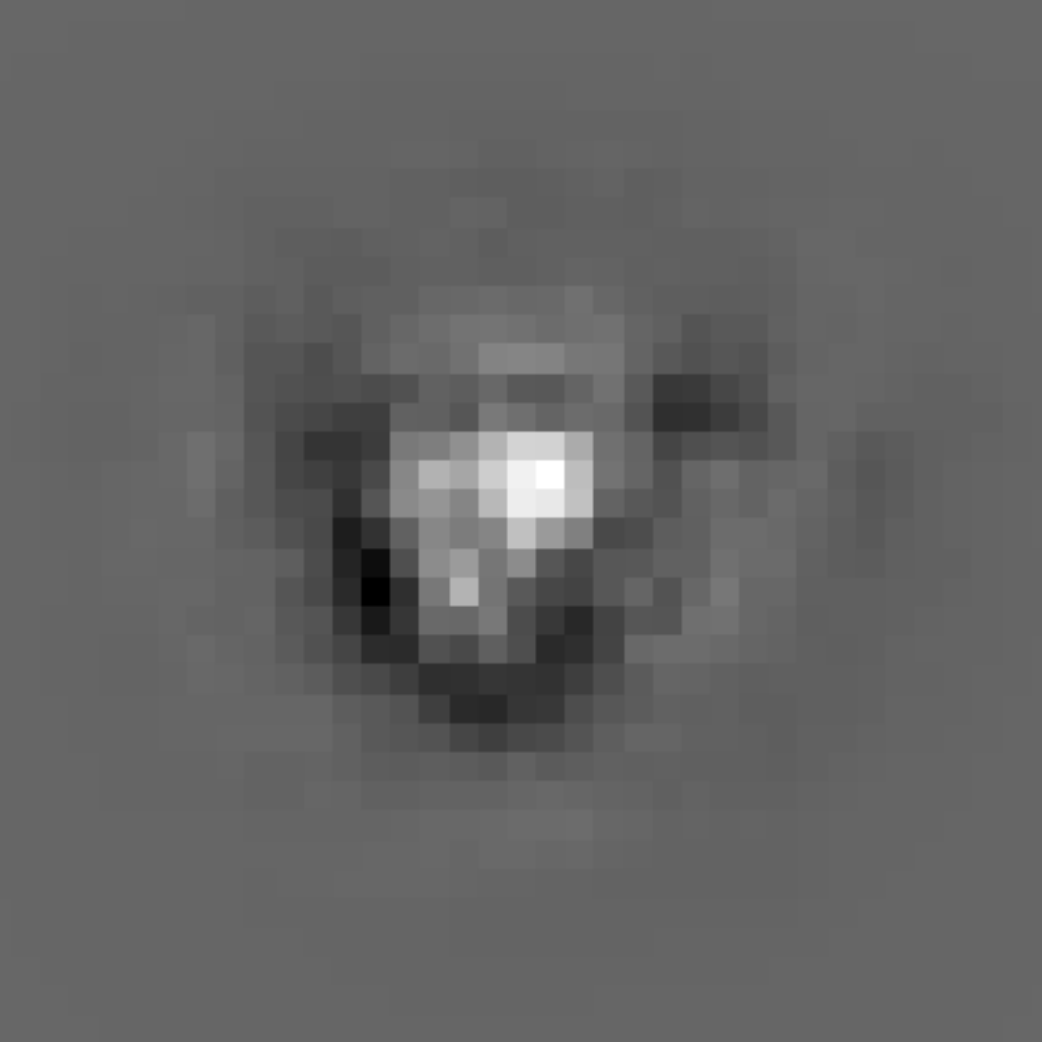}}
    \centerline{\scriptsize smooth1}
\end{minipage}
\begin{minipage}{0.070\textwidth}
    \centerline{\includegraphics[width=1\textwidth]{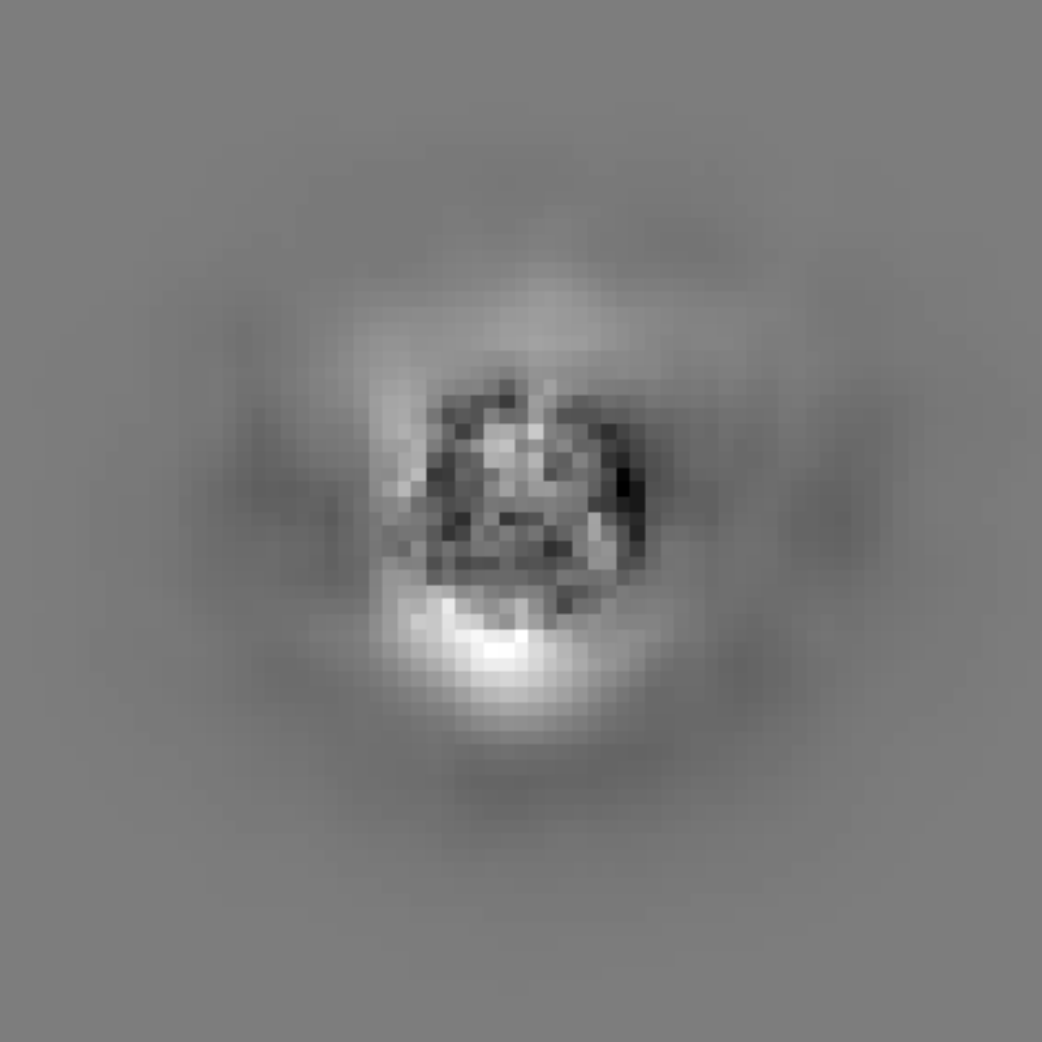}}
    \centerline{\scriptsize smooth2}
\end{minipage}
\begin{minipage}{0.070\textwidth}
    \centerline{\includegraphics[width=1\textwidth]{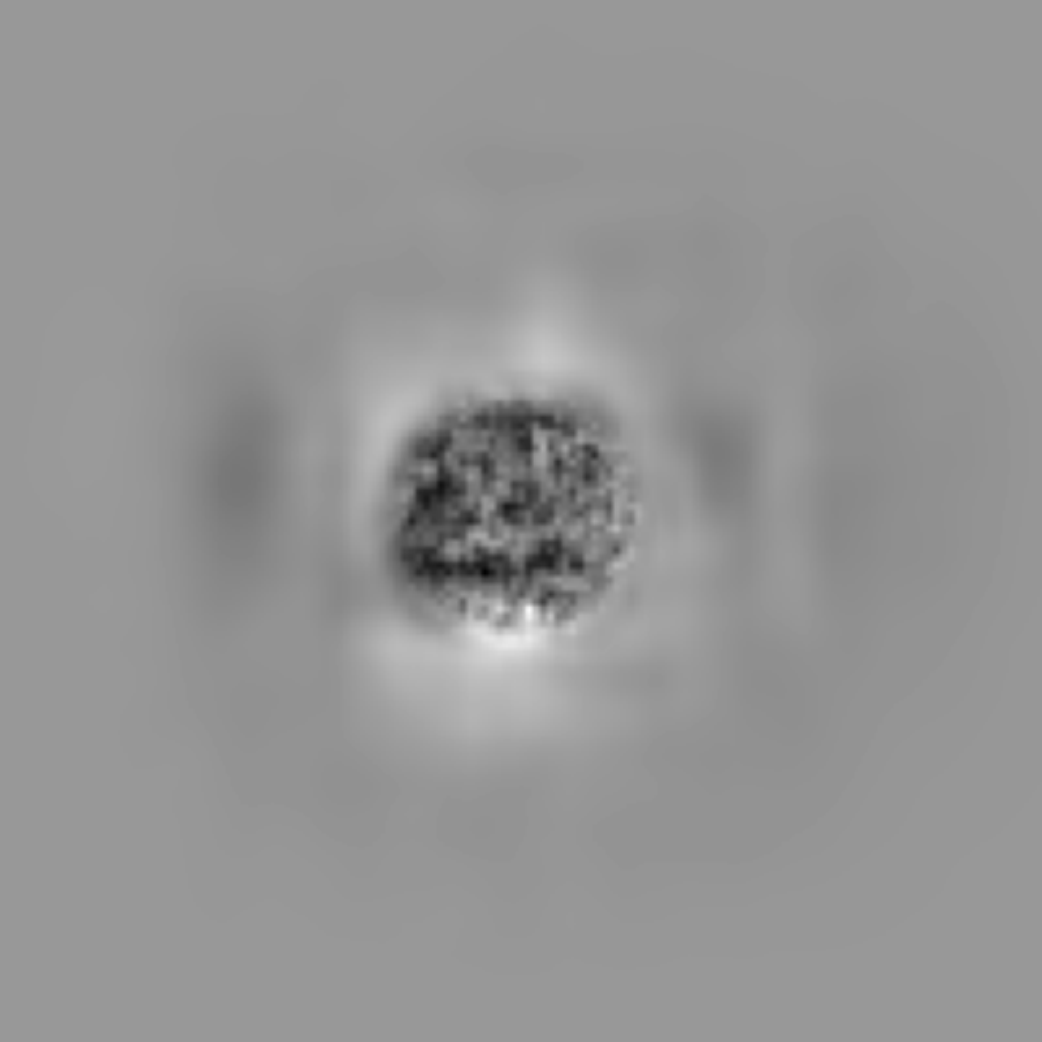}}
    \centerline{\scriptsize smooth3}
\end{minipage}
\begin{minipage}{0.070\textwidth}
    \centerline{\includegraphics[width=1\textwidth]{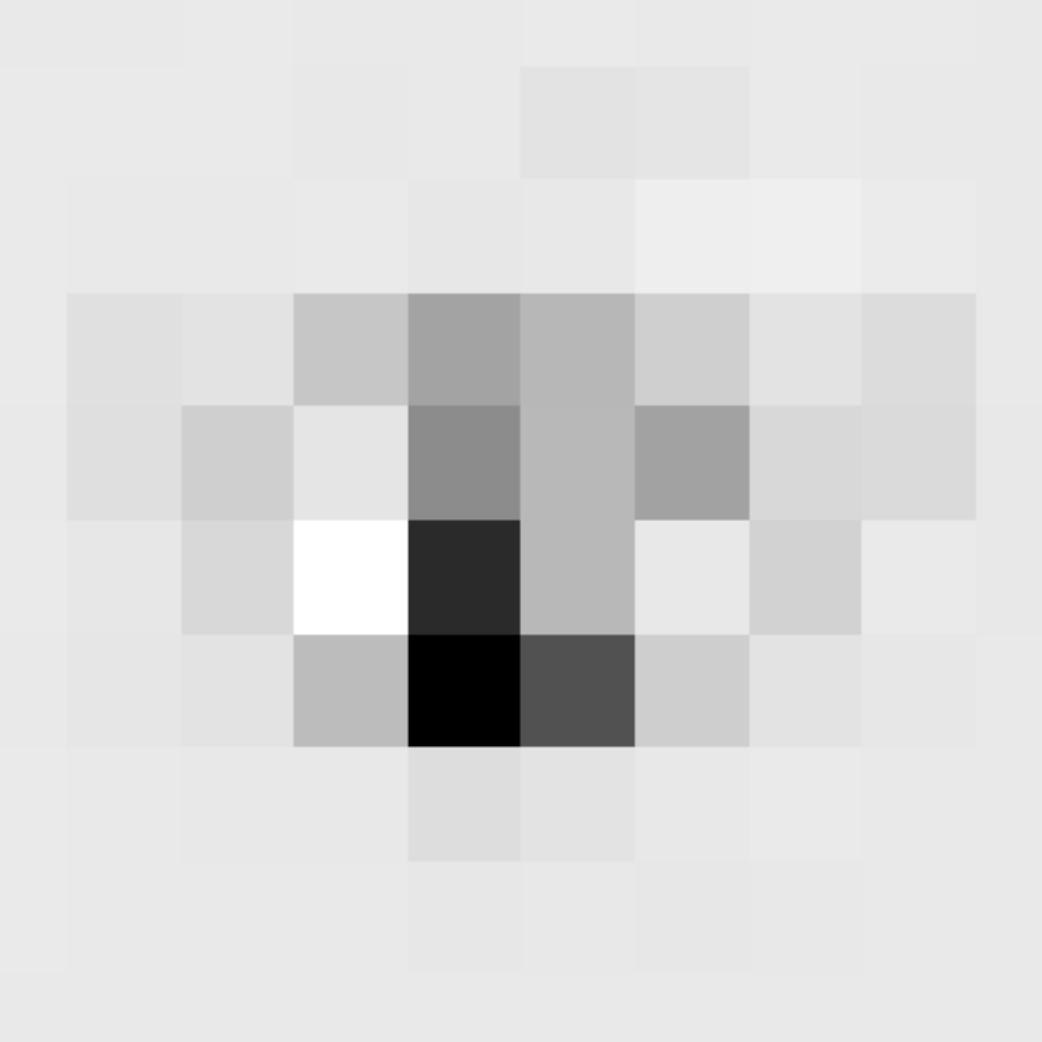}}
    \centerline{\scriptsize maxpool1}
\end{minipage} \\
\begin{minipage}{0.070\textwidth}
    \centerline{\includegraphics[width=1\textwidth]{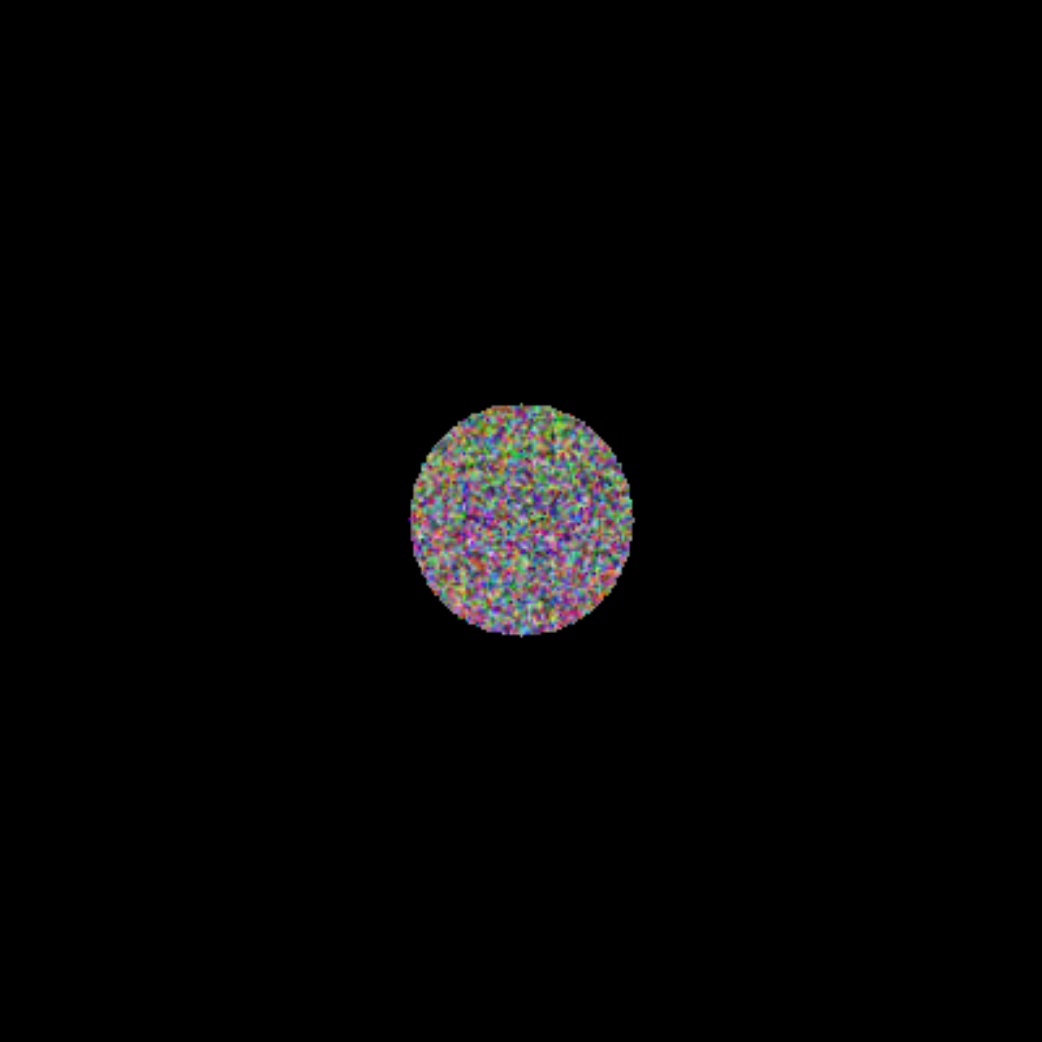}}
    \centerline{\scriptsize Input}
\end{minipage}
\begin{minipage}{0.070\textwidth}
    \centerline{\includegraphics[width=1\textwidth]{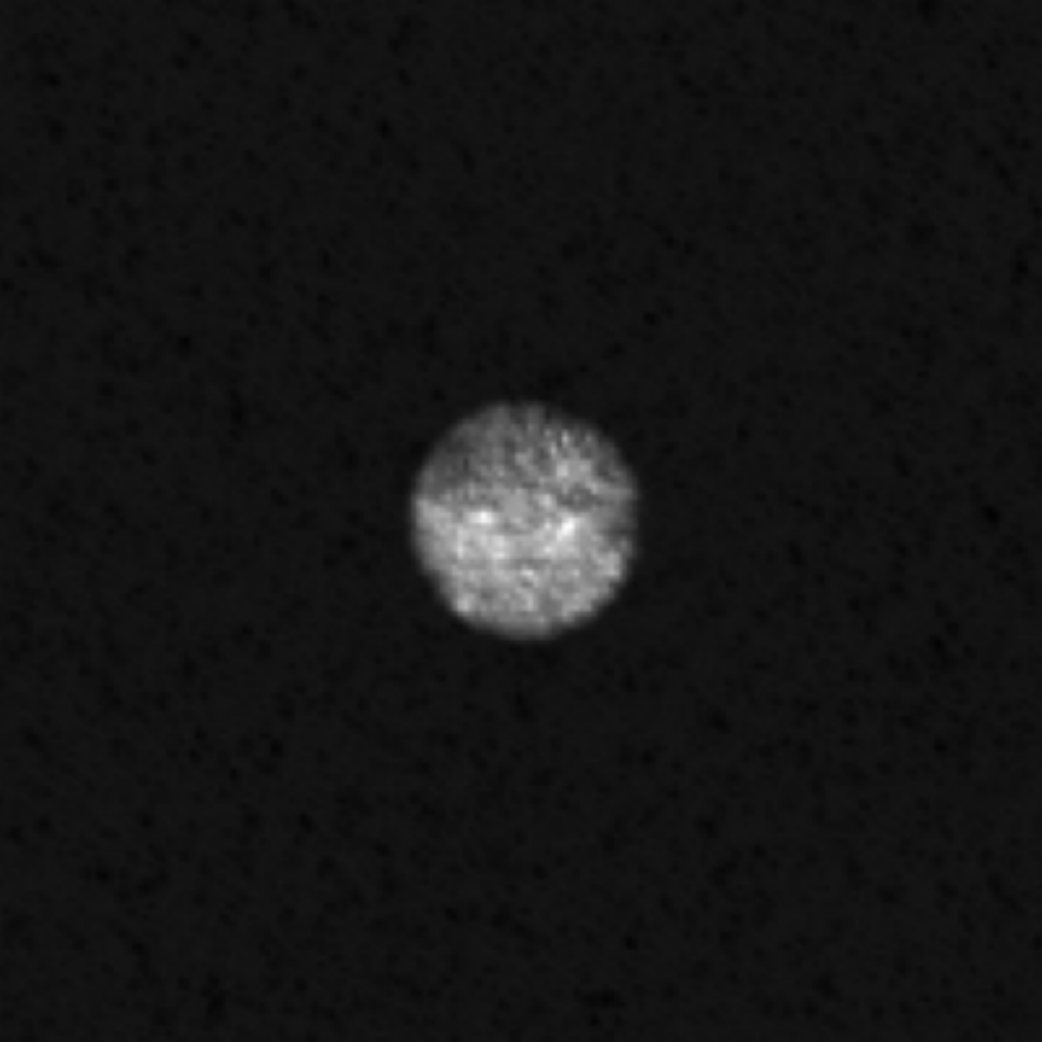}}
    \centerline{\scriptsize conv1}
\end{minipage}
\begin{minipage}{0.070\textwidth}
    \centerline{\includegraphics[width=1\textwidth]{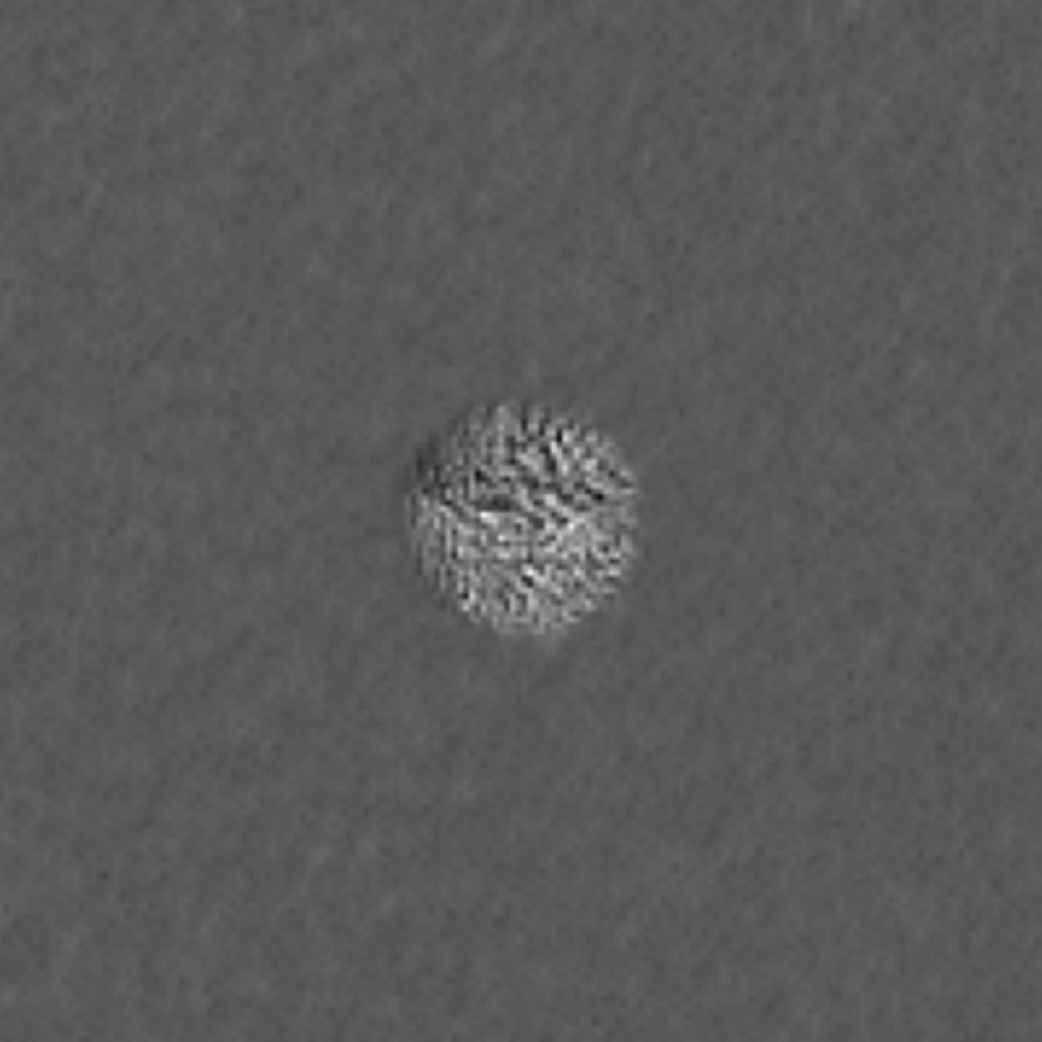}}
    \centerline{\scriptsize conv2}
\end{minipage}
\begin{minipage}{0.070\textwidth}
    \centerline{\includegraphics[width=1\textwidth]{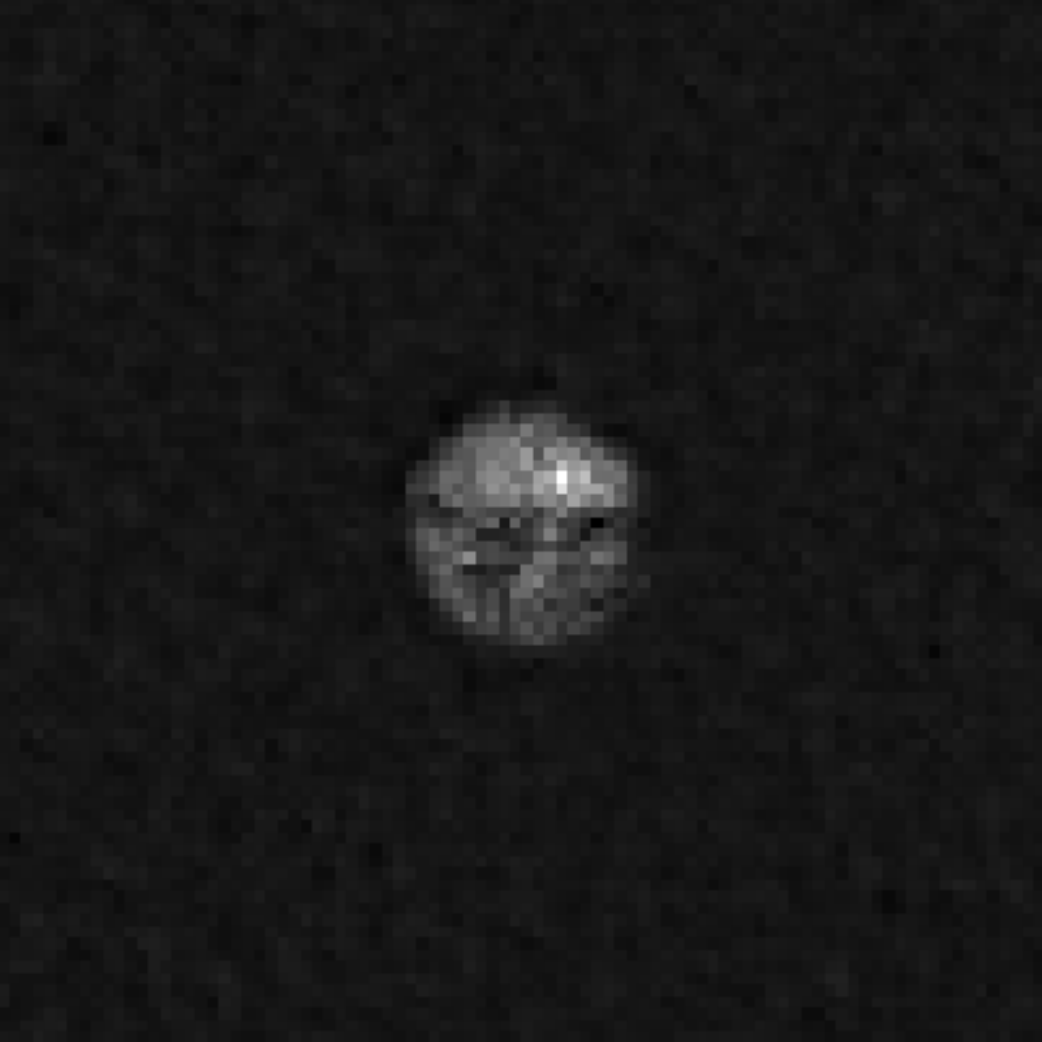}}
    \centerline{\scriptsize conv3}
\end{minipage}
\begin{minipage}{0.070\textwidth}
    \centerline{\includegraphics[width=1\textwidth]{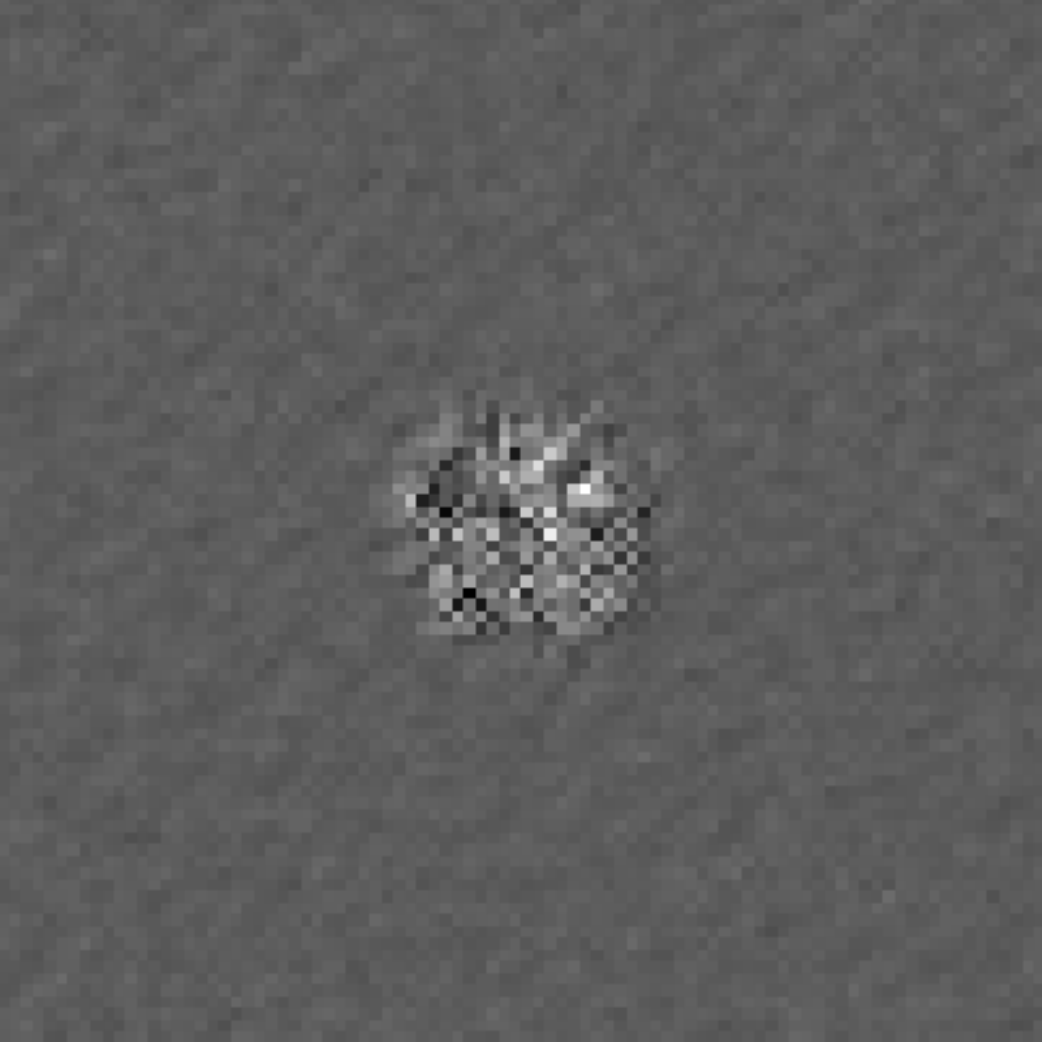}}
    \centerline{\scriptsize conv4}
\end{minipage}
\begin{minipage}{0.070\textwidth}
    \centerline{\includegraphics[width=1\textwidth]{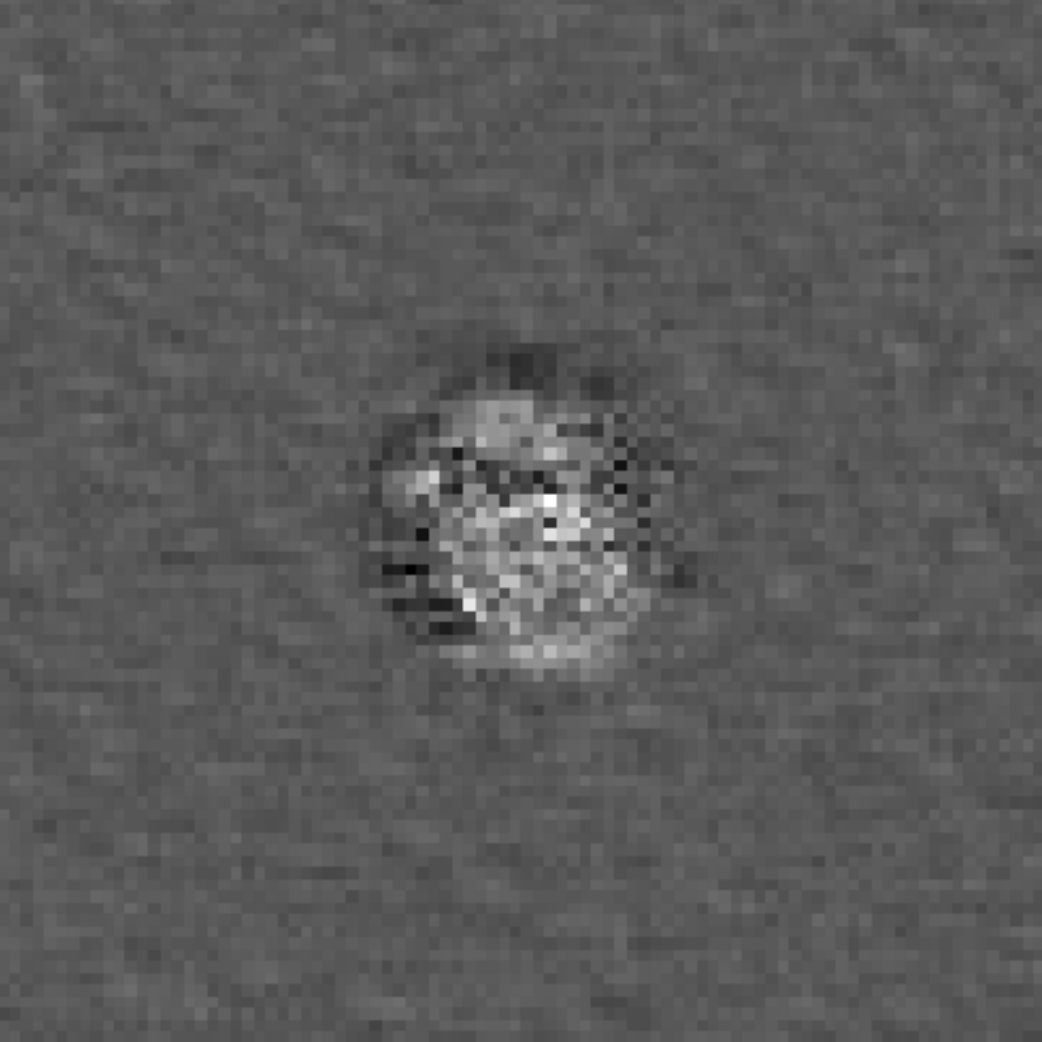}}
    \centerline{\scriptsize conv5}
\end{minipage}
\begin{minipage}{0.070\textwidth}
    \centerline{\includegraphics[width=1\textwidth]{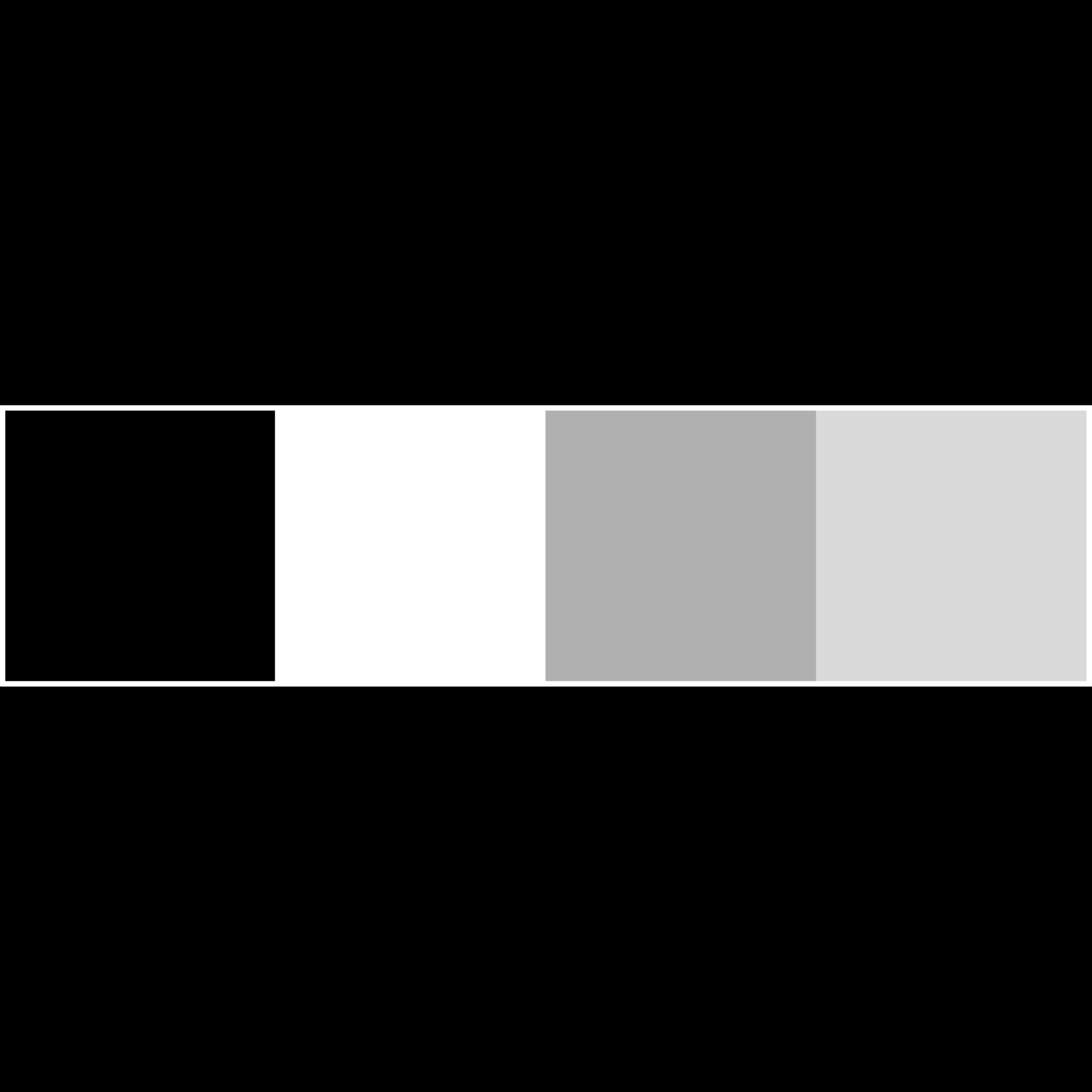}}
    \centerline{\scriptsize branch1}
\end{minipage}
\begin{minipage}{0.070\textwidth}
    \centerline{\includegraphics[width=1\textwidth]{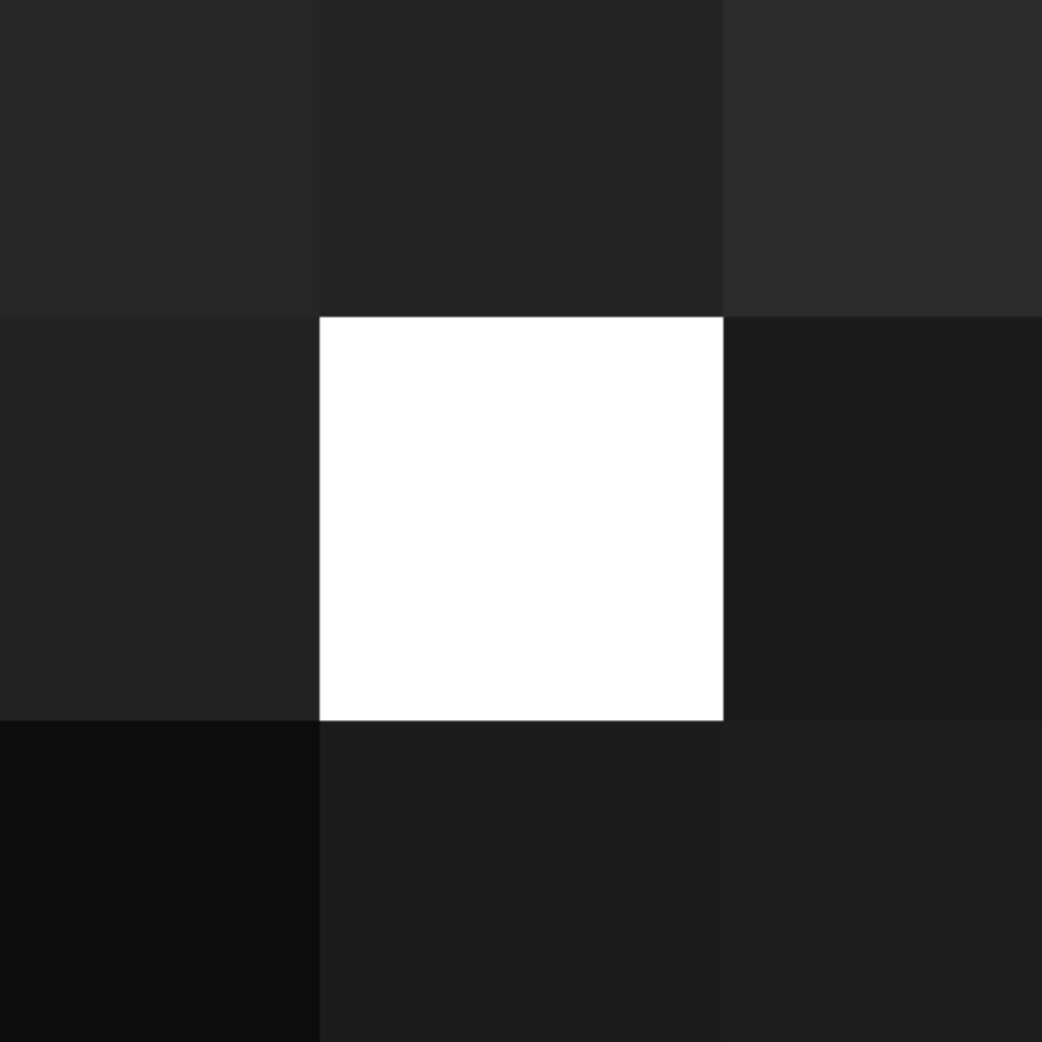}}
    \centerline{\scriptsize branch2}
\end{minipage}
\begin{minipage}{0.070\textwidth}
    \centerline{\includegraphics[width=1\textwidth]{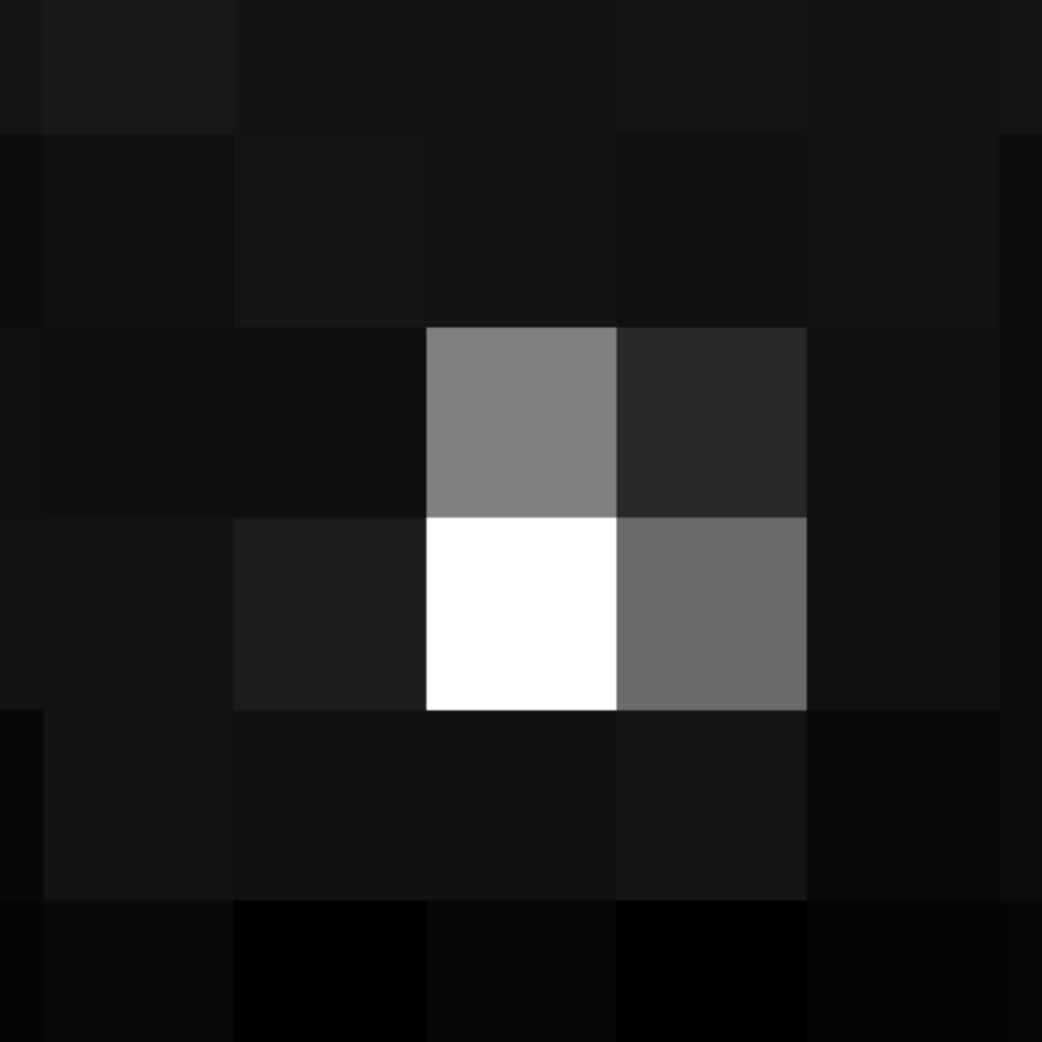}}
    \centerline{\scriptsize branch3}
\end{minipage}
\begin{minipage}{0.070\textwidth}
    \centerline{\includegraphics[width=1\textwidth]{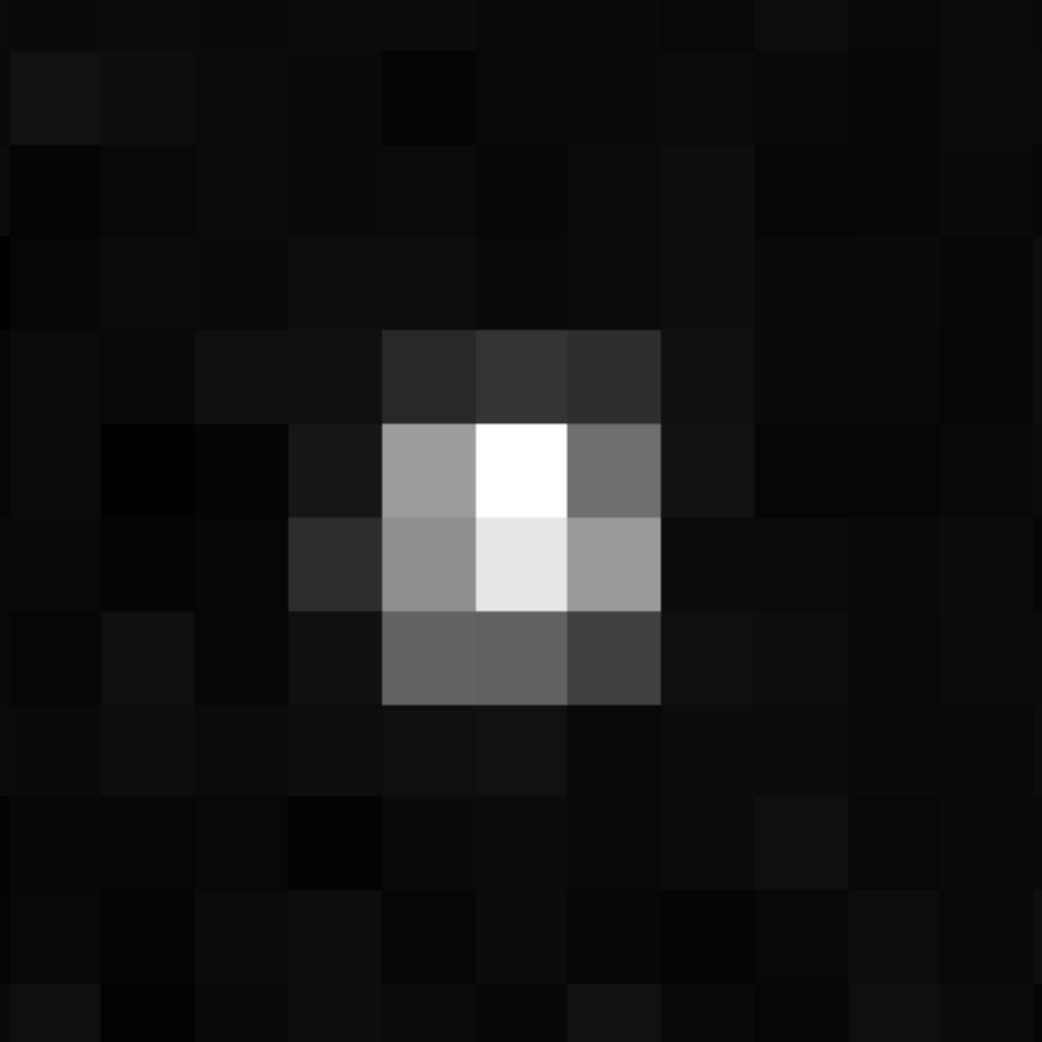}}
    \centerline{\scriptsize branch4}
\end{minipage}
\begin{minipage}{0.070\textwidth}
    \centerline{\includegraphics[width=1\textwidth]{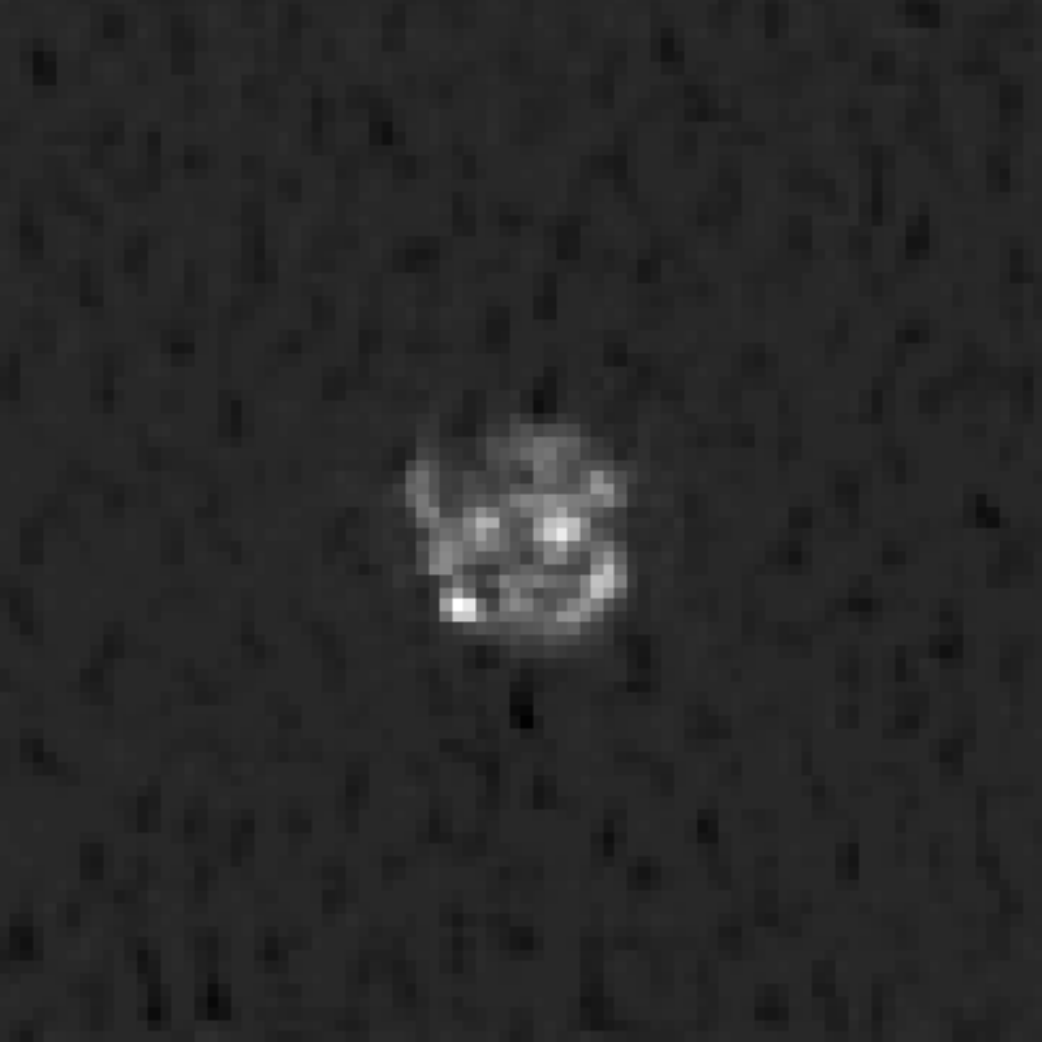}}
    \centerline{\scriptsize fusion1}
\end{minipage}
\begin{minipage}{0.070\textwidth}
    \centerline{\includegraphics[width=1\textwidth]{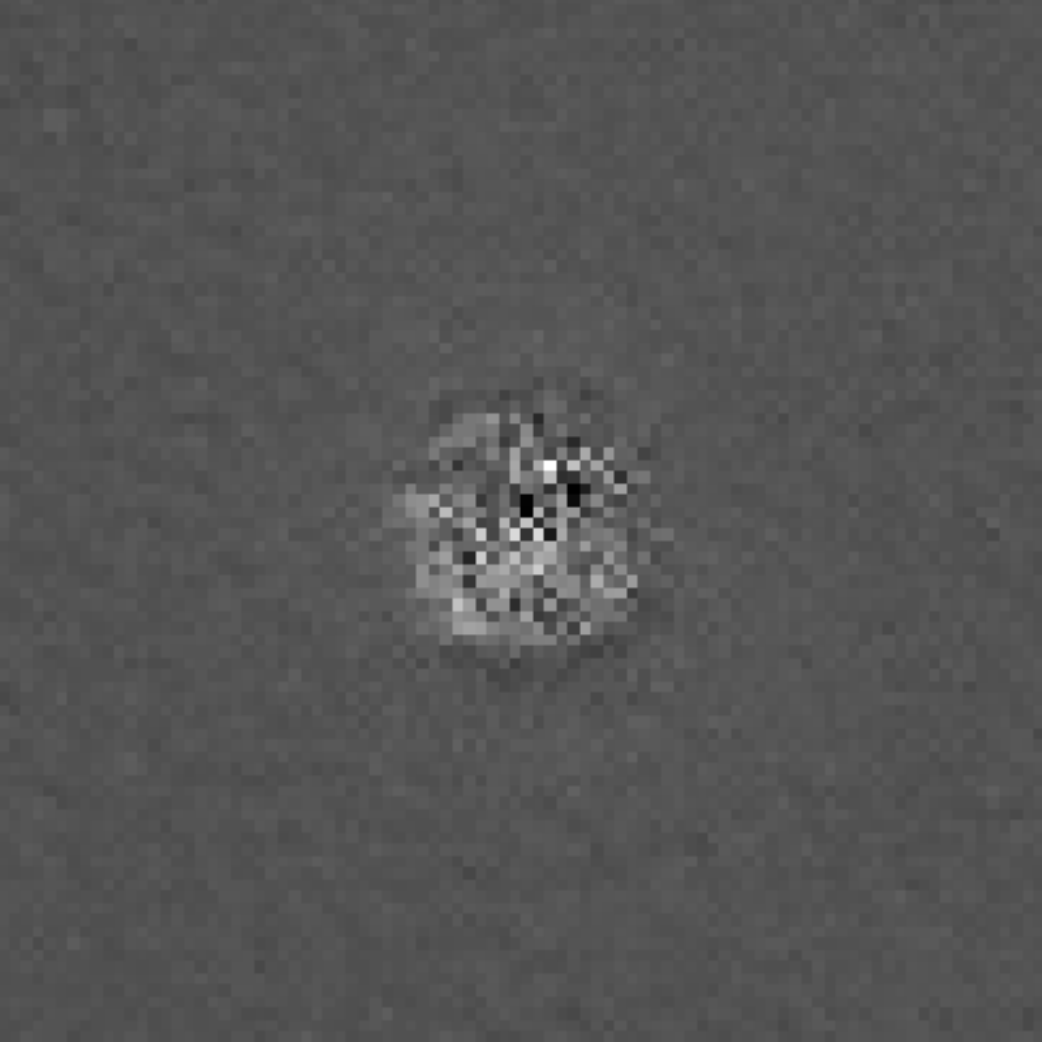}}
    \centerline{\scriptsize fusion2}
\end{minipage}
\caption{Visualization of the feature maps of Stereo R-CNN (the 1st row) and DSGN (the 2nd row) from shallow to deeper layers.}
\label{fig:diff_feat_map}
\end{figure*}

\subsection{The Cause of the Decoupling between the Precision of 3D Object Detectors and Driving Safety under Adversarial Attacks}

From the experiments in Section~\ref{ch5:sec.experiments}, we observe that  perturbation attacks cause a significant precision decline in 3D object detection, but  only a slight change in driving safety performance. Patch attacks cause a slight decline in 3D object detection precision, but a relatively significant performance degradation in driving safety. To figure out the reasons beneath these results, we take a closer look at the detection results of the attacked and unattacked 3D object detectors and compare them accordingly.

The reasons for the decoupling caused by patch attacks are relatively straightforward. First, we notice that the affected area of patch attacks is limited to the patch itself where the patch is usually quite small in order to make it difficult to be detected. Thus, patch attacks can only  trigger the object detectors to produce a very small number of ghost 3D bounding boxes inside the patch.  This is the reason why the object detection precision does not show a significant decline when detectors are under patch attacks. Second, since the adversarial patch is randomly placed in driving scenarios, the resulted ghost 3D bounding boxes have a fair chance to appear on the road surface and block the way of the ego-vehicle, which directly leads to noticeably driving safety performance degradation. These two reasons together explain  the decoupling between the precision of 3D object detectors and driving safety under patch attacks. In the rest of this section, we mainly focus on investigating the reasons for the decoupling caused by perturbation attacks.

Apart from the fact that the perturbation attacks cause slight drifts of 3D bounding boxes of real objects which are originally produced accurately when no attack is launched, the most significant consequence of a perturbation attack is that it triggers the object detectors to produce a lot of ghost 3D bounding boxes which do not circle any specific or meaningful object inside. In particular, almost all ghost boxes appear in the side areas of a road instead of on the surface of a road. Since the optimal trajectory generated by the motion planner most likely will not traverse the side areas of a road, the ghost objects will not affect the trajectory generated by the motion planner. In other words, the trajectories generated before and after perturbation attacks are essentially the same. Thus, driving safety is not affected dramatically by permutation attacks. 

We further investigate why the ghost 3D bounding boxes caused by perturbation attacks tend to appear in the side areas of a road. After inspecting the positions where ghost bounding boxes appear in a large number of driving scenarios, we hypothesize that the difference in the texture complexity between the side areas of a road and the road surface may be the cause of this. The reason is that the texture of the road surface is more regular than the texture of the side areas of a road. Thus,  it takes more ``efforts" for perturbation attacks to change the pixel values to generate ghost boxes on the road surface than that in the side areas of a road.

In order to validate our hypothesis, we design a texture replacement experiment. Specifically, for a driving scenario in which vision-based 3D object detectors under a perturbation attack produce ghost 3D bounding boxes in  the side areas of a road, we replace the texture of the side areas with the texture of the road surface, then feed this modified driving scenario to the attacked object detectors and check the detection results. If our hypothesis is correct,  we shall expect that the attacked object detectors do not produce any ghost boxes in the side areas of a road for the modified driving scenario.

The results of the texture replacement experiments are shown in Figure~\ref{fig:ab_1_stereo} for the Stereo R-CNN model and in Figure~\ref{fig:ab_1_dsgn} for the DSGN model. The attack setting of the perturbation attack applied here is set to be $\alpha=1$ and $n=4$ in Eqn.~(\ref{eq:eq_2}). From the figures, we can observe that both 3D object detectors can detect the object accurately when there is no perturbation attack applied (Figure~\ref{fig:ab_1_stereo_1}, \ref{fig:ab_1_dsgn_1}), and ghost 3D bounding boxes appear in the side areas of a road when the perturbation attack is launched on the same driving scenario (Figure~\ref{fig:ab_1_stereo_2}, \ref{fig:ab_1_dsgn_2}). More importantly, after we replace the texture of side area of road with the texture of road surface (Figure~\ref{fig:ab_1_stereo_3}, \ref{fig:ab_1_dsgn_3}), no more ghost boxes are produced in the side area of the road by the 3D object detectors, which matches our expectation. Hence, the texture replacement experiment results validate our hypothesis that the difference in the texture complexity between the side areas of a road and the road surface leads to the decoupling between the precision of 3D object detectors and the driving safety performance metrics when the 3D object detectors are attacked.

\subsection{The Cause of Difference in Robustness}

The experiment results in Section~\ref{ch5:sec.experiments} indicate that DSGN is more robust than Stereo R-CNN in terms of driving safety and object detection when they are under adversarial attacks. Especially, when  patch attacks are launched, DSGN is more robust than Stereo R-CNN.

To better understand the cause of such a difference in robustness, we conduct a contrast experiment by implementing the black-box patch attack where instead of training a patch for each model separately, we learn a patch $p$ that is jointly optimized for both the DSGN model and the Stereo R-CNN model using Eqn.~(\ref{eq:eq_3}). Thus, the patch is capable of attacking both models. To conduct this experiment, we also generate an image $I$ with uniformly distributed random noise and paste the patch $p$ on $I$ to form the attacked input image $\tilde{I}$. We then feed the two input images into the models to observe the intermediate results produced by their network architectures. In Figure~\ref{fig:diff_feat_map}, we visualize the difference between corresponding intermediate feature maps generated from $I$ and $\tilde{I}$ for both models respectively. In other words, the feature map $F_k$ in Figure~\ref{fig:diff_feat_map} refers to the average norm of the difference between the $k$-th layer output with an attack applied and the $k$-th layer output without any attack. For each model, we inspect the intermediate feature map of layers from shallow to deep in the feature extraction part of its network architecture. Each feature map is cropped so that only the central part is used for the propose of demonstration.

It is the spatial propagation of the patch activation area in feature maps of a model that implies the robustness of the model to patch attacks. Specifically, if the patch activation area in feature maps propagates along the data flow direction of the network architecture, then the network architecture amplifies the impact of patch attacks on the model, suggesting weak robustness of the model to patch attacks. In contrast, if the patch activation area in feature maps does not propagate or even contracts along the data flow, then the network architecture of the model is more resilient to patch attacks, indicating stronger robustness of the model to adversarial patches.

In the first row of Figure~\ref{fig:diff_feat_map}, we show how the patch activation area propagates layer by layer in the Stereo R-CNN model. In the first few convolution layers ($conv$\textless1, 2, 3\textgreater), the patch activation area  is bounded by the original region. However, starting from the last two layers of the feature extractor ($conv$\textless4, 5\textgreater), we observe that the activation area gradually propagates as we move on to deeper layers. After the $maxpool1$ layer, the patch activation area propagates to  almost the entire cropped image. Since the patch activation area keeps propagating through the network architecture, Stereo R-CNN shows poor robustness under patch attacks.

In the second row of Figure~\ref{fig:diff_feat_map}, the DSGN model shows less propagation of the patch activation in the first three convolution layers ($conv$\textless1, 2, 3\textgreater), but the activation area at the last two layers of the feature extractor ($conv$\textless4, 5\textgreater) are expanded slightly. According to the network structure of DSGN, the feature extractor is connected to the Spatial Pyramid Pooling (SPP) module, and the outputs of SPP branches ($branch$\textless1, 2, 3, 4\textgreater) are fused with features from the former layers for future prediction. Interestingly, we observe that the patch activation area shrinks to its original size after the SPP module ($fusion$\textless1, 2\textgreater). Hence, different from  Stereo R-CNN, the SPP module in the DSGN model restrains the propagation of the patch impact. This demonstrates that DSGN has strong robustness to adversarial patches due to the SPP module in its network architecture. A similar observation of the Spatial Pyramid structure can be found in another study~\cite{ranjan19attacking}. We can conclude that when the adversarial patch is used to attack the model of Stereo R-CNN whose network architecture is not equipped with the SPP module, the patch exploits the weakness of the network architecture and amplifies its impact on 3D object detection. For the DSGN model, the SPP module in its network architecture restrains the impact of the adversarial patch on 3D object detection. Therefore, DSGN and Stereo R-CNN have different robustness to patch attacks and demonstrate different performance on the average precision of 3D object detection and the driving safety metrics.

\section{Conclusion}
\label{ch5:sec.conclusion}

In this chapter, we have systematically investigated the impact of adversarial attacks not only on the object detection precision, but also on the driving safety of vision-based autonomous vehicles. Specifically, we proposed an end-to-end driving safety evaluation framework with designed performance metrics for the assessment of driving safety. Through extensive evaluation experiments, we found that a significant precision decline of 3D object detectors under the perturbation attack only leads to a slight decline in the driving safety performance metrics, but a mild precision decline of 3D object detectors under the patch attack can result in a significant performance degradation in driving safety. This finding suggests that it is desirable to evaluate the robustness of deep learning models in terms of driving safety rather than model precision. The proposed work can help guide the selection of deep learning models. The code of our evaluation framework is available upon request.

\chapter{Conclusion and Future Work}
\label{ch6:conclusion}

\section{Conclusion}

In this thesis, we focused on the sensor data validation for single autonomous vehicles and multiple connected autonomous vehicles, and the end-to-end impact evaluation of adversarial attacks on the driving safety of vision-based autonomous vehicles.

First, we proposed a data validation framework to help defend optical sensors for single autonomous vehicles. The framework is two-fold. At first, we detected the optical attacks in a three-sensor system by analyzing the distribution of information inconsistency among depth maps. Then, based on the detection scheme, we further developed a method capable of identifying up to $n − 2$ attacked sensors in a system with one LiDAR and $n$ cameras. We also presented the sensitivity analysis of our data validation framework.

Second, we proposed a data validation method to detect optical attacks against LiDARs for multiple connected autonomous vehicles. In our method, we leveraged multiple point clouds from neighboring vehicular nodes and completed the scan in selected validation regions by mirroring points based on object symmetry. Then, we discretized surface meshes generated from selected validation regions into 2D grids and measured their distances. Finally, we detected the optical attacks against LiDARs by thresholding the distances.

Third, we studies the linkage between attacked deep learning models and driving safety by evaluating the impact of adversarial attacks, i.e., perturbation attacks and patch attacks, on the driving safety of vision-based autonomous vehicles in an end-to-end fashion. To perform the evaluation, we proposed an end-to-end driving safety evaluation framework with a set of designed metrics. Through the experiments, we found the decoupling between the weakened detection precision of deep learning models and the vehicular driving safety when autonomous vehicles are under adversarial attacks. We further investigated the reasons for the decoupling in an ablation study.

\section{Future Work}

With regards to sensor security and deep learning model security, there is still a long way to go before reaching the goal of safe and secure autonomous driving technology. In this thesis, we attempted to defend optical sensors and evaluate the end-to-end impact of adversarial attacks on driving safety. In the future, we aim to not only defend key parts of autonomous vehicles from more security threats, but also recover from the possible attack damages:

\begin{itemize}
\item
We plan to study methods to further identify the damaged portions in image and point cloud, and perform data recovery for the damaged portions using intact data from other redundant sensors of the ego-vehicle, nearby infrastructure, or connected vehicles in vicinity.

\item
Since we used multiple single-vehicle point clouds to simulate the the scenario of multiple connected autonomous vehicles in Chapter~\ref{ch4:sensor}, we plan to first verify our current method using simulated multi-vehicles point clouds. Then, we will use LiDARs to harvest real multi-vehicles point clouds and improve our current method. In addition, we plan to further turn the step of comparison and thresholding into a learning-based step.

\item
Based on our experiments and discovered causes~\cite{zhang21evaluating} in Chapter~\ref{ch5:impact}, we plan to expand our study to the autonomous vehicles that fuse the information from both LiDARs and cameras, and consider adversarial examples in other types of data, such as adversarial examples of LiDAR point clouds.

\item
Since we found that the spatial pyramid structure is more robust under adversarial attacks~\cite{zhang21evaluating} in Chapter~\ref{ch5:impact}, we plan to design countermeasures for deep learning models against adversarial attacks by leveraging the spatial pyramid structure. Our future studies will also be conducted in an end-to-end fashion.
\end{itemize}

\newpage

\renewcommand{\bibname}{References}
\addcontentsline{toc}{chapter}{References}
\bibliographystyle{plain}
\bibliography{reference}

\appendix

\chapter{List of Publications}

\begin{enumerate}
\item
\textbf{Jindi Zhang}, Yifan Zhang, Kejie Lu, Jianping Wang, Kui Wu, Xiaohua Jia, and Bin Liu. ``Detecting and Identifying Optical Signal Attacks on Autonomous Driving Systems.'' \textit{IEEE Internet of Things Journal}, vol. 8, no. 2, pp. 1140-1153, 15 Jan.15, 2021.

\item
\textbf{Jindi Zhang}, Yang Lou, Jianping Wang, Kui Wu, Kejie Lu, and Xiaohua Jia. ``Evaluating Adversarial Attacks on Driving Safety in Vision-Based Autonomous Vehicles.'' \textit{IEEE Internet of Things Journal}, 2021.

\item
Yifan Zhang, Jinghuai Zhang, \textbf{Jindi Zhang}, Jianping Wang, Kejie Lu, and Jeff Hong. ``Integrating Algorithmic Sampling-based Motion Planning with Learning in Autonomous Driving.'' \textit{ACM Transactions on Intelligent Systems and Technology}, vol. 13, no. 3, pp. 1-27, 2022.

\item
Yifan Zhang, Jinghuai Zhang, \textbf{Jindi Zhang}, Jianping Wang, Kejie Lu, and Jeff Hong. ``A novel learning framework for sampling-based motion planning in autonomous driving.'' \textit{Proceedings of the AAAI Conference on Artificial Intelligence}, 2020.
\end{enumerate}

\end{document}